\newtheorem{definition}{Definition}
\newtheorem{lemma}{Lemma}
\newtheorem{proposition}{Proposition}
\newtheorem{assumption}{Assumption}
\newcommand{\citep}{\cite}
\newcommand{\citet}{\cite}
\newcommand{\context}{x}
\newcommand{\contextspace}{\mathcal{X}}
\title{Towards Understanding Cooperative Multi-Agent Q-Learning with Value Factorization}
\author{%
  Jianhao Wang$^1$\thanks{Equal contribution.}~, Zhizhou Ren$^2$\footnotemark[1]~~\thanks{Work done while Zhizhou was an undergraduate at Tsinghua University.}~~, Beining Han$^1$, Jianing Ye$^1$, Chongjie Zhang$^1$ \\
  $^1$Institute for Interdisciplinary Information Sciences, Tsinghua University \\
  $^2$Department of Computer Science, University of Illinois at Urbana-Champaign \\
  \texttt{wjh19@mails.tsinghua.edu.cn,~zhizhour@illinois.edu}\\
  \texttt{\{hbn18, yejn21\}@mails.tsinghua.edu.cn}\\
  \texttt{chongjie@tsinghua.edu.cn} \\
  % examples of more authors
  % \And
  % Coauthor \\
  % Affiliation \\
  % Address \\
  % \texttt{email} \\
  % \AND
  % Coauthor \\
  % Affiliation \\
  % Address \\
  % \texttt{email} \\
  % \And
  % Coauthor \\
  % Affiliation \\
  % Address \\
  % \texttt{email} \\
  % \And
  % Coauthor \\
  % Affiliation \\
  % Address \\
  % \texttt{email} \\
}
\begin{document}

\maketitle

\begin{abstract}
	Value factorization is a popular and promising approach to scaling up multi-agent reinforcement learning in cooperative settings, which balances the learning scalability and the representational capacity of value functions. However, the theoretical understanding of such methods is limited. In this paper, we formalize a multi-agent fitted Q-iteration framework for analyzing factorized multi-agent Q-learning. Based on this framework, we investigate linear value factorization and reveal that multi-agent Q-learning with this simple decomposition implicitly realizes a powerful counterfactual credit assignment, but may not converge in some settings. Through further analysis, we find that on-policy training or richer joint value function classes can improve its local or global convergence properties, respectively. Finally, to support our theoretical implications in practical realization, we conduct an empirical analysis of state-of-the-art deep multi-agent Q-learning algorithms on didactic examples and a broad set of StarCraft II unit micromanagement tasks.
	%We also extend our theoretical implications to state-of-the-art deep multi-agent Q-learning algorithms. Our empirical analysis demonstrates the realizability of our theoretical insights in the didactic examples and a broad set of StarCraft II unit micromanagement tasks. 
	%They show that most state-of-the-art deep multi-agent Q-learning algorithms using linear value decomposition cannot efficiently utilize off-policy samples, which may even lead to an unbounded divergence.  
\end{abstract}

\section{Introduction}

%\citep{lowe2017multi,sunehag2018value,foerster2018counterfactual,rashid2018qmix,son2019qtran,wang2019influence,wang2019learning,baker2019emergent,wang2020multi,yang2020qatten} 

Cooperative multi-agent reinforcement learning (MARL) has great promise for addressing coordination problems in a variety of applications, such as robotic systems \citep{huttenrauch2017guided}, autonomous cars \citep{cao2012overview}, and sensor networks \citep{zhang2011coordinated}. Such complex tasks often require MARL to learn decentralized policies for agents to jointly optimize a global cumulative reward signal, which posts a number of challenges, including multi-agent credit assignment \citep{wolpert2002optimal,nguyen2018credit}, non-stationarity \citep{zhang2010multi,song2019convergence}, and scalability \citep{zhang2011coordinated,panait2005cooperative}. Recently, by leveraging the strength of deep learning techniques, cooperative MARL has made great progress \citep{lowe2017multi,sunehag2018value,foerster2018counterfactual,rashid2018qmix,son2019qtran,wang2019influence,wang2019learning,baker2019emergent,wang2020multi,yang2020qatten,wang2020QPLEX}, particularly in value-based methods that demonstrate state-of-the-art performance on challenging tasks such as StarCraft unit micromanagement \citep{samvelyan2019starcraft}.

Value factorization is a popular approach to effectively scaling up cooperative multi-agent Q-learning in complex domains. One fundamental problem of factorized multi-agent algorithms is the trade-off between the learning scalability and the representational capacity of value functions. As a representative method, value-decomposition network (VDN) \citep{sunehag2018value} is proposed to obtain excellent scalability based on a popular paradigm called {\em centralized training with decentralized execution} (CTDE) \citep{foerster2016learning}, which learns a centralized but factorizable joint value function $Q_{\text{tot}}$ represented as the summation of individual value functions $Q_i$. During the execution, decentralized policies can be easily derived by greedily selecting individual actions from the local value function $Q_i$. An implicit multi-agent credit assignment is realized because $Q_i$ is learned by neural network backpropagation from the total temporal-difference error on the single global reward signal.
%, rather than on a local reward signal specific to agent $i$.
This linear value factorization structure can significantly improve the scalability of multi-agent joint policy training and individual policy execution. This linear value factorization adopted by VDN also realizes a sufficient condition for an important principle of CTDE, the IGM (\textit{Individual-Global-Max}) principle \citep{son2019qtran}, which asserts the consistency between joint and individual greedy action selection and ensures the consistent policy learning under the CTDE paradigm. Following this principle, most recent advances focus on enriching the function expressiveness of the factorization from $Q_{\text{tot}}$ to $Q_i$. QMIX \citep{rashid2018qmix} uses a monotonic function to represent the joint value function $Q_{\text{tot}}$ through local values $Q_i$, whose function expressiveness is further improved by QTRAN \citep{son2019qtran} and QPLEX \citep{wang2020QPLEX} that aim to achieve full function expressiveness induced by the IGM principle. These approaches show promise and achieve state-of-the-art performance in complicated cooperative multi-agent domains.

Despite these impressive empirical successes, theoretical understandings for these MARL approaches are still limited. To bridge this gap, this paper is the first to consider a general framework for theoretical studies. We formulate {\em Factorzed Multi-Agent Fitted Q-Iteration} (FMA-FQI) for formally analyzing cooperative MARL with value factorization. This framework generalizes single-agent Fitted Q-Iteration, a popular model for studying Q-learning algorithms with function approximation \citep{munos2008finite, farahmand2010error, chen2019information, levine2020offline}.
FMA-FQI models the iterative training procedure of multi-agent Q-learning using empirical Bellman error minimization. Under this framework, we investigate two popular value factorization methods: the linear factorization used by VDN \citep{sunehag2018value}, and the IGM factorization adopted by QTRAN \citep{son2019qtran} and QPLEX \citep{wang2020QPLEX}. Our analyses reveal algorithmic properties of these approaches and provide insights on fundamental questions: Why do they work? Do they converge? When may they not perform well?

The main contribution of this paper can be summarized as follows:
\begin{enumerate}
    \item We formalize \emph{Factorized Multi-Agent Fitted Q-Iteration} (FMA-FQI) as a general theoretical framework for analyzing cooperative multi-agent Q-learning with value factorization.
    \item Based on FMA-FQI, we study linear value factorization and derive a closed-form solution to its Bellman error minimization. We then reveal two novel insights that: 1) linear value factorization implicitly realizes a powerful counterfactual credit assignment and 2) on-policy training is beneficial to its stability and local convergence near the optimal solutions.
    \item Using FMA-FQI, we also study IGM value factorization and prove its global convergence and optimality. 
    \item Empirical analysis is conducted to connect our theoretical implications to practical scenarios by evaluating state-of-the-art deep MARL approaches.  
\end{enumerate}

\vspace{-0.1in}
\section{Related Work}\label{sec:related_works}

Deep Q-learning algorithms that use neural networks as function approximators have shown great promise in solving complicated decision-making problems \citep{mnih2015human}. One of the core components of such methods is iterative Bellman error minimization, which can be modelled by a classical framework called Fitted Q-Iteration (FQI) \citep{ernst2005tree}. FQI utilizes a specific $Q$-function class to iteratively optimize empirical Bellman error on a dataset $D$ with respect to a frozen target. Great efforts have been made towards theoretically characterizing the behavior of FQI with finite samples and imperfect function classes \citep{munos2008finite, farahmand2010error, chen2019information}. From an empirical perspective, there is also a growing trend to adopt FQI for empirical analysis of deep offline Q-learning algorithms \citep{fu2019diagnosing, levine2020offline}. In MARL, the joint $Q$-function class grows exponentially with the number of agents, leading many algorithms \citep{sunehag2018value, son2019qtran, wang2020QPLEX} to utilize different value factorization structures to balance scalability and optimality. In section \ref{sec:fqi}, we generalize FQI to formalize a factorized multi-agent fitted Q-iteration framework for analyzing cooperative multi-agent Q-learning algorithms with value factorization.

To achieve superior effectiveness and scalability in multi-agent settings, centralized training with decentralized executing (CTDE) has become a popular MARL paradigm \citep{oliehoek2008optimal,kraemer2016multi}. \textit{Individual-Global-Max} (IGM) principle \citep{son2019qtran} is a critical concept for value-based CTDE \citep{mahajan2019maven}, that ensures the consistency between joint and local greedy action selections and enables effective performance in both training and execution phases. VDN \citep{sunehag2018value} utilizes linear value factorization to satisfy a sufficient condition of IGM. This simple linear structure of VDN has become very popular in MARL due to its excellent scalability \citep{son2019qtran, wang2020QPLEX, wang2020off} and inspired many follow-up methods. QMIX \citep{rashid2018qmix} proposes a monotonic $Q$-network structure to improve the expressiveness of the factorized function class. QTRAN \citep{son2019qtran} aims to realize the entire IGM function class, but its proposed objective is computationally intractable and requires two extra soft regularizations to approximate IGM, which actually loses rigorous IGM guarantees. QPLEX \citep{wang2020QPLEX}, the state-of-the-art multi-agent Q-learning algorithm, encodes the IGM principle into the $Q$-network architecture and realize a complete IGM function class, but may have potential limitations in scalability. This paper focuses on the theoretical and empirical understanding of multi-agent Q-learning with linear and IGM value factorization to explore the underlying implications of their corresponding factorized value structures.

\vspace{-0.05in}
\section{Notations and Preliminaries}

\vspace{-0.05in}
\subsection{Decentralized Rich-Observation Markov Decision Process (Dec-ROMDP)}

To support theoretical analysis on cooperative multi-agent Q-learning, we consider the problem formulation of \textit{Decentralized Rich-Observation Markov Decision Process} (Dec-ROMDP). Dec-ROMDP is an interpolation of ROMDP \citep{krishnamurthy2016pac,azizzadenesheli2016reinforcement} and Dec-POMDP \citep{oliehoek2016concise}, in which the latent state space is finite, and the observation space could be arbitrarily large. A Dec-ROMDP is defined as a tuple $\mathcal{M}=\langle \mathcal{N}, \mathcal{S}, \contextspace, \mathcal{A}, P, \Lambda, r, \gamma\rangle$. $\mathcal{N} \equiv \{1,\dots, n\}$ is a finite set of agents. $\mathcal{S}$ is a finite set of global latent states. $\contextspace$ denotes the observation space (a.k.a. context space) which is larger than the latent state space. $\mathcal{A}$ denotes the action space for an individual agent. The joint action $\mathbf{a}\in\mathbf{A}\equiv \mathcal{A}^n$ is a collection of individual actions $[a_i]_{i=1}^n$. At each timestep $t$, a selected joint action $\mathbf{a}_t$ results in a latent-state transition $s_{t+1}\sim P(\cdot|s_t,\mathbf{a}_t)$ and a global reward signal $r(s_t,\mathbf{a}_t)$. The observation of agent $i$ is generated from the emission distribution $\context_{t,i}\sim \Lambda(\cdot|i,s_t)$.

The major difference between Dec-ROMDP and Dec-POMDP is the concept of rich observation. Dec-ROMDP ensures that the observation emission distributions are disjoint across latent states, i.e., there exists an inverse function $\Lambda^{-1}:\mathcal{N}\times\contextspace\to \mathcal{S}$ that decodes full information of latent states from observations. This assumption enables us to consider \textit{reactive function classes} \citep{littman1994memoryless, meuleau1999learning}, which only takes the latest observation as inputs. Formally, the goal for MARL in Dec-ROMDP is to construct a joint policy $\bm\pi=\langle \pi_1, \dots, \pi_n\rangle$ maximizing expected discounted rewards $V^{\bm\pi}({\bm\context})=\mathbb{E}\left[\sum_{t=0}^{\infty} \gamma^t r(s_t, \bm{\pi}({\bm\context}_{t}))|{\bm\context}_0={\bm\context}\right]$, where $\pi_i:\contextspace\mapsto\mathcal{A}$ denotes an individual policy of agent $i$ which depends on its local observation. The corresponding action-value function is denoted as $Q^{\bm\pi}({\bm\context}_t,\mathbf{a}_t)=\mathbb{E}[r(s_t,\mathbf{a}_t)+\gamma V^{\bm{\pi}}({\bm\context}_{t+1})|{\bm\context}_t]$. More discussions are deferred to Appendix \ref{sec:problem-formulation-appendix}.

\vspace{-0.05in}
\paragraph{Why Dec-ROMDP?}
The most widely-used environment setting in deep MARL for empirical studies is Dec-POMDP. However, as implied by prior work, finding an optimal policy in infinite-horizon Dec-POMDPs is an incomputable problem in the classical computation model \citep{madani1999undecidability}. There is no algorithm that can achieve strong theoretical guarantees in the Dec-POMDP setting. Following recent advances in reinforcement learning with function approximation \citep{krishnamurthy2016pac,azizzadenesheli2016reinforcement,du2019provably,misra2020kinematic}, we consider the rich-observation setting to derive implications from a theoretical perspective. This setting is also adopted by a concurrent work to establish algorithm analysis for cooperative multi-agent reinforcement learning \citep{mahajan2021tesseract}.

\vspace{-0.05in}
\subsection{Centralized Training with Decentralized Execution (CTDE)}\label{sec:ctde}
Most deep multi-agent Q-learning algorithms with value factorization adopt the paradigm of centralized training with decentralized execution \citep{foerster2016learning}. In the training phase, the centralized trainer can access all global information, including joint trajectories, shared global rewards, agents' policies, and value functions. In the decentralized execution phase, every agent makes individual decisions based on its local information. \textit{Individual-Global-Max} (IGM) \citep{son2019qtran} is a common principle to realize effective decentralized policy execution. It enforces the action selection consistency between the global joint action-value $Q_{\text{tot}}$ and individual action-values $[Q_i]_{i=1}^n$, which are specified as follows:

\vspace{-0.25in}
\begin{align}\label{eq:igm}
	\textbf{(IGM)}\quad ~\mathop{\arg\max}_{\mathbf{a}\in\mathbf{A}} Q_{\text{tot}}({\bm\context}, \mathbf{a})
	=\left\langle\mathop{\arg\max}_{a_1\in\mathcal{A}} Q_1(\context_1, a_1), \cdots, \mathop{\arg\max}_{a_n\in\mathcal{A}} Q_n(\context_n, a_n)\right\rangle.
\end{align}

\vspace{-0.08in}
QPLEX \citep{wang2020QPLEX} is the state-of-the-art deep multi-agent Q-learning algorithm that realizes a complete IGM factorization. QTRAN \citep{son2019qtran} is another method to approximate the IGM principle by two extra soft regularizations. Moreover, as stated in Eq.~\eqref{eq:additivity_igm}, the linear value factorization adopted by VDN \citep{sunehag2018value} is a sufficient condition for the IGM constraint stated in Eq.~\eqref{eq:igm} but is not a necessary condition, which induces a limited joint action-value function class. % because the linear number of individual functions cannot represent a joint action-value function class, which is exponential with the number of agents.

\vspace{-0.3in}
\begin{align}\label{eq:additivity_igm}
\textbf{(Additivity)}\quad Q_{\text{tot}}({\bm\context}, \mathbf{a}) = \sum_{i=1}^n Q_i(\context_i, a_i).
\end{align}
%In contrast, more effective multi-agent Q-learning algorithms (such as QPLEX \citep{wang2020QPLEX}, QTRAN \citep{son2019qtran}, QMIX \citep{rashid2018qmix}) enrich the expressiveness of action-value function classes. Especially, when assuming universal function approximation of neural networks, QPLEX theoretically realize the complete action-value function class induced by the IGM principle.

\vspace{-0.2in}
\section{Factorized Multi-Agent Fitted Q-Iteration}\label{sec:fqi}

Constructing a specific factorized value function class satisfying the IGM condition is a critical step to realize the centralized training with decentralized execution (CTDE) paradigm. In the literature of Q-learning with function approximation, \textit{Fitted Q-Iteration} (FQI) \citep{ernst2005tree} is a popular framework for both theoretical and empirical studies \citep{munos2008finite, farahmand2010error, chen2019information,fu2019diagnosing, levine2020offline}. In this section, we generalize single-agent FQI to formalize a \textit{Factorzed Multi-Agent Fitted Q-Iteration} (FMA-FQI) framework, which enables us to characterize the algorithmic properties of different factorization structures in the CTDE paradigm.

For multi-agent Q-learning with value factorization, we use $Q_{\text{tot}}$ to denote the global but factorized value function, which can be represented as a function of individual value functions $[Q_i]_{i=1}^n$. In other words, we can use $[Q_i]_{i=1}^n$ to represent $Q_{\text{tot}}$. For brevity, we overload $Q=\langle Q_{\text{tot}},[Q_i]_{i=1}^n\rangle$ to denote their association. Following fitted Q-iteration, FMA-FQI considers an iterative optimization framework based on a given dataset $D=\{({\bm\context}, \mathbf{a},r,{\bm\context}')\}$ as presented in Algorithm \ref{alg:fma-fqi}.

\begin{algorithm}[h]
	\caption{Factorized Multi-Agent Fitted Q-Iteration (FMA-FQI)}\label{alg:fma-fqi}
	\begin{algorithmic}[1]
		\State {\bfseries Input:} dataset $D$, the number of iterations $T$, factorized multi-agent value function class $\mathcal{Q}^{\text{FMA}}$
		\State Randomly initialize $Q^{(0)}$ from $\mathcal{Q}^{\text{FMA}}$
		\For{$t=0\dots T-1$}
%		\State Set one-step TD target by joint value function
%		\vspace{-0.05in}
%		\begin{align}
%		\hat y^{(t)}({\bm\context},\mathbf{a},{\bm\context}') = r+\gamma \max_{\mathbf{a'}\in{\mathbf{A}}} Q^{(t)}_{\text{tot}}\left({\bm\context}',\mathbf{a'}\right) \notag
%		\end{align}

		\State Iteratively update
		\vspace{-0.05in}
		\begin{align}\label{eq:fma-fqi-1}
		Q^{(t+1)}\gets\mathcal{T}_D^{\text{FMA}}Q^{(t)}\equiv\mathop{\arg\min}_{Q\in\mathcal{Q}^{\text{FMA}}}\mathop{\mathbb{E}}_{({\bm\context}, \mathbf{a},r,{\bm\context}')\sim D}\left(\hat y^{(t)}({\bm\context},\mathbf{a},{\bm\context}')- Q_{\text{tot}}({\bm\context}, \mathbf{a})\right)^2
		\end{align}
		
		\vspace{-0.1in}
		where $\hat y^{(t)}({\bm\context},\mathbf{a},{\bm\context}') = r+\gamma \max_{\mathbf{a'}\in{\mathbf{A}}} Q^{(t)}_{\text{tot}}\left({\bm\context}',\mathbf{a'}\right)$ denotes the one-step TD target.
		
		\vspace{0.02in}
		\State Construct policies by individual value functions
		\vspace{-0.05in}
		\begin{align}\label{eq:fma-fqi-pi}
		\forall i\in\mathcal{N},~ \pi_i^{(t+1)}({\context_i}) = \mathop{\arg\max}_{a_i\in\mathcal{A}} Q_i^{(t+1)}({\context_i}, a_i)
		\end{align}
		\EndFor
		\vspace{-0.05in}
		\State {\bfseries Return:} $[\pi_i^{(T)}]_{i=1}^n$
	\end{algorithmic}
\end{algorithm}

Formally, FMA-FQI specifies the $Q$-function class through a multi-agent value factorization structure:

\vspace{-0.2in}
\begin{align}\label{eq:fma-fqi-class}
\mathcal{Q}^{\text{FMA}} = \Bigl\{ Q ~\Big|~ \exists f\in{\mathcal{F}}^{\text{FMA}},~ Q_{\text{tot}}({\bm\context},\mathbf{a})= f\left({\bm\context}, \mathbf{a}, [Q_i]_{i=1}^n\right),~ [Q_i]_{i=1}^n\in \mathbb{R}^{|\contextspace\times\mathcal{A}|^n} \Bigr\},
\end{align}

\vspace{-0.15in}
where ${\mathcal{F}}^{\text{FMA}}$ denotes the function class characterizing the factorization structure. For example, the IGM constraint stated as Eq.~\eqref{eq:igm} and the additivity constraint stated as Eq.~\eqref{eq:additivity_igm} correspond to the value factorization structures used by QPLEX \citep{wang2020QPLEX} and VDN \citep{sunehag2018value}, respectively. 

The empirical Bellman error minimization in FMA-FQI is established using the global reward signals. To facilitate further discussions, we rewrite the regression objective as follows:
\begin{align}\label{eq:fma-fqi}
Q^{(t+1)}\gets\mathcal{T}_D^{\text{FMA}}Q^{(t)}\equiv\mathop{\arg\min}_{Q\in\mathcal{Q}^{\text{FMA}}}\mathop{\mathbb{E}}_{({\bm\context}, \mathbf{a})\sim D}\left(y^{(t)}({\bm\context},\mathbf{a})- Q_{\text{tot}}({\bm\context}, \mathbf{a})\right)^2
\end{align}

\vspace{-0.1in}
where $y^{(t)}({\bm\context},\mathbf{a})= r+\gamma\mathbb{E}_{{\bm\context}'}\left[\max_{\mathbf{a'}} Q^{(t)}_{\text{tot}}\left({\bm\context}',\mathbf{a'}\right)\right]$ denotes the expected one-step TD target. Assume the dataset is adequate, the equivalence between Eq.~\eqref{eq:fma-fqi-1} and Eq.~\eqref{eq:fma-fqi} is proved in Appendix \ref{sec:objective-equivalence-appendix}.

\vspace{-0.05in}
\paragraph{Modeling CTDE by FMA-FQI.}
Different from the single-agent case, FMA-FQI considers the multi-agent value factorization structure, $Q_{\text{tot}}$ and $[Q_i]_{i=1}^n$, to support effective training and scalable execution. In the global reward setting, the shared reward signal can only supervise the training of the joint value function $Q_{\text{tot}}$ (see Eq.~\eqref{eq:fma-fqi}). The learned joint value function $Q_{\text{tot}}$ would be automatically decomposed to individual value functions $[Q_i]_{i=1}^n$ through the factorized function structure, which can be further used to perform decentralized execution (see Eq.~\eqref{eq:fma-fqi-pi}). In practice, FMA-FQI illustrated in Eq.~\eqref{eq:fma-fqi-class} provides the end-to-end learning of $[Q_i]_{i=1}^n$ as the iterative optimization of joint value function $Q_{\text{tot}}$. The greedy action selection can be searched in the individual action space $\mathcal{A}$ rather than the joint action space $\mathcal{A}^n$, which significantly reduces the computation costs.

\vspace{-0.05in}
\paragraph{Relation to Prior Work.}
Most prior work of FQI focus on the single-agent RL with finite-sample analyses and exploring minimum requirements for performance guarantees (e.g., data distribution and function capacity) in a general sense \citep{munos2008finite, farahmand2010error, chen2019information}. In comparison, this paper is motivated by recent advances in the multi-agent area. To our best knowledge, we are the first to generalize single-agent FQI to the multi-agent setting (i.e., FMA-FQI) for analyzing value factorization. For instance, we characterize the algorithmic properties of two specific value function classes, linear value factorization and IGM value factorization, which are widely used in deep MARL \citep{sunehag2018value, son2019qtran, wang2020QPLEX}. These factorization structures are implemented by certain network architectures and thus can be well modelled by FMA-FQI from the perspective of function approximation. 

%In the execution phase, individual greedy action selection, as shown in Eq.~\eqref{eq:fma-fqi-pi}, takes advantage of the scalability of individual action space. Moreover, the multi-agent value factorization coupling of FMA-FQI illustrated in Eq.~\eqref{eq:fma-fqi-class} provides the end-to-end learning of $[Q_i]_{i=1}^n$ as the iterative optimization of joint value function $Q_{\text{tot}}$. This coupling structure provides FMA-FQI with the potential of theoretically understanding deep factorized multi-agent Q-learning \citep{sunehag2018value,rashid2018qmix,son2019qtran,wang2020QPLEX}. In the following two sections, we will use FMA-FQI framework to conduct linear and IGM value factorization structures, respectively.

%Recently, many advanced deep multi-agent Q-learning algorithms utilize factorized multi-agent value functions to realize different hierarchy of IGM condition \citep{sunehag2018value,rashid2018qmix,son2019qtran,wang2020QPLEX}.

%Especially, as introduced in Section \ref{sec:ctde}, IGM principle in the centralized training with decentralized execution (CTDE) paradigm aims to realize effective training and execution. Recent advanced deep multi-agent Q-learning algorithms also use factorized multi-agent value functions to realize different hierarchy of IGM condition. In this paper, we aims to use our FMA-FQI theoretical framework to analyze two popular value factorization of IGM in Section \ref{sec:main_results} and \ref{sec:theoretical-results}, respectively.

\section{Multi-Agent Q-Learning with Linear Value Factorization}\label{sec:main_results}

Linear value factorization proposed by VDN \citep{sunehag2018value} is a simple yet effective method to realize a sufficient condition of the IGM principle in the CTDE paradigm. In this section, we provide theoretical analysis towards a deeper understanding of this popular factorization structure. Our result is based on FMA-FQI with linear value factorization, named FQI-LVF. We derive the closed-form update rule of FQI-LVF, and then reveal the underlying credit assignment mechanism realized by linear value factorization learning. We find that FQI-LVF has potential risks of unbounded divergence and may not have any fixed-point solutions in the general case. To improve its training stability, we prove that on-policy data collection can ensure the existence of fixed-point Q-values and provide local convergence guarantees near the optimal value function. %Moreover, we utilize a concrete didactic example to visualize our implications.

\vspace{-0.05in}

\subsection{Multi-Agent Fitted Q-Iteration with Linear Value Factorization (FQI-LVF)}

\vspace{-0.02in}

We define multi-agent fitted Q-iteration with linear value factorization as follows.

\begin{restatable}[FQI-LVF]{definition}{DEFFQILVD}\label{def:fqi-lvd} 
	FQI-LVF is an instance of FMA-FQI stated in Algorithm \ref{alg:fma-fqi}, which specifies the action-value function class with linear value factorization:
	\vspace{-0.05in}
	\begin{align}\label{eq:fqi-lvf}
	\mathcal{Q}^{\text{LVF}} = \biggl\{Q~\Big|~Q_{\text{tot}}({\bm\context},\mathbf{a}) = \sum_{i=1}^n Q_i(\context_i, a_i),~ [Q_i]_{i=1}^n\in \mathbb{R}^{|\contextspace\times\mathcal{A}|^n} \biggr\}.
	\end{align}
\end{restatable}

\vspace{-0.05in}
FQI-LVF considers a popular linear value factorization structure in MARL \citep{sunehag2018value,son2019qtran,wang2020QPLEX}, which reduces the action-value function class to provide attractive scalability for policy training \citep{wang2020off}. Value-decomposition network (VDN) \citep{sunehag2018value} provides a deep-learning-based implementation of FQI-LVF, in which individual value functions $[Q_i]_{i=1}^n$ are parameterized by deep neural networks, and the joint value function $Q_{\text{tot}}$ can be simply formed by their summation.

% stated in Eq.~\eqref{eq:additivity_igm} and thus does not require additional parameters. The specific action-value function class $\mathcal{Q}^{\text{LVD}}$ limits the number of parameters that fit the regression target $y^{(t)}$. 
%We defer the empirical analysis of VQN and other deep multi-agent Q-learning algorithms to Section \ref{sec:experiments}.

\vspace{-0.05in}

\subsection{Implicit Counterfactual Credit Assignment in Linear Value Factorization}
\label{sec:credit_assignment}

\vspace{-0.02in}

In the formulation of FQI-LVF, the empirical Bellman error minimization with linear value function class $\mathcal{Q}^{\text{LVF}}$ can be regarded as a weighted linear least-squares problem, which contains $n|\contextspace\times\mathcal{A}|$ variables to form individual value functions $[Q_i]_{i=1}^n$ and $|\contextspace\times\mathcal{A}|^n$ data points corresponding to all entries of the regression target $y^{(t)}(\bm{\context},\mathbf{a})$. Formally, by plugging the definition of $\mathcal{Q}^{\text{LVF}}$ into multi-agent fitted Q-iteration, FQI-LVF iteratively optimizes the following least-squares problem:
\begin{align*}
	Q^{(t+1)}\gets\mathcal{T}_D^{\text{LVF}}Q^{(t)}\equiv\mathop{\arg\min}_{Q\in\mathcal{Q}^{\text{LVF}}} \sum_{{\bm\context}, \mathbf{a}}p_D({\bm\context}, \mathbf{a})\left(y^{(t)}({\bm\context},\mathbf{a})- \sum_{i=1}^n Q_i^{(t+1)}(\context_{i}, a_{i})\right)^2,
\end{align*}
where $p_D$ denotes the probability measured by the dataset $D$. The closed-form solution of this least-squares problem is presented in Theorem \ref{theorem:CreditAssignmentTheorem} with the following assumption:

%The number of variables is insufficient to fit such a large system, which leads to a considerable structural error and deviate the outcome of Bellman update.
%To solve this least-squares problem, we derive a closed-form solution stated in Theorem \ref{theorem:CreditAssignmentTheorem}, which can be verified through \textit{Moore-Penrose inverse} \citep{moore1920reciprocal} for weighted linear regression analysis. 

%Theorem \ref{theorem:CreditAssignmentTheorem} relies on the assumption of decentralized data collection as the following statement.

\begin{assumption}[Decentralized Data Collection]\label{assumption:dataset}
	The dataset $D$ is collected by a decentrailized and exploratory policy ${\bm\pi}^D$ satisfying:
	
	\vspace{-0.18in}
	\begin{align}
	\forall({\bm\context}\times{\bm a})\in\bm{\contextspace}\times{\textbf{A}},~~{\bm\pi}^D(\mathbf{a}|{\bm\context}) = \prod_{i\in\mathcal{N}}\pi^D_i(a_i|\context_i)>0. \notag
	\end{align}
\end{assumption}

\vspace{-0.08in}
\begin{restatable}{theorem}{CreditAssignmentTheorem}\label{theorem:CreditAssignmentTheorem}
	Let $Q^{(t+1)}=\mathcal{T}_D^{\text{LVF}}Q^{(t)}$ denote a single iteration of the empirical Bellman operator. $\forall i \in \mathcal{N}, \forall({\bm\context}, \mathbf{a})\in\bm{\contextspace}\times\mathbf{A}$, the individual action-value function $Q_i^{(t+1)}(\context_{i}, a_{i})$ is updated to
	\begin{small}
	\begin{align} \label{eq:credit_assignment}
	\underbrace{\mathop{\mathbb{E}}_{(\context'_{-i},a'_{-i})\sim p_D(\cdot|{\context_i})}\left[y^{(t)}\left(\context_i\oplus\context'_{-i}, a_i \oplus a'_{-i}\right)\right]}_{\text{\small evaluation of the individual action $a_i$}}  - {\small \frac{n-1}{n}}\underbrace{\mathop{\mathbb{E}}_{{\bm\context}',\mathbf{a}'\sim p_D(\cdot|\Lambda^{-1}({\context_i}))}\left[ y^{(t)}\left({\bm\context}',\mathbf{a}'\right)\right]}_{\text{\small counterfactual baseline}} ~ + w_i(\context_i),
	\end{align}
	\end{small}
	where $z_i \oplus z'_{-i} $ denotes $\langle z_1', \cdots, z_{i-1}', z_i, z_{i+1}', \cdots, z_n' \rangle$, and $z'_{-i}$ denotes the elements of all agents except for agent $i$. $\Lambda^{-1}(\context_i)$ denotes the inverse of observation emission, which decodes the current latent state from $\context_i$. The residue term $\mathbf{w}\equiv\left[w_i\right]_{i=1}^n$ is an arbitrary function satisfying $\forall \bm\context\in\bm{\contextspace}$, $\sum_{i=1}^n w_i(\context_i) = 0$. %By linear value factorization, the joint value function $Q_{\text{tot}}^{(t+1)}(\bm{\context}, \bm{a})=\sum_{i=1}^n Q_i^{(t+1)}(\context_{i}, a_{i})$.
\end{restatable}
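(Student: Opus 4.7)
I would produce the claimed closed form as the (up to a characterised null space) unique minimiser of the weighted least-squares objective of FQI-LVF. The derivation has four ingredients: writing down the normal equations; reducing the conditional expectations via the Dec-ROMDP and decentralized-policy structure; using an averaging trick to expose the counterfactual baseline; and finally identifying the null space that becomes the residue term $w_i$.

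\emph{Normal equation and conditional-independence reduction.} Differentiating the objective $\sum_{\bm\context,\mathbf{a}}p_D(\bm\context,\mathbf{a})\bigl(y^{(t)}(\bm\context,\mathbf{a})-\sum_j Q_j(\context_j,a_j)\bigr)^2$ with respect to each free entry $Q_i(\context_i,a_i)$ and dividing by $p_D(\context_i,a_i)>0$ (Assumption~\ref{assumption:dataset}) gives, for every $(i,\context_i,a_i)$,
\[
Q_i(\context_i,a_i) + \sum_{j\neq i}\mathop{\mathbb{E}}_{(\context_j,a_j)\sim p_D(\cdot\mid\context_i,a_i)}[Q_j(\context_j,a_j)] = \mathop{\mathbb{E}}_{(\context_{-i},a_{-i})\sim p_D(\cdot\mid\context_i,a_i)}[y^{(t)}(\bm\context,\mathbf{a})].
\]
I would then chain three facts---rich observation (so $\context_i$ decodes $s=\Lambda^{-1}(\context_i)$), conditional independence of the emissions given $s$, and the product form $\bm\pi^D(\mathbf{a}\mid\bm\context)=\prod_i\pi_i^D(a_i\mid\context_i)$---to conclude $p_D(\context_{-i},a_{-i}\mid\context_i,a_i)=p_D(\context_{-i},a_{-i}\mid\context_i)$. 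Thus the right-hand side already matches the first term of Eq.~\eqref{eq:credit_assignment}, and the $j\neq i$ expectations on the left depend on $(\context_i,a_i)$ only through $s$.

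\emph{Extracting the baseline.} Define $M_j(s):=\mathop{\mathbb{E}}_{(\context_j,a_j)\sim p_D(\cdot\mid s)}[Q_j(\context_j,a_j)]$. Averaging the simplified normal equation over $(\context_i,a_i)\sim p_D(\cdot\mid s)$ collapses the LHS to $\sum_j M_j(s)$ and the RHS to $B(s):=\mathop{\mathbb{E}}_{(\bm\context',\mathbf{a}')\sim p_D(\cdot\mid s)}[y^{(t)}(\bm\context',\mathbf{a}')]$, which is exactly the counterfactual baseline appearing in Eq.~\eqref{eq:credit_assignment}. This yields the key identity $\sum_j M_j(s)=B(s)$, whence $\sum_{j\neq i}M_j(s)=B(s)-M_i(s)=\tfrac{n-1}{n}B(s)-\bigl(M_i(s)-\tfrac{1}{n}B(s)\bigr)$. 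Substituting back into the normal equation produces the claimed form with the particular residue $w_i(\context_i)=M_i(s)-\tfrac{1}{n}B(s)$, which sums to zero over $i$ for every $\bm\context$ by that same identity.

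\emph{Null space and main obstacle.} To justify the ``arbitrary $w_i$'' in the statement, note that any two minimisers differ by a perturbation $\delta_i(\context_i,a_i)$ with $\sum_i\delta_i(\context_i,a_i)=0$ on the support of $p_D$; Assumption~\ref{assumption:dataset} makes this support the full space, and fixing $\bm\context$ while varying a single $a_i$ forces each $\delta_i$ to be independent of $a_i$, so the constraint reduces to $\sum_i w_i(\context_i)\equiv 0$. The main obstacle I anticipate is the conditional-independence step $p_D(\cdot\mid\context_i,a_i)=p_D(\cdot\mid\context_i)$: it is precisely what separates the closed form into ``own-action evaluation minus state-only baseline'' and hence unlocks the counterfactual credit-assignment interpretation, and it genuinely requires all three structural assumptions (rich observation, factorised emission, decentralized behaviour policy) acting in concert.
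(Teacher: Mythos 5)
Your proposal is correct, and it reaches Eq.~\eqref{eq:credit_assignment} by a genuinely different route from the paper. The paper's proof (Lemma~\ref{lemma:2}) explicitly constructs a candidate Moore--Penrose pseudoinverse $\mathbf{A}^{p,\dag}$ of the weighted $m$-ary encoding matrix, verifies the defining identities ($\mathbf{A}^p\mathbf{A}^{p,\dag}$ and $\mathbf{A}^{p,\dag}\mathbf{A}^p$ self-adjoint, $\mathbf{A}^p\mathbf{A}^{p,\dag}\mathbf{A}^p=\mathbf{A}^p$, $\mathbf{A}^{p,\dag}\mathbf{A}^p\mathbf{A}^{p,\dag}=\mathbf{A}^{p,\dag}$) through several pages of case analysis, and then reads the full solution set off the general formula $\mathbf{x}^*=\mathbf{A}^{p,\dag}\mathbf{b}^p+(\mathbf{I}-\mathbf{A}^{p,\dag}\mathbf{A}^p)\mathbf{w}$. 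You instead work directly with the stationarity conditions: the normal equation per $(i,\context_i,a_i)$, the reduction $p_D(\context_{-i},a_{-i}\mid\context_i,a_i)=p_D(\context_{-i},a_{-i}\mid\context_i)$ from the product structure of $\Lambda$ and $\bm\pi^D$, the averaging identity $\sum_j M_j(s)=B(s)$ that eliminates the cross terms and produces the $\tfrac{n-1}{n}$ baseline, and a null-space argument (varying one $a_i$ at a time under full support) to show the residue is exactly an arbitrary $a_i$-independent $\mathbf{w}$ with $\sum_i w_i(\context_i)=0$. Both directions of the characterisation are covered, so the argument is complete. What each approach buys: yours is shorter, more elementary, and makes transparent exactly where each structural assumption enters (your ``main obstacle'' paragraph is precisely the content of the paper's Propositions~\ref{prop:dec-pomdp-equivalence} and \ref{prop:centralized-equivalence} on why dropping either assumption destroys the closed form); the paper's pseudoinverse computation is more mechanical but yields the explicit projector $\mathbf{I}-\mathbf{A}^{p,\dag}\mathbf{A}^p$, which it reuses later (e.g., in the divergence example of Proposition~\ref{prop:uniform_divergence}). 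The only point to tighten is that dividing by $p_D(\context_i,a_i)$ requires $\context_i$ to lie in the support of the emission distribution, a restriction the paper also makes implicitly when it sets up the regression per latent state.
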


The proof of Theorem \ref{theorem:CreditAssignmentTheorem} is based on \textit{Moore-Penrose inverse} \citep{moore1920reciprocal} for weighted linear regression analysis. The detailed proofs for all theory statements in this paper are deferred to Appendix. To serve intuitions, we present a simplified version of Eq.~\eqref{eq:credit_assignment} on Multi-agent MDP (MMDP) \citep{boutilier1996planning} to make the underlying insights more accessible:
\begin{align} \label{eq:mmdp_credit_assignment}
	Q^{(t+1)}(s,a) = \mathop{\mathbb{E}}_{a'_{-i}\sim p_D(\cdot|s)}\left[y^{(t)}\left(s, a_i \oplus a'_{-i}\right)\right]  - {\small \frac{n-1}{n}}\mathop{\mathbb{E}}_{\mathbf{a}'\sim p_D(\cdot|s)}\left[ y^{(t)}\left(s,\mathbf{a}'\right)\right]+ w_i(s),
\end{align}
where MMDP refers to a special case of Dec-ROMDP with all individual observations $x_i=s$, and $s$ denotes the latent state. The residue term $w_i(s)$ satisfies $\forall s\in\mathcal{S}$, $\sum_{i=1}^n w_i(s)=0$. Eq.~\eqref{eq:mmdp_credit_assignment} is derived from Eq.~\eqref{eq:credit_assignment} by translating notations.

\paragraph{Solution Space.}
The last term of Eq.~\eqref{eq:credit_assignment}, vector $\mathbf{w}$, indicates the entire valid individual action-value function space. We can ignore this term because $\mathbf{w}$ does not affect the local action selection of each agent and will be eliminated in the summation operator of linear value factorization (see Eq.~\eqref{eq:additivity_igm}), which indicates that joint action-value $Q_{\text{tot}}^{(t+1)}$ has a unique closed-form solution.

\vspace{-0.05in}
\paragraph{Implicit Counterfactual Credit Assignment.}
To interpret the underlying credit assignment of linear value factorization, we regard the empirical probability $p_D(\mathbf{a}|s)$ within the dataset $D$ as a \textit{default policy}. The first term of Eq.~\eqref{eq:credit_assignment} is the expected value of an individual action $a_i$ over the actions of other agents, which evaluates the expected return of executing an individual action $a_i$. The second term of Eq.~\eqref{eq:credit_assignment} is the expected value of the default policy, which is considered as the \textit{counterfactual baseline}.
%The first term is the expectation of one-step TD target value over the actions of other agents, which evaluates the expected return of executing an individual action $a_i$.
%The second term is the expectation of one-step target TD values over all joint actions, which can be regarded as a baseline function evaluating the average performance.
%Note that, if we regard the empirical probability $p_D(\mathbf{a}|s)$ within the dataset $D$ as a \textit{default policy}, the first term of Eq.~\eqref{eq:credit_assignment} is the expected value of an individual action $a_i$, and the second term is the expected value of the default policy, which is considered as the \textit{counterfactual baseline}.
Their difference corresponds to a classical credit assignment mechanism called \textit{counterfactual difference rewards} \citep{wolpert2002optimal, agogino2008analyzing}. This mechanism is usually adopted by policy-based methods such as counterfactual multi-agent policy gradients (COMA) \citep{foerster2018counterfactual}. A slight difference from Eq.~\eqref{eq:credit_assignment} is the extra importance weight $(n-1)/n$. It makes our derived credit assignment of FQI-LVF to be more meaningful in the sense that all global rewards should be assigned to agents. Consider a simple case where all joint actions generate the same reward signals, Eq.~\eqref{eq:credit_assignment} will assign $1/n$ unit of rewards to each agent, but \textit{counterfactual difference rewards} used by \citet{foerster2018counterfactual} will assign 0. The policy-gradient-based methods are not sensitive to such a constant gap, but it is critical to the value estimation of Q-learning algorithms.
%Recall that Implication \ref{fact:coma} regards the empirical probability $p_D(\mathbf{a}|s)$ as a certain policy.

%In most practical scenarios, the dataset $D$ is accumulated by running a greedy policy $\bm\pi^{(t)}=\mathop{\arg\max}_{\mathbf{a}}Q_{\text{tot}}^{(t)}(s,\mathbf{a})$ \citep{mnih2015human}, which refers as on-policy learning that will be discussed in the next section.

%We compare the theoretical analysis of FQI-LVF with the empirical results of VDN to demonstrate and verify the accuracy of our closed-form updating rule (see Eq.~\eqref{eq:credit_assignment}).

%Miscoordination penalties are also considered and the optimal strategy for two agents is to perform action $\mathcal{A}^{(1)}$ simultaneously.

\vspace{-0.05in}
\paragraph{Remark on Assumptions.} The closed-form solution derived in Theorem \ref{theorem:CreditAssignmentTheorem} relies on two assumptions, decentralized data collection and rich-observation problem formulation. These two assumptions correspond to the least requirement we found to make the closed-form solution meaningful and explainable. In Appendix \ref{sec:dec-pomdp-equivalence-appendix}, we prove that if we have a closed-form solution for FQI-LVF without either of the above assumptions, we can obtain a closed-form solution for arbitrary linear least-squares problems with binary weight matrices. Since the general least-squares problem does not have existing analytical solutions, it is unlikely to derive a general closed-form solution of FQI-LVF without any assumptions. It remains an open question whether we can relax current assumptions a little while keeping the simplicity of derived formulas.
% Note that, in terms of problem complexity, computing Bellman updates in general Dec-POMDPs with linear value factorization (i.e., without the rich-observation assumption) is equivalent to computing arbitrary linear least-squares problems with binary weight matrices, which does not have existing closed-form solutions. Please refer to Proposition \ref{prop:dec-pomdp-equivalence} in Appendix \ref{sec:dec-pomdp-equivalence-appendix} for detailed discussions.

\subsection{Data Distribution Matters for Linear Value Factorization} \label{sec:unbounded}

The closed-form update rule of FQI-LVF stated in Theorem \ref{theorem:CreditAssignmentTheorem} enables us to investigate more algorithmic properties of linear value factorization in multi-agent Q-learning. In the empirical literature, the major strength of linear value factorization is its high scalability \citep{wang2020off}, since the additive constraint between the global and local values is non-parametric which does not require additional learnable parameters. Meanwhile, the limitation of linear value factorization structure is also induced by its simplicity. The function capacity of $\mathcal{Q}^{\text{LVF}}$ cannot express all valid global values in $\mathbb{R}^{|\contextspace\times\mathcal{A}|^n}$ \citep{son2019qtran, mahajan2019maven, wang2020QPLEX}. As suggested by prior work \citep{chen2019information, rashid2020weighted}, the performance of Q-learning algorithms would become sensitive to the training data distribution when the function approximator is not perfect. In this section, we investigate how linear value factorization structure interacts with different data distributions. Our theoretical results contain two aspects:
\begin{enumerate}
	\item When the dataset is collected in a fully offline manner, e.g., by a uniform random policy, FQI-LVF would suffer from unbounded divergence and do not have any fixed points in the most unfavorable environment.
	\item By iteratively updating the dataset to a nearly on-policy data distribution, FQI-LVF would have fixed-point Q-values that expresses the optimal policy.
\end{enumerate}

%Although linear value factorization holds superior scalability in multi-agent settings \citep{wang2020off}, we find that FQI-LVF has the potential risk of unbounded divergence from arbitrary initialization. To improve the stability of linear value factorization training, we theoretically demonstrate that on-policy data collection can provide local convergence guarantee. Moreover, we also utilize a concrete two-state example to visualize our implications.

\subsubsection{Divergence Risk in Offline Training} \label{sec:unbounded-1}

To begin with the discussion on linear value factorization in offline training settings, we first investigate an important property, named $\gamma$-contraction, in the FQI-LVF framework (see Proposition \ref{fact:gamma_contraction}).

\begin{restatable}{proposition}{NotGammaContractionCorollary}
	\label{fact:gamma_contraction}
	The empirical Bellman operator $\mathcal{T}_D^{\text{LVF}}$ is not a $\gamma$-contraction, i.e., the following important property of the standard \textit{Bellman optimality operator} $\mathcal{T}$ does not hold for $\mathcal{T}_D^{\text{LVF}}$ anymore.
	\begin{align}\label{eq:gamma_contraction}
	\text{($\gamma$-contraction)} \qquad \forall Q_{\text{tot}},Q_{\text{tot}}'\in\mathcal{Q},~~ \|\mathcal{T}Q_{\text{tot}}-\mathcal{T}Q_{\text{tot}}'\|_\infty \leq \gamma\|Q_{\text{tot}}-Q_{\text{tot}}'\|_\infty \notag
	\end{align}
\end{restatable}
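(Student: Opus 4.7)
The plan is to refute $\gamma$-contraction by a minimal explicit counterexample, exploiting the factorization $\mathcal{T}_D^{\text{LVF}} = \Pi_D \circ \mathcal{T}$, where $\mathcal{T}$ is the standard Bellman optimality operator (known to be $\gamma$-contractive in $L^\infty$) and $\Pi_D$ denotes the data-weighted $L^2$ projection onto the additive class $\mathcal{Q}^{\text{LVF}}$. Any failure of $\gamma$-contraction must therefore originate in $\Pi_D$ being $L^\infty$-expansive when applied to non-additive targets, and the closed-form expression derived in Theorem~\ref{theorem:CreditAssignmentTheorem} is exactly the tool that will let me compute that projection explicitly.

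Concretely, I would instantiate an MMDP (the special case of Dec-ROMDP with $\context_i = s$) having $n=2$ agents, $|\mathcal{A}|=2$ actions per agent, two latent states $s_0,s_1$, zero rewards, and deterministic dynamics in which the joint action $(0,0)$ at $s_0$ self-loops while every other joint action at $s_0$ transitions to $s_1$, with all actions at $s_1$ returning to $s_0$. Let $\pi^D$ be the uniform decentralized policy, so Assumption~\ref{assumption:dataset} is satisfied. Define $Q$ by $Q(s_0,\cdot)=0$ and $Q(s_1,\cdot)=B$ for some $B>0$, and take $Q'\equiv 0$. Then $\|Q_{\text{tot}}-Q_{\text{tot}}'\|_\infty=B$, and $\mathcal{T}Q$ agrees with $\mathcal{T}Q'$ everywhere except at $s_0$, where $\mathcal{T}Q(s_0,\cdot)=(0,\gamma B,\gamma B,\gamma B)$ indexed by $(a_1,a_2)\in\{0,1\}^2$ while $\mathcal{T}Q'(s_0,\cdot)=0$, giving $\|\mathcal{T}Q-\mathcal{T}Q'\|_\infty=\gamma B$ (consistent with $\mathcal{T}$'s own $\gamma$-contraction).

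Next I would feed the non-additive target $y=(0,\gamma B,\gamma B,\gamma B)$ at $s_0$ into Theorem~\ref{theorem:CreditAssignmentTheorem} with uniform $p_D$. The counterfactual-baseline formula reduces to individual values $Q_i^{(t+1)}(s_0,0)=\tfrac{1}{8}\gamma B + w_i(s_0)$ and $Q_i^{(t+1)}(s_0,1)=\tfrac{5}{8}\gamma B + w_i(s_0)$ with $w_1(s_0)+w_2(s_0)=0$, so $\mathcal{T}_D^{\text{LVF}}Q_{\text{tot}}(s_0,(1,1))=\tfrac{5}{4}\gamma B$. Since $\mathcal{T}_D^{\text{LVF}}Q'_{\text{tot}}\equiv 0$, this gives
\[
\|\mathcal{T}_D^{\text{LVF}}Q-\mathcal{T}_D^{\text{LVF}}Q'\|_\infty \;\ge\; \tfrac{5}{4}\gamma B \;>\; \gamma\|Q_{\text{tot}}-Q_{\text{tot}}'\|_\infty,
\]
contradicting any $\gamma$-contraction inequality in $L^\infty$.

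The main obstacle I anticipate is producing a Bellman-difference $\mathcal{T}Q-\mathcal{T}Q'$ that is genuinely non-additive across joint actions, since only then can the additive projection overshoot in $L^\infty$. In any single-state MMDP the difference collapses to a constant in $\mathbf{a}$ (rewards cancel and the $\max$ contributes only a scalar to each target), so $\Pi_D$ acts as the identity on that difference and $\gamma$-contraction is preserved with equality; introducing two latent states whose reachability from $s_0$ depends on the joint action is the minimal modification needed to break this degeneracy and force $\Pi_D$ to distort the target shape non-trivially, which is what ultimately yields the factor $5/4$ above $\gamma$ in the ratio.
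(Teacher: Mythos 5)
Your proposal is correct, and the computation checks out: with two agents, two actions, uniform sampling, the additive projection of the target $(0,\gamma B,\gamma B,\gamma B)$ at $s_0$ yields $Q_{\text{tot}}(s_0,(1,1)) = \tfrac{5}{4}\gamma B$ via Theorem~\ref{theorem:CreditAssignmentTheorem}, while $\mathcal{T}_D^{\text{LVF}}Q'\equiv 0$, so the operator expands the $L^\infty$ distance by a factor of $\tfrac{5}{4}\gamma$ against an input gap of $B$. Your route, however, is genuinely different from the paper's. The paper proves Proposition~\ref{fact:gamma_contraction} indirectly: it observes that a $\gamma$-contraction would force FQI-LVF to converge from any initialization (Banach fixed-point argument), and then invokes the unbounded-divergence counterexample of Proposition~\ref{prop:uniform_divergence} to derive a contradiction. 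You instead exhibit a single explicit pair $Q,Q'\in\mathcal{Q}^{\text{LVF}}$ and one application of the closed-form projection that directly violates the contraction inequality. The underlying mechanism is the same in both arguments --- the $\tfrac{5}{4}$ expansion factor of the decentralized-uniform projection onto the additive class also drives the recursion $V^{t}_{\text{tot}}(s_2)\ge \tfrac{5\gamma}{4}V^{t-1}_{\text{tot}}(s_2)+\tfrac14$ in the paper's divergence proof --- but your version buys a shorter, self-contained refutation that does not require tracking the iteration to infinity or handling arbitrary initializations, at the cost of still depending on the closed-form solution of Theorem~\ref{theorem:CreditAssignmentTheorem}; the paper's version gets the proposition essentially for free once Proposition~\ref{prop:uniform_divergence} is established, and simultaneously delivers the stronger divergence statement. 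Your closing observation about why a single-state MMDP cannot work (the Bellman difference collapses to a constant in $\mathbf{a}$, on which the projection acts as the identity) is a correct and worthwhile sanity check that the paper does not spell out.
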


For the standard \textit{Bellman optimality operator} $\mathcal{T}$, $\gamma$-contraction property is critical to deriving the convergence guarantee of Q-learning algorithms \citep{sutton2018reinforcement}. In the context of FQI-LVF, the additivity constraint limits the joint action-value function class that it can express, which deviates the empirical Bellman operator $\mathcal{T}_D^{\text{LVF}}$ from the original \textit{Bellman optimality operator} $\mathcal{T}$ (see Theorem~\ref{theorem:CreditAssignmentTheorem}). This deviation is also known as \textit{inherent Bellman error} \citep{munos2008finite} or projection error, which corrupts a broad set of stability properties, including $\gamma$-contraction. 

\begin{figure*}
	\centering
	\vspace{-0.1in}
	\begin{subfigure}[b]{0.3\linewidth}
		\centering
		\input{figures/mdps/mdp-1}
		\caption{Two-state task}
		\label{fig:uniform_divergence_mdp}
	\end{subfigure}
	\begin{subfigure}[b]{0.34\linewidth}
		\centering
		\includegraphics[width=0.8\linewidth]{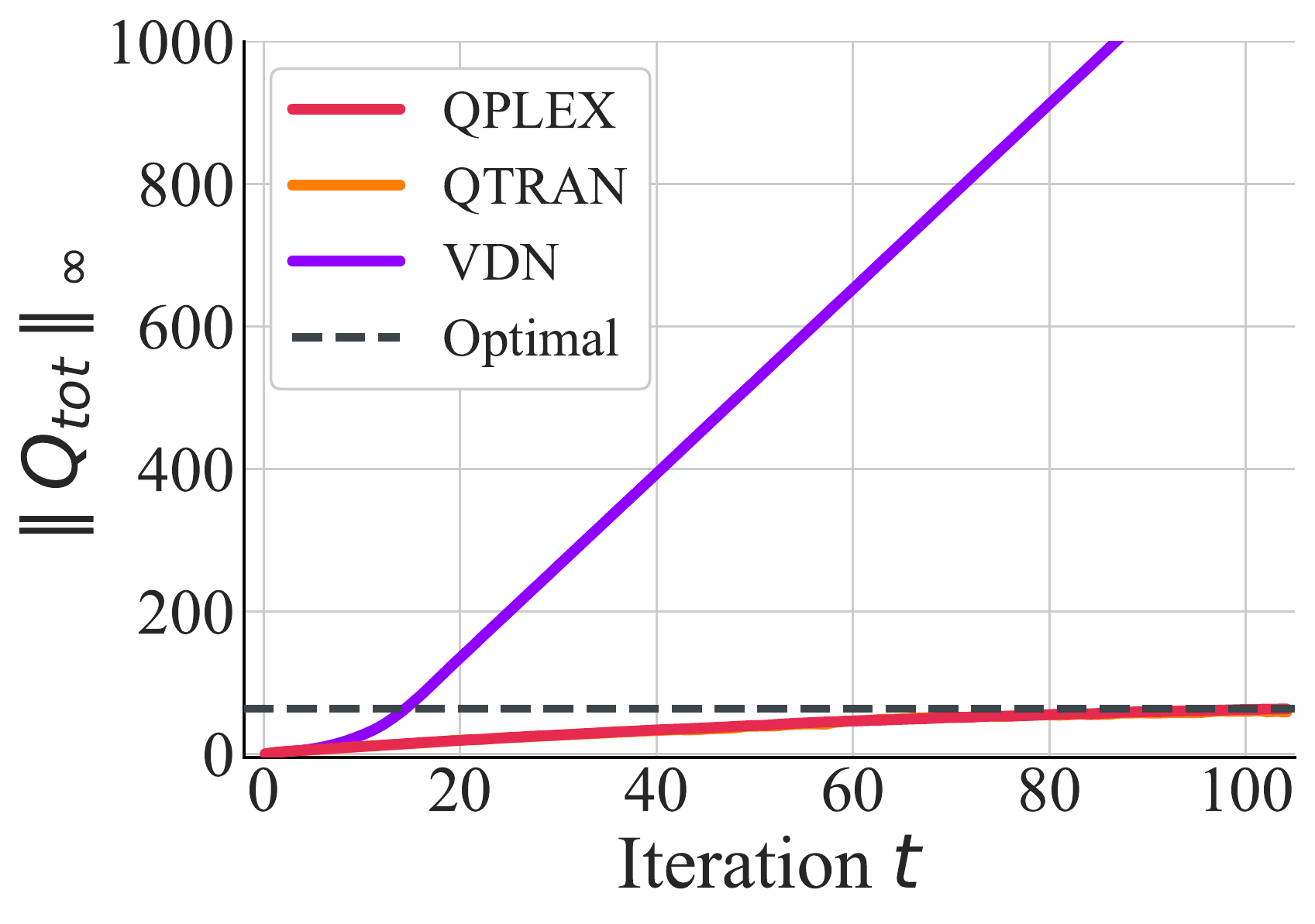}
		\caption{Deep MARL}
		\label{fig:deep_algos_toy_experiment}
	\end{subfigure}
	\begin{subfigure}[b]{0.34\linewidth}
		\centering
		\includegraphics[width=0.8\linewidth]{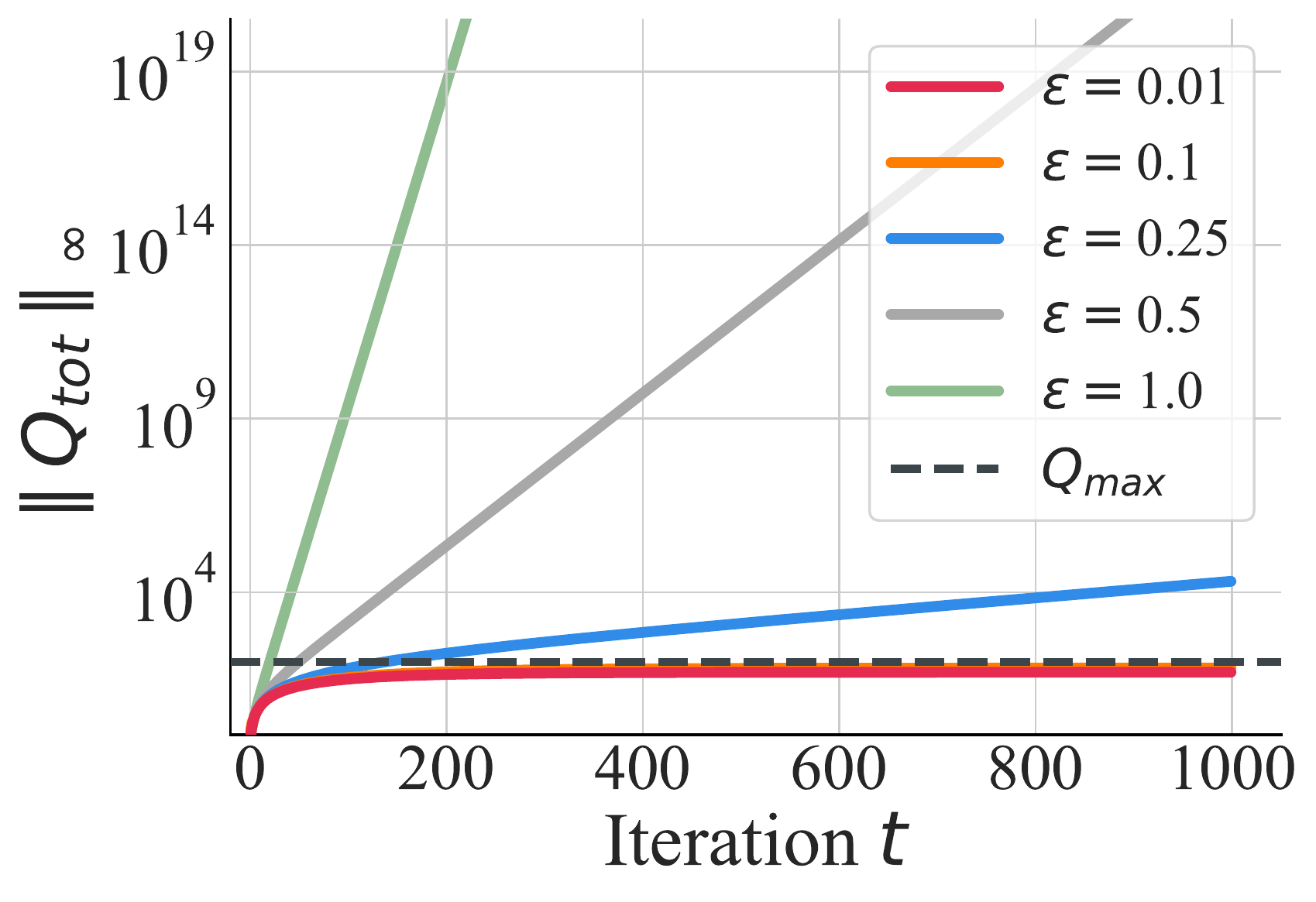}
		\caption{Different choices of $\epsilon$}
		\label{fig:eps_toy_experiment}
	\end{subfigure}
	\caption{(a) A two-state MMDP where FQI-LVF will diverge to infinity when $\gamma \in \left(\frac{4}{5}, 1\right)$ from arbitrary initialization $Q^{(0)}$. (b) The learning curves of $\|Q_{\text{tot}}\|_\infty$ produced by running several deep multi-agent Q-learning algorithms. (c) The learning curves of $\|Q_{\text{tot}}\|_\infty$ of on-policy FQI-LVF on the given task where the dataset is generated by different choices of hyper-parameters $\epsilon$ for $\epsilon$-greedy.}
	\vspace{-0.15in}
\end{figure*}
% In addition, $V_{\text{max}}=\frac{1}{1-\contextspace}=10^2$ denotes the maximum possible value.

To serve a concrete example, we construct a simple MMDP with two agents, two global states, and two actions (see Figure~\ref{fig:uniform_divergence_mdp}). %Multi-agent MDP (MMDP) \citep{boutilier1996planning} corresponds to a special case of Dec-ROMDP where all agents can obtain the global state as their observations. 
The optimal policy of this task is simply executing the action $\mathcal{A}^{(1)}$ at state $s_2$, which is the only way for two agents to obtain a positive reward. 
%Figure \ref{fig:eps_toy_experiment} visualizes the performance of Algorithm \ref{alg:on_policy_fqi} with different values of the hyper-parameter $\epsilon$. As a special case, 
The learning curve of $\epsilon=1.0$ (green one) in Figure~\ref{fig:eps_toy_experiment} refers to an offline setting with uniform data distribution, in which an unbounded divergence can be observed as depicted by the following proposition.
%We defer the demonstrated example about Proposition \ref{fact:gamma_contraction} to Section \ref{sec:Didactic}, where an unbounded divergence can be observed as depicted by the following proposition.
\begin{restatable}{proposition}{UniformDivergenceProposition}
	\label{prop:uniform_divergence}
	There exist an MMDP such that, when using uniform data distribution, the value function of FQI-LVF diverges to infinity from an arbitrary initialization $Q^{(0)}$.
\end{restatable}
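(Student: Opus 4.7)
The plan is to exhibit the two-state MMDP depicted in Figure~\ref{fig:uniform_divergence_mdp} as the divergent instance and analyze its FQI-LVF dynamics directly via the closed-form credit-assignment formula of Theorem~\ref{theorem:CreditAssignmentTheorem}. Because the transition and reward structure at $s_2$ depends on the agents' actions only through whether $a_1 = a_2$, and because uniform data is agent-symmetric, the entire state of the iteration is captured by two scalars, $M_t := \max_{\mathbf{a}} Q_{\text{tot}}^{(t)}(s_2, \mathbf{a})$ and $N_t := \max_{\mathbf{a}} Q_{\text{tot}}^{(t)}(s_1, \mathbf{a})$. A first observation to record is that the least-squares objective decouples across states, since $Q_{\text{tot}}(s,\mathbf{a}) = Q_1(s,a_1) + Q_2(s,a_2)$ only involves variables indexed by $s$, so the updates at $s_1$ and $s_2$ can be analyzed independently.

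Next I would dispatch the absorbing state $s_1$: all four TD targets there equal $\gamma N_t$, so the credit-assignment formula forces $Q_{\text{tot}}^{(t+1)}(s_1,\cdot)$ to inherit that value on every joint action, giving $N_{t+1} = \gamma N_t$ and therefore $N_t = \gamma^t N_0 \to 0$ geometrically. For $s_2$, I would plug the TD targets $T_{11} = 1 + \gamma M_t$, $T_{12} = T_{21} = \gamma M_t$, and $T_{22} = \gamma N_t$ into Eq.~\eqref{eq:credit_assignment} (equivalently, solve the $2\times 2$ weighted least-squares system) to obtain the three distinct post-update values $(3 + 5\gamma M_t - \gamma N_t)/4$, $(1 + 3\gamma M_t + \gamma N_t)/4$, and $(\gamma M_t - 1 + 3\gamma N_t)/4$ corresponding to $(\mathcal{A}^{(1)},\mathcal{A}^{(1)})$, the mixed actions, and $(\mathcal{A}^{(2)},\mathcal{A}^{(2)})$, respectively.

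I would then identify the dominant joint action as a function of $(M_t, N_t)$: whenever $\gamma(M_t - N_t) > -1$, the action $(\mathcal{A}^{(1)},\mathcal{A}^{(1)})$ attains the maximum, producing the recursion $M_{t+1} = (3 + 5\gamma M_t - \gamma N_t)/4$. Because the multiplier $5\gamma/4$ exceeds $1$ whenever $\gamma > 4/5$, this is an expanding affine map that drives $M_t \to +\infty$. The remaining subtlety is to show that \emph{every} initialization eventually enters this regime: if $M_0$ is so negative that $(\mathcal{A}^{(2)},\mathcal{A}^{(2)})$ is greedy instead, the recursion becomes $M_{t+1} = (\gamma M_t - 1 + 3\gamma N_t)/4$, a contraction of rate $\gamma/4 < 1$, which drags $M_t$ back into an $O(1)$ window within finitely many steps, after which the expanding branch takes over.

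The main obstacle is coordinating this piecewise-linear switching with the vanishing but nonzero perturbation $N_t$. I plan to fix a small $\epsilon > 0$, observe that $|N_t| < \epsilon$ for all $t \geq t_0$, and carry out the case analysis on $M_t$ for $t \geq t_0$ treating the $\gamma N_t$ terms as uniformly bounded perturbations that cannot flip the sign of $M_{t+1} - M_t$ once $M_t$ is outside a bounded window. Combined with $\|Q_{\text{tot}}^{(t)}\|_\infty \geq |M_t|$, this yields $\|Q_{\text{tot}}^{(t)}\|_\infty \to \infty$ from every initialization, in precisely the regime $\gamma \in (4/5, 1)$ announced alongside Figure~\ref{fig:uniform_divergence_mdp}.
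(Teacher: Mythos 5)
Your proposal is correct and follows essentially the same route as the paper's proof: the same two-state counterexample, the same closed-form update from Theorem~\ref{theorem:CreditAssignmentTheorem} under uniform sampling, geometric decay at the absorbing state $s_1$, and the piecewise-linear recursion at $s_2$ whose expanding branch has slope $5\gamma/4>1$ while the contracting branch (slope $\gamma/4$) funnels every initialization into the expanding regime. Your three post-update values for $Q_{\text{tot}}(s_2,\cdot)$ and the switching condition $\gamma(M_t-N_t)>-1$ match the paper's computations exactly, with the only (cosmetic) difference being that you track the joint value $M_t$ directly rather than the individual $Q^t(s_2,a)$ entries.
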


In the proof of Proposition \ref{prop:uniform_divergence}, we will show that the unbounded divergence would happen to an arbitrary initialization $Q^{(0)}$ in this example. To provide an implication for practical scenarios, we also investigate the performance of several deep multi-agent Q-learning algorithms in this two-state task. As shown in Figure~\ref{fig:deep_algos_toy_experiment}, VDN \citep{sunehag2018value}, a deep-learning-based implementation of FQI-LVF, also results in unbounded divergence.

\subsubsection{Benefits of On-Policy Data Collection} \label{sec:unbounded-2}

As shown in Theorem \ref{theorem:CreditAssignmentTheorem}, the choice of training data distribution affects the output of the empirical Bellman operator $\mathcal{T}_D^{\text{LVF}}$. Proposition \ref{prop:uniform_divergence} indicates that, with a fully offline data distribution, FQI-LVF may have no fixed points in the worst case. In this section, we show that FQI-LVF with on-policy data collection guarantees to have fixed-point Q-values expressing the optimal policy. It highlights the importance of training data distribution for linear value factorization. More specifically, we consider the dataset $D_t$ is accumulated by running an $\epsilon$-greedy policy \citep{mnih2015human} at $t$-th iteration, i.e., each agent performs a random action with probability $\epsilon$. To serve intuitions, we present an informal statement here and defer the detailed version, its proof, and the algorithm box of on-policy FQI-LVF to Appendix \ref{sec:on-policy-algorithm}.
% From this perspective, it is natural to consider which data distribution can benefit multi-agent Q-learning with linear value factorization. We switch to study the behavior of linear value factorization in an on-policy setting, i.e., the dataset $D$ is accumulated by running an $\epsilon$-greedy policy \citep{mnih2015human}. Based on the on-policy data collection, we include an informal statement of local stability of FQI-LVF below, deferring the precise version, the proof, and the algorithm box of on-policy FQI-LVF with detailed discussion to Appendix~\ref{sec:on-policy-algorithm}.
%the local stability of FQI-LVF can be characterized by the following theorem.
%$\bm\pi^{(t)}=\mathop{\arg\max}_{\mathbf{a}}Q_{\text{tot}}^{(t)}(s,\mathbf{a})$

\begin{restatable}[Informal]{theorem}{InformalEpsFixedPointTheorem} \label{thm:eps_fixed_point}
	When the hyper-parameter $\epsilon$ is sufficiently small, on-policy FQI-LVF has at least one fixed-point Q-value that derives the optimal policy.
\end{restatable}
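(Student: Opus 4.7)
The plan is to exhibit a specific fixed-point candidate anchored at the optimal joint policy $\pi^{\ast}$, and then verify its self-consistency via the closed-form of Theorem~\ref{theorem:CreditAssignmentTheorem}. I will work on the MMDP simplification (Eq.~\eqref{eq:mmdp_credit_assignment}); the Dec-ROMDP version follows by decoding latent states through $\Lambda^{-1}$. I assume, possibly after a canonical tie-breaking convention, that there is a deterministic optimal reactive policy $\pi^{\ast}=\langle\pi^{\ast}_1,\dots,\pi^{\ast}_n\rangle$ with a strict per-agent advantage gap: $Q^{\ast}_{\text{tot}}(s,\pi^{\ast}(s))>Q^{\ast}_{\text{tot}}(s,a_i\oplus\pi^{\ast}_{-i}(s))$ whenever $a_i\neq\pi^{\ast}_i(s)$.

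First, I would freeze the data distribution to the stationary occupancy $p_{\epsilon}$ of $\pi^{\ast}_{\epsilon}$, the $\epsilon$-greedy version of $\pi^{\ast}$. On this fixed $D_{\epsilon}$, define the LVF-projected \emph{policy evaluation} operator $\Pi_{\text{LVF}}\mathcal{T}^{\pi^{\ast}_{\epsilon}}$, obtained from $\mathcal{T}_{D_{\epsilon}}^{\text{LVF}}$ by replacing $\max_{\mathbf{a}'}$ with $\mathbb{E}_{\mathbf{a}'\sim\pi^{\ast}_{\epsilon}}$. A standard argument shows this operator is a $\gamma$-contraction in the $p_{\epsilon}$-weighted $L_2$ norm: $\mathcal{T}^{\pi^{\ast}_{\epsilon}}$ is a $\gamma$-contraction in that norm, and since $\mathcal{Q}^{\text{LVF}}$ is a linear subspace, $\Pi_{\text{LVF}}$ is an orthogonal (hence non-expansive) projection. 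Therefore a unique fixed point $Q^{(\epsilon)}_{\ast}\in\mathcal{Q}^{\text{LVF}}$ exists.

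Next I would apply Eq.~\eqref{eq:mmdp_credit_assignment} at $Q^{(\epsilon)}_{\ast}$ and take $\epsilon\to 0$. Summing the per-agent formulas and using $\sum_i w_i\equiv 0$ gives $\sum_i Q^{(\epsilon)}_{\ast,i}(s,\pi^{\ast}_i(s))=\mathbb{E}_{\mathbf{a}\sim\pi^{\ast}_{\epsilon}}[y^{\ast}(s,\mathbf{a})]$, which in the limit collapses the fixed-point TD target to $y^{\ast}(s,\mathbf{a})=Q^{\pi^{\ast}}_{\text{tot}}(s,\mathbf{a})=Q^{\ast}_{\text{tot}}(s,\mathbf{a})$. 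Substituting back yields
\begin{align*}
    Q^{(0)}_{\ast,i}(s,\pi^{\ast}_i(s)) - Q^{(0)}_{\ast,i}(s,a_i) = Q^{\ast}_{\text{tot}}(s,\pi^{\ast}(s)) - Q^{\ast}_{\text{tot}}(s,a_i\oplus\pi^{\ast}_{-i}(s)),
\end{align*}
which is strictly positive under the gap assumption whenever $a_i\neq\pi^{\ast}_i(s)$. Continuity of the fixed-point map $\epsilon\mapsto Q^{(\epsilon)}_{\ast}$, ensured because $\Pi_{\text{LVF}}\mathcal{T}^{\pi^{\ast}_{\epsilon}}$ depends continuously on $\epsilon$ with a contraction modulus uniformly bounded away from $1$, then propagates the strict ordering to all sufficiently small $\epsilon>0$, so $\arg\max_{a_i} Q^{(\epsilon)}_{\ast,i}(s,\cdot)=\pi^{\ast}_i(s)$.

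Finally, because the individual argmaxes of $Q^{(\epsilon)}_{\ast}$ reproduce $\pi^{\ast}$, the $\epsilon$-greedy behavioural policy that generates $D_{\epsilon}$ is exactly the one derived from $Q^{(\epsilon)}_{\ast}$, so the data distribution is self-consistent. Moreover, $\max_{\mathbf{a}'}Q^{(\epsilon)}_{\ast,\text{tot}}(s',\mathbf{a}')=Q^{(\epsilon)}_{\ast,\text{tot}}(s',\pi^{\ast}(s'))$, so the optimality TD target coincides with the policy-evaluation target at $Q^{(\epsilon)}_{\ast}$. Hence $Q^{(\epsilon)}_{\ast}$ is a fixed point of on-policy FQI-LVF that induces $\pi^{\ast}$. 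The main obstacle I expect is the third step: making the strict-gap argument go through uniformly requires a quantitative perturbation bound of the form $\|Q^{(\epsilon)}_{\ast}-Q^{(0)}_{\ast}\|_{\infty}=O(\epsilon)$, to be compared against the per-agent optimality gap $\min_{s,a_i\neq\pi^{\ast}_i(s)}\bigl[Q^{\ast}_{\text{tot}}(s,\pi^{\ast}(s))-Q^{\ast}_{\text{tot}}(s,a_i\oplus\pi^{\ast}_{-i}(s))\bigr]$; the strict-gap hypothesis (or an appropriate tie-breaking rule when it fails) is the key technical prerequisite that dictates how small $\epsilon$ must be.
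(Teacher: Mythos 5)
Your overall strategy diverges from the paper's, and it contains a gap that I do not think can be closed as written. The central problem is in your final identification step. You construct $Q^{(\epsilon)}_{\ast}$ as the fixed point of the \emph{projected policy-evaluation} operator obtained by replacing $\max_{\mathbf{a}'}$ with $\mathbb{E}_{\mathbf{a}'\sim\pi^{\ast}_{\epsilon}}$, and then claim that because $\arg\max_{\mathbf{a}'}Q^{(\epsilon)}_{\ast,\text{tot}}(s',\cdot)=\pi^{\ast}(s')$, ``the optimality TD target coincides with the policy-evaluation target.'' It does not: the evaluation target is $r+\gamma\,\mathbb{E}_{s'}\mathbb{E}_{\mathbf{a}'\sim\pi^{\ast}_{\epsilon}}[Q_{\text{tot}}(s',\mathbf{a}')]$, which is a strict convex combination over all joint actions, while the FQI-LVF target is $r+\gamma\,\mathbb{E}_{s'}[\max_{\mathbf{a}'}Q_{\text{tot}}(s',\mathbf{a}')]$. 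These differ by a term of order $\epsilon$ times the action gap of $Q^{(\epsilon)}_{\ast}$, which is nonzero for every fixed $\epsilon>0$ (and must be, if the argmax is to be well separated). So $Q^{(\epsilon)}_{\ast}$ is an exact fixed point of a \emph{different} operator, not of on-policy FQI-LVF, and the theorem demands an exact fixed point at some positive $\epsilon$. You cannot repair this by evaluating the greedy $\pi^{\ast}$ under data from $\pi^{\ast}_{\epsilon}$ either: the Tsitsiklis--Van Roy contraction of the projected evaluation operator in the $p_{\epsilon}$-weighted $L_2$ norm requires the backup policy to match the sampling distribution, and breaking that match reopens the door to divergence. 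Indeed the paper's Proposition~\ref{fact:gamma_contraction} shows the max-based operator $\mathcal{T}^{\text{LVF}}_D$ is not a $\gamma$-contraction, which is precisely why a Banach-style argument is unavailable here.

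The paper takes a different and non-constructive route: it defines a compact convex ``bounding box'' $\mathcal{B}$ of value functions within $\frac{1}{8n\gamma}\mathcal{E}(Q^*)$ of $V^*$ that all induce $\bm\pi^*$, proves via quantitative perturbation bounds (Lemmas~\ref{lemma:eps_pi_star_value}--\ref{lemma:eps_non_optimal_values}) that $\mathcal{T}^{\text{LVF}}_{D}$ maps $\mathcal{B}$ into its interior when $\epsilon$ is small, and then invokes Brouwer's fixed-point theorem. Your perturbation instinct in the third step is in the same spirit as those lemmas, but it is anchored to the wrong operator, and your limiting object $Q^{(0)}_{\ast}$ is itself problematic: at $\epsilon=0$ the per-state weighted least-squares problem of Theorem~\ref{theorem:CreditAssignmentTheorem} degenerates (the closed form carries $1/\varepsilon$ factors and requires strictly positive action probabilities), so continuity of $\epsilon\mapsto Q^{(\epsilon)}_{\ast}$ down to $\epsilon=0$ needs the kind of explicit coefficient bounds the paper develops rather than a generic continuity appeal. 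If you want to salvage your approach, the natural fix is to abandon the contraction/evaluation detour and show directly that the max-based operator leaves a neighborhood of $Q^*$ invariant, which is exactly the paper's argument.
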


The proof of this statement is based on Brouwer's fixed-point theorem \citep{brouwer1911abbildung}, in which we construct a bounding box in the value function space such that $\mathcal{T}_D^{\text{LVF}}$ is a closed operator. Within this bounding box, the induced policies of these value functions are equal to the optimal policy $\pi^*$ in the given MDP. It indicates that multi-agent Q-learning with linear value factorization has a convergent region, in which all included value function induces optimal actions. It ensures the local convergence guarantee near the optimal solution.

Figure~\ref{fig:eps_toy_experiment} visualizes the performance of on-policy FQI-LVF with different values of the hyper-parameter $\epsilon$. With a smaller $\epsilon$ (such as 0.1 or 0.01), on-policy FQI-LVF demonstrates excellent training stability, and their corresponding collected datasets are closer to on-policy data distribution.

\paragraph{Relation to Prior Work.}
In the literature of Q-learning with function approximation, there is a long history of studying the behavior of Q-learning with non-universal function classes. Many counterexamples have been proposed to indicate certain function classes suffering from oscillation \citep{gordon1995stable} or even unbounded divergence \citep{baird1995residual, tsitsiklis1996feature}. Our analysis corresponds to a case study of the function class $\mathcal{Q}^{\text{LVF}}$ with linear value factorization structure in the multi-agent setting, which is restricted by the additivity constraint stated in Eq.~\eqref{eq:additivity_igm}.
Several prior works suggest that on-policy data distribution is beneficial to the learning stability in some settings \citep{sutton1996generalization, tsitsiklis1998an, van2018deep}, and our theorems in this section also indicate this implication in the multi-agent linear value factorization. A recent work, Weighted QMIX~\citep{rashid2020weighted}, proposes a technique called \textit{idealised central weighting} that re-weights the training data to a nearly on-policy distribution. Their results show that training with on-policy data can also benefit monotonic value factorization.

\section{Multi-Agent Q-Learning with IGM Value Factorization} \label{sec:theoretical-results}

Recently, advanced deep multi-agent Q-learning algorithms, QTRAN \citep{son2019qtran} and QPLEX \citep{wang2020QPLEX}, aim to utilize the IGM factorization method to formulate their value function class for CTDE paradigm and achieve state-of-the-art performance in StarCraft II benchmark \citep{samvelyan2019starcraft} with online data collection. In this section, we introduce FMA-FQI with IGM factorization, named FQI-IGM. Compared with the unbounded divergence risk of linear value factorization analyzed in section~\ref{sec:main_results}, we find that FQI-IGM provides a global optimality convergence guarantee in Dec-ROMDPs, which implies IGM value factorization is a stable choice for cooperative multi-agent Q-learning.

\subsection{Multi-Agent Fitted Q-Iteration with IGM Value Factorization (FQI-IGM)}

We define multi-agent fitted Q-iteration with IGM value factorization as follows.

\begin{restatable}[FQI-IGM]{definition}{DEFFQIIGM}\label{def:fqi-igm} 
	FQI-IGM is an instance of FMA-FQI stated in Algorithm \ref{alg:fma-fqi}, which specifies the action-value function class with a complete IGM principle realization
	\vspace{-0.05in}
	\begin{align}
	\mathcal{Q}^{\text{IGM}} =~\biggl\{Q~\Big|~\mathop{\arg\max}_{\mathbf{a}\in\mathbf{A}} Q_{\text{tot}}({\bm\context}, \mathbf{a})
	=\left\langle\mathop{\arg\max}_{a_1\in\mathcal{A}} Q_i(\context_i, a_i)\right\rangle_{i=1}^n, ~ [Q_i]_{i=1}^n\in \mathbb{R}^{|\contextspace\times\mathcal{A}|^n} \biggr\}.
	\end{align}
\end{restatable}

\vspace{-0.05in}
Compared with FQI-LVF stated in Definition \ref{def:fqi-lvd}, the differences are the factorized value function classes, i.e, $\mathcal{Q}^{\text{LVF}}$ vs. $\mathcal{Q}^{\text{IGM}}$. Note that $\mathcal{Q}^{\text{LVF}} \subset \mathcal{Q}^{\text{IGM}}$ indicates that the linear factorization structure realizes a subspace of IGM value functions. In empirical studies \citep{wang2020QPLEX}, the enriched value function class $\mathcal{Q}^{\text{IGM}}$ is observed to have better training stability and overall performance than that of linear value factorization $\mathcal{Q}^{\text{LVF}}$. In the following section, we will provide theoretical supports regarding the differences between the algorithmic properties of $\mathcal{Q}^{\text{LVF}}$ and $\mathcal{Q}^{\text{IGM}}$.

\subsection{Global Convergence Guarantee of IGM Value Factorization} \label{sec:imporve-stability}

%To achieve superior training scalability, linear value factorization structure limits the joint action-value function class $\mathcal{Q}^{\text{LVF}}$, which is the origin of the deviation of the empirical Bellman operator $\mathcal{T}_D^{\text{LVF}}$, discussed in Proposition~\ref{fact:gamma_contraction}. In this subsection, we find that FQI-IGM can provide a global optimality convergence guarantee, which is established by the following theorem.

The major difference between $\mathcal{Q}^{\text{LVF}}$ and $\mathcal{Q}^{\text{IGM}}$ is their representation power of function expressiveness. Recall that the unbounded divergence of linear value factorization is caused by the projection error induced from the limited function expressiveness of $\mathcal{Q}^{\text{LVF}}$. In this section, we show that the divergence issues would not happen to FQI-IGM using $\mathcal{Q}^{\text{IGM}}$ in Dec-ROMDPs. Formally, the convergence guarantee of FQI-IGM can be derived based on the following assumption:
\begin{assumption}[Exploratory Data Collection] \label{assumption:exploratory_dataset}
	The dataset $D$ is collected by an exploratory policy ${\bm\pi}^D$ satisfying $\forall({\bm\context}\times{\bm a})\in\bm{\contextspace}\times{\textbf{A}},{\bm\pi}^D(\mathbf{a}|{\bm\context})>0$.
\end{assumption}
\begin{restatable}{theorem}{IGMCOMPLETETheorem}\label{thm:igm_complete}
	FQI-IGM globally converges to the optimal value function in arbitrary Dec-ROMDPs.
\end{restatable}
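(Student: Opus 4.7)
My plan is to show that under the Dec-ROMDP assumption and Assumption~\ref{assumption:exploratory_dataset}, the class $\mathcal{Q}^{\text{IGM}}$ is expressive enough that the FMA-FQI update on $Q_{\text{tot}}$ coincides pointwise with the standard Bellman optimality operator $\mathcal{T}$. Because $\mathcal{T}$ is a $\gamma$-contraction in $\|\cdot\|_\infty$, this immediately yields global convergence of $Q_{\text{tot}}^{(t)}$ to the optimal $Q^*$, after which the IGM consistency of the accompanying individual value functions forces the decentralized greedy policy extracted in Eq.~\eqref{eq:fma-fqi-pi} to converge to an optimal joint policy.

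The first step is an \emph{expressive-completeness} lemma: every state-based joint action-value function---i.e., one of the form $Q_{\text{tot}}(\bm\context, \mathbf{a}) = \tilde{Q}(s, \mathbf{a})$ with $s = \Lambda^{-1}(i, \context_i)$ for any $i$---lies in $\mathcal{Q}^{\text{IGM}}$. Given such a $Q_{\text{tot}}$, I would pick any selector $(a_1^*(s), \ldots, a_n^*(s)) \in \arg\max_{\mathbf{a}} \tilde{Q}(s, \mathbf{a})$ and set $Q_i(\context_i, a_i) = \mathbf{1}[a_i = a_i^*(\Lambda^{-1}(i, \context_i))]$. Because $\Lambda^{-1}$ decodes the latent state from a single local observation, $\arg\max_{a_i} Q_i(\context_i, a_i) = a_i^*(s)$, which equals the $i$-th component of the joint $\arg\max$, verifying the IGM identity in Eq.~\eqref{eq:igm}.

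Next, I would observe that the Bellman target $y^{(t)}(\bm\context, \mathbf{a}) = r(\bm\context, \mathbf{a}) + \gamma \mathbb{E}_{\bm\context'}[\max_{\mathbf{a}'} Q_{\text{tot}}^{(t)}(\bm\context', \mathbf{a}')]$ appearing in Eq.~\eqref{eq:fma-fqi} is automatically state-based in a Dec-ROMDP, since both the reward $r$ and the transition kernel $P$ depend only on the latent state $s$. Hence $y^{(t)}$ lies in $\mathcal{Q}^{\text{IGM}}$ by the lemma above. Under Assumption~\ref{assumption:exploratory_dataset} every $(\bm\context, \mathbf{a})$ pair carries positive mass in the population-form objective, so the squared-error loss equals $0$ iff $Q_{\text{tot}}^{(t+1)}(\bm\context, \mathbf{a}) = y^{(t)}(\bm\context, \mathbf{a})$ pointwise, and such a zero-loss minimizer is attainable in $\mathcal{Q}^{\text{IGM}}$ by taking $Q_{\text{tot}}^{(t+1)} = y^{(t)}$ together with the IGM-consistent individual factorization from the lemma. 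Consequently the FMA-FQI update collapses to tabular value iteration, $Q_{\text{tot}}^{(t+1)} = \mathcal{T} Q_{\text{tot}}^{(t)}$.

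From here, the standard contraction argument yields $\|Q_{\text{tot}}^{(t)} - Q^*\|_\infty \leq \gamma^t \|Q_{\text{tot}}^{(0)} - Q^*\|_\infty \to 0$. By IGM, the individual greedy action $\pi_i^{(t)}(\context_i)$ equals the $i$-th component of $\arg\max_{\mathbf{a}} Q_{\text{tot}}^{(t)}(\bm\context, \mathbf{a})$, which agrees with an optimal joint action as soon as the sup-norm error drops below half the action gap of $Q^*$ (with any consistent tie-breaking rule in degenerate cases). The main obstacle is the expressive-completeness lemma in the second step: one must treat the non-uniqueness of $\arg\max$ carefully and exploit the decodability $\Lambda^{-1}$ in an essential way---this is where the rich-observation assumption earns its keep, since without it a function of $\context_i$ alone could not faithfully reproduce the $i$-th coordinate of the joint $\arg\max$. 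A secondary technical nuance is that the $\arg\min$ in Eq.~\eqref{eq:fma-fqi} is not \emph{a priori} attained over $\mathcal{Q}^{\text{IGM}}$, but this is sidestepped by exhibiting an explicit zero-loss member of the class rather than invoking compactness.
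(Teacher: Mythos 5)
Your proposal is correct and follows essentially the same route as the paper: both show that the Bellman target is exactly representable in $\mathcal{Q}^{\text{IGM}}$ by constructing indicator-style individual value functions on the components of the joint $\arg\max$ (with a fixed tie-breaking rule), so the empirical Bellman error vanishes, the update collapses to the standard Bellman optimality operator, and $\gamma$-contraction gives global convergence. Your explicit remark that the target is state-based and that $\Lambda^{-1}$ is what lets $Q_i(\context_i,\cdot)$ reproduce the $i$-th coordinate of the joint $\arg\max$ is a slightly more careful articulation of a step the paper leaves implicit, but it is not a different argument.
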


Theorem~\ref{thm:igm_complete} relies on a fact that $\mathcal{Q}^{\text{IGM}}$ is closed under the Bellman operator, i.e., the empirical Bellman error minimization discussed in Eq.~\eqref{eq:fma-fqi} can reach zero, and the empirical Bellman operator $\mathcal{T}_D^{\text{IGM}}$ satisfies the $\gamma$-contraction property. The global convergence presented in Theorem~\ref{thm:igm_complete} is the same as the convergence property of tabular value iteration.

\paragraph{Remark on Assumptions.}
The only purpose of introducing the assumption of the exploratory dataset as Assumption \ref{assumption:exploratory_dataset} is to make the outputs of empirical Bellman error minimization well-defined in analysis. This assumption is weaker than the assumptions used in Theorem \ref{theorem:CreditAssignmentTheorem} and Theorem \ref{thm:eps_fixed_point} for analyzing linear value factorization. It indicates that multi-agent Q-learning with IGM value factorization is not sensitive to the training data distribution. In addition, we also consider the problem formulation of Dec-ROMDP for FQI-IGM, since prior work \citep{madani1999undecidability} suggests that, there is no algorithm that can guarantee global convergence in general infinite-horizon Dec-POMDPs.

\subsection{IGM Value Factorization Provides Better Learning Stability in Offline Deep MARL}

\begin{figure*}[t]
	\centering
	\vspace{-0.1in}
	\begin{subfigure}[c]{0.24\linewidth}
		\centering
		\includegraphics[width=0.98\linewidth]{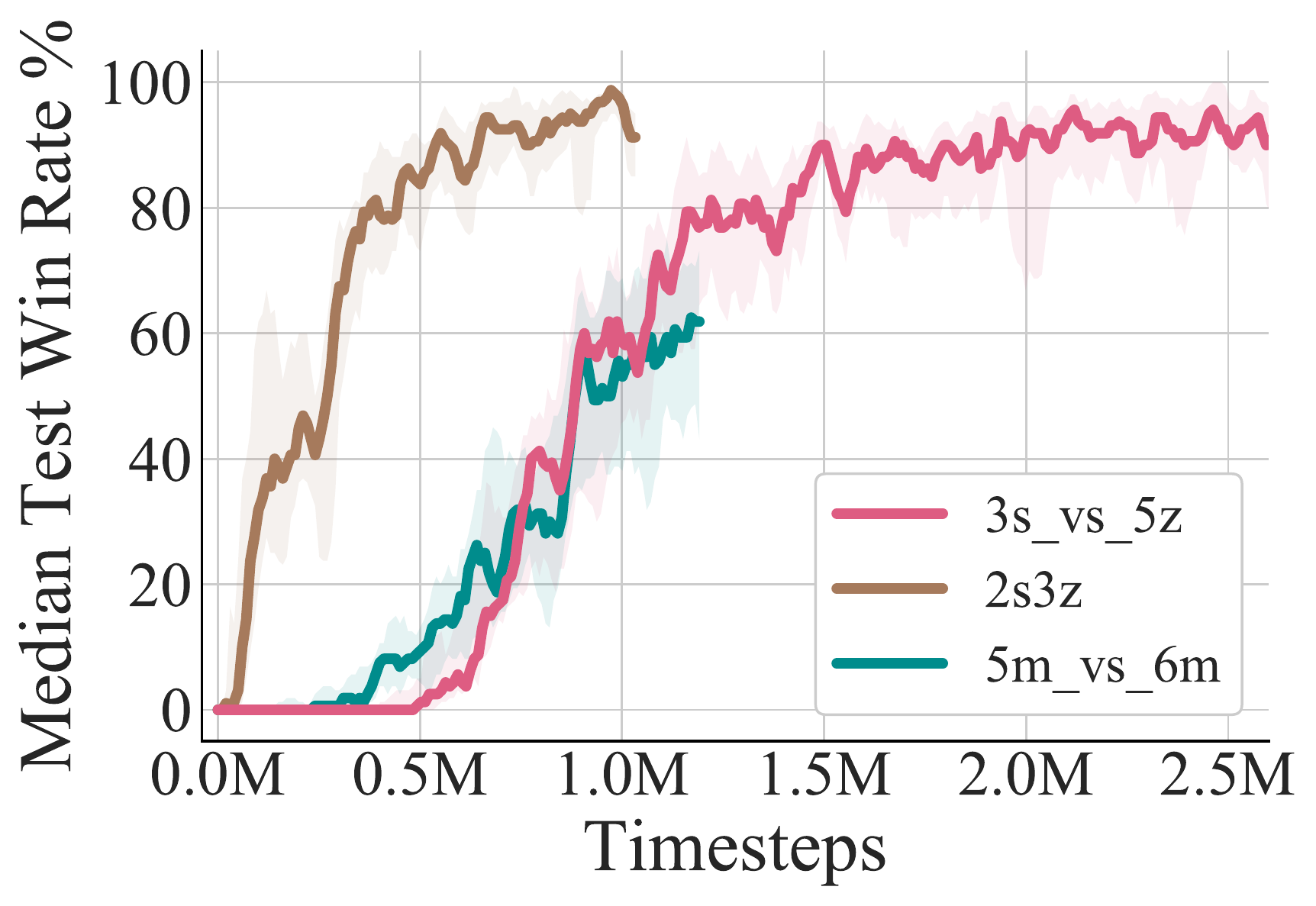}
		\caption{Online data collection}
		\label{fig:qmix_offline}
	\end{subfigure}
	\begin{subfigure}[c]{0.24\linewidth}
		\centering
		\includegraphics[width=0.98\linewidth]{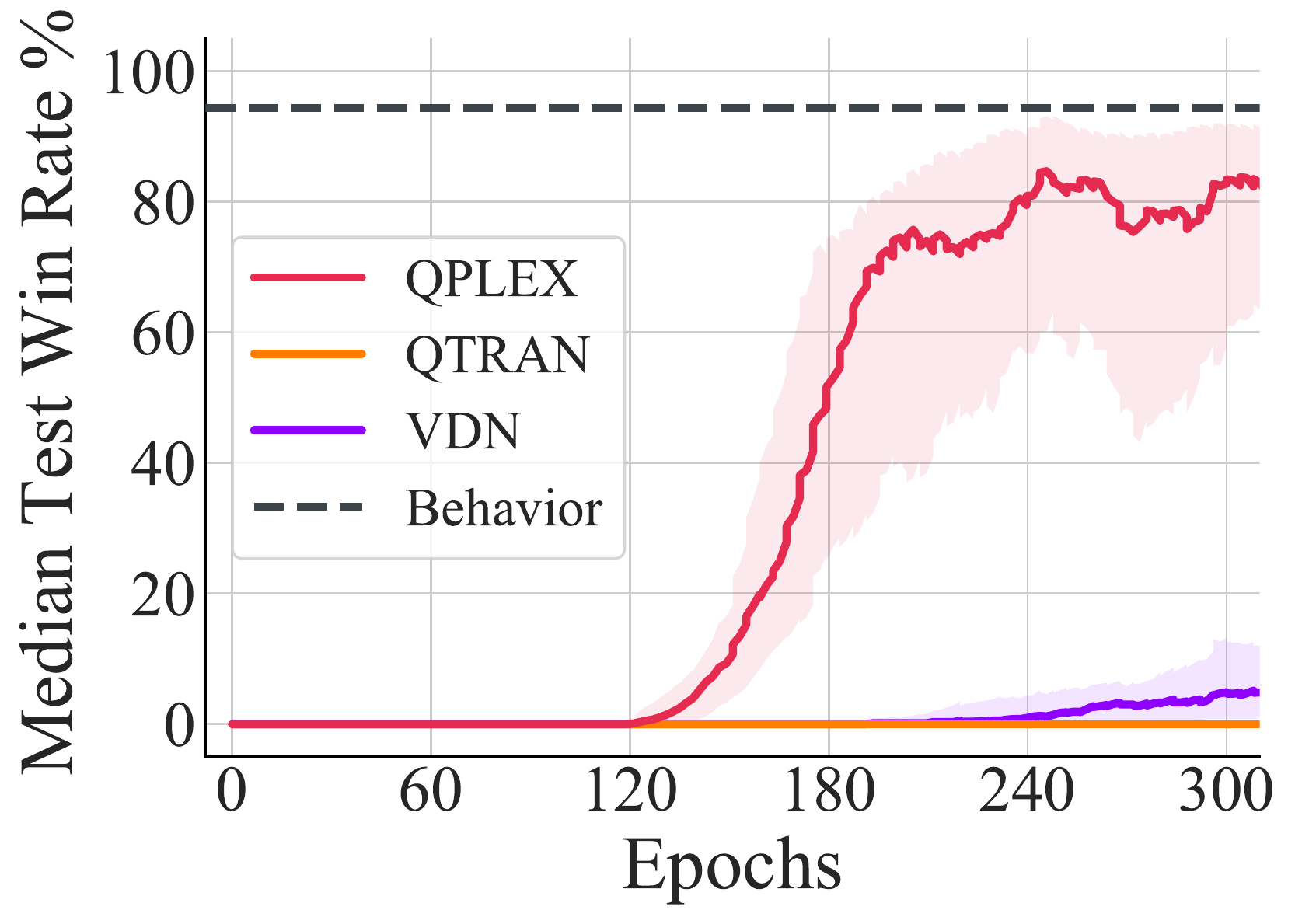}
		\caption{3s\_vs\_5z}
	\end{subfigure}
	\begin{subfigure}[c]{0.24\linewidth}
		\centering
		\includegraphics[width=0.98\linewidth]{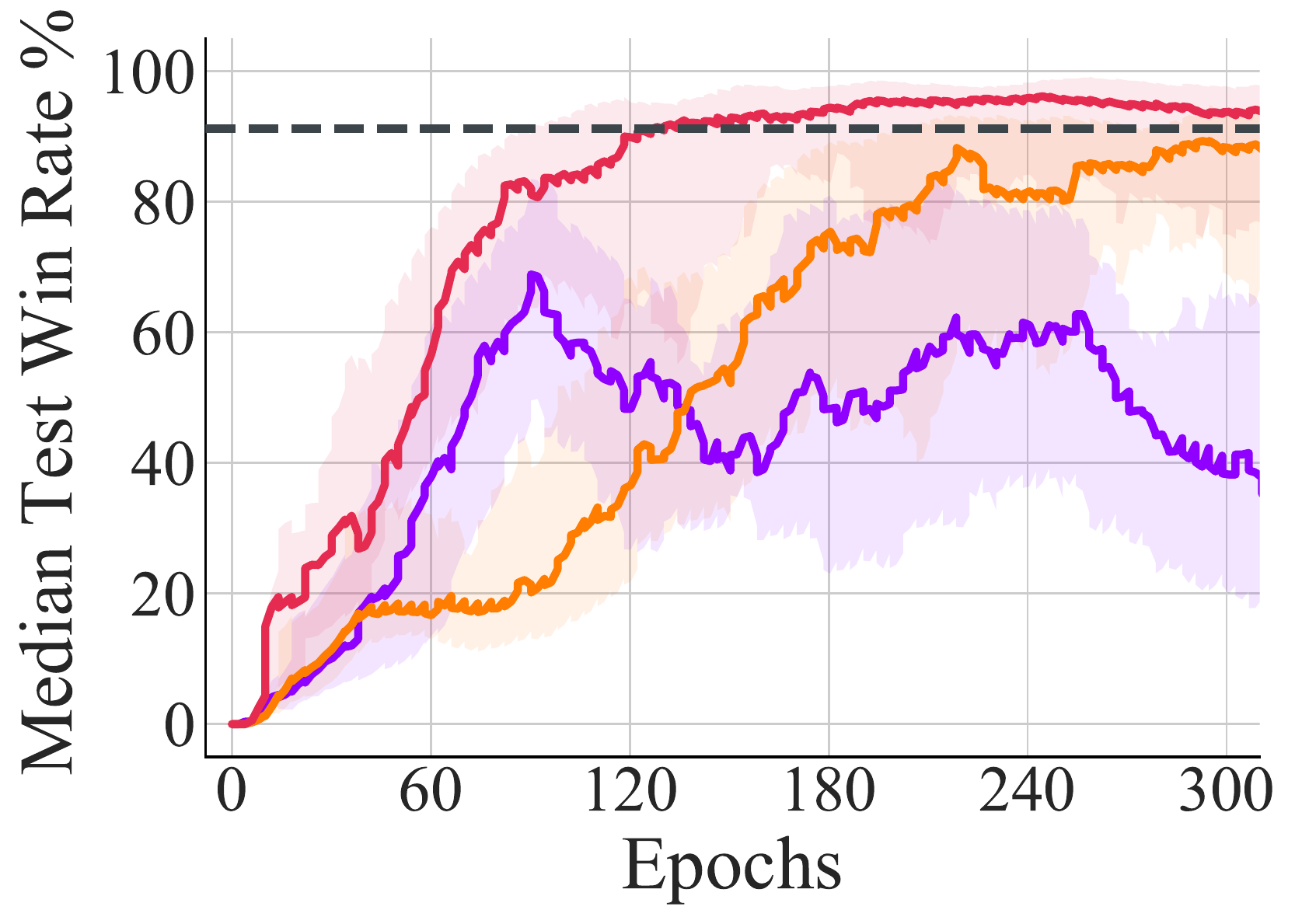}
		\caption{2s3z}
	\end{subfigure}
	\begin{subfigure}[c]{0.24\linewidth}
		\centering
		\includegraphics[width=0.98\linewidth]{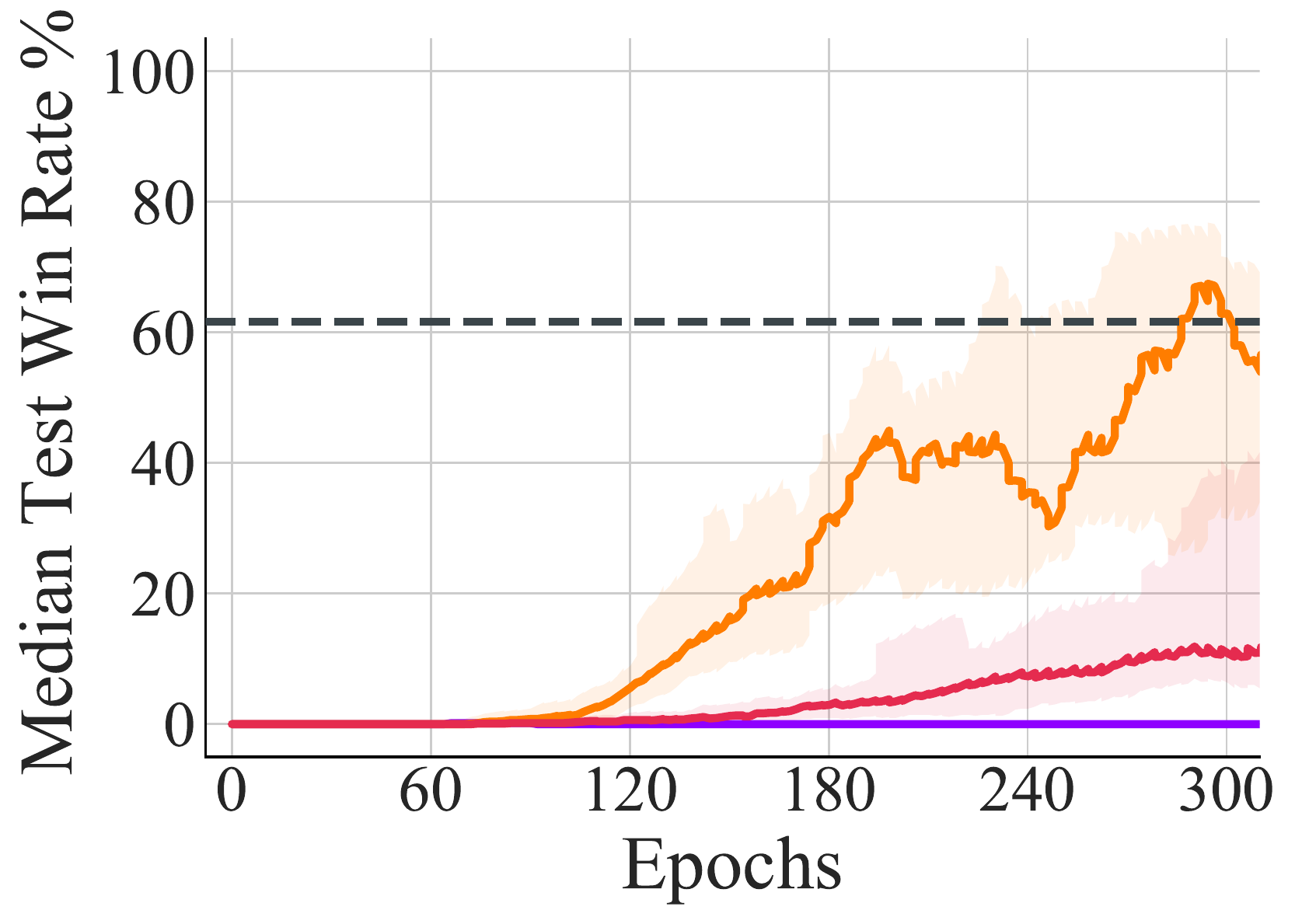}
		\caption{5m\_vs\_6m}
	\end{subfigure}
	\newline
	\begin{subfigure}[c]{0.24\linewidth}
		\centering
		\includegraphics[width=0.98\linewidth]{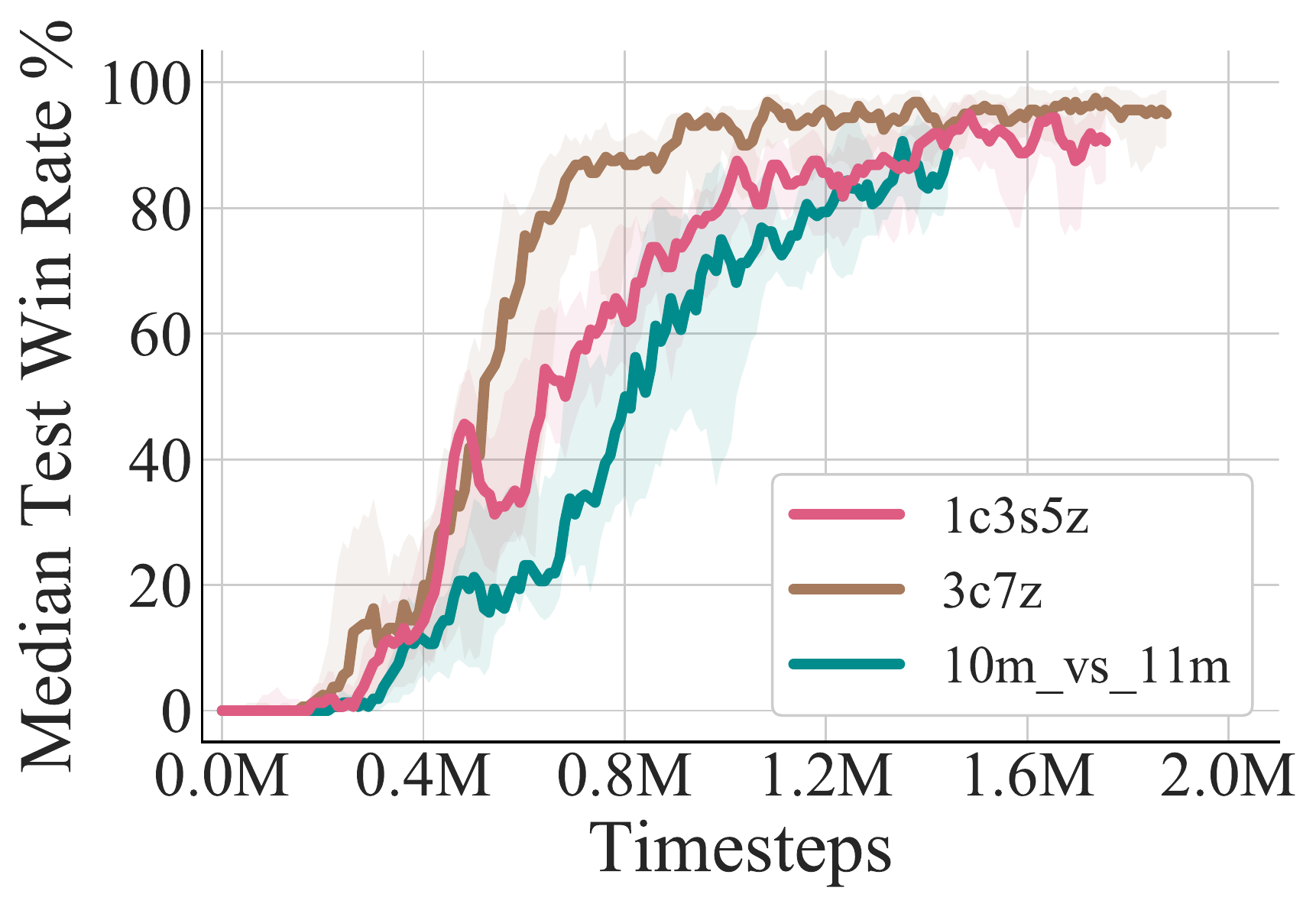}
		\caption{Online data collection}
	\end{subfigure}
	\begin{subfigure}[c]{0.24\linewidth}
		\centering
		\includegraphics[width=0.98\linewidth]{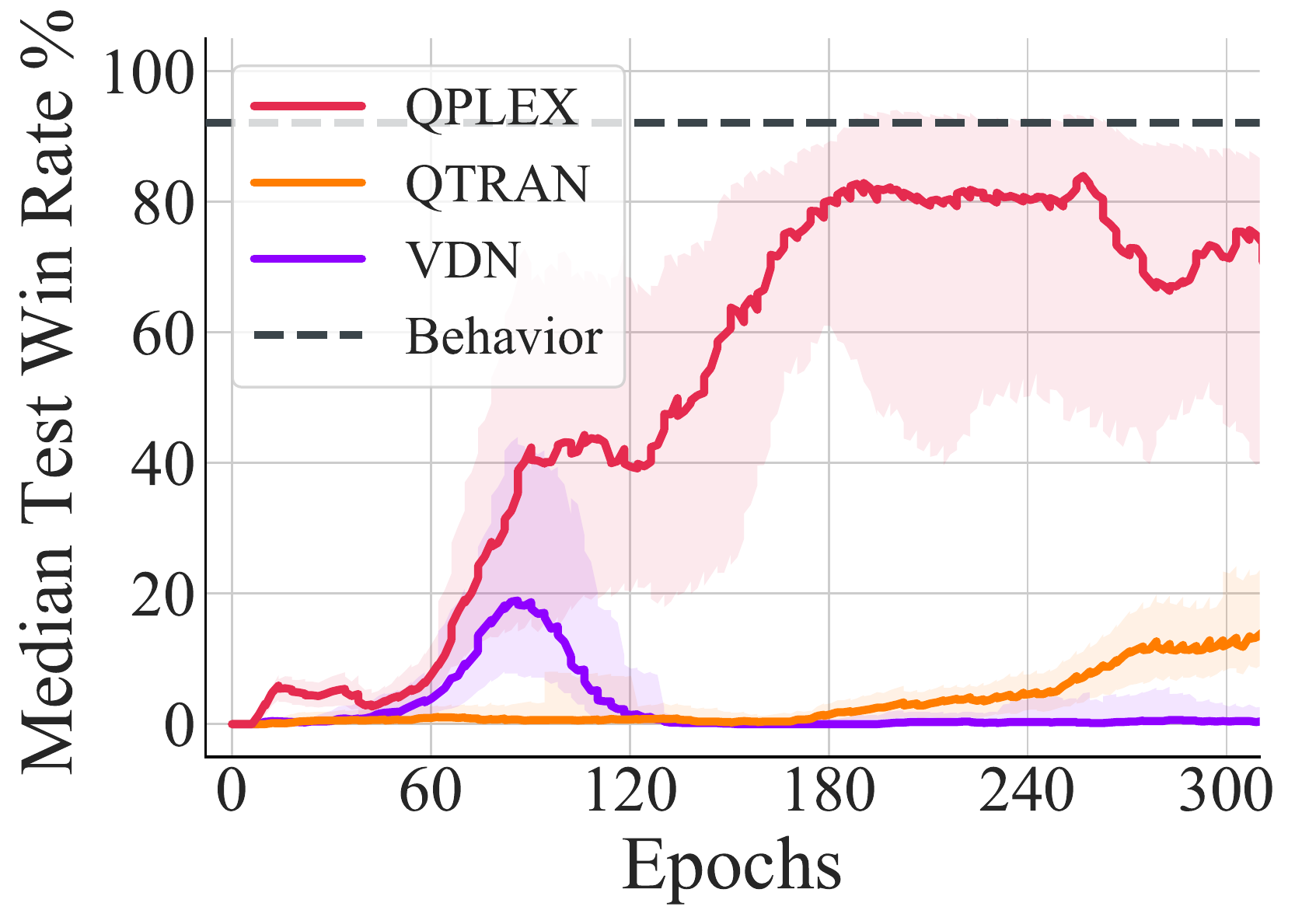}
		\caption{1c3s5z}
	\end{subfigure}
	\begin{subfigure}[c]{0.24\linewidth}
		\centering
		\includegraphics[width=0.98\linewidth]{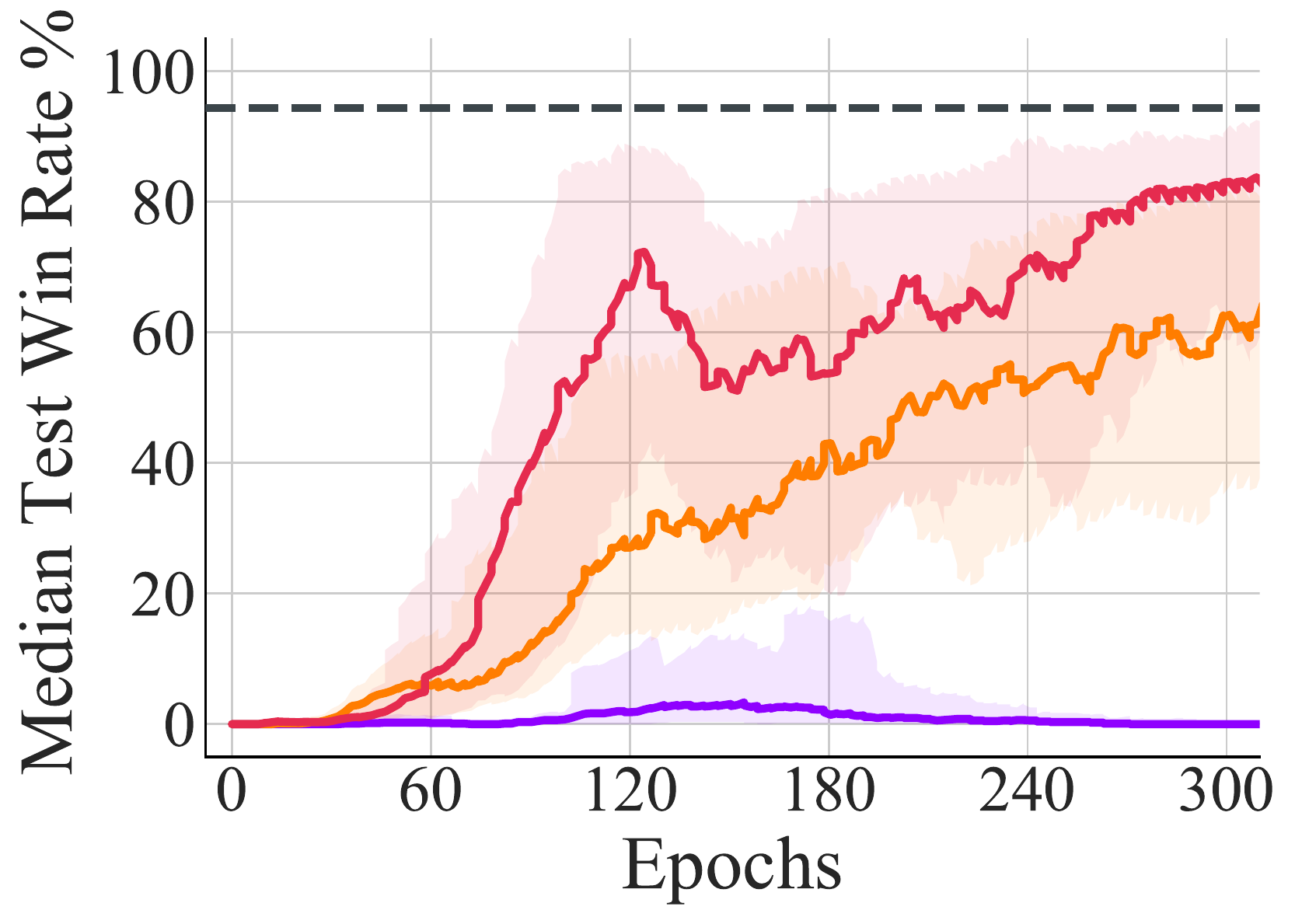}
		\caption{3c7z}
	\end{subfigure}
	\begin{subfigure}[c]{0.24\linewidth}
		\centering
		\includegraphics[width=0.98\linewidth]{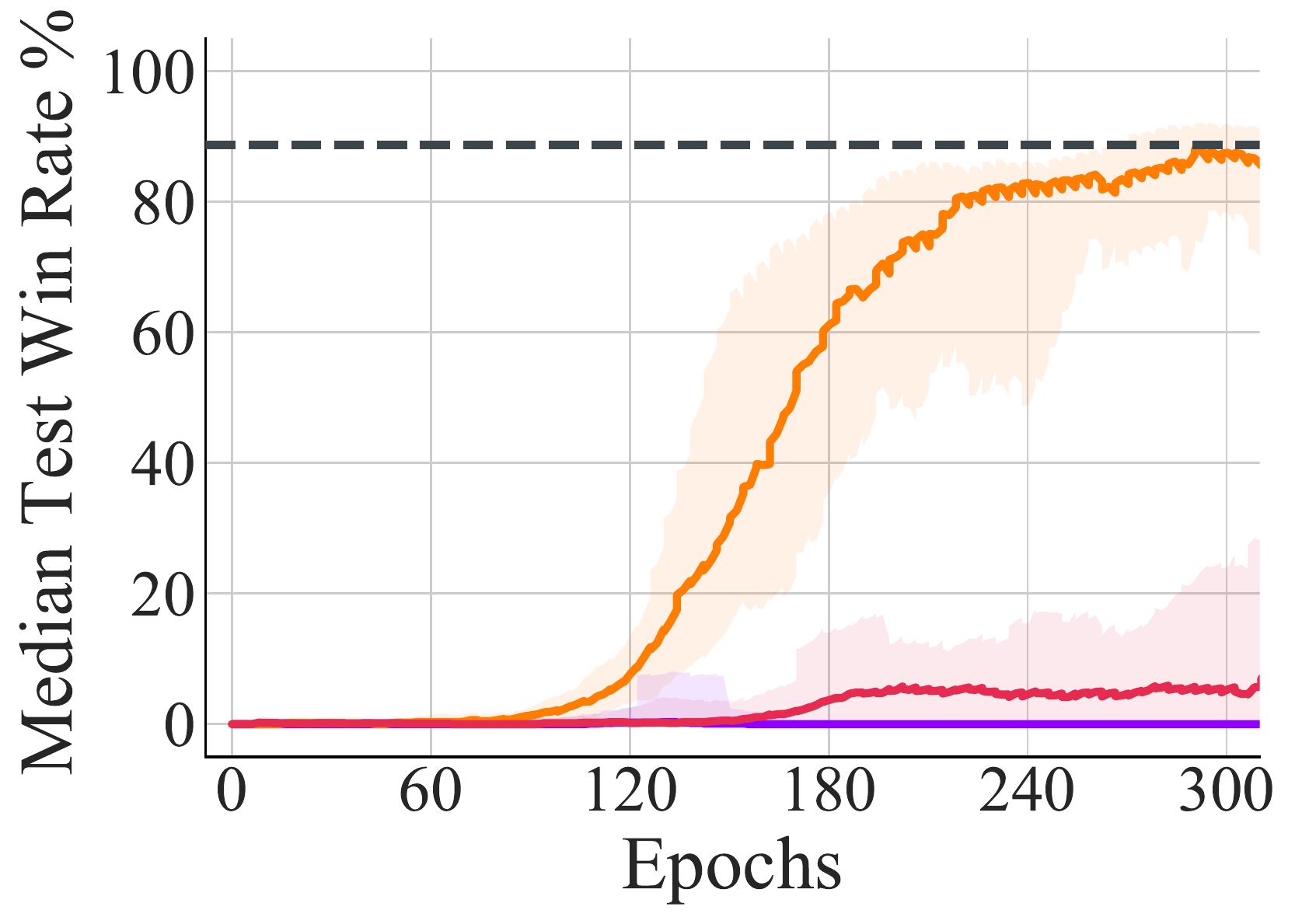}
		\caption{10m\_vs\_11m}
	\end{subfigure}
	\newline
	\begin{subfigure}[c]{0.24\linewidth}
		\centering
		\includegraphics[width=0.98\linewidth]{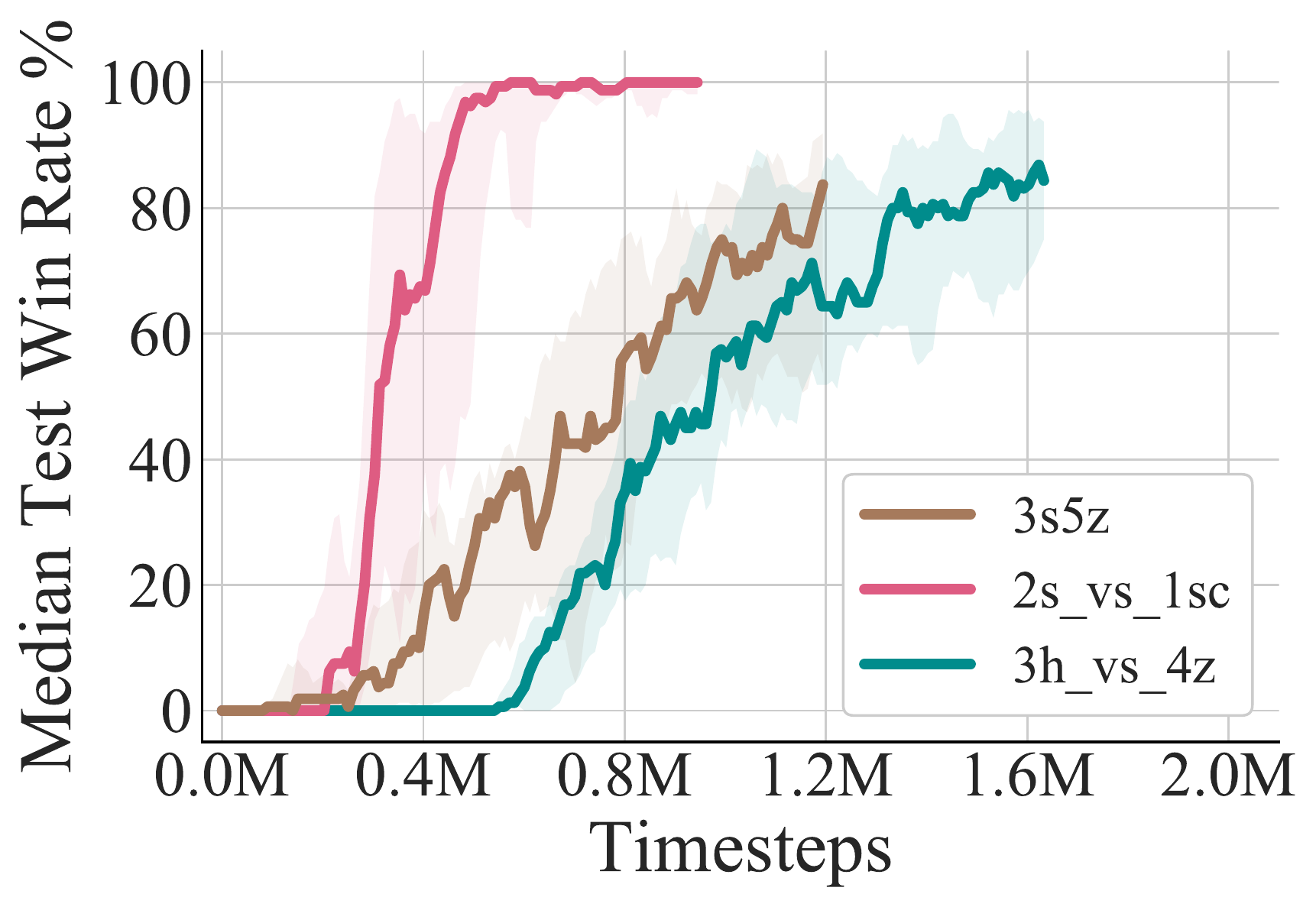}
		\caption{Online data collection}
	\end{subfigure}
	\begin{subfigure}[c]{0.24\linewidth}
		\centering
		\includegraphics[width=0.98\linewidth]{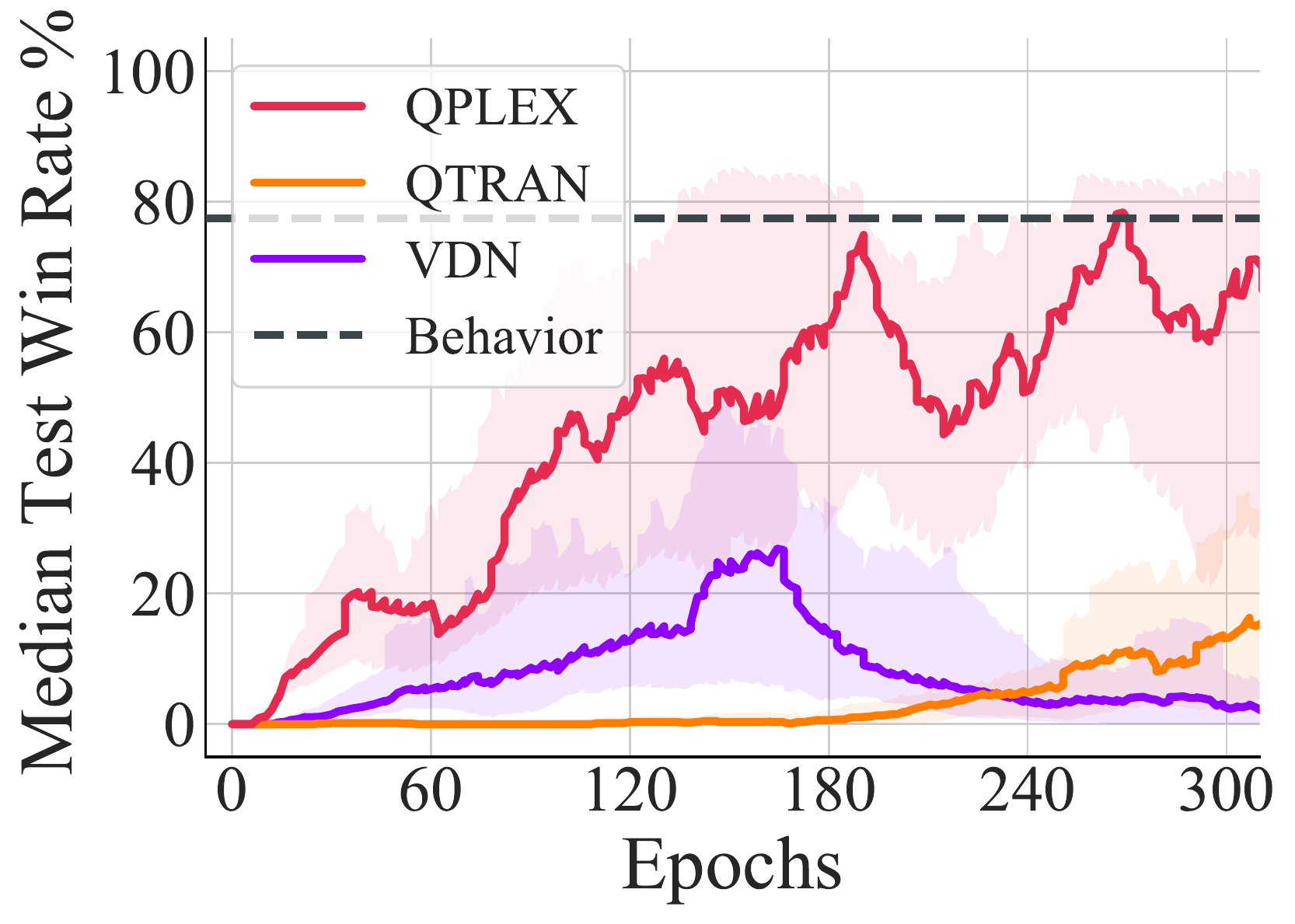}
		\caption{3s5z}
	\end{subfigure}
	\begin{subfigure}[c]{0.24\linewidth}
		\centering
		\includegraphics[width=0.98\linewidth]{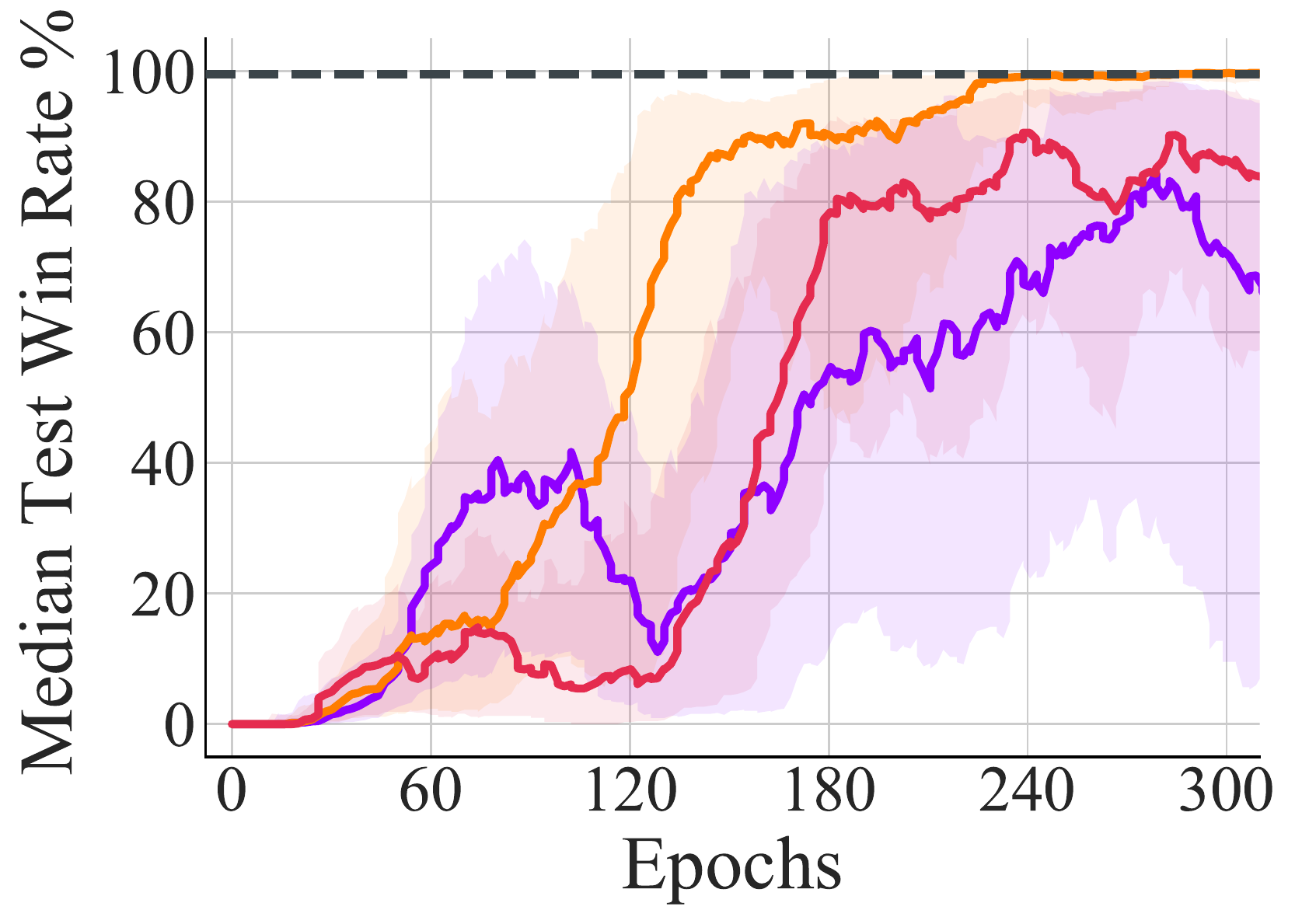}
		\caption{2s\_vs\_1sc}
	\end{subfigure}
	\begin{subfigure}[c]{0.24\linewidth}
		\centering
		\includegraphics[width=0.98\linewidth]{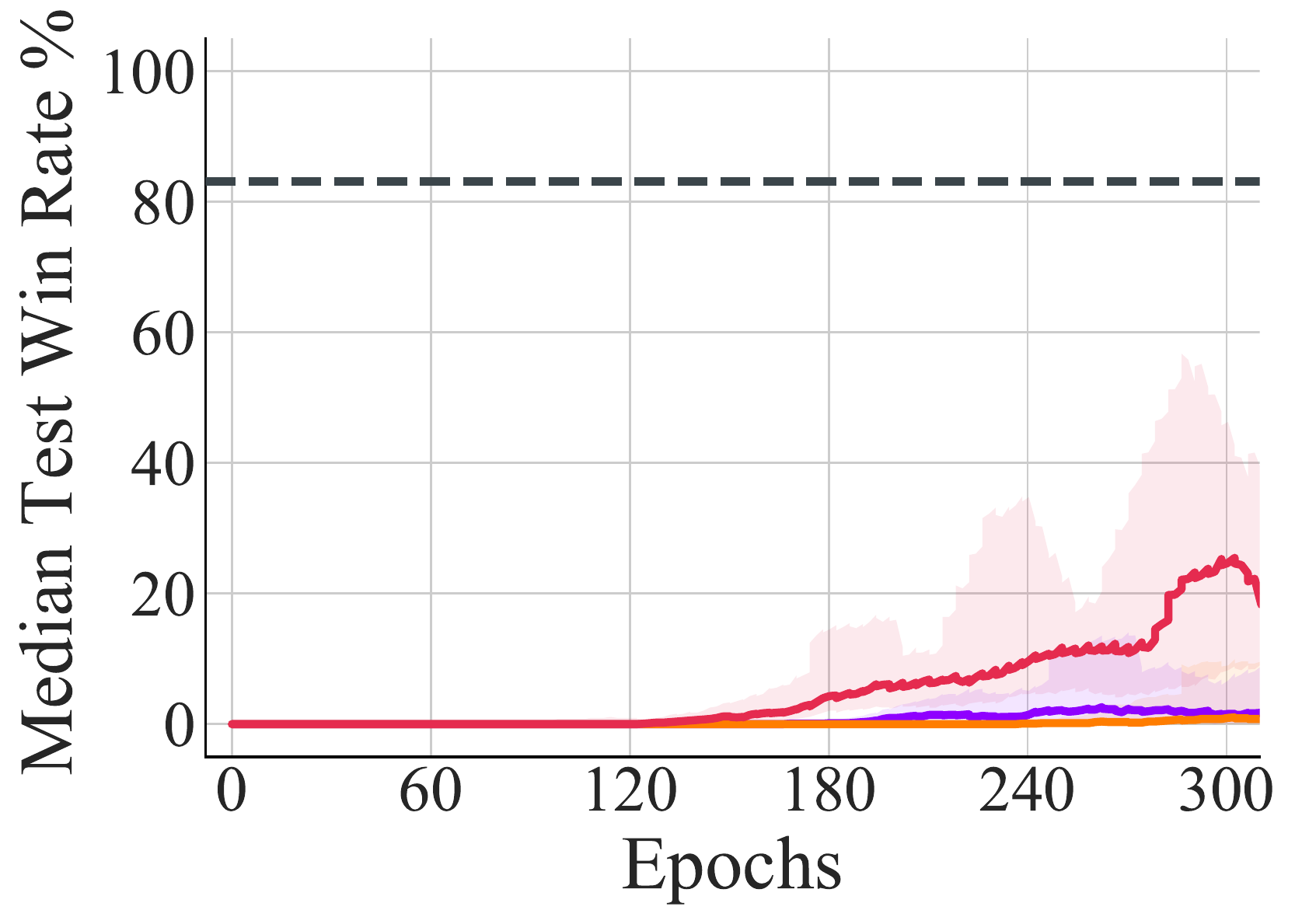}
		\caption{3h\_vs\_4z}
	\end{subfigure}
	\caption{(a,c,i) The learning curve of VDN using online data collection when constructing the static datasets. (b-d,f-h,j-l) Evaluating the performance of deep multi-agent Q-learning algorithms with a given static dataset.}
	\label{fig:batch-rl-3}
\end{figure*}

In this subsection, we conduct an empirical study to connect our theoretical implications to practical scenarios of deep multi-agent Q-learning algorithms. We evaluate three deep-learning-based counterparts of FQI-LVF and FQI-IGM, i.e., VDN \citep{sunehag2018value}, QTRAN \citep{son2019qtran}, and QPLEX \citep{wang2020QPLEX}, respectively. VDN can be regarded as a deep-learning-based implementation of FQI-LVF. QTRAN and QPLEX correspond to two different implementations of FQI-IGM. As suggested by Proposition~\ref{prop:uniform_divergence} that linear value factorization may suffer from training stability in an offline setting, we utilize StarCraft Multi-Agent Challenge (SMAC) benchmark \citep{samvelyan2019starcraft} with offline data collection to investigate the effect of the expressiveness of a factorized value function, i.e., which value factorization is suitable for multi-agent offline reinforcement learning.

%In single-agent reinforcement learning, offline training has recently attracted great attention because it can equip with multi-source datasets and be regarded as a key step towards real-world applications \citep{dulac2019challenges,levine2020offline}. To collect a diverse dataset in MARL setting, we pretrain a behavior policy of VDN and retain a fixed number of experienced episodes during the whole training procedure. The detailed experimental setting and the implementation details of VDN, QTRAN, and QPLEX are deferred to Appendix F. We test all experiments with 6 random seeds and demonstrate them with median performance and 25-75\% percentiles. 

We evaluate the performance of VDN, QTRAN, and QPLEX on nine common maps of StarCraft II. The results are shown in Figure \ref{fig:batch-rl-3}. The datasets are collected by training a VDN with $\epsilon$-greedy online data collection, which corresponds to the learning curves of \textit{Behavior} line shown in Figure~\ref{fig:batch-rl-3}(a,e,i). Figure \ref{fig:batch-rl-3}(b-d,f-h,j-l) illustrate that QPLEX and QTRAN with $\mathcal{Q}^{\text{IGM}}$ function class achieve the state-of-the-art performance, but offline VDN performs poorly and cannot utilize well the offline dataset collected by an unfamiliar behavior policy. This considerable performance gap between deep multi-agent Q-learning with IGM and linear value factorization indicates that the expressiveness of value factorization structures dramatically affects the performance, and IGM value factorization is a promising choice of value factorization for offline multi-agent reinforcement learning.

\paragraph{Remark on Experiment Settings.}
Similar experiment results are observed in Dec-POMDP settings by prior work \citep{wang2020QPLEX}. To make the empirical study rigorous, we slightly modify the observation emission rule of SMAC tasks to create a suite of Dec-ROMDP environments. More specifically, we concatenate the local observations with the global latent state vector to ensure rich observations. We follow the evaluation setting used by \citep{wang2020QPLEX} to establish a clear comparison under offline learning. The offline dataset is collected by training a VDN agent and adopting its full experience buffer. This buffer would include the mixture of a series of policies along with the whole learning procedure and also contain the exploration trajectories. Each curve is plotted by the median performance of running with multiple random seeds on three independently collected datasets. Such diverse datasets can reduce the unpredictable effects of irrelevant factors \citep{fujimoto2019off,levine2020offline} such as extrapolation error and the variance of dataset collection. A detailed description of the experiment setting is deferred to Appendix \ref{appendix:experiment_setting}.
\section{Conclusion}

This paper proposes a unified framework for analyzing cooperative multi-agent Q-learning with value factorization, which is a first work to make an initial effort to provide a theoretical understanding of this branch of methods. We analyze two classes of value factorization and investigate their algorithmic properties. The derived implications of our theoretical results are supported by experiments with deep-learning-based implementations. To close this paper, we connect our results with additional related literature as a discussion of future work.

\paragraph{Other Value Factorization Approaches.}
In addition to linear and IGM value decomposition discussed in this paper, there are a variety of value factorization methods have been proposed to trade off the computational complexity and function expressiveness. The monotonic value decomposition used by QMIX \citep{rashid2018qmix} is a popular factorization structure that induces many variants \citep{rashid2020weighted, iqbal2020ai, sun2021dfac, pan2021softmax}. The graph-based value factorization used by deep coordination graph (DCG) \citep{bohmer2020deep} provides a unified framework to extend high-order value factorization. These factorization mechanisms aim to capture some special structures of multi-agent learning. Investigating their algorithmic properties would provide deeper insights to understand state-of-the-art MARL algorithms.

\paragraph{Generic Algorithm Formulation.}
In this paper, we focus on the analysis of multi-agent fitted Q-iteration, which gives a theoretical characterization for the functionality of value factorization. One limitation of our analyses is that the formulation of fitted Q-iteration excludes several algorithmic components of reinforcement learning, such as exploration and optimization. As a first-step result, FQI serves a clear framework to support analyses of multi-agent value factorization but does not capture the effects of other algorithmic components of general Q-learning algorithms. An important future work is to investigate how value factorization methods interact with poorly explored datasets and gradient-based optimization \citep{levine2020offline}. In addition, value factorization is also adopted by actor-critic-based MARL algorithms \citep{wang2020off, peng2020facmac, zhang2021fop}, which poses another problem for future studies.

\paragraph{Sample Complexity Analysis.}
Finite-sample analysis of fully centralized multi-agent algorithms has been studied in the literature of PAC reinforcement learning. Several prior work \citep{osband2014near, sun2019model, xu2020reinforcement, chen2021efficient} proposed provably efficient reinforcement learning algorithms for factored MDPs in terms of regret and sample complexity. However, these algorithms are established in an information-theoretic manner and are not computationally efficient. From this perspective, introducing the concept of value factorization is a promising way to improve the computational efficiency of multi-agent algorithms in PAC learning settings.

\begin{ack}
	The authors would like to thank the anonymous reviewers and Kefan Dong for their insightful discussions and helpful suggestions. This work is supported in part by Science and Technology Innovation 2030 – “New Generation Artificial Intelligence” Major Project (No. 2018AAA0100904), a grant from the Institute of Guo Qiang, Tsinghua University, and a grant from Turing AI Institute of Nanjing.
\end{ack}

\bibliography{ref}
\bibliographystyle{unsrt}

\clearpage

\appendix
\newgeometry{
	textheight=9in,
	textwidth=7.0in,
	top=1in,
	headheight=12pt,
	headsep=25pt,
	footskip=30pt
}
\allowdisplaybreaks
\section{Formal Descriptions of Problem Formulation and Assumptions} \label{sec:problem-formulation-appendix}

Note that, learning policies for sequential decision making in Dec-POMDP \citep{oliehoek2016concise} is known to be undecidable in the standard computation model \citep{madani1999undecidability}. i.e., deciding whether there is a policy achieving a constant performance lower bound cannot be computed in finite time. To bypass these impossibility results, we adopt some common assumptions to make the analysis accessible and rigorous. A formal description of our assumptions is included in Appendix \ref{sec:dec-romdp-appendix}, which are widely assumed in prior work on analyzing single-agent PAC reinforcement learning \citep{krishnamurthy2016pac, azizzadenesheli2016reinforcement, chen2019information}. The main purpose of introducing these assumptions is to avoid discussing trajectories with infinite length, which is not well-defined in the context of data-driven learning paradigm.

\subsection{Decentralized Rich-Observation Markov Decision Process} \label{sec:dec-romdp-appendix}

The first step to exclude the discussion of infinitely long trajectories is to ensure that the belief states can always be constructed using finite recent observations. A widely-used formulation of such assumptions is Rich-Observation MDPs \citep{krishnamurthy2016pac, azizzadenesheli2016reinforcement}, in which the observation would be noisy but guarantee to contain full information. Formally, we extend the formulation Rich-Observation MDPs (ROMDP) to a decentralized multi-agent setting as the following definition:

\begin{definition}[Dec-ROMDP] \label{def:dec-romdp}
	An instance of Decentralized Rich-Observation Markov Decision Process (Dec-ROMDP) is defined as a tuple $\mathcal{M}=\langle \mathcal{N}, \mathcal{S}, \contextspace, \mathcal{A}, P, \Lambda, r, \gamma\rangle$, in which
	\begin{itemize}
		\item $\mathcal{N} \equiv \{1,\dots, n\}$ denotes a finite set of agents.
		\item $\mathcal{S}$ denotes a finite set of global latent states.
		\item $\contextspace$ denotes the observation space (a.k.a. context space) which is larger than the latent state space.
		\item $\mathcal{A}$ denotes the individual action space. The joint action $\mathbf{a}\in\mathbf{A}\equiv \mathcal{A}^n$ is a collection of individual actions $[a_i]_{i=1}^n$.
		\item $P(s'\mid s,\mathbf{a}_)$ denotes the transition function on latent states.
		\item $\Lambda(x_{i} \mid i,s)$ denotes the observation emission distribution, which may defer in different agents.
		\item $r(s, \mathbf{a})$ denotes the global reward function.
		\item $\gamma$ denotes the discount factor.
	\end{itemize}

	The concept of rich-observation assumes that the observation emission distributions $\Lambda$ are disjoint across states. Formally,
	\begin{itemize}
		\item $\forall s_1\neq s_2\in\mathcal{S}$, $\forall i\in\mathcal{N}$, $\forall \context\in\contextspace$, $(\Lambda(\context \mid i,s_1)>0) \Rightarrow (\Lambda(\context \mid i,s_2)=0)$.
	\end{itemize}
\end{definition}

Note that, in rich-observation models \citep{krishnamurthy2016pac, azizzadenesheli2016reinforcement}, the size of observation space $\contextspace$ can be much larger than the latent state space $\mathcal{S}$. This rich-observation model is a formulation of real-world scenarios where sensors suffer from systematical errors, and it is usually used to study how function approximators can help to decode the state information from observations \citep{du2019provably, misra2020kinematic}. The formulation of Dec-ROMDP stated as Definition \ref{def:dec-romdp} considers an extension of ROMDP upon Dec-POMDP, where different agents would observe the global information under different noises. This decentralized observation emission distribution $\Lambda(\context \mid i,s)$ is an important characteristic and also a critical subtlety for analyzing a multi-agent system.

The definition of DEC-ROMDP enables the agent extract the belief state information without storing the whole infinite-length trajectory. Formally, we adopt the definition of reactive function classes \citep{littman1994memoryless, meuleau1999learning, krishnamurthy2016pac} to simplify the notation for analyses.

\label{sec:reactive-function-class-appendix}

\begin{definition}[Reactive Function Class]
	Reactive function classes consider a set of memoryless functions that only takes the most recent observations as inputs to compute the output values.
\end{definition}

The same function class structure is widely-used in prior work studying single-agent ROMDP \citep{krishnamurthy2016pac, jiang2017contextual, chen2019information}. Note that, in Dec-ROMDPs, decoding information from last-step observations guarantees to retain full information of the latent states. In addition, this simplification is equivalent to storing recent observations in a constant-size time window, since we can convert such multi-step model to last-step model by encoding recent steps into the latent state representation and overwriting the observation emission to a window of observations. Regarding this equivalence, we adopt the most simplified notations as related work to make the underlying insights more accessible.

\subsection{Connection to Practical Scenarios}

The main purpose to introduce the concept of rich observations is to bypassing the barrier of partial observability in theoretical analyses, since general POMDP-based problem formulation can hardly support algorithms for strong performance guarantees. In practice, although most scenarios suffer from some extend of partial observability, the formulation of richly observable environments can approximate many application scenarios.

\paragraph{Micromanagement Tasks.} In Micromanagement tasks such as SMAC benchmark tasks \citep{samvelyan2019starcraft}, agents are working in a bounded region with a sufficiently large receptive field. In such tasks, the observations of agents may be high-dimensional and redundant (e.g., visual observations with various camera directions), the global latent state space is bounded. For example, in StarCraft II benchmark tasks, agents are assigned large receptive fields to provide sufficient information for decision making, i.e., every alive agent can nearly observe all other agents. In this situation, the observation history of each agent can nearly decode the global latent state, which can be approximated by the problem formulation of Dec-ROMDPs.

\paragraph{Coordination with Communication.} In practice, partial observability is not a hard constraint. When agents can only access their local observations, learning communication is a common approach to reducing partial observability issues. e.g., NDQ \citep{wang2019learning} aims to learn an efficient communication protocol that transmits useful and necessary information to address partial observability. When such communication channels are available, every agent can collect redundant message information from other agents to transform Dec-POMDPs to Dec-ROMDPs.

\subsection{Omitted Proofs for the Equivalence Mentioned in Section \ref{sec:fqi}} \label{sec:objective-equivalence-appendix}

\begin{lemma}\label{lemma:fqi-target-expectation}
	The \textit{empirical Bellman operator} $\mathcal{T}_D^{\text{FMA}}$ defined in Eq.~\eqref{eq:fma-fqi-1} and Eq.~\eqref{eq:fma-fqi} are equivalent. Formally,
	\begin{align}\label{eq:fqi-target-expectation}
	\begin{split}
	\mathcal{T}_D^{\text{FMA}}Q^{(t)}&\equiv\mathop{\arg\min}_{Q\in\mathcal{Q}^{\text{FMA}}}\mathop{\mathbb{E}}_{({\bm\context}, \mathbf{a},r,{\bm\context}')\sim D}\left(\hat y^{(t)}({\bm\context},\mathbf{a},{\bm\context}')- Q_{\text{tot}}({\bm\context}, \mathbf{a})\right)^2 \\
	&=\mathop{\arg\min}_{Q\in\mathcal{Q}^{\text{FMA}}}\mathop{\mathbb{E}}_{({\bm\context}, \mathbf{a},r)\sim D}\left(y^{(t)}({\bm\context},\mathbf{a})- Q_{\text{tot}}({\bm\context}, \mathbf{a})\right)^2,
	\end{split}
	\end{align}
	where
	\begin{align}
		\begin{split}
			\hat y^{(t)}({\bm\context},\mathbf{a}, {\bm\context}') &= r+\gamma \max_{\mathbf{a'}} Q^{(t)}_{\text{tot}}\left({\bm\context}',\mathbf{a'}\right), \\
			y^{(t)}({\bm\context},\mathbf{a}) &= r+\gamma\mathbb{E}_{{\bm\context}'}\left[\max_{\mathbf{a'}} Q^{(t)}_{\text{tot}}\left({\bm\context}',\mathbf{a'}\right)\right].
		\end{split}
	\end{align}
\end{lemma}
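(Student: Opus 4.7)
The plan is to reduce the claim to the standard bias--variance decomposition of squared loss. Fix $Q\in\mathcal{Q}^{\text{FMA}}$ and condition on $(\bm\context,\mathbf{a})$. By the tower property, the first objective can be rewritten as
\begin{align*}
\mathbb{E}_{(\bm\context,\mathbf{a},r,\bm\context')\sim D}\left(\hat y^{(t)}-Q_{\text{tot}}(\bm\context,\mathbf{a})\right)^2 = \mathbb{E}_{(\bm\context,\mathbf{a})\sim D}\,\mathbb{E}_{r,\bm\context'\mid\bm\context,\mathbf{a}}\left[\left(\hat y^{(t)}-Q_{\text{tot}}(\bm\context,\mathbf{a})\right)^2\right].
\end{align*}
I would then apply the pointwise identity $\mathbb{E}[(Z-c)^2]=\mathrm{Var}(Z)+(\mathbb{E}[Z]-c)^2$ to the inner expectation, with $c=Q_{\text{tot}}(\bm\context,\mathbf{a})$ deterministic after conditioning and $Z=\hat y^{(t)}$.

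The key identification is $\mathbb{E}_{r,\bm\context'\mid\bm\context,\mathbf{a}}[\hat y^{(t)}]=y^{(t)}(\bm\context,\mathbf{a})$, which holds whenever the empirical conditional $p_D(r,\bm\context'\mid\bm\context,\mathbf{a})$ matches the true kernel of the Dec-ROMDP; this is precisely the content of the "dataset is adequate" hypothesis. Combining the previous two displays gives
\begin{align*}
\mathop{\mathbb{E}}_{D}\left(\hat y^{(t)}-Q_{\text{tot}}\right)^2 = \underbrace{\mathop{\mathbb{E}}_{D}\left[\mathrm{Var}\!\left(\hat y^{(t)}\mid\bm\context,\mathbf{a}\right)\right]}_{\text{independent of }Q} + \mathop{\mathbb{E}}_{D}\left(y^{(t)}-Q_{\text{tot}}\right)^2,
\end{align*}
so the two objectives differ by an additive constant in $Q$, and their $\mathop{\arg\min}$ sets over $\mathcal{Q}^{\text{FMA}}$ therefore coincide.

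The main obstacle, such as it is, is pinning down the adequacy hypothesis precisely: I would state it as requiring that for every $(\bm\context,\mathbf{a})$ in the support of $p_D$ the empirical next-state distribution equals the true one, which is what licenses replacing the sample mean of $\hat y^{(t)}$ by the population quantity $y^{(t)}$. Without this, the argument still yields an $\mathop{\arg\min}$ equivalence, but with $y^{(t)}$ replaced by its empirical counterpart, which is a strictly weaker statement than the one in the lemma. Once this point is fixed, the rest of the argument is pure $L^2$-orthogonality and uses nothing about the factorization structure, so it goes through uniformly for any choice of $\mathcal{Q}^{\text{FMA}}$ (in particular, for both $\mathcal{Q}^{\text{LVF}}$ and $\mathcal{Q}^{\text{IGM}}$).
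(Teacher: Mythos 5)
Your argument is correct and is essentially the same as the paper's: the bias--variance identity you invoke after conditioning on $({\bm\context},\mathbf{a})$ is exactly the paper's add-and-subtract expansion of $\bigl(\hat y^{(t)}-y^{(t)}+y^{(t)}-Q_{\text{tot}}\bigr)^2$, with your variance term playing the role of their constant first term and the vanishing cross term corresponding to $\mathbb{E}_{{\bm\context}'}[\hat y^{(t)}-y^{(t)}]=0$. Your explicit remark that the identification $\mathbb{E}[\hat y^{(t)}\mid{\bm\context},\mathbf{a}]=y^{(t)}$ rests on the dataset-adequacy hypothesis is a welcome point of care that the paper leaves implicit, but it does not change the route.
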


\begin{proof}
	Consider
	\begin{align}
	\mathcal{T}_D^{\text{FMA}}Q^{(t)}&\equiv\mathop{\arg\min}_{Q\in\mathcal{Q}^{\text{FMA}}}\mathop{\mathbb{E}}_{({\bm\context}, \mathbf{a},{\bm\context}')\sim D}\left[\left(\hat y^{(t)}({\bm\context},\mathbf{a},{\bm\context}')- Q_{\text{tot}}({\bm\context}, {\bf a})\right)^2\right] \notag \\
	&= \mathop{\arg\min}_{Q\in\mathcal{Q}^{\text{FMA}}}\mathop{\mathbb{E}}_{({\bm\context}, \mathbf{a},{\bm\context}')\sim D}\left[\left(\hat y^{(t)}({\bm\context},\mathbf{a},{\bm\context}')-y^{(t)}({\bm\context},{\bf a})+y^{(t)}({\bm\context},{\bf a})- Q_{\text{tot}}({\bm\context}, {\bf a})\right)^2\right] \notag \\
	&= \mathop{\arg\min}_{Q\in\mathcal{Q}^{\text{FMA}}}\mathop{\mathbb{E}}_{({\bm\context}, \mathbf{a},{\bm\context}')\sim D}\left[\left(\hat y^{(t)}({\bm\context},\mathbf{a},{\bm\context}')-y^{(t)}({\bm\context},{\bf a})\right)^2\right] \notag \\ 
	&\qquad\qquad~ + \mathop{\mathbb{E}}_{({\bm\context}, \mathbf{a},{\bm\context}')\sim D}\left[2\left(\hat y^{(t)}({\bm\context},\mathbf{a},s')-y^{(t)}({\bm\context},{\bf a})\right)\left(y^{(t)}({\bm\context},{\bf a})- Q_{\text{tot}}({\bm\context}, {\bf a})\right)\right] \notag \\
	&\qquad\qquad~ + \mathop{\mathbb{E}}_{({\bm\context}, \mathbf{a},{\bm\context}')\sim D}\left[\left(y^{(t)}({\bm\context},{\bf a})- Q_{\text{tot}}({\bm\context}, {\bf a})\right)^2\right].
	\end{align}
	
	The first term is a constant since $y^{(t)}$ and $\hat y^{(t)}$ are fixed targets.
	
	The second term is equal to zero since
	\begin{align}
	&\mathop{\mathbb{E}}_{({\bm\context}, \mathbf{a},{\bm\context}')\sim D}\left[2\left(\hat y^{(t)}({\bm\context},\mathbf{a},{\bm\context}')-y^{(t)}({\bm\context},{\bf a})\right)\left(y^{(t)}({\bm\context},{\bf a})- Q_{\text{tot}}({\bm\context}, {\bf a})\right)\right] \notag \\
	=~& 2\mathop{\mathbb{E}}_{({\bm\context}, \mathbf{a})\sim D}\biggl[~\underbrace{\mathop{\mathbb{E}}_{{\bm\context}'\sim P(\cdot|{\bm\context},{\bf a})}\left[\hat y^{(t)}({\bm\context},\mathbf{a},{\bm\context}')-y^{(t)}({\bm\context},{\bf a})\right]}_{=0}~\left(y^{(t)}({\bm\context},{\bf a})- Q_{\text{tot}}({\bm\context}, {\bf a})\right)\biggr] \notag \\
	=~& 0.
	\end{align}
	
	The third term exactly corresponds to Eq.~\eqref{eq:fqi-target-expectation}.
\end{proof}

\section{Omitted Proofs in Section \ref{sec:credit_assignment}}

\subsection{The Closed-Form Solution to A Special Weighted Linear Regression}

\begin{lemma}\label{lemma:2}
	Considering following weighted linear regression problem
	\begin{align}
	\min_{\mathbf{x}} \parallel \sqrt{\mathbf{p}^\top} \cdot (\mathbf{A} \mathbf{x} - \mathbf{b}) \parallel_2^2
	\end{align}
	where $\mathbf{A} \in \mathbb{R}^{m^n \times mn}, \mathbf{x} \in \mathbb{R}^{mn}, \mathbf{b,p} \in \mathbb{R}^{m^n}$, $m,n \in \mathrm{Z}^{+}$. Besides, $\mathbf{A}$ is m-ary encoding matrix namely $ \forall i \in [m^n], j \in [mn]$
	\begin{align}
	\mathbf{A}_{i,j} =\left\{
	\begin{aligned}
	&1, &\text{ if}\quad\exists u \in [n], j = m \times u + (\lfloor i / m^u \rfloor \textit{ mod } m), \\
	&0, &\text{ otherwise}.
	\end{aligned}
	\right.
	\end{align}
	For simplicity, $j^{th}$ row of $\mathbf{A}$ corresponds to a m-ary number $\Vec{a}_j = (j)_m$ where $\Vec{a} = a_0 a_1 \ldots a_{n - 1}$, with $a_u \in [m], \forall u \in [n]$. Assume $\mathbf{p}$ is a positive vector which follows that
	\begin{align}
	\mathbf{p}_j = \mathbf{p}(\Vec{a}_j) = \prod_{u \in [n]} p_u(a_{u,j}), \textit{ where }p_u: [m] \to (0, 1) \textit{ and } \sum_{a_u \in [m]} p_u(a_u) = 1, \forall u \in [n]
	\end{align}
	
	The optimal solution of this problem is the following. Denote $i = u \times m + v, v \in [m], u \in [n]$ and an arbitrary vector $\mathbf{w} \in \mathbb{R}^{mn}$
	\begin{align}
	\mathbf{x}^*_i = \sum_{\Vec{a}} \frac{\mathbf{p}(\Vec{a})}{p_u(a_u)}\mathbf{b}_{\Vec{a}} \cdot \mathbf{1}(a_u = v) - \frac{n - 1}{n} \mathbf{p}(\Vec{a}) \mathbf{b}_{\Vec{a}} - \frac{1}{mn} \sum_{i' \in [mn]} \mathbf{w}_{i'} + \frac{1}{m} \sum_{v' \in [m]} \mathbf{w}_{um + v'}
	\end{align}
\end{lemma}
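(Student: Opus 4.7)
The plan is to attack this as a standard weighted least-squares problem. Since $\mathbf{p} > 0$, the matrix $\mathbf{P} \equiv \mathrm{diag}(\mathbf{p})$ is positive definite, so the set of minimizers is exactly the affine subspace $\{\mathbf{x} : \mathbf{A}^\top \mathbf{P} \mathbf{A}\, \mathbf{x} = \mathbf{A}^\top \mathbf{P} \mathbf{b}\}$. Because each row of $\mathbf{A}$ has exactly one $1$ per agent block, $\mathbf{A}$ is rank-deficient, and I expect the $\mathbf{w}$-dependent tail of the claimed formula to emerge as a parametrization of $\ker(\mathbf{A})$.

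Re-indexing columns as pairs $(u,v) \in [n] \times [m]$ with $j = mu + v$, the row of $\mathbf{A}$ for $\vec a$ is the indicator of $\{(u, a_u)\}_{u=0}^{n-1}$. Exploiting $\mathbf{p}(\vec a) = \prod_{u'} p_{u'}(a_{u'})$ together with $\sum_{a_u} p_u(a_u) = 1$, the first step is to show
\begin{align*}
[\mathbf{A}^\top \mathbf{P} \mathbf{A}]_{(u,v),(u',v')} &= \begin{cases} p_u(v)\,\mathbf{1}(v=v'), & u = u', \\ p_u(v)\,p_{u'}(v'), & u \neq u', \end{cases} \\
[\mathbf{A}^\top \mathbf{P} \mathbf{b}]_{(u,v)} &= p_u(v)\,\mathbb{E}_{a_{-u} \sim p_{-u}}\!\left[\mathbf{b}(v \oplus a_{-u})\right].
\end{align*}
Dividing the $(u,v)$-th row by $p_u(v) > 0$ collapses the normal equations into
\begin{align*}
\mathbf{x}_{(u,v)} + \sum_{u' \neq u} \mathbb{E}_{a_{u'} \sim p_{u'}}\!\left[\mathbf{x}_{(u', a_{u'})}\right] = \mathbb{E}_{a_{-u} \sim p_{-u}}\!\left[\mathbf{b}(v \oplus a_{-u})\right] =: \beta_{(u,v)}.
\end{align*}

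Letting $\mu = \mathbb{E}_{\vec a \sim \mathbf{p}}[\mathbf{b}_{\vec a}]$, I would propose the particular solution $c_{(u,v)} = \beta_{(u,v)} - \tfrac{n-1}{n}\mu$, which, after unfolding $\beta_{(u,v)}$, matches the first two terms of the claim. Verification is direct: $\mathbb{E}_{a_{u'}}[\beta_{(u', a_{u'})}] = \mu$ implies $\mathbb{E}_{a_{u'}}[c_{(u', a_{u'})}] = \mu/n$, so the reduced equation's left-hand side equals $c_{(u,v)} + (n-1)\mu/n = \beta_{(u,v)}$. For the residual freedom, positive definiteness of $\mathbf{P}$ yields $\ker(\mathbf{A}^\top \mathbf{P} \mathbf{A}) = \ker(\mathbf{A})$; by fixing $a_{-u}$ and varying $a_u$, any $z \in \ker(\mathbf{A})$ must be independent of $v$ within each block $u$, with the block values $\alpha_u$ summing to zero. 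Writing $\bar w_u = \tfrac{1}{m}\sum_{v'} \mathbf{w}_{mu+v'}$ and $\bar w = \tfrac{1}{n}\sum_u \bar w_u$, the claim's last two terms are $\bar w_u - \bar w$, which always gives such a null vector, and as $\mathbf{w}$ ranges over $\mathbb{R}^{mn}$ (e.g., taking $\mathbf{w}$ block-constant in $u$) the map $\mathbf{w} \mapsto \bar w_u - \bar w$ surjects onto the entire $(n-1)$-dimensional null space.

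The main obstacle is the tensor-indexed bookkeeping in computing $\mathbf{A}^\top \mathbf{P} \mathbf{A}$: the factorization hypothesis on $\mathbf{p}$ is precisely what collapses the off-diagonal blocks into the rank-one form $p_u(v)\,p_{u'}(v')$ and makes the subsequent algebra tractable, and without it no clean closed form can emerge. A secondary subtlety is correctly matching the null-space parametrization $\bar w_u - \bar w$ with the somewhat awkward form $-\tfrac{1}{mn}\sum_{i'}\mathbf{w}_{i'} + \tfrac{1}{m}\sum_{v'}\mathbf{w}_{um+v'}$ stated in the lemma, and confirming that every element of $\ker(\mathbf{A})$ is indeed realized while no spurious constraints are imposed on $\mathbf{w}$.
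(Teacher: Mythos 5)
Your proposal is correct, and it takes a genuinely different route from the paper. The paper proves this lemma by explicitly writing down a candidate Moore--Penrose pseudoinverse $\mathbf{A}^{p,\dag}$ of the reweighted design matrix $\mathbf{A}^p=\sqrt{\mathbf{p}^\top}\cdot\mathbf{A}$, verifying the three defining identities (self-adjointness of $\mathbf{A}^p\mathbf{A}^{p,\dag}$ and $\mathbf{A}^{p,\dag}\mathbf{A}^p$, and the two idempotence relations) through several pages of index computations, and then reading the solution set off the general formula $\mathbf{x}^*=\mathbf{A}^{p,\dag}\mathbf{b}^p+(\mathbf{I}-\mathbf{A}^{p,\dag}\mathbf{A}^p)\mathbf{w}$. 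You instead work directly with the normal equations $\mathbf{A}^\top\mathbf{P}\mathbf{A}\,\mathbf{x}=\mathbf{A}^\top\mathbf{P}\mathbf{b}$: your computations of the Gram matrix (diagonal blocks $p_u(v)\mathbf{1}(v=v')$, off-diagonal rank-one blocks $p_u(v)p_{u'}(v')$) and of the right-hand side are correct, the division by $p_u(v)$ legitimately collapses each row to $\mathbf{x}_{(u,v)}+\sum_{u'\neq u}\mathbb{E}_{a_{u'}}[\mathbf{x}_{(u',a_{u'})}]=\beta_{(u,v)}$, and your guess-and-verify of the particular solution $\beta_{(u,v)}-\tfrac{n-1}{n}\mu$ checks out exactly as you state. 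Your kernel argument (positive definiteness of $\mathbf{P}$ gives $\ker(\mathbf{A}^\top\mathbf{P}\mathbf{A})=\ker(\mathbf{A})$; varying one coordinate of $\vec a$ forces block-constancy; the block constants sum to zero) and the surjectivity of $\mathbf{w}\mapsto\bar w_u-\bar w$ onto that $(n-1)$-dimensional space together account for the full solution set, matching the lemma's $\mathbf{w}$-terms. What your route buys is brevity and transparency: it isolates the single place where the product-form assumption on $\mathbf{p}$ enters (the rank-one structure of the off-diagonal blocks) and avoids the pseudoinverse verification entirely. What the paper's route buys is the explicit pseudoinverse itself, whose entries make the credit-assignment coefficients in Theorem~\ref{theorem:CreditAssignmentTheorem} immediately legible; your derivation recovers the same solution but not the operator. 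Either proof would suffice for the downstream results, which only use the closed-form solution.
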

\begin{proof}
	For brevity, denote
	\begin{align}
	\mathbf{A}^p = \sqrt{\mathbf{p}^\top} \cdot \mathbf{A}, \qquad \mathbf{b}^p = \sqrt{\mathbf{p}^\top} \cdot \mathbf{b}
	\end{align}
	Then the weighted linear regression becomes a standard Linear regression problem w.r.t $\mathbf{A}^p, \mathbf{b}^p$. To compute the optimal solutions, we need to calculate the Moore-Penrose inverse of $\mathbf{A}^p$ . The sufficient and necessary condition of this inverse matrix $\mathbf{A}^{p, \dag} \in \mathbb{R}^{mn \times m^n}$ is the following three statements \citep{moore1920reciprocal}:
	\begin{align}
	&(1) ~ \mathbf{A}^p \mathbf{A}^{p, \dag} \text{ and } \mathbf{A}^{p, \dag} \mathbf{A}^p \text{ are self-adjoint } \\
	&(2) ~ \mathbf{A}^p = \mathbf{A}^p \mathbf{A}^{p, \dag} \mathbf{A}^p \\
	&(3) ~ \mathbf{A}^{p, \dag} = \mathbf{A}^{p, \dag} \mathbf{A}^{p} \mathbf{A}^{p, \dag} 
	\end{align}
	
	We consider the following matrix as $\mathbf{A}^{p, \dag}$ and we prove that it satisfies all three statements. For $\forall i \in [mn], i = u \times m + v, u \in [n], v \in [m], j \in [m^n]$
	\begin{align} 
	\mathbf{A}^{p, \dag}_{i,j} =&~ \mathbf{A}^{p, \dag}_{i, \Vec{a}_j} \nonumber \\
	=&~ \sqrt{\frac{\mathbf{p}(\Vec{a}_{-u, j})}{p_u(a_{u, j})}} \cdot \mathrm{1}(a_{u, j} = v) - \frac{n - 1}{n} \sqrt{\mathbf{p}(\Vec{a}_j)} - \frac{1}{m} \sqrt{\frac{\mathbf{p}(\Vec{a}_{-u, j})}{p_u(a_{u, j})}} + \frac{1}{mn} \sum_{u' = 0}^{n - 1} \sqrt{\frac{\mathbf{p}(\Vec{a}_{-u', j})}{p_{u'}(a_{u', j})}}
	\end{align}
	where $\mathbf{p}(\Vec{a}_{-u}) = \prod_{u' \neq u} p_{u'}(a_{u'})$. 
	
	First, we verify that $\mathbf{A}^{p} \mathbf{A}^{p, \dag}$ is a $m^n \times m^n$ self-adjoint matrix in statement (1). For simplicity, $O(\Vec{a}_i, \Vec{a}_j) = \{u | a_{u, i} = a_{u, j}, u \in [n]\}$.
	\begin{align}
	(\mathbf{A}^{p} \mathbf{A}^{p, \dag})_{i, j}  = &~ \sum_{u \in [n]} \sqrt{\mathbf{p}(\Vec{a}_{i})} [ \sqrt{\frac{\mathbf{p}(\Vec{a}_{-u, j})}{p_u(a_{u, j})}} \cdot \mathbf{1}(a_{u, j} = a_{u, i}) - \frac{n - 1}{n} \sqrt{\mathbf{p}(\Vec{a}_{j})} - \frac{1}{m} \sqrt{\frac{\mathbf{p}(\Vec{a}_{-u, j})}{p_u(a_{u, j})}} \nonumber \\
	&~ + \frac{1}{mn} \sum_{u' = 0}^{n - 1} \sqrt{\frac{\mathbf{p}(\Vec{a}_{-u', j})}{p_{u'}(a_{u', j})}} ] \nonumber \\
	= &~ \sum_{u \in O(\Vec{a}_{i}, \Vec{a}_{j})} \frac{\sqrt{\mathbf{p}(\Vec{a}_{j}) \mathbf{p}(\Vec{a}_{ i})}}{p_u(a_{u,j})} - \frac{n - 1}{n} \sum_{u \in [n]}  \sqrt{\mathbf{p}(\Vec{a}_{i})\mathbf{p}(\Vec{a}_{j})} - \frac{1}{m} \sum_{u \in [n]} \frac{\sqrt{\mathbf{p}(\Vec{a}_{j}) \mathbf{p}(\Vec{a}_{i})}}{p_u(a_{u, j})} \nonumber \\
	&~ + \sum_{u \in [n]} \frac{1}{mn} \sum_{u' = 0}^{n - 1} \frac{\sqrt{\mathbf{p}(\Vec{a}_{j}) \mathbf{p}(\Vec{a}_{i})}}{p_{u'}(a_{u', j})} \nonumber \\
	= &~ \sum_{u \in O(\Vec{a}_{i}, \Vec{a}_{j})} \frac{\sqrt{\mathbf{p}(\Vec{a}_{j'}) \mathbf{p}(\Vec{a}_{ i})}}{p_u(a_{u,j})} - (n - 1)\sqrt{\mathbf{p}(\Vec{a}_{i})\mathbf{p}(\Vec{a}_{j})} - \frac{1}{m} \sum_{u \in [n]} \frac{\sqrt{\mathbf{p}(\Vec{a}_{j}) \mathbf{p}(\Vec{a}_{i})}}{p_u(a_{u, j})}  \nonumber \\
	&~ + \frac{1}{m} \sum_{u \in [n]} \frac{\sqrt{\mathbf{p}(\Vec{a}_{j}) \mathbf{p}(\Vec{a}_{i})}}{p_u(a_{u, j})} \nonumber \\
	= &~ \sum_{u \in O(\Vec{a}_{i}, \Vec{a}_{j})} \frac{\sqrt{\mathbf{p}(\Vec{a}_{j}) \mathbf{p}(\Vec{a}_{ i})}}{p_u(a_{u,j})} - (n - 1)\sqrt{\mathbf{p}(\Vec{a}_{i})\mathbf{p}(\Vec{a}_{j})} 
	\end{align}
	
	Observe that $p_u(a_{u, j}) = p_u(a_{u, i})$ if $a_{u, i} = a_{u, j}$, thus $(\mathbf{A}^{p} \mathbf{A}^{p, \dag})_{i, j} = (\mathbf{A}^{p} \mathbf{A}^{p, \dag})_{j, i}$ for any $i, j \in [m^n]$. This proves that $\mathbf{A}^{p} \mathbf{A}^{p, \dag}$ is self-adjoint. 
	
	Second, we prove that $\mathbf{A}^{p, \dag} \mathbf{A}^{p}$ is a $mn \times mn$ self-adjoint matrix and has surprisingly succinct form. Let $i = u \times m + v, u \in [n], v \in [m]$.
	\begin{enumerate}
		\item $i = i'$. Besides, $O(i) = \{ \Vec{a} \in [m^n] | a_u = v \}$
		\begin{align}
		(\mathbf{A}^{p, \dag} \mathbf{A}^{p})_{i, i} = &~ \sum_{\Vec{a} \in O(i)} \sqrt{\mathbf{p}(\Vec{a})} [\sqrt{\frac{\mathbf{p}(\Vec{a}_{-u})}{p_u(a_{u})}} \cdot \mathbf{1}(a_{u} = v) - \frac{n - 1}{n} \sqrt{\mathbf{p}(\Vec{a})} - \frac{1}{m} \sqrt{\frac{\mathbf{p}(\Vec{a}_{-u})}{p_u(a_{u})}} \nonumber \\
		&~ + \frac{1}{mn} \sum_{u' = 0}^{n - 1} \sqrt{\frac{\mathbf{p}(\Vec{a}_{-u'})}{p_{u'}(a_{u'})}}]  \nonumber \\
		= &~ \sum_{\Vec{a} \in O(i)} \frac{\mathbf{p}(\Vec{a})}{p_u(a_{u})} - \frac{n - 1}{n} \mathbf{p}(\Vec{a}) - \frac{1}{m} \frac{\mathbf{p}(\Vec{a})}{p_u(a_u)} + \frac{1}{mn} \sum_{u' = 0}^{n - 1} \frac{\mathbf{p}(\Vec{a})}{p_{u'}(a_{u'})} \nonumber \\
		= &~ \sum_{\Vec{a} \in O(i)} \left( \mathbf{p}(\Vec{a}_{-u}) - \frac{1}{m} \mathbf{p}(\Vec{a}_{-u}) + \frac{1}{mn} \sum_{u' = 0}^{n - 1} \mathbf{p}(\Vec{a}_{-u'}) \right) - \frac{n -1 }{n} p_u(a_u = v) \nonumber \\
		= &~ 1 - \frac{1}{m} - \frac{n -1 }{n} p_u(a_u = v) + \frac{1}{mn}\sum_{\stackrel{u' \in [n]}{ u' \neq u}} \sum_{\Vec{a} \in O(i)} \mathbf{p}(\Vec{a}_{-u'}) \nonumber \\
		&~ + \frac{1}{mn} \sum_{\Vec{a} \in O(i)} \mathbf{p}(\Vec{a}_{-u}) \nonumber \\
		= &~ 1 - \frac{1}{m} - \frac{n -1 }{n} p_u(a_u = v) + \frac{1}{mn} + \frac{n - 1}{mn} m p_u(a_u = v) \nonumber \\
		= &~ 1 - \frac{1}{m} + \frac{1}{mn}
		\end{align}
		\item  $i = u \times m + v, i' = u \times m + v', v \neq v'$. This implies that $Q(i) \cap O(i') = \emptyset$
		\begin{align}
		(\mathbf{A}^{p, \dag} \mathbf{A}^{p})_{i, i'} = &~ \sum_{\Vec{a} \in O(i')} \sqrt{\mathbf{p}(\Vec{a})} [\sqrt{\frac{\mathbf{p}(\Vec{a}_{-u})}{p_u(a_{u})}} \cdot \mathbf{1}(a_{u} = v) - \frac{n - 1}{n} \sqrt{\mathbf{p}(\Vec{a})}  \nonumber \\
		&~ - \frac{1}{m} \sqrt{\frac{\mathbf{p}(\Vec{a}_{-u})}{p_u(a_{u})}} + \frac{1}{mn} \sum_{u' = 0}^{n - 1} \sqrt{\frac{\mathbf{p}(\Vec{a}_{-u'})}{p_{u'}(a_{u'})}}]  \nonumber \\
		=&~ \sum_{\Vec{a} \in O(i) \cap O(i')} \frac{\mathbf{p}(\Vec{a})}{p_u(a_u)} -  \frac{n - 1}{n}\sum_{\Vec{a}  \in O(i')} \mathbf{p}(\Vec{a}) - \frac{1}{m} \sum_{\Vec{a} \in O(i')} \frac{\mathbf{p}(\Vec{a})}{p_u(a_u)} \nonumber \\
		&~ + \frac{1}{mn} \sum_{\stackrel{u' \in [n]}{u' \neq u}} \sum_{\Vec{a} \in O(i')} \frac{\mathbf{p}(\Vec{a})}{p_{u'}(a_{u'})} + \frac{1}{mn} \sum_{\Vec{a} \in O(i')} \frac{\mathbf{p}(\Vec{a})}{p_{u}(a_u)} \nonumber \\
		=&~ - \frac{n - 1}{n} p_u(a_u = v') - \frac{1}{m} + \frac{n - 1}{mn}\sum_{\Vec{a} \in O(i')} \mathbf{p}(\Vec{a}_{-u'}) + \frac{1}{mn} \nonumber \\
		=&~ - \frac{1}{m} + \frac{1}{mn}
		\end{align}
		\item $i = u_1 \times m + v_1, i' = u_2 \times m + v_2, u_1 \neq u_2$. 
		\begin{align}
		(\mathbf{A}^{p, \dag} \mathbf{A}^{p})_{i, i'} = &~ \sum_{\Vec{a} \in O(i')} \sqrt{\mathbf{p}(\Vec{a})} [\sqrt{\frac{\mathbf{p}(\Vec{a}_{-u_1})}{p_{u_1}(a_{u_1})}} \cdot \mathbf{1}(a_{u_1} = v) - \frac{n - 1}{n} \sqrt{\mathbf{p}(\Vec{a})}  \nonumber \\
		&~ - \frac{1}{m} \sqrt{\frac{\mathbf{p}(\Vec{a}_{-u_1})}{p_{u_1}(a_{u_1})}} + \frac{1}{mn} \sum_{u' = 0}^{n - 1} \sqrt{\frac{\mathbf{p}(\Vec{a}_{-u'})}{p_{u'}(a_{u'})}}]  \nonumber \\
		= &~ \sum_{\Vec{a} \in O(i) \cap O(i')} \frac{\mathbf{p}(\Vec{a})}{p_{u_1}(a_{u_1})} -  \frac{n - 1}{n}\sum_{\Vec{a}  \in O(i')} \mathbf{p}(\Vec{a}) - \frac{1}{m} \sum_{\Vec{a} \in O(i')} \frac{\mathbf{p}(\Vec{a})}{p_{u_1}(a_{u_1})} \nonumber \\
		&~ + \frac{1}{mn} \sum_{\stackrel{u' \in [n]}{u' \neq {u_2}}} \sum_{\Vec{a} \in O(i')} \frac{\mathbf{p}(\Vec{a})}{p_{u'}(a_{u'})} + \frac{1}{mn} \sum_{\Vec{a} \in O(i')} \frac{\mathbf{p}(\Vec{a})}{p_{u_2}(a_{u_2})} \nonumber \\
		= &~ p_{u_2}(a_{u_2}) - \frac{n - 1}{n} p_{u_2}(a_{u_2}) - p_{u_2}(a_{u_2}) + \frac{n - 1}{mn} m  p_{u_2}(a_{u_2}) + \frac{1}{mn} \nonumber \\
		=&~ \frac{1}{mn} 
		\end{align}
	\end{enumerate}    
	Observe that $\mathbf{A}^{p, \dag} \mathbf{A}^{p}$ is self-adjoint by equation (2,3,4) and the expression is succinct.
	
	Third, we verify statement (2). Since we have computed $\mathbf{A}^{p, \dag} \mathbf{A}^{p}$, the verification is straightforward. For brevity, denote $\mathbf{A}^{p, \dag} \mathbf{A}^{p}$ as $\mathbf{A}_0^p$
	
	\begin{align}
	(\mathbf{A}^p \mathbf{A}_0^p)_{\Vec{a}, i} = &~  \sqrt{\mathbf{p}(\Vec{a})} \sum_{u \in [n]} (\mathbf{A}_0^p)_{um + a_u, i} \nonumber\\
	= &~ \sqrt{\mathbf{p}(\Vec{a})} \left(\mathbf{1}(\exists u \in [n],  i = um + a_u) - \frac{1}{m} + \frac{1}{mn} + (n - 1)\frac{1}{mn} \right) \nonumber\\
	= &~ \sqrt{\mathbf{p}(\Vec{a})} \cdot \mathbf{1}(\exists u \in [n],  i = um + a_u)
	\end{align}
	
	Thus, $\mathbf{A}^{p} \mathbf{A}^{p, \dag} \mathbf{A}^{p} = \mathbf{A}^{p}$.
	
	Similarly, we can verify statement (3). Suppose $i_0 = u_0 \times m + v_0$, we have
	
	\begin{align}
	(\mathbf{A}_0^p \mathbf{A}^{p, \dag})_{i_0, \Vec{a}} = &~ \frac{1}{mn} \sum_{\stackrel{u \neq u_0}{u \in [n]}} \sum_{v \in [m]} [\sqrt{\frac{\mathbf{p}(\Vec{a}_{-u})}{p_u(a_{u})}} \cdot \mathbf{1}(a_{u} = v)   \nonumber\\
	&~- \frac{n - 1}{n} \sqrt{\mathbf{p}(\Vec{a})} - \frac{1}{m} \sqrt{\frac{\mathbf{p}(\Vec{a}_{-u})}{p_u(a_{u})}} + \frac{1}{mn} \sum_{u' = 0}^{n - 1} \sqrt{\frac{\mathbf{p}(\Vec{a}_{-u'})}{p_{u'}(a_{u'})}}]   \nonumber\\
	&~+ \sum_{v \in [m]} (\mathbf{1}(v = v_0) - \frac{1}{m} + \frac{1}{mn}) [\sqrt{\frac{\mathbf{p}(\Vec{a}_{-u_0})}{p_{u_0}(a_{u_0})}} \cdot \mathbf{1}(a_{u_0} = v) \nonumber\\
	&~- \frac{n - 1}{n} \sqrt{\mathbf{p}(\Vec{a})} - \frac{1}{m} \sqrt{\frac{\mathbf{p}(\Vec{a}_{-u_0})}{p_{u_0}(a_{u_0})}} + \frac{1}{mn} \sum_{u' = 0}^{n - 1} \sqrt{\frac{\mathbf{p}(\Vec{a}_{-u'})}{p_{u'}(a_{u'})}}] \nonumber \\
	= &~ \frac{1}{mn} \sum_{u \in [n]} \sum_{v \in [m]} [\sqrt{\frac{\mathbf{p}(\Vec{a}_{-u})}{p_u(a_{u})}} \cdot \mathbf{1}(a_{u} = v)   \nonumber \\
	&~- \frac{n - 1}{n} \sqrt{\mathbf{p}(\Vec{a})}-\frac{1}{m} \sqrt{\frac{\mathbf{p}(\Vec{a}_{-u})}{p_u(a_{u})}} + \frac{1}{mn} \sum_{u' = 0}^{n - 1} \sqrt{\frac{\mathbf{p}(\Vec{a}_{-u'})}{p_{u'}(a_{u'})}}]  \nonumber \\
	&~+ \sum_{v \in [m]} (\mathbf{1}(v = v_0) - \frac{1}{m})[- \frac{n - 1}{n} \sqrt{\mathbf{p}(\Vec{a})}- \frac{1}{m} \sqrt{\frac{\mathbf{p}(\Vec{a}_{-u_0})}{p_{u_0}(a_{u_0})}}  \nonumber\\
	&~+ \frac{1}{mn} \sum_{u' = 0}^{n - 1} \sqrt{\frac{\mathbf{p}(\Vec{a}_{-u'})}{p_{u'}(a_{u'})}}]+ \sum_{v \in [m]}(\mathbf{1}(v = v_0) - \frac{1}{m}) \sqrt{\frac{\mathbf{p}(\Vec{a}_{-u_0})}{p_{u_0}(a_{u_0})}} \cdot \mathbf{1}(a_{u_0} = v) \nonumber\\
	= &~ \frac{1}{mn} \sum_{u \in [n]} \sqrt{\frac{\mathbf{p}(\Vec{a}_{-u})}{p_u(a_{u})}}  - \frac{n - 1}{n} \sqrt{\mathbf{p}(\Vec{a})} \nonumber \\
	&~+ \frac{1}{n} \sum_{u \in [n]} [ - \frac{1}{m} \sqrt{\frac{\mathbf{p}(\Vec{a}_{-u})}{p_u(a_{u})}}   + \frac{1}{mn} \sum_{u' = 0}^{n - 1} \sqrt{\frac{\mathbf{p}(\Vec{a}_{-u'})}{p_{u'}(a_{u'})}} ] \nonumber \\
	&~ + \left( \sum_{v \in [m]} (\mathbf{1}(v = v_0) - \frac{1}{m}) \right) [- \frac{n - 1}{n} \sqrt{\mathbf{p}(\Vec{a})} - \frac{1}{m} \sqrt{\frac{\mathbf{p}(\Vec{a}_{-u_0})}{p_{u_0}(a_{u_0})}} \nonumber\\
	&~+\frac{1}{mn} \sum_{u' = 0}^{n - 1} \sqrt{\frac{\mathbf{p}(\Vec{a}_{-u'})}{p_{u'}(a_{u'})}}] + (\mathbf{1}(a_{u_0} = v_0) - \frac{1}{m}) \sqrt{\frac{\mathbf{p}(\Vec{a}_{-u_0})}{p_{u_0}(a_{u_0})}} 
	\end{align}
	
	Clearly, we have the following relations
	\begin{align}
	\sum_{u \in [n]} [ - \frac{1}{m} \sqrt{\frac{\mathbf{p}(\Vec{a}_{-u})}{p_u(a_{u})}}   + \frac{1}{mn} \sum_{u' = 0}^{n - 1} \sqrt{\frac{\mathbf{p}(\Vec{a}_{-u'})}{p_{u'}(a_{u'})}} ] =& 0 \\
	\sum_{v \in [m]} (\mathbf{1}(v = v_0) - \frac{1}{m}) = &~ 0
	\end{align}
	
	Thus
	\begin{align}
	(\mathbf{A}_0^p \mathbf{A}^{p, \dag})_{i_0, \Vec{a}} = &~ \frac{1}{mn} \sum_{u \in [n]} \sqrt{\frac{\mathbf{p}(\Vec{a}_{-u})}{p_u(a_{u})}}  - \frac{n - 1}{n} \sqrt{\mathbf{p}(\Vec{a})} + (\mathbf{1}(a_{u_0} = v_0) - \frac{1}{m}) \sqrt{\frac{\mathbf{p}(\Vec{a}_{-u_0})}{p_{u_0}(a_{u_0})}} \\
	= &~ \mathbf{A}^{p, \dag}_{i_0, \Vec{a}}
	\end{align}
	
	This proves $\mathbf{A}^{p, \dag} = \mathbf{A}^{p, \dag} \mathbf{A}^{p} \mathbf{A}^{p, \dag} $ in statement (3) and $\mathbf{A}^{p, \dag}$ is the Moore-Penrose inverse of $\mathbf{A}^p$. Since the optimal solution $\mathbf{x}^* = \mathbf{A}^{p, \dag} \mathbf{b}^p + (\mathbf{I}_{mn \times mn} - \mathbf{A}^{p, \dag} \mathbf{A}^{p}) \mathbf{w}$ where $w \in \mathrm{R}^{mn}$ is any vector \citep{moore1920reciprocal}.
	
	Denote $\mathbf{x}^p = \mathbf{A}^{p, \dag} \mathbf{b}^p$. We have $\forall i = u \times m + v$
	\begin{align}
	\mathbf{x}^p_i = &~ \sum_{\Vec{a}} \mathbf{A}_{i, \Vec{a}}^{p, \dag} \sqrt{\mathbf{p}(\Vec{a})} \mathbf{b}_{\Vec{a}} \nonumber \\
	= &~ \sum_{\Vec{a}} [ \sqrt{\frac{\mathbf{p}(\Vec{a}_{-u})}{p_u(a_{u})}} \cdot \mathbf{1}(a_{u} = v) - \frac{n - 1}{n} \sqrt{\mathbf{p}(\Vec{a})} - \frac{1}{m} \sqrt{\frac{\mathbf{p}(\Vec{a}_{-u})}{p_u(a_{u})}}\nonumber\\
	&~ + \frac{1}{mn} \sum_{u' = 0}^{n - 1} \sqrt{\frac{\mathbf{p}(\Vec{a}_{-u'})}{p_{u'}(a_{u'})}} ] \sqrt{\mathbf{p}(\Vec{a})} \mathbf{b}_{\Vec{a}} \nonumber \\
	= &~ \sum_{\Vec{a}} \left[ \frac{\mathbf{p}(\Vec{a})}{p_u(a_u)} \cdot \mathbf{1}(a_u = v) - \frac{n - 1}{n} \mathbf{p}(\Vec{a}) - \frac{1}{m} \frac{\mathbf{p}(\Vec{a})}{p_u(a_u)} + \frac{1}{mn}  \sum_{u' = 0}^{n - 1} \frac{\mathbf{p}(\Vec{a})}{p_{u'}(a_{u'})} \right] \mathbf{b}_{\Vec{a}} \label{equation::x}
	\end{align}
	
	From equation (2, 3, 4), we have $i = u \times m + v, i' = u' \times m + v'$
	\begin{align}
	(\mathbf{I} - \mathbf{A}^{p, \dag} \mathbf{A}^{p})_{i, i'} = 
	\begin{cases}
	\frac{1}{m} - \frac{1}{mn} & \textit{ if } u = u' \\
	- \frac{1}{mn} & \textit{ if } u \neq u'
	\end{cases}
	\end{align}
	
	If we consider $\mathbf{w}$ as the following $i_0 = u_0 \times m + v_0$
	\begin{align}
	\mathbf{w}_{i_0} = \sum_{\Vec{a} \in O(i_0)} \frac{\mathbf{p}(\Vec{a})}{p_{u_0}(a_{u_0})} \mathbf{b}_{\Vec{a}}
	\end{align}
	
	Then for $i = u \times m + v$
	\begin{align}
	((\mathbf{I} - \mathbf{A}^{p, \dag} \mathbf{A}^{p}) \mathbf{w})_{i} = &~ \sum_{\stackrel{i_0 \in [mn]}{u \neq u_0}} - \frac{1}{mn} \mathbf{w}_{i_0}  + \sum_{i_0: u_0 = u} (\frac{1}{m} - \frac{1}{mn}) \mathbf{w}_{i_0} \\
	= &~ \sum_{\Vec{a}} - \frac{1}{mn} \sum_{u' \in [n]} \frac{\mathbf{p}(\Vec{a})}{p_{u'}(a_{u'})} \mathbf{b}_{\Vec{a}} + \frac{1}{m} \sum_{\Vec{a}} \frac{\mathbf{p}(\Vec{a})}{p_u(a_u)} \mathbf{b}_{\Vec{a}}
	\end{align} 
	
	Notice that this is exactly the last two terms in equation (5). Therefore, the optimal solutions of this weighted linear regression problem can be written as: $i = u \times m + v, v \in [m], u \in [n]$ and an arbitrary vector $\mathbf{w} \in \mathrm{R}^{mn}$.
	\begin{align}
	\mathbf{x}^*_i = \sum_{\Vec{a}} \frac{\mathbf{p}(\Vec{a})}{p_u(a_u)}\mathbf{b}_{\Vec{a}} \cdot \mathbf{1}(a_u = v) - \frac{n - 1}{n} \mathbf{p}(\Vec{a}) \mathbf{b}_{\Vec{a}} - \frac{1}{mn} \sum_{i' \in [mn]} \mathbf{w}_{i'} + \frac{1}{m} \sum_{v' \in [m]} \mathbf{w}_{um + v'}
	\end{align}
	
	This completes the proof.
\end{proof}

\subsection{Omitted Proofs for Theorem \ref{theorem:CreditAssignmentTheorem}}

\DEFFQILVD*

\CreditAssignmentTheorem*

\begin{proof}
	In the formulation of FQI-LVF stated in Definition \ref{def:fqi-lvd}, the empirical Bellman error minimization in FQI-LVF can be regarded as a set of weighted linear least squares problems as the following form:
	\begin{align}
	\min_{\mathbf{x}} \parallel \sqrt{\mathbf{p}^\top} \cdot (\mathbf{A} \mathbf{x} - \mathbf{b}) \parallel_2^2.
	\end{align}
	
	To construct such a linear regression problem, we first fix a latent state $s$.
	\begin{itemize}
		\item $n$ denotes the number of agents;
		\item $m$ denotes the number of individual observation-action pairs $(\context_i, a_i)$ such that $\context_i$ encodes the same latent state as $s$. We assume this amount is symmetric among agents to simplify the notations;
		\item $\mathbf{A} \in \mathbb{R}^{m^n \times mn}$ denotes the multi-agent credit assignment coefficient matrix of action-value functions with linear value decomposition;
		\item $\mathbf{x} \in \mathbb{R}^{mn}$ denotes individual action-value functions $\left[Q_i^{(t)}(\context_i, a_i)\in \mathbb{R}^{m}\right]_{i=1}^n$ under the empirical Bellman error minimization;
		\item According to Lemma \ref{lemma:fqi-target-expectation}$, \mathbf{b} \in \mathbb{R}^{m^n}$ denotes the regression target $y^{(t)}({\bm\context}, \mathbf{a})$ derived by the \textit{Bellman optimality operator};
		\item $\mathbf{p} \in \mathbb{R}^{m^n}$ denotes the empirical probability of joint action $\mathbf{a}$ executed on observation $\bm\context$, $p_D(\mathbf{a}|{\bm\context})$, which can be factorized to the production of individual components illustrated in Assumption \ref{assumption:dataset}.
	\end{itemize}
	
	Besides, $\mathbf{A}$ is $m$-ary encoding matrix namely $ \forall i \in [m^n], j \in [mn]$
	\begin{align}
	\mathbf{A}_{i,j} =\left\{
	\begin{aligned}
	&1, &\text{ if}\quad\exists u \in [n], j = m \times u + (\lfloor i / m^u \rfloor \textit{ mod } m), \\
	&0, &\text{ otherwise}.
	\end{aligned}
	\right.
	\end{align}
	For simplicity, $j^{th}$ row of $\mathbf{A}$ corresponds to a m-ary number $\Vec{a}_j = (j)_m$ where $\Vec{a} = a_0 a_1 \ldots a_{n - 1}$, with $a_u \in [m], \forall u \in [n]$. According to the factorizable empirical probability $p_D$ shown in Assumption \ref{assumption:dataset}, $\mathbf{p}$ is a corresponding positive vector which follows that
	\begin{align}
	\mathbf{p}_j = \mathbf{p}(\Vec{a}_j) = \prod_{u \in [n]} p_u(a_{u,j}), \textit{ where }p_u: [m] \to (0, 1) \textit{ and } \sum_{a_u \in [m]} p_u(a_u) = 1, \forall u \in [n]
	\end{align}
	
	According to Lemma \ref{lemma:2}, we derive the optimal solution of this problem is the following. Denote $i = u \times m + v, v \in [m], u \in [n]$ and an arbitrary vector $\mathbf{w} \in \mathbb{R}^{mn}$
	\begin{align}
	\mathbf{x}^*_i = \sum_{\Vec{a}} \frac{\mathbf{p}(\Vec{a})}{p_u(a_u)}\mathbf{b}_{\Vec{a}} \cdot \mathbf{1}(a_u = v) - \frac{n - 1}{n} \mathbf{p}(\Vec{a}) \mathbf{b}_{\Vec{a}} - \frac{1}{mn} \sum_{i' \in [mn]} \mathbf{w}_{i'} + \frac{1}{m} \sum_{v' \in [m]} \mathbf{w}_{um + v'}
	\end{align}
	which means $\forall i \in \mathcal{N}, \forall({\bm\context}, \mathbf{a})\in{\bm\contextspace}\times\mathbf{A}$, the individual action-value function $Q_i^{(t+1)}(\context_i, a_i) =$
	\vspace{-0.01in}
	\begin{align}
	\mathop{\mathbb{E}}_{(\context'_{-i},a'_{-i})\sim p_D(\cdot|{\context_i})}\left[y^{(t)}\left(\context_i\oplus\context'_{-i}, a_i \oplus a'_{-i}\right)\right]
	-~  \frac{n-1}{n}\mathop{\mathbb{E}}_{{\bm\context}',\mathbf{a}'\sim p_D(\cdot|\Lambda^{-1}({\context_i}))}\left[ y^{(t)}\left({\bm\context}',\mathbf{a}'\right)\right] ~ + w_i(\context_i),
	\end{align}
		where $z_i \oplus z'_{-i} $ denotes $\langle z_1', \cdots, z_{i-1}', z_i, z_{i+1}', \cdots, z_n' \rangle$, and $z'_{-i}$ denotes the elements of all agents except for agent $i$. $\Lambda^{-1}(\context_i)$ denotes the inverse of observation emission, which decodes the current latent state from $\context_i$. The residue term $\mathbf{w}\equiv\left[w_i\right]_{i=1}^n$ is an arbitrary function satisfying $\forall \bm\context\in\bm{\contextspace}$, $\sum_{i=1}^n w_i(\context_i) = 0$.
	
	%Note that the above proof is not built upon the reactive function class introduced in Appendix \ref{sec:reactive-function-class-appendix}. When considering reactive function classes, the proof is still correct since the target value $y({\bm\context},\mathbf{a})$ is the same among trajectories sharing the same latent states.
\end{proof}

\subsection{Equivalent between Least-Squares Problem and FQI-LVF in Dec-POMDPs} \label{sec:dec-pomdp-equivalence-appendix}

Recall that the closed-form solution derived in Theorem \ref{theorem:CreditAssignmentTheorem} relies on two assumptions, decentralized data collection and rich-observation problem formulation. In this section, we argue that, without either of these assumptions, an intuitive closed-form solution unlikely exists. Formally, we show that the problem complexity of solving one iteration of FQI-LVF is equivalent to solving general least-squares problems with binary weight matrices, which does not have existing analytical solutions. Proposition \ref{prop:dec-pomdp-equivalence} characterizes the hardness of solving the closed-form solution of FQI-LVF without the rich-observation assumption. Proposition~\ref{prop:centralized-equivalence} characterizes the hardness without the decentralized data collection assumption.

\begin{proposition} \label{prop:dec-pomdp-equivalence}
	Computing linear least-squares problems with binary weight matrices can be reduced to computing one iteration of FQI-LVF with some dataset collected in a Dec-POMDP with a decentralized policy.
\end{proposition}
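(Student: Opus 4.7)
The plan is to give an explicit reduction: starting from any binary-weight linear least-squares problem, I will construct a Dec-POMDP instance together with a decentralized behavior policy such that a single iteration of FQI-LVF on this instance produces (up to positive rescaling of rows) the given least-squares problem. The key source of flexibility is that, in a Dec-POMDP, the observation emission $\Lambda$ is no longer required to decode the latent state from an agent's observation; in particular different latent states may produce arbitrary tuples of individual observations, and we may wire the observation channels essentially as we like.

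Concretely, I would start from an instance $\min_x \|W^{1/2}(Ax-b)\|_2^2$ with binary weight matrix $W$ and binary design matrix $A$. I would interpret the columns of $A$ as individual observation-action pairs $(\context_i, a_i)$ distributed across $n$ agents, and pad $A$ with dummy columns so that every retained row has exactly $n$ ones, one per agent; this is needed because every row of the FQI-LVF design matrix has weight exactly $n$ due to $Q_{\text{tot}}=\sum_i Q_i$, and the dummies can be eliminated from the solution without affecting the original variables. For each retained row $r$ (those with $W_{rr}=1$) I introduce one latent state $s_r$ and one joint action $\mathbf{a}_r$, with deterministic emission $\Lambda(\cdot\mid i, s_r)$ chosen so that agent $i$ receives precisely the individual observation coordinate selected by the $i$-th ``one'' of that row. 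The state marginal and the individual policies $\pi_i^D$ are then chosen so that $p_D(s_r,\mathbf{a}_r)>0$, and the reward together with the frozen $Q^{(t)}$ used to form $y^{(t)}$ are selected so that the one-step target at $(s_r,\mathbf{a}_r)$ equals $b_r$, which we have full freedom to arrange.

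The main obstacle is that a decentralized (in particular exploratory) policy has Cartesian-product support at every joint observation, so at a single $\bm\context$ we cannot make only some joint actions appear. I plan to sidestep this by designing the emissions so that each retained row uses its own latent state $s_r$ emitting a unique joint observation $\bm\context^r$, reached from no other state; then the product support of $\pi^D$ at $\bm\context^r$ only multiplies each $p_D(\bm\context^r,\mathbf{a})$ by a positive constant, which can be absorbed into a positive rescaling of that row and does not change the minimizer. Rows we wish to exclude are excluded simply by never introducing a corresponding latent state, so they contribute zero weight. With this choice, the FQI-LVF regression $\sum_{\bm\context,\mathbf{a}} p_D(\bm\context,\mathbf{a})\bigl(y^{(t)}(\bm\context,\mathbf{a})-\sum_i Q_i(\context_i,a_i)\bigr)^2$ on the constructed Dec-POMDP coincides, row by row, with the target binary-weight least-squares up to positive scaling, so any general closed-form procedure for a single iteration of FQI-LVF without the rich-observation assumption would yield a closed-form solver for arbitrary binary-weight least-squares, completing the reduction.
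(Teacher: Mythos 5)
Your reduction has the right spirit---exploit the freedom of the emission map in a Dec-POMDP to encode the binary design matrix and carry the weights in the state-visitation distribution---but the step you use to dispose of the unwanted joint actions does not work. Giving each retained row its own latent state $s_r$ with a unique joint observation $\bm\context^r$ does not prevent the other joint actions from appearing there: once $s_r$ has positive visitation probability and the behavior policy is an exploratory product policy, \emph{every} $\mathbf{a}\in\mathcal{A}^n$ occurs at $\bm\context^r$ with positive probability, so the FQI-LVF objective acquires $|\mathcal{A}|^n-1$ additional weighted rows of the form $p_D(\bm\context^r,\mathbf{a})\bigl(y^{(t)}(\bm\context^r,\mathbf{a})-\sum_i Q_i(\context_i^r,a_i)\bigr)^2$, whose targets are not part of the original problem and which couple the original variables to others. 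These are exactly the rows you ``wish to exclude,'' and they cannot be excluded by omitting latent states, nor absorbed into a ``positive rescaling of that row'': rescaling changes the weight of the designated row, it does not remove the existence of the others, and a product policy cannot put zero mass on some joint actions and positive mass on others at the same observation. A second, smaller gap: if the dummy columns you pad with are row-specific, each such dummy is a free variable appearing in only one row and drives that row's residual to zero, trivializing the constraint; for the padding to be harmless the dummies must be shared across rows (one per agent), at which point they merely play the role of an intercept.

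The paper's proof sidesteps all of this by making the action space trivial, $|\mathcal{A}|=1$: the binary vector $\mathbf{c}_j$ is encoded as a latent state (a length-$n$ bit string), agent $i$ partially observes only the $i$-th bit, the single joint action yields reward $y_j$ and terminates, and the weight $w_j$ is carried by the initial probability of state $\mathbf{c}_j$. With one action there is exactly one data row per latent state, the decentralized-policy requirement is vacuous, and the regression is literally $\sum_j w_j\bigl(\sum_i Q_i(c_{j,i})-y_j\bigr)^2$; the original solution is recovered as $x_i^*=Q_i(1)-Q_i(0)$ and $b^*=\sum_i Q_i(0)$. If you insist on a nontrivial action space, you would need a deterministic decentralized behavior policy (so that only one joint action is realized per observation), which the proposition's statement permits but which collapses your construction to the paper's anyway.
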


\begin{proof}
	Consider the following linear least-squares problem:
	\begin{align}
		(\mathbf{x}^*,b^*) = \mathop{\arg\min}_{\mathbf{x},b} \sum_{j=1}^m w_j\cdot \left(\mathbf{c}_j^\top\mathbf{x}+b-y_j\right)^2,
	\end{align}
	where $\{\mathbf{c}_j\}_{i=j}^m$ are a set of given binary vectors, $\{y_j\}_{j=1}^m$ are given labels, and $\{w_j\}_{i=j}^m$ denote the weights of each data point.
	
	We construct the dataset by executing a decentralized policy in the following Dec-POMDP:
	\begin{itemize}
		\item This Dec-POMDP contains $2^n$ latent state. Each latent state is represented by a binary string with length $n$. Recall that $n$ denotes the number of agents.
		\item Agent $i$ can observe the $i^{\text{th}}$ dimension of the latent state.
		\item At state $s=\mathbf{c}_j$, no matter executing what action, the transition would lead to a termination signal and the reward is equal to $y_j$. i.e., the transition function does not depend on the action. For brevity, we consider the individual action space $|\mathcal{A}|=1$.
		\item The initial state distribution of state $s=\mathbf{c}_j$ is proportional to $w_j$.
	\end{itemize}

	Note that all actions are equivalent in this Dec-POMDP. Let $Q_i(0)$ and $Q_i(1)$ denote the individual value function of agent $i$ w.r.t. two local observations computed by FQI-LVF, i.e.,
	\begin{align*}
		\bigl\{\langle Q_i(0), Q_i(1)\rangle\bigr\}_{i=1}^n = \mathop{\arg\min}_{Q} \sum_{j=1}^m w_j\cdot \left(\sum_{i=1}^n Q_i(c_{j,i})-y_j\right)^2.
	\end{align*}
	
	The solution of the given least-squares problem can be computed as follows:
	\begin{align*}
		\mathbf{x}^* &= \left\langle Q_i(1)-Q_i(0) \right\rangle_{i=1}^n, \\
		b^* &= \sum_{i=1}^n Q_i(0).
	\end{align*}
	
	Thus computing linear least-squares problems with binary weight matrices is equivalent to computing one iteration of FQI-LVF with some dataset collected in a Dec-POMDP with a decentralized policy.
\end{proof}

\begin{proposition} \label{prop:centralized-equivalence}
	Computing linear least-squares problems with binary matrices can be reduced to computing one iteration of FQI-LVF with some dataset collected in an MMDP with a centralized policy.
\end{proposition}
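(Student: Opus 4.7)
The plan is to mirror the construction in Proposition~\ref{prop:dec-pomdp-equivalence}, but shift the binary encoding from latent observations into joint actions. In an MMDP every agent observes the global latent state, so the observation side is trivial; what is relaxed here is the factorization of the behavior policy, and the extra freedom to use an arbitrary non-product joint-action distribution is exactly what lets me realize arbitrary weights in a binary-coefficient least-squares instance.

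Concretely, given a target problem $\min_{\mathbf{x},b} \sum_{j=1}^m w_j (\mathbf{c}_j^\top \mathbf{x} + b - y_j)^2$ with $\mathbf{c}_j \in \{0,1\}^n$ and $w_j > 0$, I would construct an MMDP with a single non-terminal state $s$ whose unique successor is an absorbing state with zero reward, $n$ agents, individual action space $\mathcal{A} = \{0,1\}$, and reward function $r(s, \mathbf{c}_j) = y_j$ (reward values on the remaining $2^n - m$ joint actions are irrelevant since they will receive zero measure under the behavior policy). The centralized behavior policy $\bm\pi^D(\cdot\mid s)$ then assigns probability mass $w_j / \sum_k w_k$ to the joint action $\mathbf{c}_j$, and the dataset $D$ collects the corresponding one-step transitions in those proportions.

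Under this construction, a single iteration of FQI-LVF from a bootstrap whose successor-state value is zero has empirical Bellman target equal to the reward, so the regression in Eq.~\eqref{eq:fma-fqi} collapses to
\[
\min_{[Q_i]_{i=1}^n} \sum_{j=1}^m w_j \left(\sum_{i=1}^n Q_i(s, c_{j,i}) - y_j\right)^2.
\]
Setting $x_i := Q_i(s,1) - Q_i(s,0)$ and $b := \sum_{i=1}^n Q_i(s,0)$ gives $\sum_i Q_i(s, c_{j,i}) = b + \mathbf{c}_j^\top \mathbf{x}$, so the FQI-LVF objective coincides pointwise with the original least-squares objective under this linear substitution. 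Any minimizer $[Q_i^*]$ therefore yields an optimal $(\mathbf{x}^*, b^*)$, completing the reduction. The kernel of this substitution is exactly the $(n-1)$-dimensional residue space $\{\mathbf{w}\in\mathbb{R}^n : \sum_i w_i = 0\}$ from Theorem~\ref{theorem:CreditAssignmentTheorem}, which is the expected non-uniqueness.

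The main obstacle, as in Proposition~\ref{prop:dec-pomdp-equivalence}, is conceptual rather than calculational: one must verify that the centralized behavior policy genuinely violates the product structure required by Assumption~\ref{assumption:dataset}, so that the reduction is not vacuously blocked by the very assumption being shown to be necessary. This is immediate whenever the weight vector $\{w_j\}$, viewed as a tensor in $\mathbb{R}^{2\times\cdots\times 2}$ indexed by the bits of $\mathbf{c}_j$, is not rank-one — the generic case, and easily arranged when constructing hard instances. Together with Proposition~\ref{prop:dec-pomdp-equivalence}, this shows that each of the two assumptions behind Theorem~\ref{theorem:CreditAssignmentTheorem} is individually necessary for the clean closed form; dropping either one pushes a single FQI-LVF update back to the difficulty of a general weighted linear least-squares problem.
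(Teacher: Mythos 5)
Your construction is essentially identical to the paper's: a single-state MMDP with binary individual actions, reward $y_j$ on joint action $\mathbf{c}_j$, a centralized behavior policy placing mass proportional to $w_j$ on $\mathbf{c}_j$, and the recovery $x_i^* = Q_i(1)-Q_i(0)$, $b^* = \sum_i Q_i(0)$. The proposal is correct; the added remark about the weight tensor needing to be non-rank-one (so the instance genuinely falls outside Assumption~\ref{assumption:dataset}) is a reasonable extra observation but does not change the argument.
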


\begin{proof}
	Consider the following linear least-squares problem:
	\begin{align}
	(\mathbf{x}^*,b^*) = \mathop{\arg\min}_{\mathbf{x},b} \sum_{j=1}^m w_j\cdot \left(\mathbf{a}_j^\top\mathbf{x}+b-y_j\right)^2,
	\end{align}
	where $\{\mathbf{a}_j\}_{i=j}^m$ are a set of given binary vectors, $\{y_j\}_{j=1}^m$ are given labels, and $\{w_j\}_{i=j}^m$ denote the weights of each data point.
	
	We construct the dataset by executing a centralized policy in the following MMDP:
	\begin{itemize}
		\item This MMDP contains only one latent state. Each agent have two individual actions, i.e., $|\mathcal{A}|=2$.
		\item All actions lead to a termination signal. The joint action $\mathbf{a}_j$ produces a reward $y_j$.
		\item The probability to executing joint action $\mathbf{a}_j$ is proportional to $w_j$.
	\end{itemize}
	
	Note that all actions are states in this MMDP. Let $Q_i(0)$ and $Q_i(1)$ denote the individual value function of agent $i$ w.r.t. two actions computed by FQI-LVF, i.e.,
	\begin{align*}
	\bigl\{\langle Q_i(0), Q_i(1)\rangle\bigr\}_{i=1}^n = \mathop{\arg\min}_{Q} \sum_{j=1}^m w_j\cdot \left(\sum_{i=1}^n Q_i(c_{j,i})-y_j\right)^2.
	\end{align*}
	
	The solution of the given least-squares problem can be computed as follows:
	\begin{align*}
	\mathbf{x}^* &= \left\langle Q_i(1)-Q_i(0) \right\rangle_{i=1}^n, \\
	b^* &= \sum_{i=1}^n Q_i(0).
	\end{align*}
	
	Thus computing linear least-squares problems with binary matrices is equivalent to computing one iteration of FQI-LVF with some dataset collected in an MMDP with a centralized policy.
\end{proof}

\section{Omitted Proofs in Section \ref{sec:unbounded}}

\subsection{Omitted Proofs in Section \ref{sec:unbounded-1}}

Note that, since we adopt the reactive function classes for analyses (see Appendix \ref{sec:reactive-function-class-appendix}), the infinite-norm of value functions is defined over a finite set.

\NotGammaContractionCorollary*

\begin{proof}
	Suppose the empirical Bellman operator $\mathcal{T}_D^{\text{LVF}}$ is a $\gamma$-contraction. For any DEC-ROMDPs, when using a uniform data distribution, the value function of FQI-LVF will converge \citep{ernst2005tree} because of the contraction of the distance (infinity norm) between any pair of $Q$. However, one counterexample is indicated in Proposition \ref{prop:uniform_divergence}, which shows that there exists DEC-ROMDPs such that,  when using a uniform data distribution, the value function of FQI-LVF diverges to infinity from an arbitrary initialization $Q^{(0)}$. The assumption of $\gamma$-contraction is not hold and the empirical Bellman operator $\mathcal{T}_D^{\text{LVF}}$ is not a $\gamma$-contraction.
\end{proof}

\UniformDivergenceProposition*

\begin{proof}
	We consider the following environment with 2 agents, 2 states $(s_1, s_2)$ and each agent $(i = 1, 2)$ has 2 actions $\mathcal{A} \equiv \left\{\mathcal{A}^{(1)}, \mathcal{A}^{(2)}\right\}$. Both agents can directly observe full state information without noises. The reward function is listed below where $r(s_j, \mathbf{a})$ denotes the reward of $(s_j, \mathbf{a})$, and $\mathbf{a} = \langle a_1, a_2\rangle$.
	\begin{align}
	r(s_1) = \left(
	\begin{matrix}
	0 & 0 \\
	0 & 0
	\end{matrix}
	\right) \quad
	r(s_2) = \left(
	\begin{matrix}
	1 & 0 \\
	0 & 0
	\end{matrix}
	\right)
	\end{align}
	
	Besides, the transition is deterministic.
	\begin{align}
	T(s_1) = \left(
	\begin{matrix}
	s_1 & s_1 \\
	s_1 & s_1
	\end{matrix}
	\right) \quad
	T(s_2) = \left(
	\begin{matrix}
	s_2 & s_2 \\
	s_2 & s_1
	\end{matrix}
	\right)
	\end{align}
	Furthermore, $\gamma \in (\frac{4}{5}, 1)$. (In practice, $\gamma$ is usually chosen as $0.99$ or $0.95$.)  The following proves that this example will diverge for any initialization.
	
	To make the notations more accessible, we let the value function explicitly depend on the global state within this example. Denote $Q^t_{i}(s_j, a_i)$ as the decomposed Q-value of agent $i$ after $t^{\text{th}}$ value-iteration at state $s_j$ with action $a_i$. Then, the total Q-value can be described as $Q_{\text{tot}}^t(s_j, \mathbf{a}) = Q^t_{1}(s_j, a_1) + Q^t_{2}(s_j, a_2)$. For brevity, $0^{\text{th}}$ Q-value is its initialization. 
	
	First, we clarify the process of each iteration. Since the value-iteration for linear decomposed function class is solving the MSE problem in Lemma \ref{lemma:2}. $\mathbf{b}$ is target one-step TD-value w.r.t the Q-value of the last iteration.
	Through described in Lemma \ref{lemma:2}, the optimal solution of this MSE problem is not unique. We can ignore the term of an arbitrary vector $\mathbf{w}$ when considering the joint action-value functions because $\mathbf{w}$ does not affect the local action selection of each agent and will be eliminated in the summation operator of linear value decomposition.
	In addition, under uniformed sampling, we observe that $p_u(a_u) = \frac{1}{2}$ for any $\Vec{a}, u$. Then, in equation \ref{equation::x}
	\begin{align}
	- \frac{1}{m} \frac{\mathbf{p}(\Vec{a})}{p_u(a_u)} + \frac{1}{mn}  \sum_{u' = 0}^{n - 1} \frac{\mathbf{p}(\Vec{a})}{p_{u'}(a_{u'})} = 0
	\end{align}
	Second, we denote $V^{t}_{\text{tot}}(s_j) = \max_{\mathbf{a}} Q^{t}_{\text{tot}}(s_j, \mathbf{a})$ and observe that $\forall t \geq 1, s_j$
	\begin{align}
	Q_1^t(s_j, a_1)   = &~  \frac{1}{2} \sum_{a_2 \in \mathcal{A}} \left(r(s_j, \mathbf{a}) + \gamma V^{t-1}_{\text{tot}}(T(s_j,\mathbf{a}) \right) - \frac{1}{2} \sum_{\mathbf{a} \in \bm{\mathcal{A}}} \frac{1}{4} \left(r(s_j, \mathbf{a}) + \gamma V^{t-1}_{\text{tot}}(T(s_j,\mathbf{a})) \right) \label{equation::q}\\
	= &~ Q_2^t(s_j, a_2)
	\end{align}
	The second equation holds because the transition $T$ and the reward $R$ are symmetric for both agents. Thus, we omit the subscript of local Q-values as $Q^t(s_j, a)$ when $t \geq 1$.
	
	Third, we analyze the Q-values on state $s_1$. Clearly, its iteration is irrelevant to $s_2$. According to  equation \ref{equation::q}, $\forall a \in \mathcal{A}, t \geq 1$
	\begin{align}
	Q^t(s_1, a) & = \frac{\gamma}{2} V^{t-1}_{\text{tot}}(s_1) \\
	& = \frac{\gamma}{2} \max_{a_1, a_2 \in\mathcal{A}} \left(Q^{t-1}(s_1, a_1) + Q^{t-1}(s_1, a_2)\right)
	\end{align}
	Clearly, when $t \geq 1, Q^t\left(s_1, \mathcal{A}^{(1)}\right) = Q^t\left(s_1, \mathcal{A}^{(2)}\right)$. Therefore, we observe that $Q^t(s_1, \cdot) = \gamma^t q_1, \forall t \geq 1$ where $q_1$ is determined by the initialization $Q^0_{\text{tot}}(s_1, \mathbf{a}), \forall\mathbf{a} \in \bm{\mathcal{A}}$. 
	
	Last, we consider state $s_2$. It is straightforward to observe the following recursion for $t \geq 2$ from equation \ref{equation::q}
	\begin{align}
	Q^t\left(s_2, \mathcal{A}^{(1)}\right) & = \frac{1}{2}(1 + 2\gamma V^{t-1}_{\text{tot}}(s_2)) - \frac{1}{8}[1 + \gamma(3 V^{t-1}_{\text{tot}}(s_2) + V^{t-1}_{\text{tot}}(s_1))] \nonumber \\
	& = \frac{5 \gamma}{8} V^{t-1}_{\text{tot}}(s_2) + \frac{3}{8} - \frac{1}{4} \gamma^t q_1 \nonumber \\
	& = \frac{5 \gamma}{4} \max_{a \in \mathcal{A}} Q^{t-1}(s_2, a) + \frac{3}{8} - \frac{1}{4} \gamma^t q_1 \label{equation::q1} \\
	Q^t\left(s_2, \mathcal{A}^{(2)}\right) & = \frac{1}{2}(\gamma V^{t-1}_{\text{tot}}(s_2) + \gamma V^{t-1}_{\text{tot}}(s_1)) - \frac{1}{8}[1 + \gamma(3 V^{t-1}_{\text{tot}}(s_2) + V^{t-1}_{\text{tot}}(s_1))] \nonumber \\
	& = \frac{ \gamma}{8} V^{t-1}_{\text{tot}}(s_2) - \frac{1}{8} + \frac{3}{4} \gamma^t q_1 \nonumber \\
	& = \frac{\gamma}{4} \max_{a \in \mathcal{A}} Q^{t-1}(s_2, a) - \frac{1}{8} + \frac{3}{4} \gamma^t q_1
	\end{align}
	We consider  some $\delta > 0$ and $t_{\delta} = \left\lceil \log_{\gamma} \frac{\delta}{6 |q_1|} \right\rceil$. Then, $t > t_{\delta}$
	\begin{align}
	Q^t\left(s_2, \mathcal{A}^{(2)}\right) \geq \frac{\gamma}{4} \max_{a \in \mathcal{A}} Q^{t-1}(s_2, a) - \frac{1 + \delta}{8} \geq \frac{\gamma}{4} Q^{t-1}\left(s_2, \mathcal{A}^{(2)}\right) - \frac{1 + \delta}{8} \label{equation::rel}
	\end{align}
	
	Denote $\widehat{Q}^t\left(s_2, \mathcal{A}^{(2)}\right) = \frac{\gamma}{4} \widehat{Q}^{t-1}\left(s_2, \mathcal{A}^{(2)}\right) - \frac{1 + \delta}{8}, \forall t > t_{\delta}$ and $\widehat{Q}^{t_{\delta}}\left(s_2, \mathcal{A}^{(2)}\right) = Q^{t_{\delta}}\left(s_2, \mathcal{A}^{(2)}\right)$. Consequently, $Q^t(s_2, a_2) \geq \widehat{Q}^{t_{\delta}}\left(s_2, \mathcal{A}^{(2)}\right), \forall t \geq t_{\delta}$ by equation \ref{equation::rel}. Since $t \geq t_{\delta}$
	\begin{align}
	\widehat{Q}^t\left(s_2, \mathcal{A}^{(2)}\right) = \left(\frac{\gamma}{4}\right)^{t - t_{\delta}}\left(Q^{t_{\delta}}\left(s_2, \mathcal{A}^{(2)}\right) - \frac{1 + \delta}{2 \gamma - 8}\right) + \frac{1 + \delta}{2\gamma - 8}
	\end{align}
	Furthermore, $\gamma \in (\frac{4}{5}, 1)$. There exists some $T_{\delta} \geq t_{\delta}$ which
	\begin{equation}
	Q^{T_{\delta}}\left(s_2, \mathcal{A}^{(2)}\right) \geq \widehat{Q}^{T_{\delta}}\left(s_2, \mathcal{A}^{(2)}\right) \geq \frac{1+2 \delta}{2 \gamma - 8} > - \frac{1 + 2 \delta}{6}
	\end{equation}
	
	According to equation \ref{equation::q1} and let $\delta < \frac{1}{11}$.
	\begin{align}
	Q^{T_{\delta} + 1}\left(s_2, \mathcal{A}^{(1)}\right) & \geq \frac{5 \gamma}{4} Q^{T_{\delta}}\left(s_2, \mathcal{A}^{(2)}\right) + \frac{3}{8} - \frac{1}{4} \gamma^t q_1 \\
	& > - \frac{5 + 10\delta}{24}  + \frac{3}{8} - \frac{1}{24} \delta \\
	& > \frac{1}{8}
	\end{align}
	
	Similar to equation \ref{equation::rel}, we observer from equation \ref{equation::q1} that $\forall t > T_{\delta=\frac{1}{11}} + 1$
	\begin{align}
	Q^t\left(s_2, \mathcal{A}^{(1)}\right) \geq \frac{5\gamma}{4} Q^{t-1}\left(s_2, \mathcal{A}^{(1)}\right) + \frac{1}{4}
	\end{align}
	and
	\begin{align}
	V^t_{\text{tot}}\left(s_2\right) =&~2Q^t\left(s_2, \mathcal{A}^{(1)}\right) \\
	\geq&~2\left( \frac{5\gamma}{4} Q^{t-1}\left(s_2, \mathcal{A}^{(1)}\right) + \frac{1}{4}\right) \\
	=&~ \frac{5\gamma}{4} V^{t-1}_{\text{tot}}\left(s_2\right) + \frac{1}{4}
	\end{align}
	
	Since $\frac{5 \gamma}{4} > 1$ and the initial point at $T_{\delta=\frac{1}{11}} + 1$ is larger than $\frac{1}{8}$, this suggests that $V^t_{\text{tot}}\left(s_2\right)$ will eventually diverge.
	
	Noticing that our proof holds with respect to  any  $\left\{Q^0_{\text{tot}}(s_j, \textbf{a}) | \forall j \in \mathcal{S}, \textbf{a} \in\bm{\mathcal{A}}\right\}$. Thus, value-iteration on linear decomposed function class w.r.t this MDP will diverge evnetually under any circumstances.
\end{proof}

\subsection{Omitted Statements in Section \ref{sec:unbounded-2}} \label{sec:on-policy-algorithm}

\begin{algorithm}[htp]
	\caption{On-Policy Fitted Q-Iteration with $\epsilon$-greedy Exploration} \label{alg:on_policy_fqi}
	\begin{algorithmic}[1]
		\State Initialize $Q^{(0)}$.
		\For{$t=0\dots T-1$} \Comment{$T$ denotes the computation budget}
		\State Construct an exploratory policy $\tilde{\bm\pi}_t$ based on $Q^{(t)}$. \Comment{i.e., $\epsilon$-greedy exploration} \label{alg:line_exploration}
		\vspace{-0.07in}
		\begin{align}
		\tilde{\bm\pi}_t(\mathbf{a}|{\bm\context}) = \prod_{i=1}^n\left(\frac{\epsilon}{|\mathcal{A}|} + (1-\epsilon)\mathbb{I}\left[a_i=\mathop{\arg\max}_{a_i'\in\mathcal{A}}Q_{i}^{(t)}(\context_i,a_i')\right]\right)
		\end{align}
		
		\State Collect a new dataset $D_t$ by running $\tilde{\bm\pi}_t$. \label{alg:line_collect}
		\State Operate an on-policy Bellman operator $Q^{(t+1)} \gets \mathcal{T}_{\epsilon}^{\text{LVF}}Q^{(t)} \equiv \mathcal{T}_{D_t}^{\text{LVF}}Q^{(t)}$.
		\EndFor
	\end{algorithmic}
\end{algorithm}

Algorithm \ref{alg:on_policy_fqi} is a variant of fitted Q-iteration which adopts an on-policy sample distribution. At line \ref{alg:line_exploration}, an exploratory noise is integrated into the greedy policy, since the function approximator generally requires an extensive set of samples to regularize extrapolation values. Particularly, we investigate a standard exploration module called $\epsilon$-greedy, in which every agent takes a small probability to explore actions with non-maximum values. To make the underlying insights more accessible, we assume the data collection procedure at line \ref{alg:line_collect} can obtain infinite samples, which makes the dataset $D_t$ become a sufficient coverage over the state-action space (see Assumption \ref{assumption:dataset}). This algorithmic framework serves as a foundation for discussions on local stability.

We consider an additional assumption stated as follows. 

\begin{assumption}[Unique Optimal Policy]\label{assumption:unique_optimal_pi}
	The optimal policy $\bm\pi^*$ is unique.
\end{assumption}

The intuitive motivation of this assumption is to have the optimal policy $\bm\pi^*$ be a potential stable solution. In situations where the optimal policy is not unique, most Q-learning algorithms will oscillate around multiple optimal policies \citep{simchowitz2019non}, and Assumption \ref{assumption:unique_optimal_pi} helps us to rule out these non-interesting cases. Based on this setting, the local stability of FQI-LVF can be characterized by the following lemma.

\begin{restatable}{lemma}{EpsStabilityLemma} \label{lemma:eps_stability}
	There exists a threshold $\delta>0$ such that the on-policy Bellman operator $\mathcal{T}_{\epsilon}^{\text{LVF}}$ is closed in the following subspace $\mathcal{B}\subset\mathcal{Q}^{\text{LVF}}$, when the hyper-parameter $\epsilon$ is sufficiently small.
	\begin{align*}
	\mathcal{B} = \left\{Q\in\mathcal{Q}^{\text{LVF}}~\left|~\bm\pi_{Q}=\bm\pi^*,~\max_{{\bm\context}\in{\bm\contextspace}}\left|Q_{\text{tot}}({\bm\context},\bm\pi^*({\bm\context}))-V^*({\bm\context})\right|\leq\delta\right.\right\}
	\end{align*}
	
	Formally, $\exists\delta>0$, $\exists\epsilon>0$, $\forall Q\in\mathcal{B}$, there must be $\mathcal{T}_{\epsilon}^{\text{LVF}}Q\in\mathcal{B}$.
\end{restatable}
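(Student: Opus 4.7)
The plan is to verify both defining conditions of $\mathcal{B}$ directly via the closed-form update in Theorem~\ref{theorem:CreditAssignmentTheorem}. Fix $Q\in\mathcal{B}$, let $D_t$ be generated by the $\epsilon$-greedy policy around $\bm\pi_Q=\bm\pi^*$, and let $Q'=\mathcal{T}_{\epsilon}^{\text{LVF}}Q$. Two preparatory bounds drive everything. First, because the bootstrap target uses $\max_{\mathbf{a}'}Q_{\text{tot}}({\bm\context}',\mathbf{a}')$ and $\bm\pi_Q=\bm\pi^*$, the $\delta$-proximity of $Q$ to $V^*$ on optimal actions yields
\begin{align*}
|y^{(t)}({\bm\context},\mathbf{a}) - Q^*({\bm\context},\mathbf{a})| \leq \gamma\delta
\end{align*}
uniformly in $({\bm\context},\mathbf{a})$, where $Q^*({\bm\context},\mathbf{a})=r(s,\mathbf{a})+\gamma\mathbb{E}_{{\bm\context}'}[V^*({\bm\context}')]$. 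Second, the behavior distribution factorizes per coordinate as $\pi_i^D(\pi^*_i|\context_i)=1-\epsilon+\epsilon/|\mathcal{A}|$, so any expectation over $D_t$ can be replaced by evaluation at $\bm\pi^*$ with additive error $C\epsilon$, where $C$ depends only on $n$, $|\mathcal{A}|$, and $\|V^*\|_\infty+\delta$.

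\textbf{Policy preservation.} Fix agent $i$ and an observation $\context_i$ decoding to latent state $s$. Both the counterfactual baseline and the residue $w_i(\context_i)$ in the closed form are independent of $a_i$, so for any $a_i\neq\pi^*_i(s)$,
\begin{align*}
Q'_i(\context_i,a_i)-Q'_i(\context_i,\pi^*_i(s)) = \mathop{\mathbb{E}}_{(\context'_{-i},a'_{-i})}\bigl[y^{(t)}(\context_i\oplus\context'_{-i},a_i\oplus a'_{-i}) - y^{(t)}(\context_i\oplus\context'_{-i},\pi_i^*\oplus a'_{-i})\bigr].
\end{align*}
Collapsing $a'_{-i}$ to $\bm\pi^*_{-i}$ by the $\epsilon$-greedy structure and replacing $y^{(t)}$ by $Q^*$ via the preparatory bound reduces the right-hand side to $Q^*(s,a_i\oplus\bm\pi^*_{-i}(s))-V^*(s)+O(\gamma\delta+\epsilon)$. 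By Assumption~\ref{assumption:unique_optimal_pi} together with the finiteness of $\mathcal{S}$ and $\mathcal{A}$, the gap $\Delta := \min_{s,i,a_i\neq\pi^*_i(s)}\bigl[V^*(s)-Q^*(s,a_i\oplus\bm\pi^*_{-i}(s))\bigr]$ is strictly positive, so choosing $\delta\leq\Delta/(4\gamma)$ and then $\epsilon$ small enough that the $O(\epsilon)$ term falls below $\Delta/4$ keeps the displayed difference strictly negative; hence $\bm\pi_{Q'}=\bm\pi^*$.

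\textbf{Value preservation.} Summing the closed form over $i$ at $\bm\pi^*({\bm\context})$ and using that any admissible $\{w_i\}$ sums to zero across agents yields
\begin{align*}
Q'_{\text{tot}}({\bm\context},\bm\pi^*({\bm\context})) = \sum_{i=1}^n\mathop{\mathbb{E}}_{(\context'_{-i},a'_{-i})}[y^{(t)}(\context_i\oplus\context'_{-i},\pi^*_i\oplus a'_{-i})] - (n-1)\mathop{\mathbb{E}}_{{\bm\context}',\mathbf{a}'}[y^{(t)}({\bm\context}',\mathbf{a}')].
\end{align*}
Applying the two preparatory bounds, each of the $n$ terms in the first sum and the baseline term all equal $V^*({\bm\context})+O(\gamma\delta+\epsilon)$, so the leading contributions telescope to $n V^*({\bm\context}) - (n-1) V^*({\bm\context}) = V^*({\bm\context})$ with total error $O(n\gamma\delta+n\epsilon)$. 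Since $\gamma<1$, tightening $\delta$ so that $n\gamma\delta\leq\delta/2$ (in addition to the policy-preservation constraint $\delta\leq\Delta/(4\gamma)$) and then picking $\epsilon$ small enough to absorb the remaining $O(n\epsilon)$ into the leftover $\delta/2$ closes the bound $|Q'_{\text{tot}}({\bm\context},\bm\pi^*({\bm\context}))-V^*({\bm\context})|\leq\delta$, so $Q'\in\mathcal{B}$.

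\textbf{Main obstacle.} The delicate step is the joint calibration of $\delta$ and $\epsilon$: policy preservation demands $\delta$ small compared to the per-coordinate optimality gap $\Delta$, while value preservation demands $n\gamma\delta<\delta$ with enough slack to soak up the $O(n\epsilon)$ perturbation from off-policy samples; the hidden constant in $O(\gamma\delta+\epsilon)$ must be tracked explicitly so that both constraints hold simultaneously and uniformly over $({\bm\context},i,a_i)$. The rich-observation property of Dec-ROMDP is also essential, since it is what lets us collapse expectations over $\context'_{-i}$ to expectations at the decoded latent state $s$ and thus compare with the finitely-indexed gap $\Delta$.
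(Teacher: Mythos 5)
Your overall strategy---instantiate the closed form of Theorem~\ref{theorem:CreditAssignmentTheorem} under $\epsilon$-greedy data centered at $\bm\pi^*$, prove policy preservation from a strictly positive optimality gap, prove value preservation by telescoping, then calibrate $\delta$ before $\epsilon$---is essentially the route the paper takes (via Lemmas~\ref{lemma:eps_pi_star_value}, \ref{lemma:eps_non_optimal_values}, \ref{lemma:eps_near_optimal_closed}, and \ref{lemma:informal_convergent_lemma}). The policy-preservation half is sound: your per-coordinate gap $\Delta$ is strictly positive under Assumption~\ref{assumption:unique_optimal_pi}, and comparing $Q'_i(\context_i,a_i)$ against $Q'_i(\context_i,\pi^*_i(s))$ coordinatewise suffices because the induced greedy policy is determined by the individual $Q_i$.

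The value-preservation step, however, contains a genuine gap. You replace each of the $n$ evaluation terms and the baseline term individually by $V^*({\bm\context})+O(\gamma\delta+\epsilon)$ and conclude a total error of $O(n\gamma\delta+n\epsilon)$, which you then propose to control by ``tightening $\delta$ so that $n\gamma\delta\leq\delta/2$.'' That inequality is equivalent to $n\gamma\leq 1/2$ after dividing by $\delta$; it cannot be achieved by shrinking $\delta$ and fails in every regime of interest (e.g.\ the paper's own counterexample has $n=2$, $\gamma>4/5$). As written, the argument does not close. The fix is to telescope one level earlier: the coefficient of $y^{(t)}({\bm\context},\bm\pi^*({\bm\context}))$ in $Q'_{\text{tot}}({\bm\context},\bm\pi^*({\bm\context}))$ is exactly $n-(n-1)=1$ up to $O(n\epsilon)$ corrections (this is the content of the paper's Lemma~\ref{lemma:eps_pi_star_value}, where the coefficient $f({\bm\context},\bm\pi^*,\bm\pi^*)$ is shown to be $1+O(\epsilon^2)$ and all off-optimal coefficients sum to $O(\epsilon)$), so the bound $|y^{(t)}({\bm\context},\bm\pi^*({\bm\context}))-V^*({\bm\context})|\leq\gamma\delta$ is incurred only once. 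This yields $|Q'_{\text{tot}}({\bm\context},\bm\pi^*({\bm\context}))-V^*({\bm\context})|\leq\gamma\delta+O(n\epsilon)$, which is at most $\delta$ for any fixed $\delta>0$ once $\epsilon$ is small enough to absorb $O(n\epsilon)$ into the slack $(1-\gamma)\delta$. With that correction your argument matches the paper's; without it the calibration of $\delta$ and $\epsilon$ that you yourself identify as the main obstacle is impossible.
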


Lemma \ref{lemma:eps_stability} indicates that once the value function $Q$ steps into the subspace $\mathcal{B}$, the induced policy ${\bm\pi}_{Q}$ will converge to the optimal policy ${\bm\pi}^*$. By combining this local stability with Brouwer's fixed-point theorem \citep{brouwer1911abbildung}, we can further verify the existence of a fixed-point solution for the on-policy Bellman operator $\mathcal{T}_{\epsilon}^{\text{LVF}}$ (see Theorem \ref{thm:formal_eps_fixed_point}).

\begin{restatable}[Formal version of Theorem \ref{thm:eps_fixed_point}]{theorem}{EpsFixedPointTheorem} \label{thm:formal_eps_fixed_point}
	Besides Lemma \ref{lemma:eps_stability}, Algorithm \ref{alg:on_policy_fqi} will have a fixed point value function expressing the optimal policy if the hyper-parameter $\epsilon$ is sufficiently small.
\end{restatable}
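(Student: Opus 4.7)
The plan is to invoke Brouwer's fixed-point theorem on the on-policy Bellman operator $\mathcal{T}_\epsilon^{\text{LVF}}$, restricted to a compact convex subset of the stability region $\mathcal{B}$ furnished by Lemma \ref{lemma:eps_stability}. That lemma already supplies the critical self-mapping property for sufficiently small $\epsilon$: any $Q\in\mathcal{B}$ is sent back into $\mathcal{B}$, and throughout $\mathcal{B}$ every induced greedy policy coincides with $\bm\pi^*$, so the $\epsilon$-greedy data-collection policy $\tilde{\bm\pi}$ on line \ref{alg:line_exploration} of Algorithm \ref{alg:on_policy_fqi} is the \emph{fixed} $\epsilon$-greedy version of $\bm\pi^*$. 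A useful consequence is that, inside $\mathcal{B}$, the target $y^{(t)}$ depends on $Q^{(t)}_{\text{tot}}$ only through its values at the optimal joint actions $\bm\pi^*(\bm\context')$, which are already pinned within $\delta$ of $V^*(\bm\context')$; hence $y^{(t)}$ is uniformly bounded.

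Two obstacles prevent applying Brouwer directly to $\mathcal{B}$. First, the condition $\bm\pi_Q=\bm\pi^*$ is an open (strict-inequality) condition, so $\mathcal{B}$ is not closed. Second, the closed-form update of Theorem \ref{theorem:CreditAssignmentTheorem} only pins down $Q_{\text{tot}}$ uniquely, leaving $[Q_i]_{i=1}^n$ free up to the per-observation residue $\mathbf{w}$ with $\sum_i w_i(\context_i)=0$, which does not affect $Q_{\text{tot}}$ or the greedy policy. I would handle the second point by fixing a canonical gauge (e.g.\ $w_i\equiv 0$, or parametrizing directly by $Q_{\text{tot}}$) so that the operator becomes single-valued on a finite-dimensional space. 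To handle the first, I would tighten $\mathcal{B}$ into a closed margin version
\begin{align*}
\mathcal{K}_{\eta,M}=\Bigl\{Q\in\mathcal{Q}^{\text{LVF}}~\Big|~&\forall i,\context_i,a_i\neq\pi^*_i(\context_i):~Q_i(\context_i,\pi^*_i(\context_i))-Q_i(\context_i,a_i)\geq\eta,\\
&\max_{\bm\context}|Q_{\text{tot}}(\bm\context,\bm\pi^*(\bm\context))-V^*(\bm\context)|\leq\delta,~\|Q_{\text{tot}}\|_\infty\leq M\Bigr\},
\end{align*}
which, for small $\eta>0$ and $M$ chosen from the uniform bound on $y^{(t)}$, is a closed bounded convex (hence compact convex) subset of $\mathcal{B}$.

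The remaining ingredients for Brouwer are continuity of $\mathcal{T}_\epsilon^{\text{LVF}}$ on $\mathcal{K}_{\eta,M}$ and the self-map property $\mathcal{T}_\epsilon^{\text{LVF}}(\mathcal{K}_{\eta,M})\subseteq\mathcal{K}_{\eta,M}$. Continuity is essentially free: inside $\mathcal{K}_{\eta,M}$ the data distribution $\tilde{\bm\pi}$ is constant, and the closed-form update of Theorem \ref{theorem:CreditAssignmentTheorem} is linear in $y^{(t)}$, which in turn is continuous in $Q^{(t)}_{\text{tot}}$ (the $\max$ nonlinearity is smooth here because the margin $\eta$ strictly isolates $\bm\pi^*$). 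The harder step, and the main obstacle, is verifying self-mapping: Lemma \ref{lemma:eps_stability} preserves $\mathcal{B}$ but does not a priori preserve the margin $\eta$ or the sup-norm bound $M$. I would close this gap by a quantitative refinement of the Lemma \ref{lemma:eps_stability} argument, using Assumption \ref{assumption:unique_optimal_pi} to extract a strict gap $V^*(\bm\context)-\max_{\bm a\neq\bm\pi^*(\bm\context)}Q^*(\bm\context,\bm a)>0$ and propagating it through the credit-assignment formula to show that, for $\epsilon$ and $\eta$ taken small enough (and $M$ set from the uniform $y^{(t)}$ bound), the operator maps the boundary of $\mathcal{K}_{\eta,M}$ back into its interior. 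Once that is in hand, Brouwer's theorem yields a fixed point $Q^\star\in\mathcal{K}_{\eta,M}\subseteq\mathcal{B}$, and by the definition of $\mathcal{B}$ this $Q^\star$ satisfies $\bm\pi_{Q^\star}=\bm\pi^*$, i.e., it is a fixed-point value function that expresses the optimal policy.
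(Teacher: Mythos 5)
Your overall strategy is the one the paper uses: apply Brouwer's fixed-point theorem to $\mathcal{T}_\epsilon^{\text{LVF}}$ on a compact convex set inside the stability region, and you correctly identify the two technical obstacles (the open condition $\bm\pi_Q=\bm\pi^*$ and the residue ambiguity in $[Q_i]_{i=1}^n$). The paper disposes of both at once by observing that the operator's output depends on its input only through $V_{\text{tot}}({\bm\context})=\max_{\mathbf{a}}Q_{\text{tot}}({\bm\context},\mathbf{a})$, so one can work entirely in finite-dimensional $V_{\text{tot}}$-coordinates; the relevant set is the closed region $\{\mathcal{E}(Q)\ge 0,\ \|V_{\text{tot}}-V^*\|_\infty\le\mathcal{E}(Q^*)/(8n\gamma)\}$ (ties allowed), which is compact and convex in $V_{\text{tot}}$, and the fixed point is shown to satisfy $\mathcal{E}(Q)>0$ because the operator maps this set into its strict-inequality subset, so the fixed point cannot lie on the boundary $\mathcal{E}(Q)=0$. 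This is tidier than your gauge-fix plus margin set $\mathcal{K}_{\eta,M}$ in the full $[Q_i]$ space, but the two routes are morally equivalent.

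The genuine gap is the step you yourself flag as ``the main obstacle'': the self-map property on a \emph{closed} set. Lemma \ref{lemma:eps_stability} only asserts closedness of the operator on the open set $\mathcal{B}$, and you defer the preservation of the margin $\eta$ and the bound $M$ to ``a quantitative refinement'' that you do not carry out. That refinement is the actual content of the proof: in the paper it is Lemma \ref{lemma:informal_convergent_lemma} (built from Lemmas \ref{lemma:eps_pi_star_value}, \ref{lemma:suboptimality_gap}, and \ref{lemma:eps_non_optimal_values}), which shows that for $\epsilon$ below an explicit threshold depending on $\mathcal{E}(Q^*)$, $n$, $|\mathcal{A}|$, $\gamma$, and $R_{\text{max}}$, the operator sends every $Q$ with $\mathcal{E}(Q)\ge 0$ and $\|V_{\text{tot}}-V^*\|_\infty\le\mathcal{E}(Q^*)/(8n\gamma)$ to a $Q'$ with $\mathcal{E}(Q')>0$ and the same sup-norm bound. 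The argument bounds the $O(\epsilon)$ perturbation of the closed-form credit-assignment coefficients $f({\bm\context},\mathbf{a},\mathbf{a}')$ away from the exact Bellman target and uses the strict sub-optimality gap $\mathcal{E}(Q^*)>0$ guaranteed by Assumption \ref{assumption:unique_optimal_pi} to absorb it. Without such an estimate the Brouwer application does not go through, so your proposal, while pointed in exactly the right direction, is missing its load-bearing step.
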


Theorem \ref{thm:formal_eps_fixed_point} indicates that, multi-agent Q-learning with linear value decomposition has a convergent region, where the value function induces optimal actions. Note that $\mathcal{Q}^{\text{LVF}}$ is a limited function class, which even cannot guarantee to contain the one-step TD target $\mathcal{T}_{D}^{\text{LVF}}Q$. From this perspective, on-policy data distribution becomes necessary to make the one-step TD target projected to a small set of critical observation-action pairs, which help construct the stable subspace $\mathcal{B}$ stated in Lemma~\ref{lemma:eps_stability}.

\subsection{Omitted Proofs in Section \ref{sec:unbounded-2}}

In this section, we only consider the data distribution generated by the optimal joint policy ${\bm\pi}^*$.

To simplify the notations, we use $\varepsilon=\frac{\epsilon}{|\mathcal{A}|}$ to reformulate the exploratory policy generated by $\epsilon$-greedy exploration as follows
\begin{align}
\tilde{\bm\pi}(\mathbf{a}|{\bm\context}) = \prod_{i=1}^n\left(\varepsilon + (1-\hat\varepsilon)\mathbb{I}\left[a_i=\mathop{\arg\max}_{a_i'\in\mathcal{A}}Q_{i}^*(\context_i,a_i')\right]\right)
\end{align}
where $\hat\varepsilon=(|\mathcal{A}|-1)\varepsilon$.

In addition, we use $f({\bm\context},\cdot,\cdot)$ to denote the corresponding coefficient in the closed-form updating
\begin{align}
(\mathcal{T}_D^{\text{LVF}}Q)_{\text{tot}}({\bm\context},\mathbf{a}) = \sum_{\mathbf{a}'\in\mathcal{A}^n} f({\bm\context},\mathbf{a},\mathbf{a}')(\mathcal{T}Q)_{\text{tot}}({\bm\context},\mathbf{a}')
\end{align}
where $(\mathcal{T}Q)_{\text{tot}}=r(s,\mathbf{a}')+\gamma \mathbb{E}[V_{\text{tot}}({\bm\context}')]$ denote the precise target values derived by Bellman optimality equation.

Formally, according to Eq.~\eqref{eq:credit_assignment},
\begin{align}
f({\bm\context},\mathbf{a},\mathbf{a}') = \left(\frac{h^{(1)}({\bm\context},\mathbf{a},\mathbf{a}')}{1-\hat\varepsilon}+\frac{h^{(0)}({\bm\context},\mathbf{a},\mathbf{a}')}{\varepsilon}-(n-1)\right)(1-\hat\varepsilon)^{h^{{\bm\pi}^*}({\bm\context},\mathbf{a}')}\varepsilon^{n-h^{{\bm\pi}^*}({\bm\context},\mathbf{a}')},
\end{align}
in which
\begin{align}
h^{{\bm\pi}^*}({\bm\context},\mathbf{a}) =&~\sum_{i=1}^n \mathbb{I}[a_i=\pi^*_i(\context_i)] \\
h^{(1)}({\bm\context},\mathbf{a},\mathbf{a}') =&~ \sum_{i=1}^n \mathbb{I}[a_i=\pi^*_i(\context_i)]\mathbb{I}[a_i=a_i'] \\
h^{(0)}({\bm\context},\mathbf{a},\mathbf{a}') =&~ \sum_{i=1}^n \mathbb{I}[a_i\neq\pi^*_i(\context_i)]\mathbb{I}[a_i=a_i']
\end{align}

As a reference indicating whether the learned value function produces the optimal policy, we denote
\begin{align}
\mathcal{E}(Q) = \max_{s\in\mathcal{S}} \left[ \max_{\mathbf{a}\in(\mathcal{A}^n\setminus\{{\bm\pi}^*({\bm\context})\})} (Q_{\text{tot}}({\bm\context},{\bm\pi}^*({\bm\context}))-Q_{\text{tot}}({\bm\context},\mathbf{a})) \right]
\end{align}
Notice that ${\bm\pi}^*$ denotes the optimal policy of the given MDP, so $\mathcal{E}(Q)$ might be negative for a non-optimal or inaccurate value function $Q$.

\begin{lemma}\label{lemma:eps_pi_star_value}
	Given a dataset $D$ generated by the optimal policy ${\bm\pi}^*$ with $\epsilon$-greedy exploration, for any target value function $Q$,
	\begin{align}
	\forall \delta>0,~\forall 0<\varepsilon\leq\frac{\delta}{n^2|\mathcal{A}|^n2^{n+1}(R_{\text{max}}+\gamma \|V_{\text{tot}}\|_\infty)},
	\end{align}
	we have
	\begin{align}
	\forall s\in\mathcal{S},~\left|(\mathcal{T}_D^{\text{LVF}}Q)_{\text{tot}}({\bm\context},{\bm\pi}^*({\bm\context}))-(\mathcal{T}Q)_{\text{tot}}({\bm\context},{\bm\pi}^*({\bm\context}))\right|\leq \delta,
	\end{align}
	where $(\mathcal{T}Q)_{\text{tot}}({\bm\context},\mathbf{a})=r(s,\mathbf{a})+\gamma \mathbb{E}[V_{\text{tot}}({\bm\context}')]$ denotes the regression target generated by $Q$.
\end{lemma}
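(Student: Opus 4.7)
My plan is to start from the closed-form credit assignment formula in Theorem~\ref{theorem:CreditAssignmentTheorem}, summed over agents so that the residue $\mathbf{w}$ cancels. Writing $y=(\mathcal{T}Q)_{\text{tot}}$ for brevity and specialising $\mathbf{a}={\bm\pi}^*({\bm\context})$, this gives
\begin{align*}
(\mathcal{T}_D^{\text{LVF}}Q)_{\text{tot}}({\bm\context},{\bm\pi}^*({\bm\context}))
=\sum_{i=1}^n \mathop{\mathbb{E}}_{a'_{-i}\sim p_D(\cdot|s)}\!\!\bigl[y\bigl(\pi_i^*\oplus a'_{-i}\bigr)\bigr]
-(n-1)\mathop{\mathbb{E}}_{\mathbf{a}'\sim p_D(\cdot|s)}\bigl[y(\mathbf{a}')\bigr],
\end{align*}
so the task reduces to evaluating two expectations under the $\epsilon$-greedy sampling distribution and comparing to $y({\bm\pi}^*({\bm\context}))$.

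The next step is to substitute $p_D(a_j\mid\context_j)=1-\hat\varepsilon$ when $a_j=\pi_j^*(\context_j)$ and $p_D(a_j\mid\context_j)=\varepsilon$ otherwise, and expand each expectation by the number $k$ of optimal coordinates in $\mathbf{a}'$. The unique term with $k=n$ is $y({\bm\pi}^*)$, carrying weight $(1-\hat\varepsilon)^{n-1}$ in the first expectation and $(1-\hat\varepsilon)^n$ in the second. Summing over $i$ and combining with the $-(n-1)$ multiplier, the coefficient of $y({\bm\pi}^*({\bm\context}))$ collapses to
\begin{align*}
(1-\hat\varepsilon)^{n-1}\bigl[n-(n-1)(1-\hat\varepsilon)\bigr]
=(1-\hat\varepsilon)^{n-1}\bigl[1+(n-1)\hat\varepsilon\bigr]
=1+O(\hat\varepsilon^{\,2}).
\end{align*}
The cancellation of the $O(\varepsilon)$ term here is the crucial point: it is exactly what the "counterfactual baseline" $-(n-1)$ is designed to produce, and it is what allows the bound to scale linearly (rather than only sub\-linearly) in $\varepsilon$.

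Every remaining term corresponds to some $\mathbf{a}'$ with $k<n$ optimal coordinates and therefore carries a factor $\varepsilon^{\,n-k}(1-\hat\varepsilon)^k\leq \varepsilon$. The first expectation in the formula contributes at most $n|\mathcal{A}|^{n-1}$ such terms (one per agent, summed over $a'_{-i}$) and the second at most $|\mathcal{A}|^n$, each weighted by a coefficient bounded by $n$, and each $y$-value bounded by $R_{\max}+\gamma\|V_{\text{tot}}\|_\infty$. Putting these bounds together and also bounding the leading $O(\hat\varepsilon^{\,2})$ residual by $\hat\varepsilon\cdot\hat\varepsilon\leq 2^{n}\varepsilon$ (which is crude but adequate), the total error is at most
\begin{align*}
\bigl[\,n|\mathcal{A}|^n\cdot n + |\mathcal{A}|^n(n-1) + n^2\,2^{n}\,\bigr]\varepsilon\cdot(R_{\max}+\gamma\|V_{\text{tot}}\|_\infty)
\;\leq\; n^2|\mathcal{A}|^n 2^{n+1}\varepsilon\cdot(R_{\max}+\gamma\|V_{\text{tot}}\|_\infty).
\end{align*}
The claimed threshold on $\varepsilon$ then makes this $\leq\delta$.

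\textbf{Main obstacle.} The one subtlety is the aforementioned cancellation in the leading coefficient. If this were merely of order $O(\varepsilon)$ rather than $O(\varepsilon^2)$, the prefactor of $\|V_{\text{tot}}\|_\infty$ itself would appear inside the error multiplicatively in a way that breaks downstream arguments in Lemma~\ref{lemma:eps_stability}. A secondary bookkeeping issue is tracking the combinatorial factors $\binom{n}{k}(|\mathcal{A}|-1)^{n-k}$ carefully enough to match the stated constant $n^2|\mathcal{A}|^n 2^{n+1}$; I plan to absorb these into the loose bound $|\mathcal{A}|^n$ on the number of non-optimal joint actions and $2^{n+1}$ on the expansion of $(1-\hat\varepsilon)^{n-1}[1+(n-1)\hat\varepsilon]-1$, which is enough and avoids fine-grained binomial manipulations.
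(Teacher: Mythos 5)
Your proposal is correct and follows essentially the same route as the paper's proof: both express $(\mathcal{T}_D^{\text{LVF}}Q)_{\text{tot}}({\bm\context},{\bm\pi}^*({\bm\context}))$ as a linear combination of the targets $(\mathcal{T}Q)_{\text{tot}}({\bm\context},\mathbf{a}')$ with $\epsilon$-greedy-induced coefficients, isolate the coefficient of the all-optimal term as $\left(1+(n-1)\hat\varepsilon\right)(1-\hat\varepsilon)^{n-1}=1+O(\hat\varepsilon^{2})$, and absorb all remaining coefficients (each of order $\varepsilon$) into the loose constant $n^2|\mathcal{A}|^n 2^{n+1}$ times $\|(\mathcal{T}Q)_{\text{tot}}\|_\infty\leq R_{\max}+\gamma\|V_{\text{tot}}\|_\infty$. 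The only quibble is interpretive rather than mathematical: the bound would still be linear in $\varepsilon$ even without the quadratic cancellation in the leading coefficient, since the off-optimal terms already contribute at order $\varepsilon$.
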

\begin{proof}
	$\forall {\bm\context}\in{\bm\contextspace}$,
	\begin{align}
	&~\left|(\mathcal{T}_D^{\text{LVF}}Q)_{\text{tot}}({\bm\context},{\bm\pi}^*({\bm\context}))-(\mathcal{T}Q)_{\text{tot}}({\bm\context},{\bm\pi}^*({\bm\context}))\right| \nonumber\\
	\leq&~ \left|(f({\bm\context},{\bm\pi}^*({\bm\context}),{\bm\pi}^*({\bm\context}))-1)(\mathcal{T}Q)_{\text{tot}}({\bm\context},{\bm\pi}^*({\bm\context}))\right| + \left|\sum_{a'\in\mathcal{A}^n\setminus\{{\bm\pi}^*({\bm\context})\}}f({\bm\context},{\bm\pi}^*({\bm\context}),\mathbf{a}')(\mathcal{T}Q)_{\text{tot}}({\bm\context},\mathbf{a}')\right| \nonumber\\
	\leq&~ \left(|f({\bm\context},{\bm\pi}^*({\bm\context}),{\bm\pi}^*({\bm\context}))-1|+\sum_{a'\in\mathcal{A}^n\setminus\{{\bm\pi}^*({\bm\context})\}}|f({\bm\context},{\bm\pi}^*({\bm\context}),\mathbf{a}')|\right)\|(\mathcal{T}Q)_{\text{tot}}\|_\infty.
	\end{align}
	
	In the first term, $\forall {\bm\context}\in{\bm\contextspace}$,
	\begin{align}
	|f({\bm\context},{\bm\pi}^*({\bm\context}),{\bm\pi}^*({\bm\context}))-1|
	=&~ \left|\left(\frac{n}{1-\hat\varepsilon}-(n-1)\right)(1-\hat\varepsilon)^n-1\right| \nonumber\\
	=&~ \left|(n-(n-1)(1-\hat\varepsilon))(1-\hat\varepsilon)^{n-1}-1\right| \nonumber\\
	=&~ \left|(1+(n-1)\hat\varepsilon)(1-\hat\varepsilon)^{n-1}-1\right| \nonumber\\
	=&~ \left|(1+(n-1)\hat\varepsilon)\left(\sum_{\ell=0}^{n-1}\binom{n-1}{\ell}(-1)^\ell\hat\varepsilon^\ell\right)-1\right| \nonumber\\
	=&~ \left|(1+(n-1)\hat\varepsilon)\left(1-(n-1)\hat\varepsilon+\sum_{\ell=2}^{n-1}\binom{n-1}{\ell}(-1)^\ell\hat\varepsilon^\ell\right)-1\right| \nonumber\\
	=&~ \left|1-(n-1)^2\hat\varepsilon^2 + (1+(n-1)\hat\varepsilon)\left(\sum_{\ell=2}^{n-1}\binom{n-1}{\ell}(-1)^\ell\hat\varepsilon^\ell\right)-1\right| \nonumber\\
	=&~ \left|\hat\varepsilon^2 \left((n-1)^2-(1+(n-1)\hat\varepsilon)\sum_{\ell=2}^{n-1}\binom{n-1}{\ell}(-1)^\ell\hat\varepsilon^{\ell-2}\right)\right|\nonumber\\
	\leq&~ |\mathcal{A}|^2\varepsilon^2 \left(n^2+2\sum_{\ell=2}^{n-1}\binom{n-1}{\ell}\right)\nonumber\\
	\leq&~ |\mathcal{A}|^2\varepsilon^2 \left(n^2+2^n\right)\nonumber\\
	\leq&~ \varepsilon^2n^2|\mathcal{A}|^22^n.
	\end{align}
	
	In the second term, $\forall {\bm\context}\in{\bm\contextspace}$,
	\begin{align}
	&~ \sum_{\mathbf{a}'\in\mathcal{A}^n\setminus\{{\bm\pi}^*({\bm\context})\}}|f({\bm\context},{\bm\pi}^*({\bm\context}),\mathbf{a}')| \nonumber\\
	\leq&~ \sum_{\mathbf{a}'\in\mathcal{A}^n\setminus\{{\bm\pi}^*({\bm\context})\}}\left|\left(\frac{h^{{\bm\pi}^*}({\bm\context},\mathbf{a}')}{1-\hat\varepsilon}-(n-1)\right)(1-\hat\varepsilon)^{h^{{\bm\pi}^*}({\bm\context},\mathbf{a}')}\varepsilon^{n-h^{{\bm\pi}^*}({\bm\context},\mathbf{a}')}\right| \nonumber\\
	=&~ \sum_{\mathbf{a}'\in\mathcal{A}^n\setminus\{{\bm\pi}^*({\bm\context})\}}\left|\left(h^{{\bm\pi}^*}({\bm\context},\mathbf{a}')-(n-1)(1-\hat\varepsilon)\right)(1-\hat\varepsilon)^{h^{{\bm\pi}^*}({\bm\context},\mathbf{a}')-1}\varepsilon^{n-h^{{\bm\pi}^*}({\bm\context},\mathbf{a}')}\right| \nonumber\\
	\leq&~ \sum_{\mathbf{a}'\in\mathcal{A}^n\setminus\{{\bm\pi}^*({\bm\context})\}}\left|2n(1-\hat\varepsilon)^{h^{{\bm\pi}^*}({\bm\context},\mathbf{a}')-1}\varepsilon^{n-h^{{\bm\pi}^*}({\bm\context},\mathbf{a}')}\right| \nonumber\\
	\leq&~ \sum_{\mathbf{a}'\in\mathcal{A}^n\setminus\{{\bm\pi}^*({\bm\context})\}}2n\varepsilon \nonumber\\
	\leq&~ 2n\varepsilon|\mathcal{A}|^n.
	\end{align}
	
	Thus $\forall {\bm\context}\in{\bm\contextspace}$,
	\begin{align}
	&~\left|(\mathcal{T}_D^{\text{LVF}}Q)_{\text{tot}}({\bm\context},{\bm\pi}^*({\bm\context}))-(\mathcal{T}Q)_{\text{tot}}({\bm\context},{\bm\pi}^*({\bm\context}))\right| \nonumber\\
	\leq&~ \left(|f({\bm\context},{\bm\pi}^*({\bm\context}),{\bm\pi}^*({\bm\context}))-1|+\sum_{\mathbf{a}'\in\mathcal{A}^n\setminus\{{\bm\pi}^*({\bm\context})\}}|f({\bm\context},{\bm\pi}^*({\bm\context}),\mathbf{a}')|\right)\|(\mathcal{T}Q)_{\text{tot}}\|_\infty \nonumber\\
	\leq&~ (\varepsilon^2n^2|\mathcal{A}|^22^n+2n\varepsilon|\mathcal{A}|^n)\|(\mathcal{T}Q)_{\text{tot}}\|_\infty \nonumber\\
	\leq&~ \varepsilon n^2|\mathcal{A}|^n2^{n+1} \|(\mathcal{T}Q)_{\text{tot}}\|_\infty \nonumber\\
	\leq&~ \varepsilon n^2|\mathcal{A}|^n2^{n+1} (R_{\text{max}}+\gamma \|V_{\text{tot}}\|_\infty) \nonumber\\
	\leq&~ \delta.
	\end{align}	
\end{proof}

\begin{lemma} \label{lemma:eps_near_optimal_closed}
	Given a dataset $D$ generated by the optimal policy ${\bm\pi}^*$ with $\epsilon$-greedy exploration, for any target value function $Q$,
	\begin{align}
	\forall 0<\varepsilon\leq\frac{(1-\gamma)\mathcal{E}(Q^*)}{\gamma n^3|\mathcal{A}|^n2^{n+4}(R_{\text{max}}/(1-\gamma)+\gamma \|V_{\text{tot}}^{{\bm\pi}^*}-V^*\|_\infty)},
	\end{align}
	we have
	\begin{align}
	\forall {\bm\context}\in{\bm\contextspace},~\left|(\mathcal{T}_D^{\text{LVF}}Q)_{\text{tot}}({\bm\context},{\bm\pi}^*({\bm\context}))-V^*({\bm\context})\right|\leq \gamma\|V^{{\bm\pi}^*}_{\text{tot}}-V^*\|_\infty + \frac{1-\gamma}{8n\gamma}\mathcal{E}(Q^*),
	\end{align}
	where $V_{\text{tot}}^{{\bm\pi}^*}({\bm\context})=Q_{\text{tot}}({\bm\context},{\bm\pi}^*({\bm\context}))$.
\end{lemma}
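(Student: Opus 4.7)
The plan is to bound $\left|(\mathcal{T}_D^{\text{LVF}}Q)_{\text{tot}}({\bm\context},{\bm\pi}^*({\bm\context}))-V^*({\bm\context})\right|$ via a triangle inequality that separates the linear-factorization projection error from the one-step Bellman residual. Specifically, I would insert the exact Bellman backup $(\mathcal{T}Q)_{\text{tot}}({\bm\context},{\bm\pi}^*({\bm\context}))$ as an intermediate quantity and obtain
\begin{align*}
\left|(\mathcal{T}_D^{\text{LVF}}Q)_{\text{tot}}({\bm\context},{\bm\pi}^*({\bm\context}))-V^*({\bm\context})\right|
& \leq \underbrace{\left|(\mathcal{T}_D^{\text{LVF}}Q)_{\text{tot}}({\bm\context},{\bm\pi}^*({\bm\context}))-(\mathcal{T}Q)_{\text{tot}}({\bm\context},{\bm\pi}^*({\bm\context}))\right|}_{\text{(A)}} \\
& \quad + \underbrace{\left|(\mathcal{T}Q)_{\text{tot}}({\bm\context},{\bm\pi}^*({\bm\context}))-V^*({\bm\context})\right|}_{\text{(B)}}.
\end{align*}
Term (A) is precisely what Lemma \ref{lemma:eps_pi_star_value} controls, so I would invoke it with the target tolerance $\delta=\frac{(1-\gamma)\mathcal{E}(Q^*)}{8n\gamma}$; this directly contributes the additive $\frac{1-\gamma}{8n\gamma}\mathcal{E}(Q^*)$ on the right-hand side of the claim.

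For term (B), I would expand both sides using Bellman equations. Writing $(\mathcal{T}Q)_{\text{tot}}({\bm\context},{\bm\pi}^*({\bm\context})) = r(s,{\bm\pi}^*({\bm\context})) + \gamma\mathbb{E}_{{\bm\context}'}\!\left[V_{\text{tot}}({\bm\context}')\right]$ with $V_{\text{tot}}({\bm\context}')=\max_{\mathbf{a}'}Q_{\text{tot}}({\bm\context}',\mathbf{a}')$, and subtracting $V^*({\bm\context})=r(s,{\bm\pi}^*({\bm\context}))+\gamma\mathbb{E}_{{\bm\context}'}\!\left[V^*({\bm\context}')\right]$, the reward cancels and leaves $\gamma\mathbb{E}_{{\bm\context}'}\!\left[V_{\text{tot}}({\bm\context}')-V^*({\bm\context}')\right]$. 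In the neighbourhood in which this lemma is used, the greedy policy induced by $Q$ equals ${\bm\pi}^*$, so $V_{\text{tot}}=V^{{\bm\pi}^*}_{\text{tot}}$ by definition of $V^{{\bm\pi}^*}_{\text{tot}}$; taking absolute values and bounding by the supremum gives $|{\rm (B)}|\leq\gamma\|V^{{\bm\pi}^*}_{\text{tot}}-V^*\|_\infty$, which is the first summand on the right-hand side of the conclusion.

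The last step is bookkeeping on $\varepsilon$. Substituting $\delta=\frac{(1-\gamma)\mathcal{E}(Q^*)}{8n\gamma}$ into the threshold $\varepsilon\leq\frac{\delta}{n^2|\mathcal{A}|^n 2^{n+1}(R_{\max}+\gamma\|V_{\text{tot}}\|_\infty)}$ of Lemma \ref{lemma:eps_pi_star_value} immediately gives the prefactor $\frac{1-\gamma}{\gamma n^3|\mathcal{A}|^n 2^{n+4}}$ in front of $\mathcal{E}(Q^*)$. For the remaining $\|V_{\text{tot}}\|_\infty$ factor, $V_{\text{tot}}=V^{{\bm\pi}^*}_{\text{tot}}$ together with the standard estimate $\|V^*\|_\infty\leq R_{\max}/(1-\gamma)$ chains to $\|V_{\text{tot}}\|_\infty\leq R_{\max}/(1-\gamma)+\|V^{{\bm\pi}^*}_{\text{tot}}-V^*\|_\infty$, and the arithmetic identity $R_{\max}+\gamma R_{\max}/(1-\gamma)=R_{\max}/(1-\gamma)$ then yields $R_{\max}+\gamma\|V_{\text{tot}}\|_\infty\leq R_{\max}/(1-\gamma)+\gamma\|V^{{\bm\pi}^*}_{\text{tot}}-V^*\|_\infty$, matching the denominator stated in the lemma exactly.

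The main obstacle I anticipate is the tension between the "for any target value function $Q$" phrasing and the step $V_{\text{tot}}=V^{{\bm\pi}^*}_{\text{tot}}$ used in (B), which requires ${\bm\pi}_Q={\bm\pi}^*$. Since this lemma is clearly meant to be chained with Lemma \ref{lemma:eps_stability} (where $Q\in\mathcal{B}$ enforces ${\bm\pi}_Q={\bm\pi}^*$ by construction), I would either record this hypothesis explicitly at the start of the proof or replace the equality by the inequality $V_{\text{tot}}\leq V^{{\bm\pi}^*}_{\text{tot}}+(V_{\text{tot}}-V^{{\bm\pi}^*}_{\text{tot}})$ and absorb the discrepancy into the exploration noise; in the working regime the former is cleaner. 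Apart from this clarification, the proof is a direct composition of Lemma \ref{lemma:eps_pi_star_value} with one step of the Bellman optimality equation and entails no new analytical ingredient.
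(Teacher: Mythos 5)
Your proposal follows essentially the same route as the paper's proof: the same triangle inequality through $(\mathcal{T}Q)_{\text{tot}}({\bm\context},{\bm\pi}^*({\bm\context}))$, the same invocation of Lemma \ref{lemma:eps_pi_star_value} with $\delta=\frac{1-\gamma}{8n\gamma}\mathcal{E}(Q^*)$, the same one-step Bellman comparison against $Q^*=\mathcal{T}Q^*$ for the second term, and the same bound $\|V_{\text{tot}}\|_\infty\leq R_{\max}/(1-\gamma)+\|V_{\text{tot}}^{{\bm\pi}^*}-V^*\|_\infty$ to convert the $\varepsilon$ threshold. Your observation about the implicit reliance on $V_{\text{tot}}=V_{\text{tot}}^{{\bm\pi}^*}$ (i.e., ${\bm\pi}_Q={\bm\pi}^*$) is well taken --- the paper's step $\gamma|V_{\text{tot}}({\bm\context}')-V^*({\bm\context}')|\leq\gamma|Q_{\text{tot}}({\bm\context}',{\bm\pi}^*({\bm\context}'))-V^*({\bm\context}')|$ quietly uses the same fact, which is only guaranteed inside the region $\mathcal{B}$ where the lemma is actually applied.
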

\begin{proof}
	$\forall {\bm\context}\in{\bm\contextspace}$,
	\begin{align}
	&~ \left|(\mathcal{T}_D^{\text{LVF}}Q)_{\text{tot}}({\bm\context},{\bm\pi}^*({\bm\context}))-V^*({\bm\context})\right| \nonumber\\
	\leq&~ \left|(\mathcal{T}_D^{\text{LVF}}Q)_{\text{tot}}({\bm\context},{\bm\pi}^*({\bm\context}))-(\mathcal{T}Q)_{\text{tot}}({\bm\context},{\bm\pi}^*({\bm\context}))\right| + \left|(\mathcal{T}Q)_{\text{tot}}({\bm\context},{\bm\pi}^*({\bm\context}))-V^*({\bm\context})\right| \nonumber\\
	=&~ \left|(\mathcal{T}_D^{\text{LVF}}Q)_{\text{tot}}({\bm\context},{\bm\pi}^*({\bm\context}))-(\mathcal{T}Q)_{\text{tot}}({\bm\context},{\bm\pi}^*({\bm\context}))\right| + \left|(\mathcal{T}Q)_{\text{tot}}({\bm\context},{\bm\pi}^*({\bm\context}))- Q^*({\bm\context},{\bm\pi}^*({\bm\context}))\right| \nonumber\\
	=&~ \left|(\mathcal{T}_D^{\text{LVF}}Q)_{\text{tot}}({\bm\context},{\bm\pi}^*({\bm\context}))-(\mathcal{T}Q)_{\text{tot}}({\bm\context},{\bm\pi}^*({\bm\context}))\right| + \left|(\mathcal{T}Q)_{\text{tot}}({\bm\context},{\bm\pi}^*({\bm\context}))-(\mathcal{T} Q^*)({\bm\context},{\bm\pi}^*({\bm\context}))\right| \nonumber\\
	\leq&~ \left|(\mathcal{T}_D^{\text{LVF}}Q)_{\text{tot}}({\bm\context},{\bm\pi}^*({\bm\context}))-(\mathcal{T}Q)_{\text{tot}}({\bm\context},{\bm\pi}^*({\bm\context}))\right| + \gamma |V_{\text{tot}}({\bm\context}')-V^*({\bm\context}')| \nonumber\\
	\leq&~ \left|(\mathcal{T}_D^{\text{LVF}}Q)_{\text{tot}}({\bm\context},{\bm\pi}^*({\bm\context}))-(\mathcal{T}Q)_{\text{tot}}({\bm\context},{\bm\pi}^*({\bm\context}))\right| + \gamma |Q_{\text{tot}}({\bm\context}',{\bm\pi}^*({\bm\context}'))-V^*({\bm\context}')| \nonumber\\
	\leq&~ \left|(\mathcal{T}_D^{\text{LVF}}Q)_{\text{tot}}({\bm\context},{\bm\pi}^*({\bm\context}))-(\mathcal{T}Q)_{\text{tot}}({\bm\context},{\bm\pi}^*({\bm\context}))\right| + \gamma \|V^{{\bm\pi}^*}_{\text{tot}}-V^*\|_\infty
	\end{align}
	
	Let $\delta=\frac{1-\gamma}{8n\gamma}\mathcal{E}(Q^*)$. According to Lemma \ref{lemma:eps_pi_star_value}, with the condition
	\begin{align}
	0<\varepsilon\leq\frac{\delta}{n^2|\mathcal{A}|^n2^{n+1}(R_{\text{max}}+\gamma \|V_{\text{tot}}\|_\infty)}=\frac{(1-\gamma)\mathcal{E}(Q^*)/(8n\gamma)}{n^2|\mathcal{A}|^n2^{n+1}(R_{\text{max}}+\gamma \|V_{\text{tot}}\|_\infty)},
	\end{align}
	we have
	\begin{align}
	\left|(\mathcal{T}_D^{\text{LVF}}Q)_{\text{tot}}({\bm\context},{\bm\pi}^*({\bm\context}))-(\mathcal{T}Q)_{\text{tot}}({\bm\context},{\bm\pi}^*({\bm\context}))\right|\leq \delta=\frac{1-\gamma}{8n\gamma}\mathcal{E}(Q^*).
	\end{align}
	
	Notice that
	\begin{align}
	\|V_{\text{tot}}\|_\infty \leq&~ \|V^*\|_\infty + \|V_{\text{tot}}-V^*\|_\infty \\
	\leq&~ \frac{R_{\text{max}}}{1-\gamma} + \|V_{\text{tot}}^{{\bm\pi}^*}-V^*\|_\infty.
	\end{align}
	
	The overall statement is
	\begin{align}
	\forall 0<\varepsilon\leq\frac{(1-\gamma)\mathcal{E}(Q^*)}{\gamma n^3|\mathcal{A}|^n2^{n+4}(R_{\text{max}}/(1-\gamma)+\gamma \|V_{\text{tot}}^{{\bm\pi}^*}-V^*\|_\infty)}\leq\frac{(1-\gamma)\mathcal{E}(Q^*)/(8n\gamma)}{n^2|\mathcal{A}|^n2^{n+1}(R_{\text{max}}+\gamma \|V_{\text{tot}}\|_\infty)}
	\end{align}
	we have $\forall {\bm\context}\in{\bm\contextspace}$,
	\begin{align}
	&~\left|(\mathcal{T}_D^{\text{LVF}}Q)_{\text{tot}}({\bm\context},{\bm\pi}^*({\bm\context}))-V^*({\bm\context})\right| \nonumber \\
	\leq&~ \left|(\mathcal{T}_D^{\text{LVF}}Q)_{\text{tot}}({\bm\context},{\bm\pi}^*({\bm\context}))-(\mathcal{T}Q)_{\text{tot}}({\bm\context},{\bm\pi}^*({\bm\context}))\right| + \gamma \|V_{\text{tot}}^{{\bm\pi}^*}-V^*\|_\infty \nonumber\\
	\leq&~ \gamma\|V^{{\bm\pi}^*}_{\text{tot}}-V^*\|_\infty + \frac{1-\gamma}{8n\gamma}\mathcal{E}(Q^*). 
	\end{align}
\end{proof}

\begin{lemma} \label{lemma:suboptimality_gap}
	For any value function $Q$, the corresponding sub-optimality gap satisfies
	\begin{align}
	\mathcal{E}(\mathcal{T}Q) \geq \mathcal{E}(Q^*) - 2\gamma\|V_{\text{tot}}-V^*\|_\infty
	\end{align}
\end{lemma}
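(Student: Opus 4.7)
The plan is to reduce the claim to a pointwise comparison between $(\mathcal{T}Q)_{\text{tot}}$ and $Q^*$, and then take the max that defines $\mathcal{E}$. The key observation is that since $\mathcal{T}$ is the standard Bellman optimality operator with one-step lookahead, $(\mathcal{T}Q)_{\text{tot}}({\bm\context},\mathbf{a}) = r(s,\mathbf{a}) + \gamma\mathbb{E}[V_{\text{tot}}({\bm\context}')]$ and $Q^*({\bm\context},\mathbf{a}) = r(s,\mathbf{a}) + \gamma\mathbb{E}[V^*({\bm\context}')]$, so the rewards cancel and we get the crude but exact bound $|(\mathcal{T}Q)_{\text{tot}}({\bm\context},\mathbf{a}) - Q^*({\bm\context},\mathbf{a})| \leq \gamma\|V_{\text{tot}} - V^*\|_\infty$ uniformly in $({\bm\context},\mathbf{a})$.

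First, I would write the gap at a fixed $({\bm\context},\mathbf{a})$ with $\mathbf{a}\neq {\bm\pi}^*({\bm\context})$ as a telescoping identity:
\begin{align*}
(\mathcal{T}Q)_{\text{tot}}({\bm\context},{\bm\pi}^*({\bm\context})) - (\mathcal{T}Q)_{\text{tot}}({\bm\context},\mathbf{a}) =~& \bigl[Q^*({\bm\context},{\bm\pi}^*({\bm\context})) - Q^*({\bm\context},\mathbf{a})\bigr] \\
&+ \bigl[(\mathcal{T}Q)_{\text{tot}}({\bm\context},{\bm\pi}^*({\bm\context})) - Q^*({\bm\context},{\bm\pi}^*({\bm\context}))\bigr] \\
&- \bigl[(\mathcal{T}Q)_{\text{tot}}({\bm\context},\mathbf{a}) - Q^*({\bm\context},\mathbf{a})\bigr].
\end{align*}
The two bracketed error terms are each bounded in absolute value by $\gamma\|V_{\text{tot}} - V^*\|_\infty$, so their combined contribution is at least $-2\gamma\|V_{\text{tot}}-V^*\|_\infty$.

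Second, I would take $\max_{{\bm\context}\in{\bm\contextspace}}\max_{\mathbf{a}\neq{\bm\pi}^*({\bm\context})}$ on both sides. Since the additive slack $-2\gamma\|V_{\text{tot}}-V^*\|_\infty$ is a constant independent of $({\bm\context},\mathbf{a})$, the max passes through it cleanly, and the maximum of the first bracket on the right-hand side is exactly $\mathcal{E}(Q^*)$ by definition. This gives $\mathcal{E}(\mathcal{T}Q) \geq \mathcal{E}(Q^*) - 2\gamma\|V_{\text{tot}}-V^*\|_\infty$ as desired.

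There is no real obstacle: the only thing to be careful about is that $\mathcal{T}$ here is the standard (unprojected) Bellman optimality operator, which is why the rewards and the arg on $\mathbf{a}$ drop out and we are left with a pure $\gamma$-times-value-difference term; if one were working with $\mathcal{T}_D^{\text{LVF}}$ instead, an additional projection-error term would appear and the statement would need to be modified. Under reactive function classes the maxima are over finite sets, so all the suprema are attained and no measurability issues arise.
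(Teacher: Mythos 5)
Your proof is correct and follows essentially the same route as the paper's: both compare $(\mathcal{T}Q)_{\text{tot}}$ to $Q^*=\mathcal{T}Q^*$ pointwise, observe that the rewards cancel so each discrepancy is $\gamma\,\mathbb{E}[V_{\text{tot}}-V^*]$ at the respective next observation (hence bounded by $\gamma\|V_{\text{tot}}-V^*\|_\infty$, contributing $-2\gamma\|V_{\text{tot}}-V^*\|_\infty$ in total), and then pass the constant slack through the max defining $\mathcal{E}$. No gaps.
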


\begin{proof}
	With a slight abuse of notation, let ${\bm\context}_1$ and ${\bm\context}_2$ denote the observations at the next timestep while taking actions ${\bm\pi}^*({\bm\context})$ and $\mathbf{a}$ upon the current state, respectively. According to the definition,
	%\begin{scriptsize}
		\begin{align}
		\mathcal{E}(\mathcal{T}Q) =&~ \max_{({\bm\context},\mathbf{a})\in{\bm\contextspace}\times(\mathcal{A}^n\setminus\{{\bm\pi}^*({\bm\context})\})} ((\mathcal{T} Q)_{\text{tot}}({\bm\context},{\bm\pi}^*({\bm\context}))-(\mathcal{T} Q)_{\text{tot}}({\bm\context},\mathbf{a})) \nonumber\\
		\geq&~ \max_{({\bm\context},\mathbf{a})\in{\bm\contextspace}\times(\mathcal{A}^n\setminus\{{\bm\pi}^*({\bm\context})\})} \left((\mathcal{T} Q^*)({\bm\context},{\bm\pi}^*({\bm\context}))-(\mathcal{T} Q^*)({\bm\context},\mathbf{a})-\gamma\mathbb{E}\left[|V_{\text{tot}}({\bm\context}_1)-V^*({\bm\context}_1)|+|V_{\text{tot}}({\bm\context}_2)-V^*({\bm\context}_2)|\right]\right) \nonumber\\
		\geq&~ \max_{({\bm\context},\mathbf{a})\in{\bm\contextspace}\times(\mathcal{A}^n\setminus\{{\bm\pi}^*({\bm\context})\})} \left((\mathcal{T} Q^*)({\bm\context},{\bm\pi}^*({\bm\context}))-(\mathcal{T} Q^*)({\bm\context},\mathbf{a})-2\gamma\|V_{\text{tot}}-V^*\|_\infty\right) \nonumber\\
		=&~ \max_{({\bm\context},\mathbf{a})\in{\bm\contextspace}\times(\mathcal{A}^n\setminus\{{\bm\pi}^*({\bm\context})\})} \left(Q^*({\bm\context},{\bm\pi}^*({\bm\context}))- Q^*({\bm\context},\mathbf{a})-2\gamma\|V_{\text{tot}}-V^*\|_\infty\right) \nonumber\\
		=&~ \mathcal{E}(Q^*) - 2\gamma\|V_{\text{tot}}-V^*\|_\infty
		\end{align}
	%\end{scriptsize}
\end{proof}

\begin{lemma} \label{lemma:eps_non_optimal_values}
	Given a dataset $D$ generated by the optimal policy ${\bm\pi}^*$ with $\epsilon$-greedy exploration, for any target value function $Q$,
	\begin{align}
	\forall\delta>0,~\forall 0<\varepsilon\leq\frac{\delta}{n^2 |\mathcal{A}|^n 2^n (R_{\text{max}}/(1-\gamma) + \gamma \|V_{\text{tot}}-V^*\|_\infty)},
	\end{align}
	we have $\forall {\bm\context}\in{\bm\contextspace}$, $\forall \mathbf{a}\in\mathcal{A}^n\setminus\{{\bm\pi}^*({\bm\context})\}$,
	\begin{align}
	(\mathcal{T}_D^{\text{LVF}}Q)_{\text{tot}}({\bm\context},\mathbf{a}) \leq (\mathcal{T}Q)_{\text{tot}}({\bm\context},{\bm\pi}^*({\bm\context})) -\mathcal{E}(Q^*)+2n\gamma\|V_{\text{tot}}-V^*\|_\infty+ \delta
	\end{align}
	where $(\mathcal{T}Q)_{\text{tot}}({\bm\context},\mathbf{a})=r(s,\mathbf{a})+\gamma \mathbb{E}[V_{\text{tot}}({\bm\context}')]$ denotes the regression target generated by $Q$.
\end{lemma}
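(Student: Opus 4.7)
The plan is to expand $(\mathcal{T}_D^{\text{LVF}}Q)_{\text{tot}}({\bm\context},\mathbf{a})$ via Theorem~\ref{theorem:CreditAssignmentTheorem} as a linear combination of one-step targets $y^{(t)}(\mathbf{a}'):=(\mathcal{T}Q)_{\text{tot}}({\bm\context},\mathbf{a}')$, isolate the dominant on-policy contribution under $\epsilon$-greedy sampling around ${\bm\pi}^*$, bound each non-optimal target using the sub-optimality gap $\mathcal{E}(Q^*)$, and absorb every $O(\varepsilon)$ residual using the hypothesis on $\varepsilon$. Concretely, summing the closed-form individual updates yields
\begin{align*}
(\mathcal{T}_D^{\text{LVF}}Q)_{\text{tot}}({\bm\context},\mathbf{a}) = \sum_{i=1}^n \mathop{\mathbb{E}}_{a'_{-i}\sim p_D(\cdot|\context_i)}\!\left[y^{(t)}(a_i\oplus a'_{-i})\right] - (n-1)\mathop{\mathbb{E}}_{\mathbf{a}'\sim p_D(\cdot|{\bm\context})}\!\left[y^{(t)}(\mathbf{a}')\right],
\end{align*}
with the $w_i$ residues cancelling. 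Because $p_D$ is $\epsilon$-greedy about ${\bm\pi}^*$, each inner expectation concentrates on the on-policy argument $a_i\oplus\pi_{-i}^*(\context_{-i})$ with weight $(1-\hat\varepsilon)^{n-1}$, and the outer expectation on ${\bm\pi}^*({\bm\context})$ with weight $(1-\hat\varepsilon)^n$, where $\hat\varepsilon=(|\mathcal{A}|-1)\varepsilon$. Splitting each expectation as principal part plus residual produces residuals of absolute mass $\leq(n-1)\hat\varepsilon$ and $n\hat\varepsilon$ respectively, so the combined residual is at most $O(n^2|\mathcal{A}|\varepsilon\|y^{(t)}\|_\infty)$.

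Next I combine the principal parts. For agents $i$ with $a_i=\pi_i^*(\context_i)$, the argument $a_i\oplus\pi_{-i}^*(\context_{-i})$ equals ${\bm\pi}^*({\bm\context})$ and contributes $y^*({\bm\context}):=y^{(t)}({\bm\pi}^*({\bm\context}))$; for $i\in I^c:=\{i:a_i\neq\pi_i^*(\context_i)\}$, which is nonempty since $\mathbf{a}\neq{\bm\pi}^*({\bm\context})$, the argument differs from ${\bm\pi}^*({\bm\context})$ in a single coordinate and is therefore non-optimal. For any non-optimal $\mathbf{a}'$, the chain
\begin{align*}
y^{(t)}(\mathbf{a}') \leq Q^*({\bm\context},\mathbf{a}')+\gamma\|V_{\text{tot}}-V^*\|_\infty \leq V^*({\bm\context})-\mathcal{E}(Q^*)+\gamma\|V_{\text{tot}}-V^*\|_\infty \leq y^*({\bm\context})-\mathcal{E}(Q^*)+2\gamma\|V_{\text{tot}}-V^*\|_\infty
\end{align*}
bounds the contribution, using the definition of $\mathcal{E}(Q^*)$ and $|V^*({\bm\context})-y^*({\bm\context})|\leq\gamma\|V_{\text{tot}}-V^*\|_\infty$. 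The coefficient on $y^*({\bm\context})$ after combining equals $(1-\hat\varepsilon)^{n-1}[1+(n-1)\hat\varepsilon]$, which differs from $1$ by at most $\hat\varepsilon^2n^22^n$ by the same binomial expansion used in Lemma~\ref{lemma:eps_pi_star_value}.

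The remaining contribution is $(1-\hat\varepsilon)^{n-1}|I^c|\cdot x$ with $x=-\mathcal{E}(Q^*)+2\gamma\|V_{\text{tot}}-V^*\|_\infty$, and a brief sign split converts $|I^c|$ into the constants appearing in the lemma: when $x\leq 0$ use $|I^c|\geq 1$ so that $|I^c|x\leq x\leq -\mathcal{E}(Q^*)+2n\gamma\|V_{\text{tot}}-V^*\|_\infty$; when $x>0$ use $|I^c|\leq n$ and $\mathcal{E}(Q^*)\geq 0$ (Assumption~\ref{assumption:unique_optimal_pi}) to get $nx=-n\mathcal{E}(Q^*)+2n\gamma\|V_{\text{tot}}-V^*\|_\infty\leq -\mathcal{E}(Q^*)+2n\gamma\|V_{\text{tot}}-V^*\|_\infty$. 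In both cases, replacing $(1-\hat\varepsilon)^{n-1}$ by $1$ adds only an $O(n^2\hat\varepsilon|x|)$ correction. Collecting every residual gives a total error of order $n^2|\mathcal{A}|^n 2^n \varepsilon\|y^{(t)}\|_\infty$, which is at most $\delta$ by the hypothesis on $\varepsilon$ together with $\|y^{(t)}\|_\infty\leq R_{\max}/(1-\gamma)+\gamma\|V_{\text{tot}}-V^*\|_\infty$.

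The hard part is the sign-sensitive step in the previous paragraph: naively bounding $(1-\hat\varepsilon)^{n-1}\leq 1$ without the case split would weaken the negative $-\mathcal{E}(Q^*)$ contribution rather than strengthen it, and one must exploit $\mathcal{E}(Q^*)\geq 0$ to convert the potentially large $|I^c|$ multiplier back into the single $-\mathcal{E}(Q^*)$ demanded by the statement. This is also what forces the apparently loose factor $2n\gamma$ rather than $2\gamma$ in front of $\|V_{\text{tot}}-V^*\|_\infty$ in the conclusion.
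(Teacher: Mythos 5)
Your proposal is correct and follows essentially the same route as the paper's proof: both expand $(\mathcal{T}_D^{\text{LVF}}Q)_{\text{tot}}$ via the Theorem~\ref{theorem:CreditAssignmentTheorem} closed form (your agent-wise grouping of the expectations is just a reindexing of the paper's grouping of target actions $\mathbf{a}'$ by $h^{{\bm\pi}^*}({\bm\context},\mathbf{a}')\in\{n,\,n-1,\,<n-1\}$), bound the $|I^c|=n-h^{{\bm\pi}^*}({\bm\context},\mathbf{a})$ single-deviation targets through the sub-optimality gap exactly as Lemma~\ref{lemma:suboptimality_gap} does, and absorb all $O(\varepsilon)$ residuals into $\delta$ using the same $\|(\mathcal{T}Q)_{\text{tot}}\|_\infty\leq R_{\max}/(1-\gamma)+\gamma\|V_{\text{tot}}-V^*\|_\infty$ bound. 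Your sign-sensitive case split at the end is sound but can be avoided by bounding the two terms of $(n-h^{{\bm\pi}^*})(\mathcal{E}(Q^*)-2\gamma\|V_{\text{tot}}-V^*\|_\infty)$ separately with $1\leq n-h^{{\bm\pi}^*}\leq n$ and $\mathcal{E}(Q^*)\geq 0$, which is what the paper implicitly does.
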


\begin{proof}
	$\forall {\bm\context}\in{\bm\contextspace}$, $\forall \mathbf{a}\in\mathcal{A}^n\setminus\{{\bm\pi}^*({\bm\context})\}$,
	\begin{align}
	(\mathcal{T}_D^{\text{LVF}}Q)_{\text{tot}}({\bm\context},\mathbf{a}) =&~ \sum_{\mathbf{a}'\in\mathcal{A}^n}f({\bm\context},\mathbf{a},\mathbf{a}')(\mathcal{T}Q)_{\text{tot}}({\bm\context},\mathbf{a}') \nonumber\\
	=&~ f({\bm\context},\mathbf{a},{\bm\pi}^*({\bm\context}))(\mathcal{T}Q)_{\text{tot}}({\bm\context},{\bm\pi}^*({\bm\context})) \nonumber\\ 
	&~+ \sum_{\mathbf{a}'\in\mathcal{A}^n:h^{{\bm\pi}^*}({\bm\context},\mathbf{a}')= n-1}f({\bm\context},\mathbf{a},\mathbf{a}')(\mathcal{T}Q)_{\text{tot}}({\bm\context},\mathbf{a}') \nonumber\\
	&~+ \sum_{\mathbf{a}'\in\mathcal{A}^n:h^{{\bm\pi}^*}({\bm\context},\mathbf{a}')<n-1}f({\bm\context},\mathbf{a},\mathbf{a}')(\mathcal{T}Q)_{\text{tot}}({\bm\context},\mathbf{a}')
	\end{align}
	
	In the first term,
	\begin{small}
		\begin{align}
		&~f({\bm\context},\mathbf{a},{\bm\pi}^*({\bm\context}))(\mathcal{T}Q)_{\text{tot}}({\bm\context},{\bm\pi}^*({\bm\context}))\nonumber\\ =&~\left(\frac{h^{{\bm\pi}^*}({\bm\context},\mathbf{a})}{1-\hat\varepsilon}-(n-1)\right)(1-\hat\varepsilon)^n(\mathcal{T}Q)_{\text{tot}}({\bm\context},{\bm\pi}^*({\bm\context})) \nonumber\\
		=&~ \left(h^{{\bm\pi}^*}({\bm\context},\mathbf{a})-(n-1)(1-\hat\varepsilon)\right)(1-\hat\varepsilon)^{n-1}(\mathcal{T}Q)_{\text{tot}}({\bm\context},{\bm\pi}^*({\bm\context})) \nonumber\\
		=&~ \left(h^{{\bm\pi}^*}({\bm\context},\mathbf{a})-(n-1)+(n-1)(|\mathcal{A}|-1)\varepsilon\right)(1-\hat\varepsilon)^{n-1}(\mathcal{T}Q)_{\text{tot}}({\bm\context},{\bm\pi}^*({\bm\context})) \nonumber\\
		\leq&~ \left(h^{{\bm\pi}^*}({\bm\context},\mathbf{a})-(n-1)\right)(1-\hat\varepsilon)^{n-1}(\mathcal{T}Q)_{\text{tot}}({\bm\context},{\bm\pi}^*({\bm\context})) + \varepsilon n|\mathcal{A}|\|(\mathcal{T}Q)_{\text{tot}}\|_\infty \nonumber\\
		=&~ \left(h^{{\bm\pi}^*}({\bm\context},\mathbf{a})-(n-1)\right)(1+(1-\hat\varepsilon)^{n-1}-1)(\mathcal{T}Q)_{\text{tot}}({\bm\context},{\bm\pi}^*({\bm\context})) + \varepsilon n|\mathcal{A}|\|(\mathcal{T}Q)_{\text{tot}}\|_\infty \nonumber\\
		\leq&~ \left(h^{{\bm\pi}^*}({\bm\context},\mathbf{a})-(n-1)\right)(\mathcal{T}Q)_{\text{tot}}({\bm\context},{\bm\pi}^*({\bm\context})) + \left|h^{{\bm\pi}^*}({\bm\context},\mathbf{a})-(n-1)\right||(1-\hat\varepsilon)^{n-1}-1|\|(\mathcal{T}Q)_{\text{tot}}\|_\infty + \varepsilon n|\mathcal{A}|\|(\mathcal{T}Q)_{\text{tot}}\|_\infty \nonumber\\
		\leq &~ \left(h^{{\bm\pi}^*}({\bm\context},\mathbf{a})-(n-1)\right)(\mathcal{T}Q)_{\text{tot}}({\bm\context},{\bm\pi}^*({\bm\context})) + 2n\left|\sum_{\ell=1}^{n-1}\binom{n-1}{\ell}(-1)^\ell\hat\varepsilon^\ell\right|\|(\mathcal{T}Q)_{\text{tot}}\|_\infty + \varepsilon n|\mathcal{A}|\|(\mathcal{T}Q)_{\text{tot}}\|_\infty \nonumber\\
		\leq &~ \left(h^{{\bm\pi}^*}({\bm\context},\mathbf{a})-(n-1)\right)(\mathcal{T}Q)_{\text{tot}}({\bm\context},{\bm\pi}^*({\bm\context})) + 2n\hat\varepsilon\left(\sum_{\ell=1}^{n-1}\binom{n-1}{\ell}\right)\|(\mathcal{T}Q)_{\text{tot}}\|_\infty + \varepsilon n|\mathcal{A}|\|(\mathcal{T}Q)_{\text{tot}}\|_\infty \nonumber\\
		\leq &~ \left(h^{{\bm\pi}^*}({\bm\context},\mathbf{a})-(n-1)\right)(\mathcal{T}Q)_{\text{tot}}({\bm\context},{\bm\pi}^*({\bm\context})) + \hat\varepsilon n2^n\|(\mathcal{T}Q)_{\text{tot}}\|_\infty + \varepsilon n|\mathcal{A}|\|(\mathcal{T}Q)_{\text{tot}}\|_\infty \nonumber\\
		\leq &~ \left(h^{{\bm\pi}^*}({\bm\context},\mathbf{a})-(n-1)\right)(\mathcal{T}Q)_{\text{tot}}({\bm\context},{\bm\pi}^*({\bm\context})) + \varepsilon n2^n|\mathcal{A}|\|(\mathcal{T}Q)_{\text{tot}}\|_\infty + \varepsilon n|\mathcal{A}|\|(\mathcal{T}Q)_{\text{tot}}\|_\infty
		\end{align}
	\end{small}
	
	In the second term,
	\begin{small}
		\begin{align}
		&~\sum_{\mathbf{a}'\in\mathcal{A}^n:h^{{\bm\pi}^*}({\bm\context},\mathbf{a}')= n-1}f({\bm\context},\mathbf{a},\mathbf{a}')(\mathcal{T}Q)_{\text{tot}}({\bm\context},\mathbf{a}') \nonumber\\
		=&~ \sum_{\mathbf{a}'\in\mathcal{A}^n:h^{{\bm\pi}^*}({\bm\context},\mathbf{a}')= n-1}\left(\frac{h^{(1)}({\bm\context},\mathbf{a},\mathbf{a}')}{1-\hat\varepsilon}+\frac{h^{(0)}({\bm\context},\mathbf{a},\mathbf{a}')}{\varepsilon}-(n-1)\right)(1-\hat\varepsilon)^{n-1}\varepsilon(\mathcal{T}Q)_{\text{tot}}({\bm\context},\mathbf{a}') \nonumber\\
		=&~ \sum_{\mathbf{a}'\in\mathcal{A}^n:h^{{\bm\pi}^*}({\bm\context},\mathbf{a}')= n-1}\left(h^{(0)}({\bm\context},\mathbf{a},\mathbf{a}')(1-\hat\varepsilon)^{n-1}(\mathcal{T}Q)_{\text{tot}}({\bm\context},\mathbf{a}')+\left(\frac{h^{(1)}({\bm\context},\mathbf{a},\mathbf{a}')}{1-\hat\varepsilon}-(n-1)\right)(1-\hat\varepsilon)^{n-1}\varepsilon(\mathcal{T}Q)_{\text{tot}}({\bm\context},\mathbf{a}')\right) \nonumber\\
		\leq&~ \sum_{\mathbf{a}'\in\mathcal{A}^n:h^{{\bm\pi}^*}({\bm\context},\mathbf{a}')= n-1}\left(h^{(0)}({\bm\context},\mathbf{a},\mathbf{a}')(1-\hat\varepsilon)^{n-1}(\mathcal{T}Q)_{\text{tot}}({\bm\context},\mathbf{a}')+\left|\frac{h^{(1)}({\bm\context},\mathbf{a},\mathbf{a}')}{1-\hat\varepsilon}-(n-1)\right|(1-\hat\varepsilon)^{n-1}\varepsilon\|(\mathcal{T}Q)_{\text{tot}}\|_\infty\right) \nonumber\\
		\leq &~ \sum_{\mathbf{a}'\in\mathcal{A}^n:h^{{\bm\pi}^*}({\bm\context},\mathbf{a}')= n-1} \left(h^{(0)}({\bm\context},\mathbf{a},\mathbf{a}')(1-\hat\varepsilon)^{n-1}(\mathcal{T}Q)_{\text{tot}}({\bm\context},\mathbf{a}')+2n\varepsilon\|(\mathcal{T}Q)_{\text{tot}}\|_\infty\right) \nonumber\\
		=&~ \sum_{\mathbf{a}'\in\mathcal{A}^n:h^{{\bm\pi}^*}({\bm\context},\mathbf{a}')= n-1} \left(h^{(0)}({\bm\context},\mathbf{a},\mathbf{a}')\left(\sum_{\ell=0}^{n-1}\binom{n-1}{\ell}(-1)^\ell\hat\varepsilon^\ell\right)(\mathcal{T}Q)_{\text{tot}}({\bm\context},\mathbf{a}')+2n\varepsilon\|(\mathcal{T}Q)_{\text{tot}}\|_\infty\right) \nonumber\\
		=&~ \sum_{\mathbf{a}'\in\mathcal{A}^n:h^{{\bm\pi}^*}({\bm\context},\mathbf{a}')= n-1} \left(h^{(0)}({\bm\context},\mathbf{a},\mathbf{a}')\left(1+\sum_{\ell=1}^{n-1}\binom{n-1}{\ell}(-1)^\ell\hat\varepsilon^\ell\right)(\mathcal{T}Q)_{\text{tot}}({\bm\context},\mathbf{a}')+2n\varepsilon\|(\mathcal{T}Q)_{\text{tot}}\|_\infty\right) \nonumber\\
		%=&~ \sum_{\mathbf{a}'\in\mathcal{A}^n:h^{{\bm\pi}^*}({\bm\context},\mathbf{a}')= n-1} \left(h^{(0)}({\bm\context},\mathbf{a},\mathbf{a}')(\mathcal{T}Q)_{\text{tot}}({\bm\context},\mathbf{a}') + h^{(0)}({\bm\context},\mathbf{a},\mathbf{a}')\left(\sum_{\ell=1}^{n-1}\binom{n-1}{\ell}(-1)^\ell\hat\varepsilon^\ell\right)(\mathcal{T}Q)_{\text{tot}}({\bm\context},\mathbf{a}')+2n\varepsilon\|(\mathcal{T}Q)_{\text{tot}}\|_\infty\right) \\
		\leq& \sum_{\mathbf{a}'\in\mathcal{A}^n:h^{{\bm\pi}^*}({\bm\context},\mathbf{a}')= n-1} \left(h^{(0)}({\bm\context},\mathbf{a},\mathbf{a}')(\mathcal{T}Q)_{\text{tot}}({\bm\context},\mathbf{a}') + \left|\sum_{\ell=1}^{n-1}\binom{n-1}{\ell}(-1)^\ell\hat\varepsilon^\ell\right|\|(\mathcal{T}Q)_{\text{tot}}\|_\infty+2n\varepsilon\|(\mathcal{T}Q)_{\text{tot}}\|_\infty\right) \nonumber\\
		=&~ \sum_{\mathbf{a}'\in\mathcal{A}^n:h^{{\bm\pi}^*}({\bm\context},\mathbf{a}')= n-1} \left(h^{(0)}({\bm\context},\mathbf{a},\mathbf{a}')(\mathcal{T}Q)_{\text{tot}}({\bm\context},\mathbf{a}') + \hat\varepsilon \left|\sum_{\ell=1}^{n-1}\binom{n-1}{\ell}(-1)^\ell\hat\varepsilon^{\ell-1}\right|\|(\mathcal{T}Q)_{\text{tot}}\|_\infty+2n\varepsilon\|(\mathcal{T}Q)_{\text{tot}}\|_\infty\right) \nonumber\\
		\leq &~ \sum_{\mathbf{a}'\in\mathcal{A}^n:h^{{\bm\pi}^*}({\bm\context},\mathbf{a}')= n-1} \left(h^{(0)}({\bm\context},\mathbf{a},\mathbf{a}')(\mathcal{T}Q)_{\text{tot}}({\bm\context},\mathbf{a}') + \hat\varepsilon \left(\sum_{\ell=1}^{n-1}\binom{n-1}{\ell}\right)\|(\mathcal{T}Q)_{\text{tot}}\|_\infty+2n\varepsilon\|(\mathcal{T}Q)_{\text{tot}}\|_\infty\right) \nonumber\\
		\leq &~ \sum_{\mathbf{a}'\in\mathcal{A}^n:h^{{\bm\pi}^*}({\bm\context},\mathbf{a}')= n-1} \left(h^{(0)}({\bm\context},\mathbf{a},\mathbf{a}')(\mathcal{T}Q)_{\text{tot}}({\bm\context},\mathbf{a}') + \varepsilon |\mathcal{A}| 2^{n-1}\|(\mathcal{T}Q)_{\text{tot}}\|_\infty+2n\varepsilon\|(\mathcal{T}Q)_{\text{tot}}\|_\infty\right) \nonumber\\
		= &~ \left(\sum_{\mathbf{a}'\in\mathcal{A}^n:h^{{\bm\pi}^*}({\bm\context},\mathbf{a}')= n-1} h^{(0)}({\bm\context},\mathbf{a},\mathbf{a}')(\mathcal{T}Q)_{\text{tot}}({\bm\context},\mathbf{a}')\right) + \varepsilon n|\mathcal{A}| 2^{n-1}\|(\mathcal{T}Q)_{\text{tot}}\|_\infty+2n^2\varepsilon\|(\mathcal{T}Q)_{\text{tot}}\|_\infty \nonumber\\
		\leq &~ \left(\sum_{\mathbf{a}'\in\mathcal{A}^n:h^{{\bm\pi}^*}({\bm\context},\mathbf{a}')= n-1} h^{(0)}({\bm\context},\mathbf{a},\mathbf{a}')(\mathcal{T}Q)_{\text{tot}}({\bm\context},\mathbf{a}')\right) + \varepsilon n^2|\mathcal{A}| 2^n\|(\mathcal{T}Q)_{\text{tot}}\|_\infty
		\end{align}
	\end{small}
	
	In the third term,
	%\begin{scriptsize}
		\begin{align}
		&~\sum_{\mathbf{a}'\in\mathcal{A}^n:h^{{\bm\pi}^*}({\bm\context},\mathbf{a}')<n-1}f({\bm\context},\mathbf{a},\mathbf{a}')(\mathcal{T}Q)_{\text{tot}}({\bm\context},\mathbf{a}') \nonumber\\
		\leq&~ \sum_{a'\in\mathcal{A}^n:h^{{\bm\pi}^*}({\bm\context},\mathbf{a}')<n-1}\left|f({\bm\context},\mathbf{a},\mathbf{a}')(\mathcal{T}Q)_{\text{tot}}({\bm\context},\mathbf{a}')\right| \nonumber\\
		=&~ \sum_{\mathbf{a}'\in\mathcal{A}^n:h^{{\bm\pi}^*}({\bm\context},\mathbf{a}')<n-1}\left|\frac{h^{(1)}({\bm\context},\mathbf{a},\mathbf{a}')}{1-\hat\varepsilon}+\frac{h^{(0)}({\bm\context},\mathbf{a},\mathbf{a}')}{\varepsilon}-(n-1)\right|(1-\hat\varepsilon)^{h^{{\bm\pi}^*}({\bm\context},\mathbf{a}')}\varepsilon^{n-h^{{\bm\pi}^*}({\bm\context},\mathbf{a}')}\left|(\mathcal{T}Q)_{\text{tot}}({\bm\context},\mathbf{a}')\right| \nonumber\\
		\leq&~ \sum_{\mathbf{a}'\in\mathcal{A}^n:h^{{\bm\pi}^*}({\bm\context},\mathbf{a}')<n-1}\left|\frac{h^{(1)}({\bm\context},\mathbf{a},\mathbf{a}')}{1-\hat\varepsilon}+\frac{h^{(0)}({\bm\context},\mathbf{a},\mathbf{a}')}{\varepsilon}+(n-1)\right|(1-\hat\varepsilon)^{h^{{\bm\pi}^*}({\bm\context},\mathbf{a}')}\varepsilon^{n-h^{{\bm\pi}^*}({\bm\context},\mathbf{a}')}\left|(\mathcal{T}Q)_{\text{tot}}({\bm\context},\mathbf{a}')\right| \nonumber\\
		\leq&~ \sum_{\mathbf{a}'\in\mathcal{A}^n:h^{{\bm\pi}^*}({\bm\context},\mathbf{a}')<n-1}n\left(1+\frac{1}{1-\hat\varepsilon}+\frac{1}{\varepsilon}\right)(1-\hat\varepsilon)^{h^{{\bm\pi}^*}({\bm\context},\mathbf{a}')}\varepsilon^{n-h^{{\bm\pi}^*}({\bm\context},\mathbf{a}')}\left|(\mathcal{T}Q)_{\text{tot}}({\bm\context},\mathbf{a}')\right| \nonumber\\
		\leq&~ \sum_{\mathbf{a}'\in\mathcal{A}^n:h^{{\bm\pi}^*}({\bm\context},\mathbf{a}')<n-1}n\left(1+\frac{2}{\varepsilon}\right)(1-\hat\varepsilon)^{h^{{\bm\pi}^*}({\bm\context},\mathbf{a}')}\varepsilon^{n-h^{{\bm\pi}^*}({\bm\context},\mathbf{a}')}\left|(\mathcal{T}Q)_{\text{tot}}({\bm\context},\mathbf{a}')\right| \nonumber\\
		\leq&~ \sum_{\mathbf{a}'\in\mathcal{A}^n:h^{{\bm\pi}^*}({\bm\context},\mathbf{a}')<n-1}3n\varepsilon^{n-h^{{\bm\pi}^*}({\bm\context},\mathbf{a}')-1}\left|(\mathcal{T}Q)_{\text{tot}}({\bm\context},\mathbf{a}')\right| \nonumber\\
		\leq&~ \sum_{\mathbf{a}'\in\mathcal{A}^n:h^{{\bm\pi}^*}({\bm\context},\mathbf{a}')<n-1}3n\varepsilon\|(\mathcal{T}Q)_{\text{tot}}\|_\infty \nonumber\\
		\leq&~ 3n\varepsilon|\mathcal{A}|^n\|(\mathcal{T}Q)_{\text{tot}}\|_\infty
		\end{align}
	%\end{scriptsize}
	
	Combining the above terms, we can get
	%\begin{scriptsize}
		\begin{align}
		&~ (\mathcal{T}_D^{\text{LVF}}Q)_{\text{tot}}({\bm\context},\mathbf{a})\nonumber\\
		=&~ f({\bm\context},\mathbf{a},{\bm\pi}^*({\bm\context}))(\mathcal{T}Q)_{\text{tot}}({\bm\context},{\bm\pi}^*({\bm\context})) + \sum_{\mathbf{a}'\in\mathcal{A}^n:h^{{\bm\pi}^*}({\bm\context},\mathbf{a}')= n-1}f({\bm\context},\mathbf{a},\mathbf{a}')(\mathcal{T}Q)_{\text{tot}}({\bm\context},\mathbf{a}') \nonumber\\
		& ~ + \sum_{\mathbf{a}'\in\mathcal{A}^n:h^{{\bm\pi}^*}({\bm\context},\mathbf{a}')<n-1}f({\bm\context},\mathbf{a},\mathbf{a}')(\mathcal{T}Q)_{\text{tot}}({\bm\context},\mathbf{a}') \nonumber\\
		\leq&~ \left(h^{{\bm\pi}^*}({\bm\context},\mathbf{a})-(n-1)\right)(\mathcal{T}Q)_{\text{tot}}({\bm\context},{\bm\pi}^*({\bm\context})) + \varepsilon n2^n|\mathcal{A}|\|(\mathcal{T}Q)_{\text{tot}}\|_\infty + \varepsilon n|\mathcal{A}|\|(\mathcal{T}Q)_{\text{tot}}\|_\infty \nonumber\\
		& ~ + \left(\sum_{\mathbf{a}'\in\mathcal{A}^n:h^{{\bm\pi}^*}({\bm\context},\mathbf{a}')= n-1} h^{(0)}({\bm\context},\mathbf{a},\mathbf{a}')(\mathcal{T}Q)_{\text{tot}}({\bm\context},\mathbf{a}')\right) + \varepsilon n^2|\mathcal{A}| 2^n\|(\mathcal{T}Q)_{\text{tot}}\|_\infty + 3n\varepsilon|\mathcal{A}|^n\|(\mathcal{T}Q)_{\text{tot}}\|_\infty \nonumber\\
		\leq&~ \left(h^{{\bm\pi}^*}({\bm\context},\mathbf{a})-(n-1)\right)(\mathcal{T}Q)_{\text{tot}}({\bm\context},{\bm\pi}^*({\bm\context})) + \left(\sum_{\mathbf{a}'\in\mathcal{A}^n:h^{{\bm\pi}^*}({\bm\context},\mathbf{a}')= n-1} h^{(0)}({\bm\context},\mathbf{a},\mathbf{a}')(\mathcal{T}Q)_{\text{tot}}({\bm\context},\mathbf{a}')\right) \nonumber\\
		&~+ \varepsilon n^2 |\mathcal{A}|^n 2^n \|(\mathcal{T}Q)_{\text{tot}}\|_\infty
		\end{align}
	%\end{scriptsize}
	in which
	\begin{align}
	&~\sum_{\mathbf{a}'\in\mathcal{A}^n:h^{{\bm\pi}^*}({\bm\context},\mathbf{a}')= n-1} h^{(0)}({\bm\context},\mathbf{a},\mathbf{a}')(\mathcal{T}Q)_{\text{tot}}({\bm\context},\mathbf{a}') \nonumber\\
	\leq&~ \left(\sum_{\mathbf{a}'\in\mathcal{A}^n:h^{{\bm\pi}^*}({\bm\context},\mathbf{a}')= n-1} h^{(0)}({\bm\context},\mathbf{a},\mathbf{a}')\right)\max_{\mathbf{a}'\in\mathcal{A}^n:h^{{\bm\pi}^*}({\bm\context},\mathbf{a}')= n-1} (\mathcal{T}Q)_{\text{tot}}({\bm\context},\mathbf{a}') \nonumber\\
	=&~ (n-h^{{\bm\pi}^*}({\bm\context},\mathbf{a}))\max_{\mathbf{a}'\in\mathcal{A}^n:h^{{\bm\pi}^*}({\bm\context},\mathbf{a}')= n-1} (\mathcal{T}Q)_{\text{tot}}({\bm\context},\mathbf{a}') \nonumber\\
	\leq&~ (n-h^{{\bm\pi}^*}({\bm\context},\mathbf{a}))\max_{\mathbf{a}'\in\mathcal{A}^n\setminus\{{\bm\pi}^*({\bm\context})\}} (\mathcal{T}Q)_{\text{tot}}({\bm\context},\mathbf{a}') \nonumber\\
	=&~ (n-h^{{\bm\pi}^*}({\bm\context},\mathbf{a}))\left((\mathcal{T}Q)_{\text{tot}}({\bm\context},{\bm\pi}^*)-\mathcal{E}(\mathcal{T}Q)\right)
	\end{align}
	
	Thus $\forall {\bm\context}\in{\bm\contextspace}$, $\forall \mathbf{a}\in\mathcal{A}^n\setminus\{{\bm\pi}^*({\bm\context})\}$,
	%\begin{scriptsize}
		\begin{align}
		&~ (\mathcal{T}_D^{\text{LVF}}Q)_{\text{tot}}({\bm\context},\mathbf{a})\nonumber\\
		\leq&~ \left(h^{{\bm\pi}^*}({\bm\context},\mathbf{a})-(n-1)\right)(\mathcal{T}Q)_{\text{tot}}({\bm\context},{\bm\pi}^*({\bm\context})) + \left(\sum_{\mathbf{a}'\in\mathcal{A}^n:h^{{\bm\pi}^*}({\bm\context},\mathbf{a}')= n-1} h^{(0)}({\bm\context},\mathbf{a},\mathbf{a}')(\mathcal{T}Q)_{\text{tot}}({\bm\context},\mathbf{a}')\right) \nonumber\\
		&~+ \varepsilon n^2 |\mathcal{A}|^n 2^n \|(\mathcal{T}Q)_{\text{tot}}\|_\infty \nonumber\\
		\leq&~ \left(h^{{\bm\pi}^*}({\bm\context},\mathbf{a})-(n-1)\right)(\mathcal{T}Q)_{\text{tot}}({\bm\context},{\bm\pi}^*({\bm\context})) + (n-h^{{\bm\pi}^*}({\bm\context},\mathbf{a}))\left((\mathcal{T}Q)_{\text{tot}}({\bm\context},{\bm\pi}^*)-\mathcal{E}(\mathcal{T}Q)\right) + \varepsilon n^2 |\mathcal{A}|^n 2^n \|(\mathcal{T}Q)_{\text{tot}}\|_\infty \nonumber\\
		=&~ (\mathcal{T}Q)_{\text{tot}}({\bm\context},{\bm\pi}^*({\bm\context})) - (n-h^{{\bm\pi}^*}({\bm\context},\mathbf{a}))\mathcal{E}(\mathcal{T}Q) + \varepsilon n^2 |\mathcal{A}|^n 2^n \|(\mathcal{T}Q)_{\text{tot}}\|_\infty
		\end{align}
	%\end{scriptsize}
	
	According to Lemma \ref{lemma:suboptimality_gap}, $\mathcal{E}(\mathcal{T}Q)\geq \mathcal{E}(Q^*)-2\gamma\|V_{\text{tot}}-V^*\|_\infty$. So $\forall {\bm\context}\in{\bm\contextspace}$, $\forall \mathbf{a}\in\mathcal{A}^n\setminus\{{\bm\pi}^*({\bm\context})\}$,
	%\begin{scriptsize}
		\begin{align}
		&~(\mathcal{T}_D^{\text{LVF}}Q)_{\text{tot}}({\bm\context},\mathbf{a}) \notag \\
		\leq&~ (\mathcal{T}Q)_{\text{tot}}({\bm\context},{\bm\pi}^*({\bm\context})) - (n-h^{{\bm\pi}^*}({\bm\context},\mathbf{a}))\mathcal{E}(\mathcal{T}Q) + \varepsilon n^2 |\mathcal{A}|^n 2^n \|(\mathcal{T}Q)_{\text{tot}}\|_\infty \nonumber\\
		\leq&~ (\mathcal{T}Q)_{\text{tot}}({\bm\context},{\bm\pi}^*({\bm\context})) - (n-h^{{\bm\pi}^*}({\bm\context},\mathbf{a}))\left(\mathcal{E}(Q^*)-2\gamma\|V_{\text{tot}}-V^*\|_\infty\right) + \varepsilon n^2 |\mathcal{A}|^n 2^n \|(\mathcal{T}Q)_{\text{tot}}\|_\infty \nonumber\\
		\leq&~ (\mathcal{T}Q)_{\text{tot}}({\bm\context},{\bm\pi}^*({\bm\context})) -\mathcal{E}(Q^*)+2n\gamma\|V_{\text{tot}}-V^*\|_\infty+ \varepsilon n^2 |\mathcal{A}|^n 2^n \|(\mathcal{T}Q)_{\text{tot}}\|_\infty \nonumber\\
		\leq&~ (\mathcal{T}Q)_{\text{tot}}({\bm\context},{\bm\pi}^*({\bm\context})) -\mathcal{E}(Q^*)+2n\gamma\|V_{\text{tot}}-V^*\|_\infty+ \varepsilon n^2 |\mathcal{A}|^n 2^n (R_{\text{max}}+\gamma \|V_{\text{tot}}\|_\infty) \nonumber\\
		\leq&~ (\mathcal{T}Q)_{\text{tot}}({\bm\context},{\bm\pi}^*({\bm\context})) -\mathcal{E}(Q^*)+2n\gamma\|V_{\text{tot}}-V^*\|_\infty+ \varepsilon n^2 |\mathcal{A}|^n 2^n (R_{\text{max}}+\gamma \|V^*\|_\infty + \gamma \|V_{\text{tot}}-V^*\|_\infty) \nonumber\\
		\leq&~ (\mathcal{T}Q)_{\text{tot}}({\bm\context},{\bm\pi}^*({\bm\context})) -\mathcal{E}(Q^*)+2n\gamma\|V_{\text{tot}}-V^*\|_\infty+ \varepsilon n^2 |\mathcal{A}|^n 2^n (R_{\text{max}}/(1-\gamma) + \gamma \|V_{\text{tot}}-V^*\|_\infty) \nonumber\\
		\leq&~ (\mathcal{T}Q)_{\text{tot}}({\bm\context},{\bm\pi}^*({\bm\context})) -\mathcal{E}(Q^*)+2n\gamma\|V_{\text{tot}}-V^*\|_\infty+ \delta
		\end{align}
	%\end{scriptsize}
\end{proof}

\begin{lemma}\label{lemma:informal_convergent_lemma}
	Let $\mathcal{B}$ denote a subspace of value functions
	\begin{align}
	\mathcal{B} = \left\{Q\in\mathcal{Q}^{\text{LVF}}\left|\mathcal{E}(Q)\geq 0,~\|V_{\text{tot}}-V^*\|_\infty\leq\frac{1}{8n\gamma}\mathcal{E}(Q^*)\right.\right\}
	\end{align}
	Given a dataset $D$ generated by the optimal policy ${\bm\pi}^*$ with $\epsilon$-greedy exploration,
	\begin{align}
	\forall 0<\varepsilon\leq \frac{(1-\gamma)\mathcal{E}(Q^*)}{ n^3|\mathcal{A}|^n2^{n+4}(R_{\text{max}}/(1-\gamma)+\mathcal{E}(Q^*)/(8n))}
	\end{align}
	we have $\forall Q\in\mathcal{B}$, $\mathcal{T}^{\text{LVF}}_DQ\in\hat{\mathcal{B}}\subset \mathcal{B}$ where
	\begin{align} \label{eq:box_boundary}
	\hat {\mathcal{B}} = \left\{Q\in\mathcal{Q}^{\text{LVF}}\left|\mathcal{E}(Q)> 0,~\|V_{\text{tot}}-V^*\|_\infty\leq\frac{1}{8n\gamma}\mathcal{E}(Q^*)\right.\right\}
	\end{align}
\end{lemma}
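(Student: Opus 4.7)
The plan is to verify the two defining properties of $\hat{\mathcal{B}}$ for $Q' := \mathcal{T}_D^{\text{LVF}} Q$ starting from an arbitrary $Q \in \mathcal{B}$: namely, that $\mathcal{E}(Q') > 0$ and that $\|V'_{\text{tot}} - V^*\|_\infty \leq \frac{1}{8n\gamma}\mathcal{E}(Q^*)$. The first is the main technical step; the second falls out almost for free once the first is in hand.

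The preliminary observation I would make is that $\mathcal{E}(Q) \geq 0$ forces $\bm\pi^*$ to be greedy under $Q$, so $V_{\text{tot}} = V^{\bm\pi^*}_{\text{tot}}$, and hence $\|V^{\bm\pi^*}_{\text{tot}} - V^*\|_\infty \leq \frac{1}{8n\gamma}\mathcal{E}(Q^*)$. Coupled with the crude estimate $\|V_{\text{tot}}\|_\infty \leq R_{\max}/(1-\gamma) + \mathcal{E}(Q^*)/(8n\gamma)$, this turns the $\varepsilon$-hypothesis of the lemma into simultaneously-usable instances of Lemmas~\ref{lemma:eps_pi_star_value}, \ref{lemma:eps_near_optimal_closed}, and \ref{lemma:eps_non_optimal_values}; concretely, the stated bound is engineered so that applying those three lemmas with tolerances of order $(1-\gamma)\mathcal{E}(Q^*)/n$ is legitimate.

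For the sub-optimality gap step, I would fix $\bm\context$ and a non-optimal joint action $\mathbf{a}$ and lower-bound $Q'_{\text{tot}}(\bm\context,\bm\pi^*(\bm\context)) - Q'_{\text{tot}}(\bm\context,\mathbf{a})$. Lemma~\ref{lemma:eps_pi_star_value} with tolerance $\delta' = (1-\gamma)\mathcal{E}(Q^*)/(8n)$ yields $Q'_{\text{tot}}(\bm\context,\bm\pi^*(\bm\context)) \geq (\mathcal{T}Q)_{\text{tot}}(\bm\context,\bm\pi^*(\bm\context)) - \delta'$, while Lemma~\ref{lemma:eps_non_optimal_values} with tolerance $\delta = (1-\gamma)\mathcal{E}(Q^*)/(16n)$ yields the matching upper bound $Q'_{\text{tot}}(\bm\context,\mathbf{a}) \leq (\mathcal{T}Q)_{\text{tot}}(\bm\context,\bm\pi^*(\bm\context)) - \mathcal{E}(Q^*) + 2n\gamma\|V_{\text{tot}} - V^*\|_\infty + \delta$. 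Subtracting and invoking $2n\gamma\|V_{\text{tot}} - V^*\|_\infty \leq \mathcal{E}(Q^*)/4$ produces
\begin{align*}
Q'_{\text{tot}}(\bm\context,\bm\pi^*(\bm\context)) - Q'_{\text{tot}}(\bm\context,\mathbf{a}) \geq \tfrac{3}{4}\mathcal{E}(Q^*) - \delta - \delta' > 0
\end{align*}
uniformly in $(\bm\context,\mathbf{a})$, and hence $\mathcal{E}(Q') > 0$.

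Once this is known, $\bm\pi^*$ is greedy under $Q'$, so $V'_{\text{tot}}(\bm\context) = Q'_{\text{tot}}(\bm\context,\bm\pi^*(\bm\context))$, and Lemma~\ref{lemma:eps_near_optimal_closed} yields
\begin{align*}
|V'_{\text{tot}}(\bm\context) - V^*(\bm\context)| \leq \gamma\|V^{\bm\pi^*}_{\text{tot}} - V^*\|_\infty + \frac{1-\gamma}{8n\gamma}\mathcal{E}(Q^*) \leq \frac{\gamma + (1-\gamma)}{8n\gamma}\mathcal{E}(Q^*) = \frac{1}{8n\gamma}\mathcal{E}(Q^*),
\end{align*}
so that the exact cancellation of $\gamma$-factors is precisely what makes the radius of $\mathcal{B}$ forward-invariant under $\mathcal{T}_D^{\text{LVF}}$. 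The main obstacle I anticipate is the bookkeeping of confirming that the single $\varepsilon$-bound in the statement simultaneously implies the $\varepsilon$-hypotheses of all three preceding lemmas with the chosen tolerances; this reduces to the elementary domination $R_{\max} + \gamma\|V_{\text{tot}}\|_\infty \leq R_{\max}/(1-\gamma) + \mathcal{E}(Q^*)/(8n)$ in the denominator of Lemma~\ref{lemma:eps_pi_star_value} together with the additional factor of $\gamma \leq 1$ in the denominator of Lemma~\ref{lemma:eps_near_optimal_closed}.
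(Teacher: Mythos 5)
Your proposal is correct and follows essentially the same route as the paper's proof: it instantiates Lemmas~\ref{lemma:eps_pi_star_value}, \ref{lemma:eps_non_optimal_values}, and \ref{lemma:eps_near_optimal_closed} under the single stated $\varepsilon$-bound, uses $2n\gamma\|V_{\text{tot}}-V^*\|_\infty\leq\mathcal{E}(Q^*)/4$ to force $\mathcal{E}(\mathcal{T}_D^{\text{LVF}}Q)>0$, and then gets the radius bound from the exact cancellation $\gamma\cdot\frac{1}{8n\gamma}+\frac{1-\gamma}{8n\gamma}=\frac{1}{8n\gamma}$. The only differences are cosmetic (your tolerances are of order $(1-\gamma)\mathcal{E}(Q^*)/n$ rather than the paper's $\mathcal{E}(Q^*)/4$, both compatible with the stated $\varepsilon$-bound), and your explicit note that $\mathcal{E}(Q')>0$ is needed before identifying $V'_{\text{tot}}$ with $Q'_{\text{tot}}(\bm\context,\bm\pi^*(\bm\context))$ is a point the paper leaves implicit.
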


\begin{proof}
	According to Lemma \ref{lemma:eps_pi_star_value}, with the condition
	\begin{align}\label{eq:final_constraint_1}
	0<\varepsilon\leq \frac{\mathcal{E}(Q^*)/4}{n^2|\mathcal{A}|^n2^{n+1}(R_{\text{max}}/(1-\gamma)+\mathcal{E}(Q^*)/(8n))} \leq \frac{\mathcal{E}(Q^*)/4}{n^2|\mathcal{A}|^n2^{n+1}(R_{\text{max}}+\gamma \|V_{\text{tot}}\|_\infty)}
	\end{align}
	we have $\forall Q\in\mathcal{B}$, $\forall {\bm\context}\in{\bm\contextspace}$,
	\begin{align}
	\left|(\mathcal{T}_D^{\text{LVF}}Q)_{\text{tot}}({\bm\context},{\bm\pi}^*({\bm\context}))-(\mathcal{T}Q)_{\text{tot}}({\bm\context},{\bm\pi}^*({\bm\context}))\right|\leq \frac{1}{4}\mathcal{E}(Q^*)
	\end{align}
	which implies $\forall Q\in\mathcal{B}$, $\forall {\bm\context}\in{\bm\contextspace}$,
	\begin{align}
	(\mathcal{T}_D^{\text{LVF}}Q)_{\text{tot}}({\bm\context},{\bm\pi}^*({\bm\context})) \geq (\mathcal{T}Q)_{\text{tot}}({\bm\context},{\bm\pi}^*({\bm\context})) - \frac{1}{4}\mathcal{E}(Q^*).
	\end{align}
	
	According to Lemma \ref{lemma:eps_non_optimal_values}, with the condition
	\begin{align}
	0<\varepsilon&\leq\frac{\mathcal{E}(Q^*)/4}{n^2 |\mathcal{A}|^n 2^n (R_{\text{max}}/(1-\gamma) + \mathcal{E}(Q^*)/(8n))} \notag \\
	\label{eq:final_constraint_2}
	&\leq\frac{\mathcal{E}(Q^*)/4}{n^2 |\mathcal{A}|^n 2^n (R_{\text{max}}/(1-\gamma) + \gamma \|V_{\text{tot}}-V^*\|_\infty)}
	\end{align}
	we have $\forall Q\in\mathcal{B}$, $\forall {\bm\context}\in{\bm\contextspace}$, $\forall a\in\mathcal{A}^n\setminus\{{\bm\pi}^*({\bm\context})\}$,
	\begin{align}
	(\mathcal{T}_D^{\text{LVF}}Q)_{\text{tot}}({\bm\context},\mathbf{a})
	\leq&~ (\mathcal{T}Q)_{\text{tot}}({\bm\context},{\bm\pi}^*({\bm\context})) -\mathcal{E}(Q^*)+2n\gamma\|V_{\text{tot}}-V^*\|_\infty+ \frac{1}{4}\mathcal{E}(Q^*) \nonumber\\
	\leq&~ (\mathcal{T}Q)_{\text{tot}}({\bm\context},{\bm\pi}^*({\bm\context})) -\mathcal{E}(Q^*) +  \frac{1}{4}\mathcal{E}(Q^*) + \frac{1}{4}\mathcal{E}(Q^*) \nonumber\\
	=&~ (\mathcal{T}Q)_{\text{tot}}({\bm\context},{\bm\pi}^*({\bm\context})) -\frac{1}{2}\mathcal{E}(Q^*) \nonumber\\
	<&~ (\mathcal{T}_D^{\text{LVF}}Q)_{\text{tot}}({\bm\context},{\bm\pi}^*({\bm\context}))
	\end{align}
	which implies $\mathcal{E}(\mathcal{T}_D^{\text{LVF}}Q)> 0$.
	
	According to Lemma \ref{lemma:eps_near_optimal_closed}, with the condition
	\begin{align}
	0<\varepsilon & \leq\frac{(1-\gamma)\mathcal{E}(Q^*)}{\gamma n^3|\mathcal{A}|^n2^{n+4}(R_{\text{max}}/(1-\gamma)+\mathcal{E}(Q^*)/(8n))} \notag \\
	\label{eq:final_constraint_3}
	& \leq\frac{(1-\gamma)\mathcal{E}(Q^*)}{\gamma n^3|\mathcal{A}|^n2^{n+4}(R_{\text{max}}/(1-\gamma)+\gamma \|V_{\text{tot}}^{{\bm\pi}^*}-V^*\|_\infty)},
	\end{align}
	we have $\forall Q\in\mathcal{B}$, $\forall {\bm\context}\in{\bm\contextspace}$,
	\begin{align}
	\left|(\mathcal{T}_D^{\text{LVF}}V)({\bm\context})-V^*({\bm\context})\right| &= \left|(\mathcal{T}_D^{\text{LVF}}Q)_{\text{tot}}({\bm\context},{\bm\pi}^*({\bm\context}))-V^*({\bm\context})\right| \notag \\
	&\leq \gamma\|V^{{\bm\pi}^*}_{\text{tot}}-V^*\|_\infty + \frac{1-\gamma}{8n\gamma}\mathcal{E}(Q^*) \leq \frac{1}{8n\gamma}\mathcal{E}(Q^*).
	\end{align}
	
	Combing Eq.~\eqref{eq:final_constraint_1}, \eqref{eq:final_constraint_2}, and \eqref{eq:final_constraint_3}, the overall condition is
	\begin{align}
	0<\varepsilon\leq\frac{(1-\gamma)\mathcal{E}(Q^*)}{ n^3|\mathcal{A}|^n2^{n+4}(R_{\text{max}}/(1-\gamma)+\mathcal{E}(Q^*)/(8n))}
	\end{align}
\end{proof}

\EpsStabilityLemma*

\begin{proof}
	It is implied by Lemma \ref{lemma:informal_convergent_lemma}.
\end{proof}

\EpsFixedPointTheorem*

\begin{proof}
	Notice that the observation-value function $V_{\text{tot}}$ is sufficient to determine the target values, so the subspace $\mathcal{B}$ defined in Lemma \ref{lemma:informal_convergent_lemma} is a compact and convex space in terms of $V_{\text{tot}}$. The operator $\mathcal{T}_D^{\text{LVF}}$ is a continuous mapping because it only involves elementary functions. According to Brouwer's Fixed Point Theorem \citep{brouwer1911abbildung}, there exist $Q\in\mathcal{B}$ satisfying $\mathcal{T}_D^{\text{LVF}}Q\in\mathcal{B}$. In addition, according to the definition stated in Eq.~\eqref{eq:box_boundary}, the fixed point must represent the unique optimal policy since it cannot lie on the boundary with $\mathcal{E}(Q)=0$.
\end{proof}

\section{Omitted Proofs for Theorem \ref{thm:igm_complete}}

\begin{restatable}{lemma}{IgmGammaContractionCorollary}
	\label{fact:igm_gamma_contraction}
	The empirical Bellman operator $\mathcal{T}_D^{\text{IGM}}$ stated in Definition \ref{def:fqi-igm} is a $\gamma$-contraction, i.e., the following important property of the standard \textit{Bellman optimality operator} $\mathcal{T}$ will hold for $\mathcal{T}_D^{\text{IGM}}$.
	\begin{align}
	\forall Q_{\text{tot}},Q_{\text{tot}}'\in\mathcal{Q},\quad \|\mathcal{T}Q_{\text{tot}}-\mathcal{T}Q_{\text{tot}}'\|_\infty \leq \gamma\|Q_{\text{tot}}-Q_{\text{tot}}'\|_\infty
	\end{align}
\end{restatable}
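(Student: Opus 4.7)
The plan is to show that on $\mathcal{Q}^{\text{IGM}}$ the empirical Bellman operator $\mathcal{T}_D^{\text{IGM}}$ reduces to the standard Bellman optimality operator $\mathcal{T}$, after which the contraction claim collapses to the textbook $\infty$-norm argument for $\mathcal{T}$.

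First, I would establish that $\mathcal{Q}^{\text{IGM}}$ is closed under $\mathcal{T}$: for any $Q\in\mathcal{Q}^{\text{IGM}}$, the Bellman target
$$y({\bm\context},\mathbf{a}) = r(s,\mathbf{a}) + \gamma\,\mathbb{E}_{{\bm\context}'}\bigl[\max_{\mathbf{a}'} Q_{\text{tot}}({\bm\context}',\mathbf{a}')\bigr]$$
lies in the class. Because IGM constrains $Q_{\text{tot}}$ only through the existence of individual factors whose coordinate-wise argmaxes realize a joint maximizer, it suffices to exhibit such factors for $y$. I first observe that $y$ depends on ${\bm\context}$ only through the latent state $s$, since both $r(s,\mathbf{a})$ and the transition-observation kernel under the expectation are determined by $(s,\mathbf{a})$. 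For each $s$ I pick any joint maximizer $\mathbf{a}^*(s)\in\arg\max_{\mathbf{a}} y(s,\mathbf{a})$; invoking the rich-observation property of Dec-ROMDP, the local observation $\context_i$ decodes $s$ via $\Lambda^{-1}$, so I can unambiguously define $\hat Q_i(\context_i,a_i) = \mathbb{I}\!\left[a_i = a_i^*(\Lambda^{-1}(\context_i))\right]$ as a function of $\context_i$ alone. Each $\hat Q_i(\context_i,\cdot)$ has its unique individual argmax at $a_i^*(s)$, and the induced tuple $\langle a_1^*(s),\dots,a_n^*(s)\rangle = \mathbf{a}^*(s)$ is by construction a joint maximizer of $y(s,\cdot)$. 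Hence $y\in\mathcal{Q}^{\text{IGM}}$.

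Next, this closure immediately forces the empirical least-squares problem in Eq.~\eqref{eq:fma-fqi} to attain zero loss: the target $y$ itself lies in the hypothesis class, so the minimizer satisfies $(\mathcal{T}_D^{\text{IGM}}Q)_{\text{tot}} = y$ on $\mathrm{supp}(p_D)$, and Assumption~\ref{assumption:exploratory_dataset}, which requires $p_D>0$ everywhere, extends this equality to all of $\bm{\contextspace}\times\mathbf{A}$. Thus $(\mathcal{T}_D^{\text{IGM}}Q)_{\text{tot}} = \mathcal{T}Q_{\text{tot}}$ pointwise, and the claim reduces to the classical estimate
$$\|\mathcal{T}Q_{\text{tot}} - \mathcal{T}Q_{\text{tot}}'\|_\infty \;\leq\; \gamma\,\bigl\|\max_{\mathbf{a}'}Q_{\text{tot}}(\cdot,\mathbf{a}') - \max_{\mathbf{a}'}Q_{\text{tot}}'(\cdot,\mathbf{a}')\bigr\|_\infty \;\leq\; \gamma\|Q_{\text{tot}}-Q_{\text{tot}}'\|_\infty,$$
combining the standard bound $|\max_{\mathbf{a}} f(\mathbf{a}) - \max_{\mathbf{a}} g(\mathbf{a})|\leq \max_{\mathbf{a}}|f(\mathbf{a})-g(\mathbf{a})|$ with monotonicity of the expectation over ${\bm\context}'$.

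The main obstacle I expect lies in the closure step: the IGM definition in Eq.~\eqref{eq:igm} is mildly ambiguous when $y(s,\cdot)$ admits ties, so I need to justify that realizing a single selected joint maximizer $\mathbf{a}^*(s)$ via one-hot individual factors is sufficient, rather than reproducing the full argmax set as a Cartesian product. The rich-observation assumption is essential to make this construction well-defined, because it lets $a_i^*(s)$ be encoded as a function of $\context_i$ alone; in a generic Dec-POMDP where $\context_i$ does not determine $s$, the factor $\hat Q_i$ could not even be defined consistently, which is why Theorem~\ref{thm:igm_complete} is stated specifically for Dec-ROMDPs.
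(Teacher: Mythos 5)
Your proposal is correct and follows essentially the same route as the paper: exhibit the Bellman target inside $\mathcal{Q}^{\text{IGM}}$ via one-hot individual factors (using the rich-observation decoding of $s$ from $\context_i$), conclude the empirical operator coincides with the exact Bellman optimality operator, and invoke the textbook $\infty$-norm contraction bound; the tie-breaking concern you flag is handled in the paper by imposing a lexicographic order on joint actions. Your write-up is somewhat more explicit about why the exploratory-data assumption extends the pointwise equality beyond the support, but the argument is the same.
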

\begin{proof}
	First, we want to prove $\left(\mathcal{T}_D^{\text{IGM}}Q\right)_{\text{tot}} = r(s, \mathbf{a}) + \gamma \mathbb{E}[V_{\text{tot}}({\bm\context}')]$ which indicates that the empirical Bellman error is zero:
	\begin{align} \label{eq:IGM_Expansion}
	err_D^{\text{IGM}} \equiv \mathop{\min}_{Q\in\mathcal{Q}^{\text{IGM}}}\sum_{({\bm\context}, \mathbf{a})\in{\bm\contextspace}\times\mathbf{A}}p_D(\mathbf{a}|{\bm\context})\left(y^{(t)}({\bm\context},\mathbf{a})- Q_{\text{tot}}({\bm\context}, \mathbf{a})\right)^2=0.
	\end{align}
	Let $\mathbf{a}^{*,(t)}=\left[a_i^{*,(t)}\right]_{i=1}^n = \mathop{\arg\max}_{\mathbf{a}\in \mathbf{A}} y^{(t)}({\bm\context}, \mathbf{a})$.
	%Note that, in DEC-ROMDPs, trajectories sharing the same latent states would obtain the same target values while using reactive function classes.
	We construct $Q_{\text{tot}}({\bm\context}, \mathbf{a}) = y^{(t)}({\bm\context},\mathbf{a})$ and its corresponding local action-value functions $\left[Q_i\right]_{i=1}^n$ satisfying IGM principle:
	\begin{align}
	Q_i(\context_i, a_i)=&~\left\{
	\begin{aligned}
	&1, &\text{ when } a_i = a_i^{*,(t)}, \\
	&0, &\text{ when } a_i \neq  a_i^{*,(t)}.
	\end{aligned}
	\right.
	\end{align}
	To avoid the multiple solutions of $\mathop{\arg\max}$ operator in $\mathbf{a}^{*,(t)}$, we consider the lexicographic order of joint actions as the second priority. Thus, we illustrate the completeness of IGM function class in our problem setting. According to Eq. \eqref{eq:IGM_Expansion} and Lemma 1.5 in RL textbook \citep{agarwal2019reinforcement}, we can prove that $\mathcal{T}_D^{\text{IGM}}$ is a $\gamma$-contraction, $\mathcal{T}_D^{\text{IGM}}$ is a $\gamma$-contraction in DEC-ROMDPs while using reactive function classes.
\end{proof}

\IGMCOMPLETETheorem*
\begin{proof}
	Let $Q^*({\bm\context}, \mathbf{a}) = max_{\bm{\pi}\in\Pi} Q^{\bm{\pi}}({\bm\context}, \mathbf{a})$ where $\Pi$ is the space of all policies. According to Lemma~\ref{fact:igm_gamma_contraction} and Theorem 1.4 in RL textbook \citep{agarwal2019reinforcement}, we have that
	\begin{itemize}
		\item There exists a stationary and deterministic policy $\bm{\pi}$ such that $Q^{\bm{\pi}}_{\text{tot}} = Q_{\text{tot}}^*$.
		\item A vector $Q_{\text{tot}}\in \mathbb{R}^{|\mathcal{S}|\times |\mathcal{A}|^n}$ is equal to $Q_{\text{tot}}^*$ if and only if it satisfies $Q_{\text{tot}}=\left(\mathcal{T}_D^{\text{IGM}}Q\right)_{\text{tot}}$.
		\item $\forall Q_{\text{tot}}'\in\mathcal{Q}^{\text{IGM}}$,
		\begin{align}
		\begin{split}
		\left\|Q_{\text{tot}}^*-\left(\mathcal{T}_D^{\text{IGM}}Q'\right)_{\text{tot}}\right\|_\infty=&\left\|\left(\mathcal{T}_D^{\text{IGM}}Q^*\right)_{\text{tot}}-\left(\mathcal{T}_D^{\text{IGM}}Q'\right)_{\text{tot}}\right\|_\infty \\
		\leq& \gamma\left\|Q_{\text{tot}}^*-Q_{\text{tot}}'\right\|_\infty.
		\end{split}
		\end{align}
	\end{itemize}
	Thus, FQI-IGM will globally converge to optimal value function.
\end{proof}

\section{Experiment Settings and Implementation Details} \label{appendix:experiment_setting}

\subsection{Implementation Details and Evaluation Setting}\label{sec:implmentation}

We adopt the PyMARL \citep{samvelyan2019starcraft} implementation with default hyper-parameters to investigate state-of-the-art multi-agent Q-learning algorithms: VDN \citep{sunehag2018value}, QMIX \citep{rashid2018qmix}, QTRAN \citep{son2019qtran}, and QPLEX \citep{wang2020QPLEX}.
The training time of these algorithms on an NVIDIA RTX 2080TI GPU is about 4 hours to 12 hours, which is depended on the number of agents and the episode length limit of each map. The performance measure of StarCraft II tasks is the percentage of episodes in which RL agents defeat all enemy units within the limited time constraints, called \textit{test win rate}. The dataset providing off-policy exploration is constructed by training a behavior policy of VDN and collecting its 20k, 30k or 50k experienced episodes. The dataset configurations are shown in Table \ref{table:offline}. We investigate five multi-agent Q-learning algorithms over 6 random seeds, which includes 3 different datasets and evaluates two seeds on each dataset. We train 300 epochs to evaluate the learning performance with a given static dataset, of which 32 episodes are trained in each update, and 160k transitions are trained for each epoch totally.
Moreover, the training process of behavior policy is the same as that discussed in PyMARL \citep{samvelyan2019starcraft}, which has collected a total of 2 million timestep data and anneals the hyper-parameter $\epsilon$ of $\epsilon$-greedy exploration strategy linearly from 1.0 to 0.05 over 50k timesteps. The target network will be updated periodically after training every 200 episodes. We call this period of 200 episodes an \textit{Iteration}, which corresponds to an iteration of FQI-LVF (see Definition \ref{def:fqi-lvd}).

\subsection{Two-State Example}
In the two-state example shown in Figure \ref{fig:uniform_divergence_mdp}, due to the GRU-based implementation of the finite-horizon paradigm in the above five deep multi-agent Q-learning algorithms, we assume that two agents starting from state $s_2$ have 100 environmental steps executed by a uniform $\epsilon$-greedy exploration strategy (\textit{i.e.}, $\epsilon = 1$). We use this long-term horizon pattern and uniform $\epsilon$-greedy exploration methods to approximate an infinite-horizon learning paradigm with uniform data distribution. We adopt $\gamma=0.9$ to implement FQI-LVF and deep MARL algorithms. In the FQI-LVF framework, $V_{max}=\frac{1}{1-\gamma}=100$ as shown in Figure \ref{fig:eps_toy_experiment}. Figure \ref{fig:deep_algos_toy_experiment} demonstrates that \textit{Optimal} line is approximately $\sum_{i=0}^{99} \gamma^i = 63.4$ in one episode of 100 timesteps.

\begin{table}
	\centering
	\begin{tabular}{cccc}
	\toprule 
	Map Name & Replay Buffer Size & Behaviour Test Win Rate & Behaviour Policy \\ \hline
	2s3z &  20k episodes & 91.2\% & VDN \\ 
	3s5z &  20k episodes & 77.5\% & VDN \\ 
	2s\_vs\_1sc & 20k episodes & 99.6\% & VDN \\
	3s\_vs\_5z & 20k episodes & 94.2\% & VDN \\
	1c3s5z & 30k episodes & 92.1\% & VDN \\
	3c7z & 30k episodes & 94.4\% & VDN \\
	5m\_vs\_6m & 50k episodes & 61.7\% & VDN \\
	10m\_vs\_11m & 50k episodes & 88.7\% & VDN \\
	3h\_vs\_4z & 50k episodes & 83.1\% & VDN \\ \toprule
\end{tabular}
	\caption{The dataset configurations of offline data collection setting.}\label{table:offline}
\end{table}

\subsection{StarCraft II}\label{sec:starcraft2}
StarCraft II unit micromanagement tasks consider a combat game of two groups of agents, where StarCraft II takes built-in AI to control enemy units, and MARL algorithms can control each ally unit to fight the enemies. Units in two groups can contain different types of soldiers, but these soldiers in the same group should belong to the same race. The action space of each agent includes no-op, move [direction], attack [enemy id], and stop. At each timestep, agents choose to move or attack in continuous maps. MARL agents will get a global reward equal to the amount of damage done to enemy units. Moreover, killing one enemy unit and winning the combat will bring additional bonuses of 10 and 200, respectively. The maps of SMAC challenges in this paper are introduced in Table \ref{table:smac} in the episodes of 100 timesteps. To approximate the Dec-ROMDP setting, we concatenate the global state with the local observations for each agent to handle partial observability.

All networks are trained using GeForce GTX 1080 Ti and Intel(R) Xeon(R) CPU E5-2630 v4 @ 2.20GHz. Each single learning curve can be completed within 36 hours.

\begin{table}
	\centering
	\begin{tabular}{ccc}
	\toprule 
	Map Name & Ally Units & Enemy Units \\ \hline
	2s3z &  2 Stalkers \& 3 Zealots &  2 Stalkers \& 3 Zealots  \\ 
	3s5z &  3 Stalkers \& 5 Zealots &  3 Stalkers \& 5 Zealots  \\ 
	2s\_vs\_1sc & 2 Stalkers & 1 Spine Crawler \\
	3s\_vs\_5z & 3 Stalkers &  5 Zealots  \\ 
	1c3s5z & 1 Colossus, 3 Stalkers \& 5 Zealots &  1 Colossus, 3 Stalkers \& 5 Zealots \\
	3c7z & 3 Colossi \& 7 Zealots &  3 Colossi \& 7 Zealots \\
	5m\_vs\_6m & 5 Marines & 6 Marines \\
	10m\_vs\_11m & 10 Marines & 11 Marines \\
	3h\_vs\_4z & 3 Hydralisks & 4 Zealots \\ \toprule
\end{tabular}
	\caption{SMAC challenges.}\label{table:smac}
\end{table}

%\section{Omitted Tables and Figures in Section \ref{sec:credit_assignment}}

\section{Experiments on a Two-Player Matrix Game}

%\subsection{Omitted Tables in Section \ref{sec:credit_assignment}} \label{appendix:deferred-tables}

\subsection{Value Estimation in Multi-Agent Q-Learning Algorithms}

\begin{table}[H]
	\centering
	\begin{subtable}[b]{0.32\linewidth}
		\centering
		\scalebox{0.8}{\begin{tabular}{|c|c|c|c|}
	\hline 
	\diagbox[dir=SE]{$a_2$}{$a_1$} &  $\mathcal{A}^{(1)}$ & $\mathcal{A}^{(2)}$ & $\mathcal{A}^{(3)}$ \\ \hline 
	$\mathcal{A}^{(1)}$ &  \textbf{8} &  -12 & -12  \\ \hline
	$\mathcal{A}^{(2)}$ &  -12 & 0 & 0 \\ \hline
	$\mathcal{A}^{(3)}$ &  -12 & 0 & 0 \\ \hline 
\end{tabular}}
		\caption{Payoff matrix}
		\label{table:matrix_game}
	\end{subtable}
	\begin{subtable}[b]{0.32\linewidth}
		\centering
		\scalebox{0.8}{\begin{tabular}{|c|c|c|c|}
	\hline 
	\diagbox[dir=SE]{$a_2$}{$a_1$} &  $\mathcal{A}^{(1)}$ & $\mathcal{A}^{(2)}$ & $\mathcal{A}^{(3)}$ \\ \hline 
	$\mathcal{A}^{(1)}$ &  \textbf{7.98} &  -12.09 & -12.10  \\ \hline
	$\mathcal{A}^{(2)}$ &  -12.18 & -0.02 & -0.02 \\ \hline
	$\mathcal{A}^{(3)}$ &  -12.11 & -0.03 & -0.03 \\ \hline 
\end{tabular}}
		\caption{$Q_{\text{tot}}$ of QPLEX}
		\label{table:matrix_game_QPLEX}
	\end{subtable}
	\begin{subtable}[b]{0.32\linewidth}
		\centering
		\scalebox{0.8}{\begin{tabular}{|c|c|c|c|}
	\hline 
	\diagbox[dir=SE]{$a_2$}{$a_1$} &  $\mathcal{A}^{(1)}$ & $\mathcal{A}^{(2)}$ & $\mathcal{A}^{(3)}$ \\ \hline 
	$\mathcal{A}^{(1)}$ &  \textbf{8.00} & -12.00 & -12.00  \\ \hline
	$\mathcal{A}^{(2)}$ &  -12.00 & -0.00 & 0.00 \\ \hline
	$\mathcal{A}^{(3)}$ &  -12.00 & 0.00 & 0.00 \\ \hline 
\end{tabular}}
		\caption{$Q_{\text{tot}}$ of QTRAN}
		\label{table:matrix_game_QTRAN}
	\end{subtable}
	\begin{subtable}[b]{0.32\linewidth}
		\centering
		\scalebox{0.8}{\begin{tabular}{|c|c|c|c|}
		\hline 
		\diagbox[dir=SE]{$a_2$}{$a_1$} &  $\mathcal{A}^{(1)}$ & $\mathcal{A}^{(2)}$ & $\mathcal{A}^{(3)}$ \\ \hline 
		$\mathcal{A}^{(1)}$ &  -7.98 &  -7.98 & -7.98  \\ \hline
		$\mathcal{A}^{(2)}$ &  -7.98 & -0.00 & \textbf{-0.00} \\ \hline
		$\mathcal{A}^{(3)}$ &  -7.98 & -0.00 & -0.00 \\ \hline 
\end{tabular}}
		\caption{$Q_{\text{tot}}$ of QMIX}
		\label{table:matrix_game_QMIX}
	\end{subtable}
	\begin{subtable}[b]{0.32\linewidth}
		\centering
		\scalebox{0.8}{\begin{tabular}{|c|c|c|c|}
	\hline 
	\diagbox[dir=SE]{$a_2$}{$a_1$} &  $\mathcal{A}^{(1)}$ & $\mathcal{A}^{(2)}$ & $\mathcal{A}^{(3)}$ \\ \hline
	$\mathcal{A}^{(1)}$ &  -6.23 &  -4.90 & -4.90  \\ \hline
	$\mathcal{A}^{(2)}$ &  -4.90 & \textbf{-3.57} & -3.57 \\ \hline
	$\mathcal{A}^{(3)}$ &  -4.90 & -3.57 & -3.57 \\ \hline 
\end{tabular}}
		\caption{$Q_{\text{tot}}$ of VDN}
		\label{table:matrix_game_VDN}
	\end{subtable}
	\begin{subtable}[b]{0.32\linewidth}
		\centering
		\scalebox{0.8}{\begin{tabular}{|c|c|c|c|}
	\hline 
	\diagbox[dir=SE]{$a_2$}{$a_1$} &  $\mathcal{A}^{(1)}$ & $\mathcal{A}^{(2)}$ & $\mathcal{A}^{(3)}$ \\ \hline
	$\mathcal{A}^{(1)}$ &  -6.22 &  -4.89 & -4.89  \\ \hline
	$\mathcal{A}^{(2)}$ &  -4.89 & \textbf{-3.56} & -3.56 \\ \hline
	$\mathcal{A}^{(3)}$ &  -4.89 & -3.56 & -3.56 \\ \hline 
\end{tabular}}
		\caption{$Q_{\text{tot}}$ of FQI-LVF}
		\label{table:matrix_game_FQI-LVD}
	\end{subtable}
	\caption{(a) Payoff matrix of the one-step game. Boldface means the optimal joint action selection from payoff matrix. (b-f) Joint action-value functions $Q_{\text{tot}}$ estimated by a suite of algorithms. Boldface means the greedy joint action selection from $Q_{\text{tot}}$.}
	\label{table:overall-matrix}
\end{table}

Table \ref{table:overall-matrix} reveals the following observations regarding the representational capacity of these value factorization structures:
\begin{itemize}
	\item QPLEX and QTRAN, two algorithms using IGM value factorization structure, can almost perfectly fit the payoff matrix.
	\item QMIX cannot find the ground truth best actions, since its monotonic value factorization structure cannot express the given payoff matrix.
	\item The value estimation generated by tabular FQI-LVF matches its deep-learning-based implementation (i.e., VDN).
\end{itemize}

\subsection{The Learning Curve of Table \ref{table:matrix_game_VDN}}
\label{appendix:table-learning-curve}

\begin{figure}[H]
	\centering
	\includegraphics[width=0.33\linewidth]{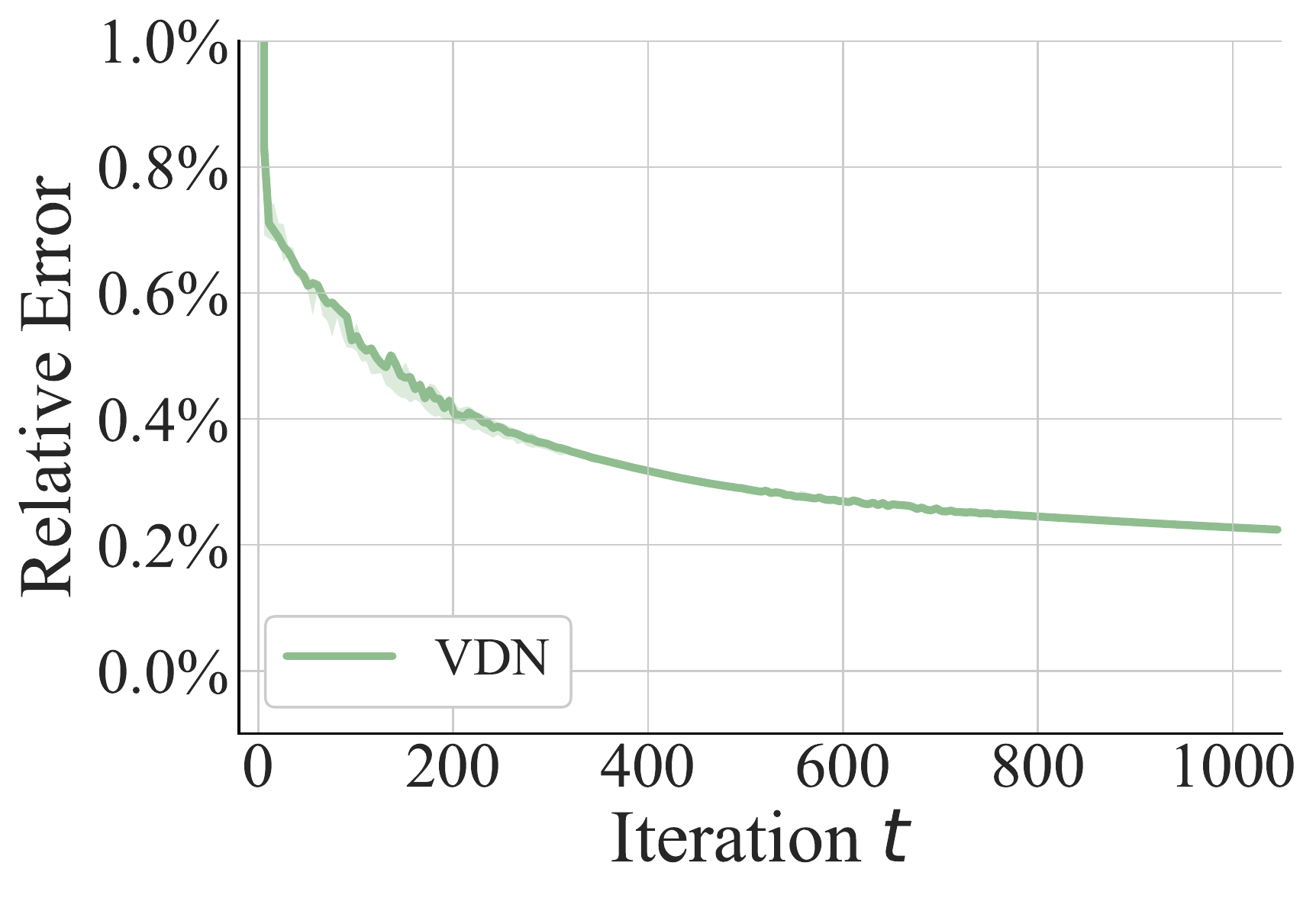}
	\caption{The learning curve of Table \ref{table:matrix_game_VDN}. Every iteration contains 200 gradient steps. The relative error is defined as $\|Q_{\text{tot}}^{\text{FQI-LVF}} - Q_{\text{tot}}^{\text{VDN}}\|_\infty/\|Q_{\text{tot}}^{\text{FQI-LVF}}\|_\infty$.}
\end{figure}

\newpage

\section{Ablation Studies on Network Capacity}

\subsection{Ablation Studies in Matrix Game}

\begin{table}[H]
	\centering
	\begin{subtable}[b]{0.32\linewidth}
		\centering
		\scalebox{0.8}{\begin{tabular}{|c|c|c|c|}
	\hline 
	\diagbox[dir=SE]{$a_2$}{$a_1$} &  $\mathcal{A}^{(1)}$ & $\mathcal{A}^{(2)}$ & $\mathcal{A}^{(3)}$ \\ \hline 
	$\mathcal{A}^{(1)}$ &  \textbf{8} &  -12 & -12  \\ \hline
	$\mathcal{A}^{(2)}$ &  -12 & 0 & 0 \\ \hline
	$\mathcal{A}^{(3)}$ &  -12 & 0 & 0 \\ \hline 
\end{tabular}}
		\caption{Payoff of matrix game}
	\end{subtable}
	\begin{subtable}[b]{0.32\linewidth}
		\centering
		
		\caption{$Q_{\text{tot}}$ of QPLEX}
	\end{subtable}
	\begin{subtable}[b]{0.32\linewidth}
		\centering
		
		\caption{$Q_{\text{tot}}$ of QTRAN}
	\end{subtable}
	
	\begin{subtable}[b]{0.32\linewidth}
		\centering
		\scalebox{0.8}{\begin{tabular}{|c|c|c|c|}
	\hline 
	\diagbox[dir=SE]{$a_2$}{$a_1$} &  $\mathcal{A}^{(1)}$ & $\mathcal{A}^{(2)}$ & $\mathcal{A}^{(3)}$ \\ \hline
	$\mathcal{A}^{(1)}$ &  -6.24 &  -4.90 & -4.90  \\ \hline
	$\mathcal{A}^{(2)}$ &  -4.90 & \textbf{-3.57} & -3.57 \\ \hline
	$\mathcal{A}^{(3)}$ &  -4.90 & -3.57 & -3.57 \\ \hline 
\end{tabular}}
		\caption{$Q_{\text{tot}}$ of Large-VDN}
		\label{table:matrix_game_Large_VDN}
	\end{subtable}
	\begin{subtable}[b]{0.32\linewidth}
		\centering
		\scalebox{0.8}{\begin{tabular}{|c|c|c|c|}
		\hline 
		\diagbox[dir=SE]{$a_2$}{$a_1$} &  $\mathcal{A}^{(1)}$ & $\mathcal{A}^{(2)}$ & $\mathcal{A}^{(3)}$ \\ \hline 
		$\mathcal{A}^{(1)}$ &  -8.03 &  -8.03 & -8.03  \\ \hline
		$\mathcal{A}^{(2)}$ &  -8.03 & -0.01 & \textbf{-0.01} \\ \hline
		$\mathcal{A}^{(3)}$ &  -8.03 & -0.01 & -0.01 \\ \hline 
\end{tabular}}
		\caption{$Q_{\text{tot}}$ of Large-QMIX}
		\label{table:matrix_game_Large_QMIX}
	\end{subtable}
	\caption{(a-c) The ground-truth payoff matrix and the joint action-value functions of QPLEX and QTRAN. (d-e) The joint action-value functions $Q_{\text{tot}}$ of Large-VDN and Large-QMIX. Boldface means the greedy joint action selection from $Q_{\text{tot}}$.}
	\label{table:ablation-matrix--large-network}
\end{table}

To address the concern that QPLEX naturally uses more hidden parameters than VDN and QMIX, which may also improve its representational capacity. To demonstrate that the performance gap between QPLEX and other methods does not come from the difference in term of the number of parameters, we increase the number of neurons in VDN and QMIX so that they have the comparable number of parameters as QPLEX. Formally, Large-VDN and Large-QMIX have the similar number of parameters as QPLEX. The experiment results are presented in Table \ref{table:ablation-matrix--large-network}, both the ``Large-'' versions of VDN and QMIX cannot represent an accurate value function in this matrix game. Increasing the number of parameters cannot address the limitations of VDN and QMIX on representational capacity.

\subsection{Ablation Studies in StarCraft II Benchmark Tasks}

\begin{figure}[h]
	\centering
	\begin{subfigure}[c]{0.32\linewidth}
		\centering
		\includegraphics[width=0.9\linewidth]{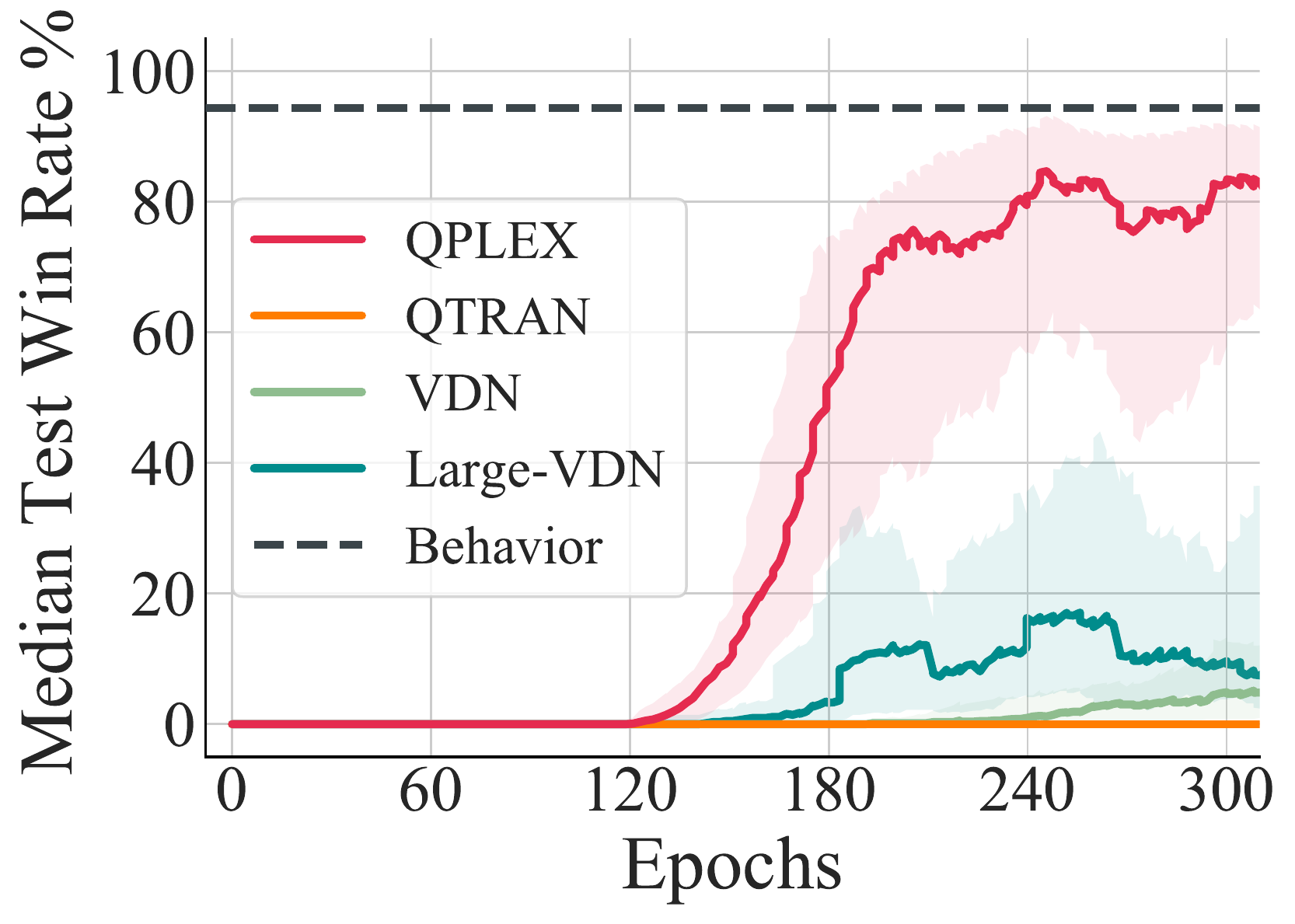}
		\caption{3s\_vs\_5z}
	\end{subfigure}
	\begin{subfigure}[c]{0.32\linewidth}
		\centering
		\includegraphics[width=0.98\linewidth]{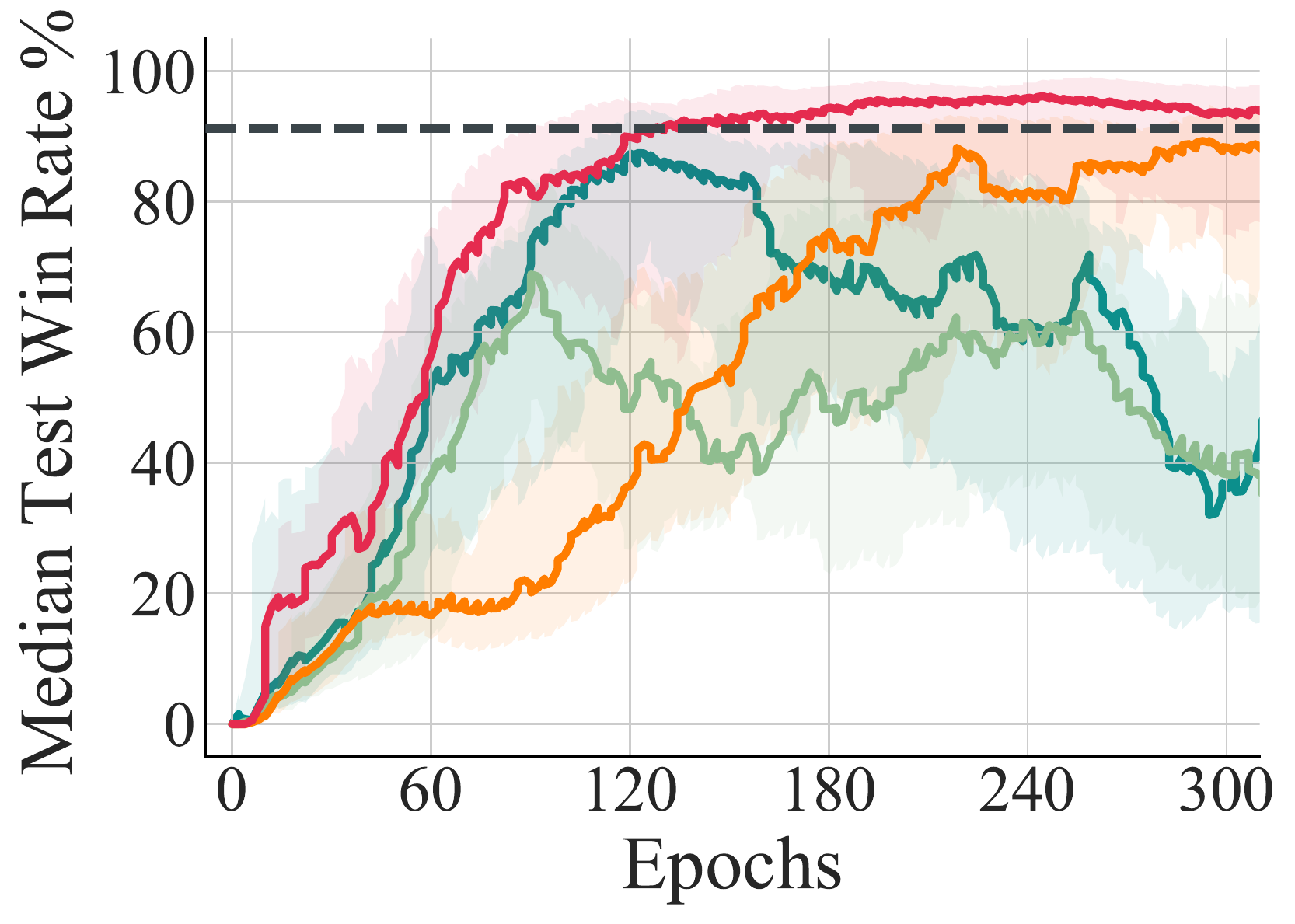}
		\caption{2s3z}
	\end{subfigure}
	\begin{subfigure}[c]{0.32\linewidth}
		\centering
		\includegraphics[width=0.9\linewidth]{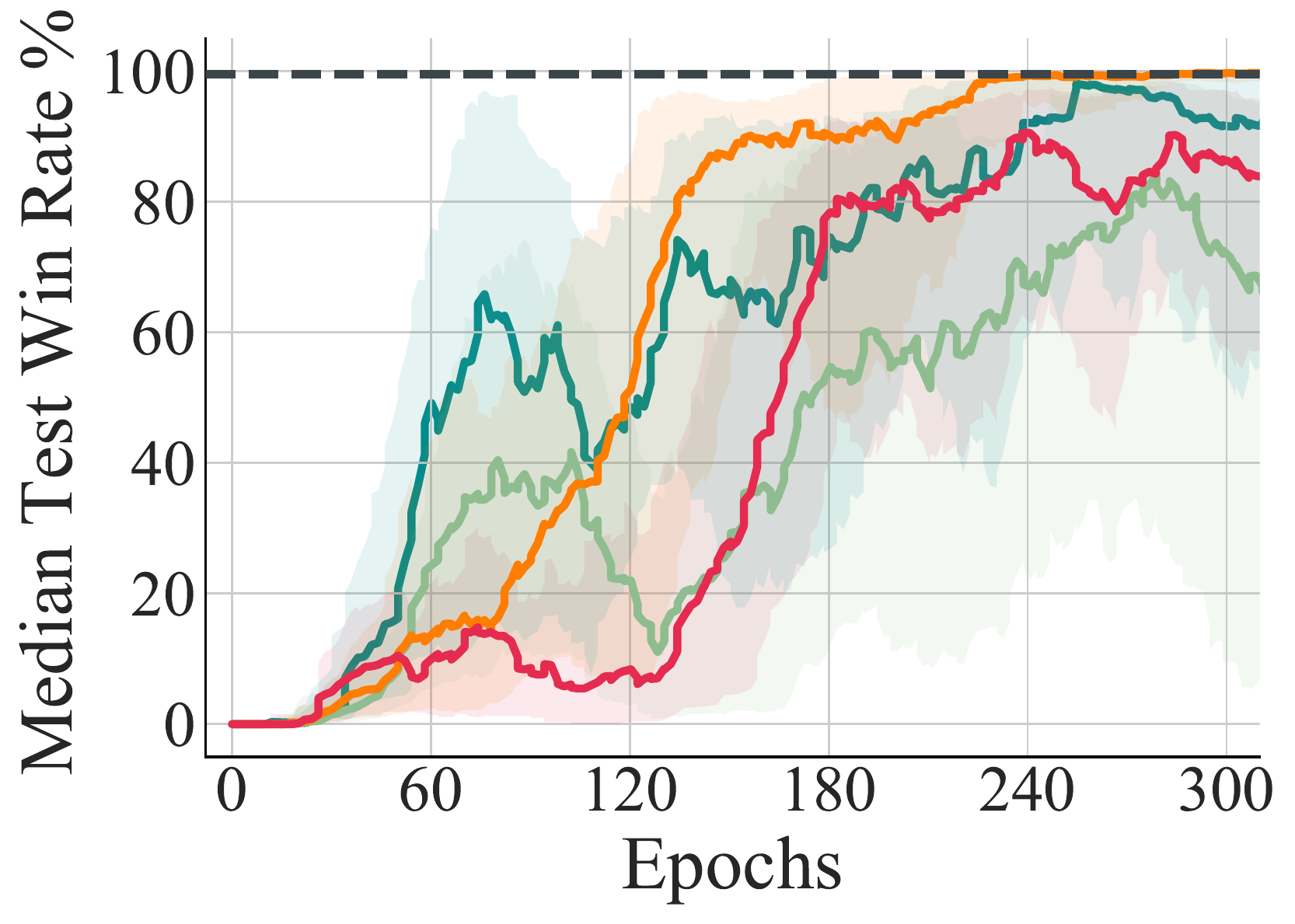}
		\caption{2s\_vs\_1sc}
	\end{subfigure}
	\newline
	\begin{subfigure}[c]{0.32\linewidth}
		\centering
		\includegraphics[width=0.9\linewidth]{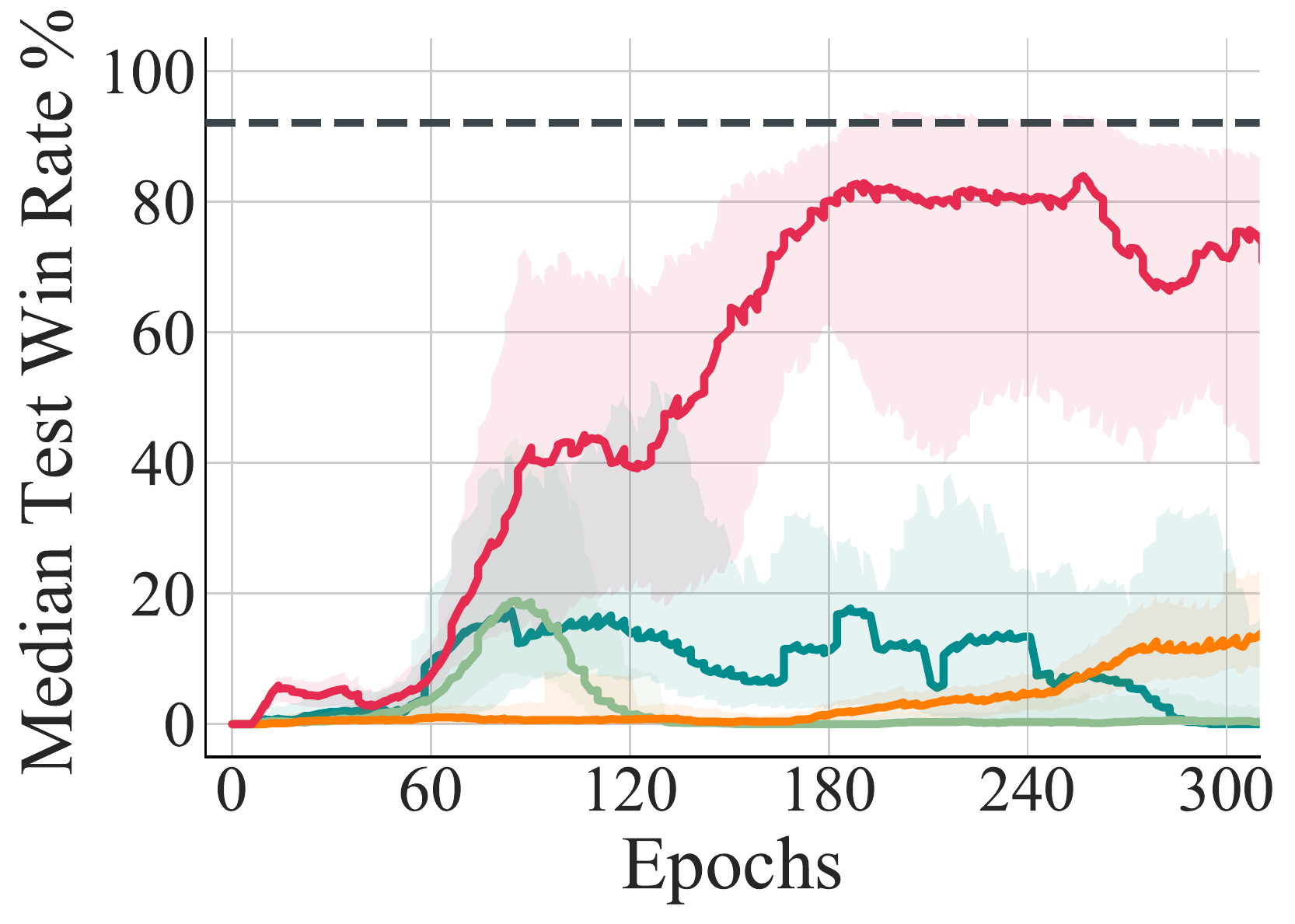}
		\caption{1c3s5z}
	\end{subfigure}
	\begin{subfigure}[c]{0.32\linewidth}
		\centering
		\includegraphics[width=0.9\linewidth]{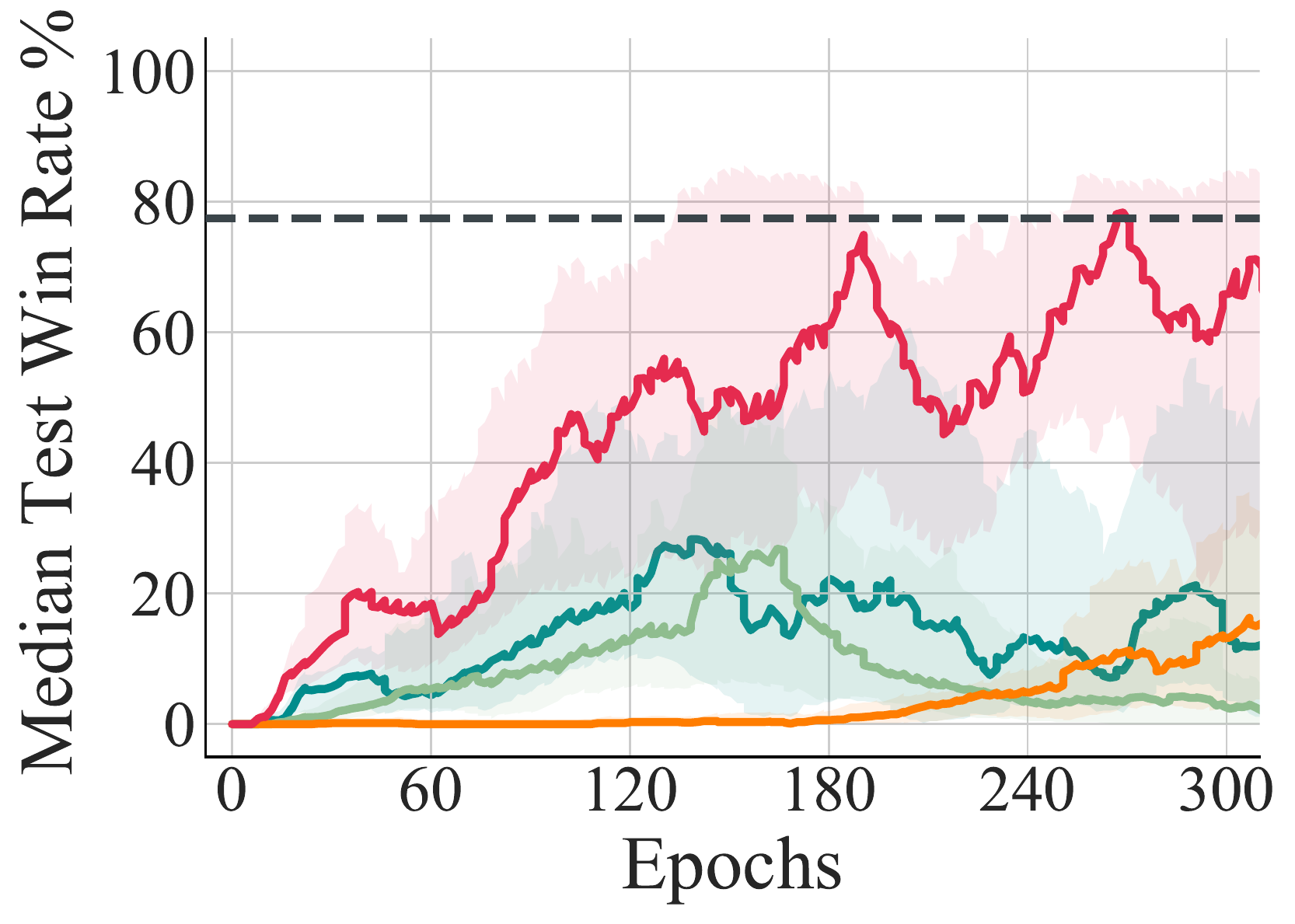}
		\caption{3s5z}
	\end{subfigure}
	\begin{subfigure}[c]{0.32\linewidth}
		\centering
		\includegraphics[width=0.9\linewidth]{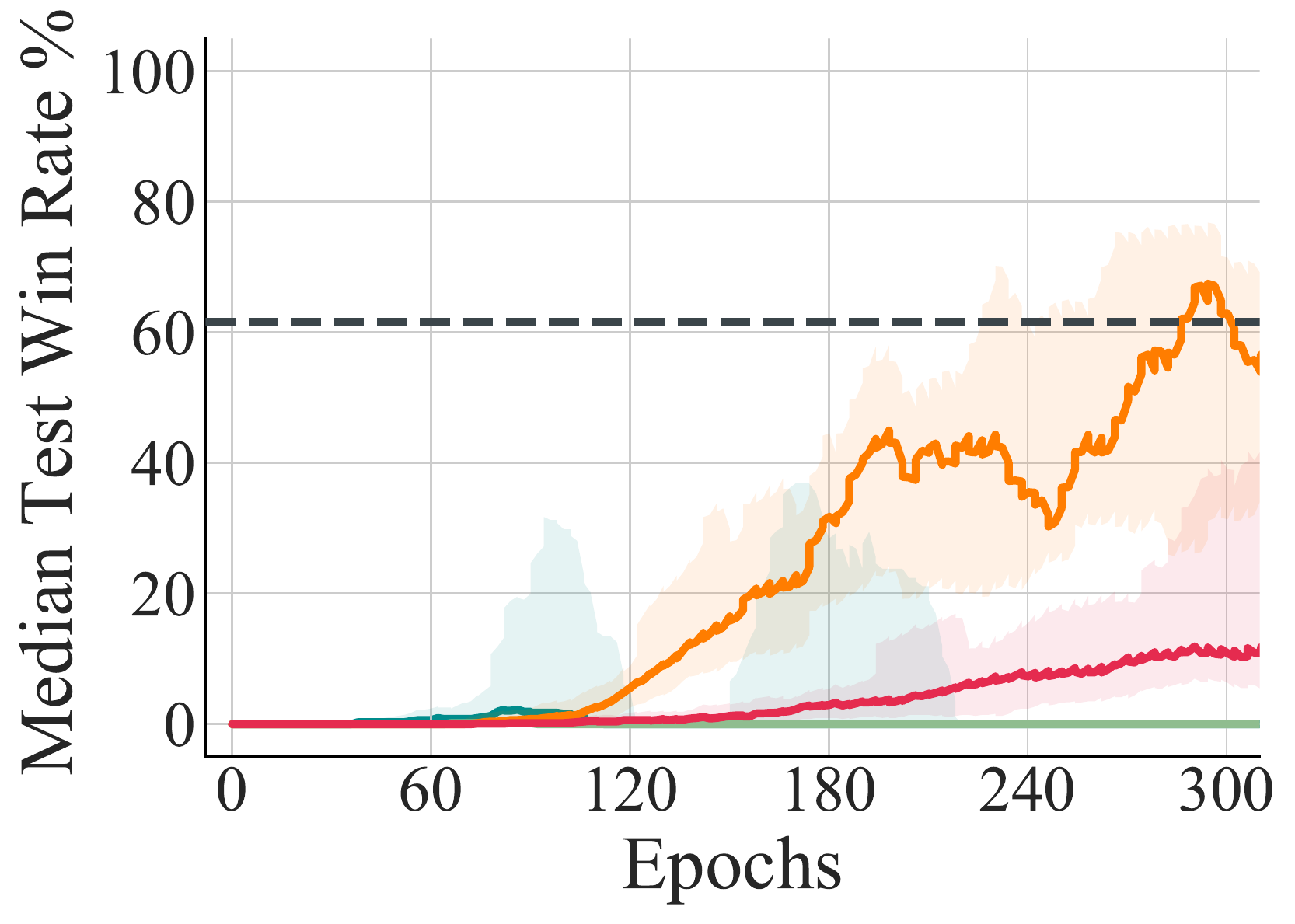}
		\caption{5m\_vs\_6m}
	\end{subfigure}
	\caption{Evaluating the performance of Large-VDN with a given static dataset.}
	\vspace{-0.1in}
	\label{fig:ablation-network-VDN}
\end{figure}

\begin{figure}[h]
	\centering
	\begin{subfigure}[c]{0.32\linewidth}
		\centering
		\includegraphics[width=0.9\linewidth]{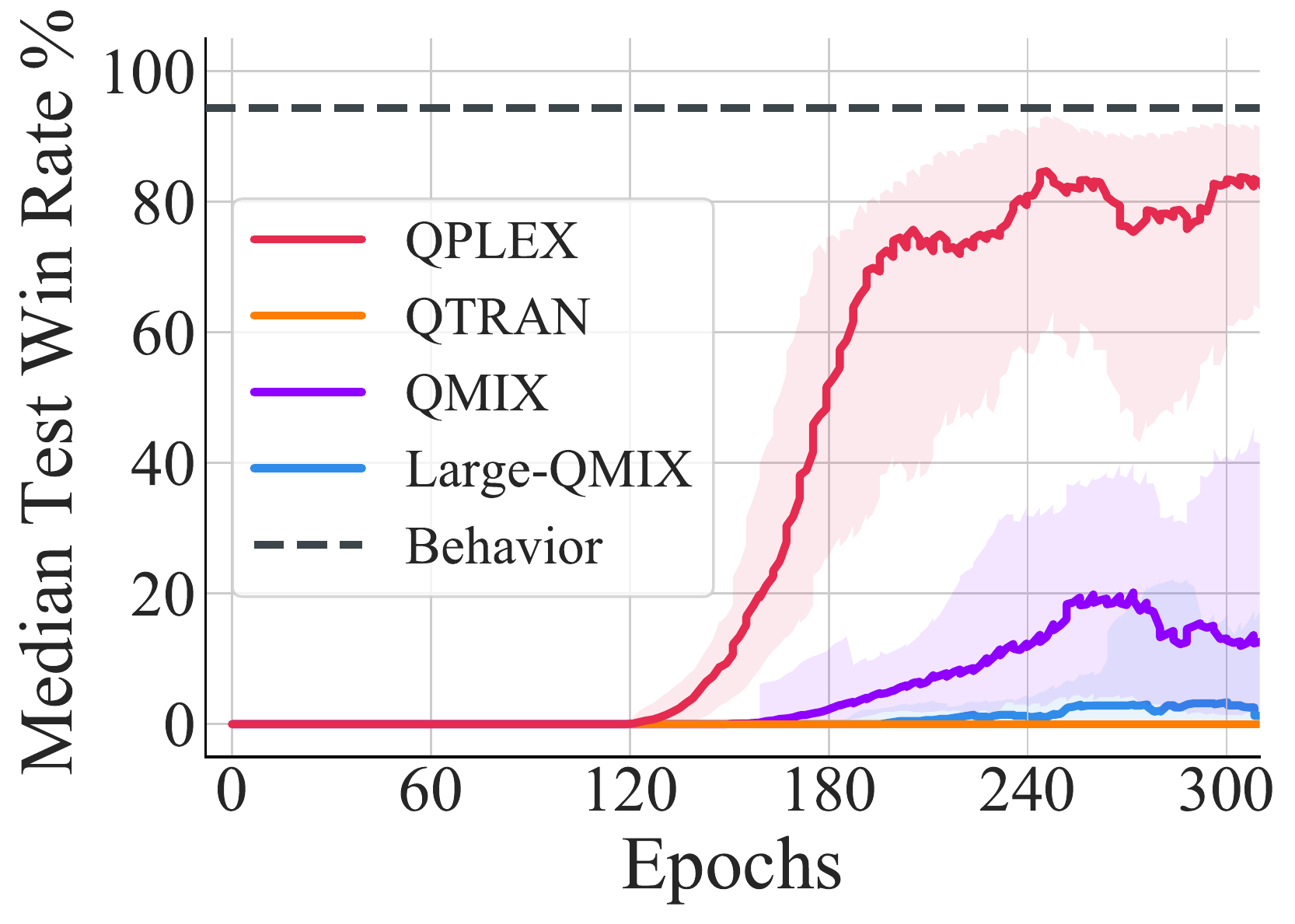}
		\caption{3s\_vs\_5z}
	\end{subfigure}
	\begin{subfigure}[c]{0.32\linewidth}
		\centering
		\includegraphics[width=0.98\linewidth]{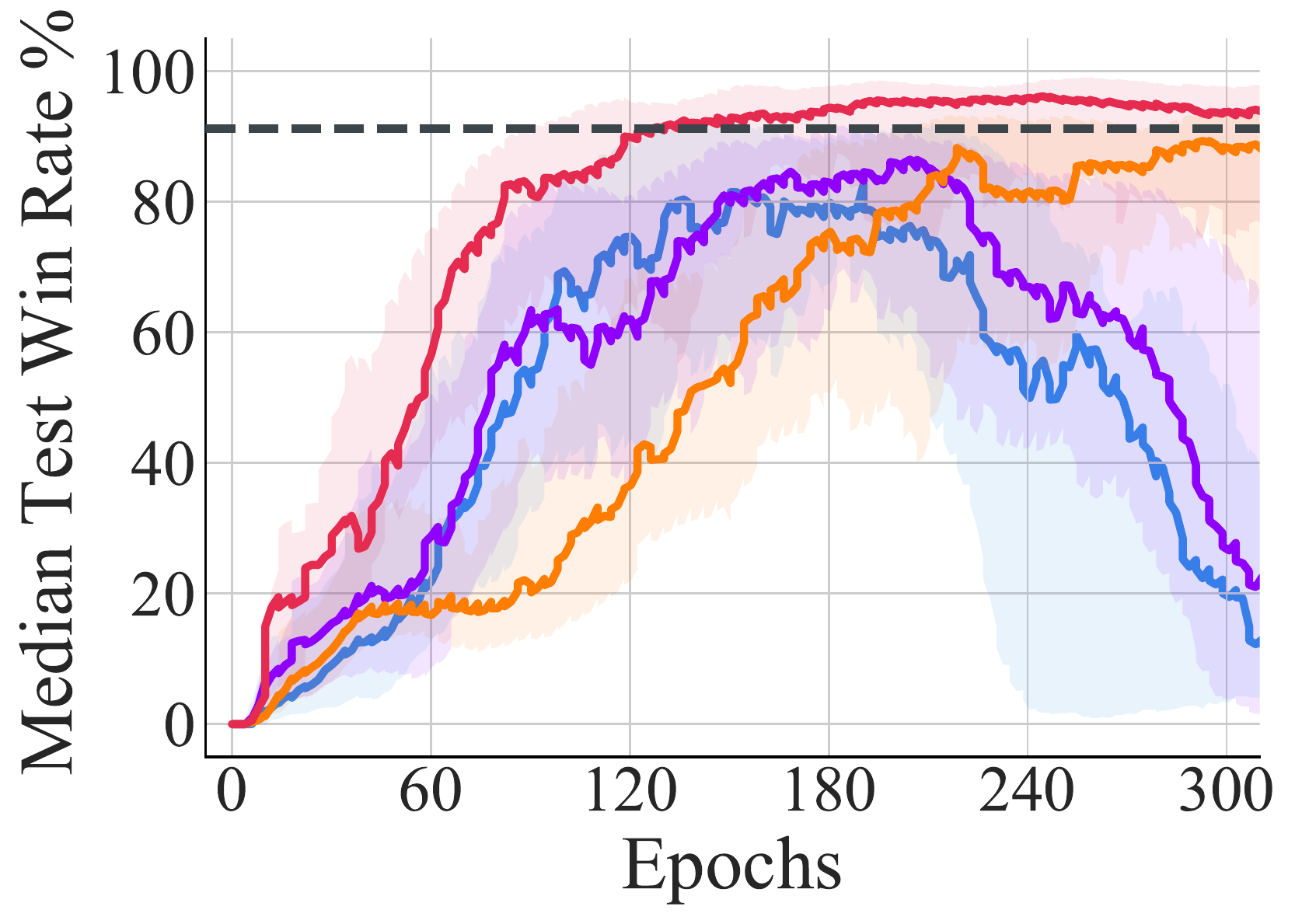}
		\caption{2s3z}
	\end{subfigure}
	\begin{subfigure}[c]{0.32\linewidth}
		\centering
		\includegraphics[width=0.9\linewidth]{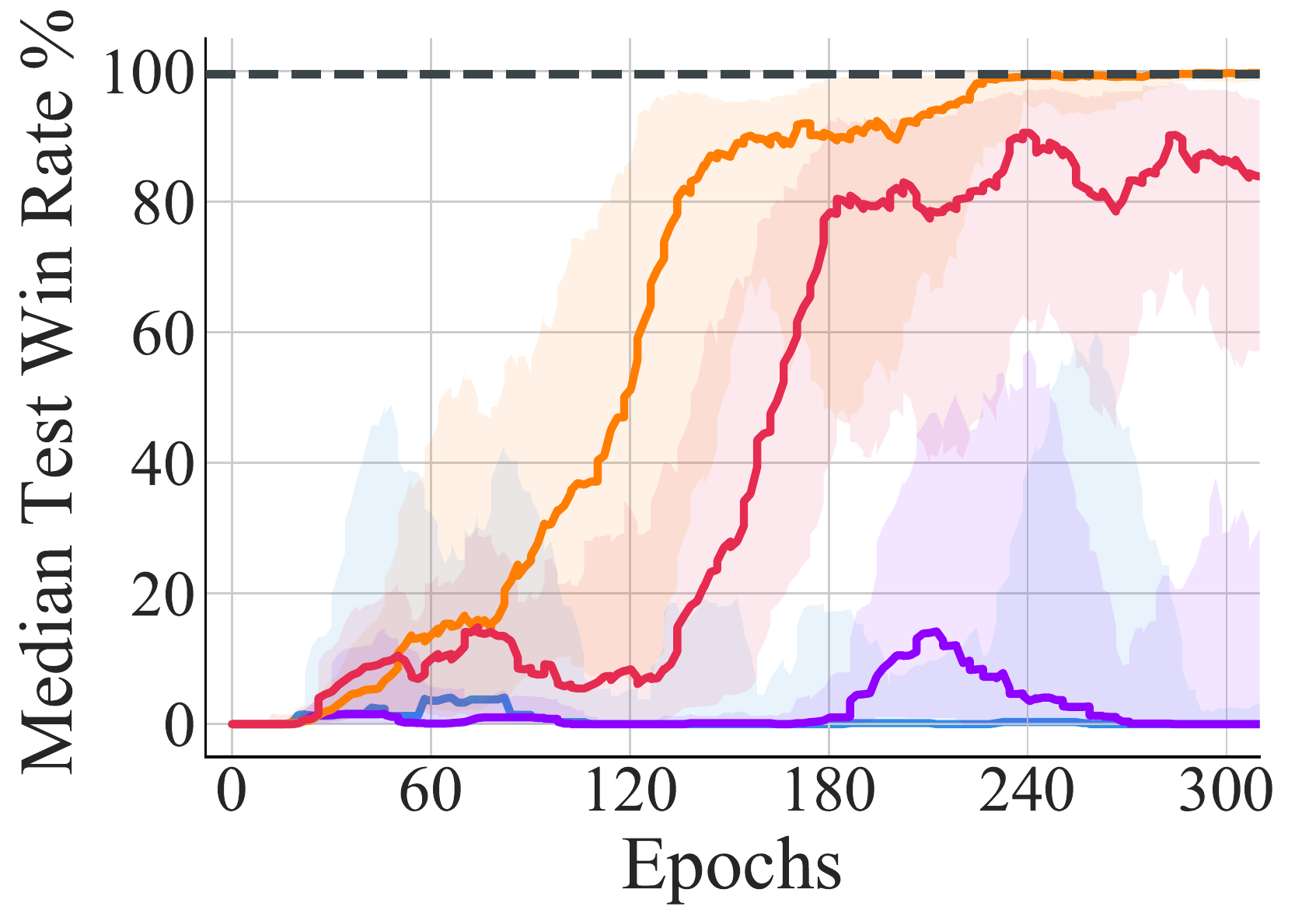}
		\caption{2s\_vs\_1sc}
	\end{subfigure}
	\newline
	\begin{subfigure}[c]{0.32\linewidth}
		\centering
		\includegraphics[width=0.9\linewidth]{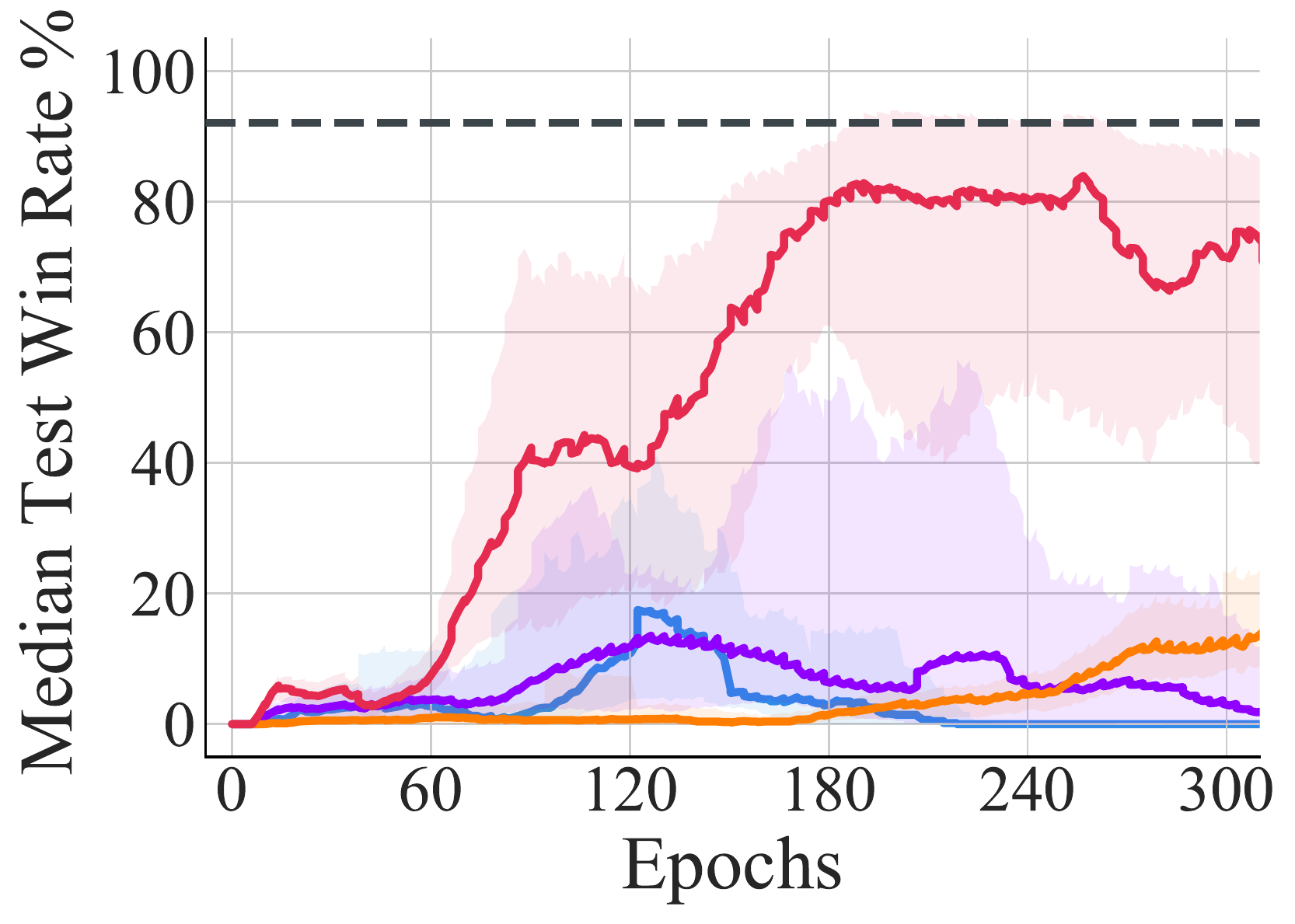}
		\caption{1c3s5z}
	\end{subfigure}
	\begin{subfigure}[c]{0.32\linewidth}
		\centering
		\includegraphics[width=0.9\linewidth]{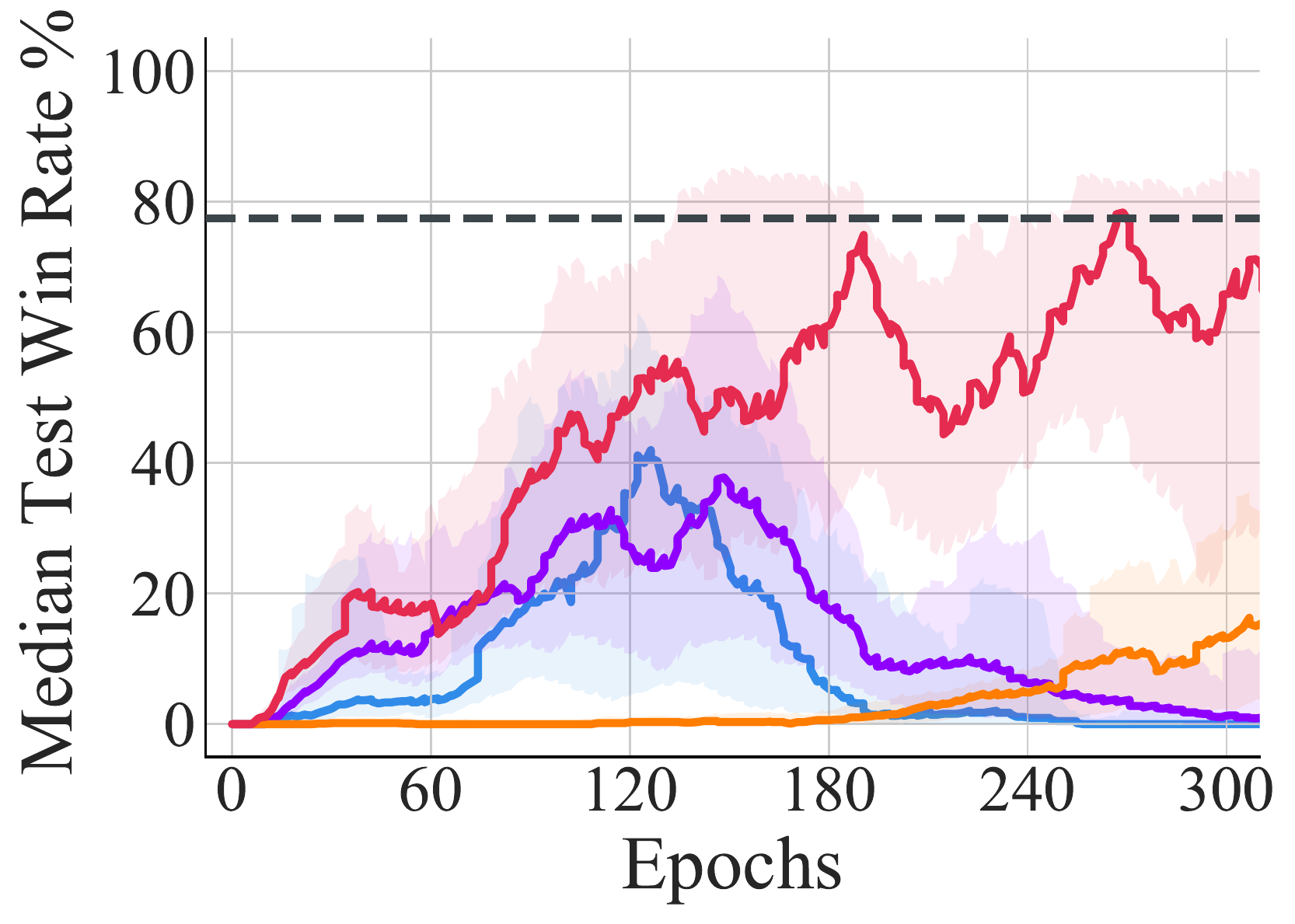}
		\caption{3s5z}
	\end{subfigure}
	\begin{subfigure}[c]{0.32\linewidth}
		\centering
		\includegraphics[width=0.9\linewidth]{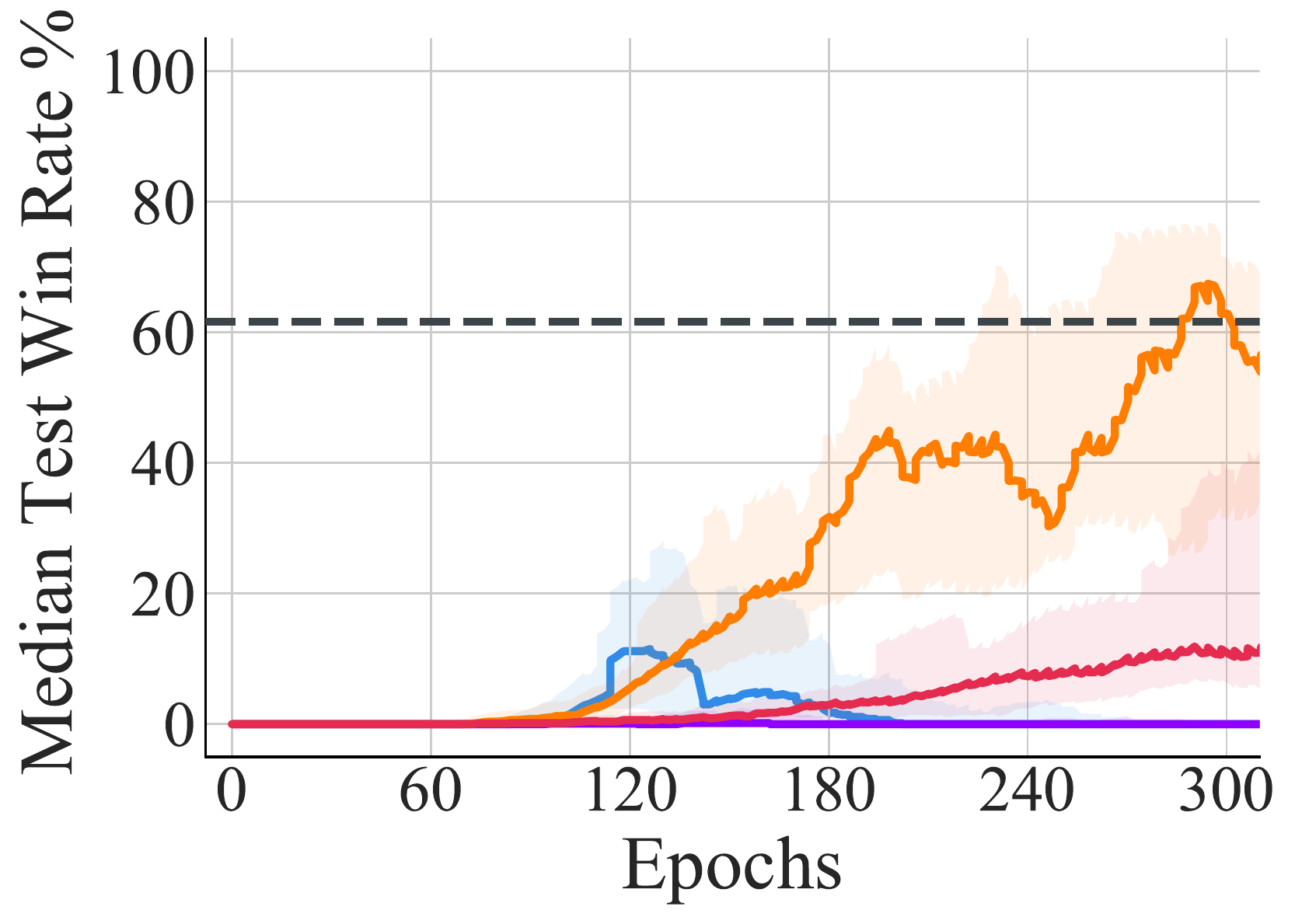}
		\caption{5m\_vs\_6m}
	\end{subfigure}
	\caption{Evaluating the performance of Large-QMIX with a given static dataset.}
	\vspace{-0.1in}
	\label{fig:ablation-network-QMIX}
\end{figure}

In addition to the ablation study in the matrix game, Figure \ref{fig:ablation-network-VDN} and Figure \ref{fig:ablation-network-QMIX} present the ablation studies in StarCraft II benchmark tasks with offline data collection. In comparison to the standard versions of VDN and QMIX, we introduce Large-VDN and Large-QMIX, which have a similar number of parameters as QPLEX. As shown in Figure \ref{fig:ablation-network-VDN}, increasing parameters can benefit VDN in some easy maps such as 2s3z and 2s\_vs\_1sc, but it cannot provide fundamental improvement in harder tasks. As shown in Figure \ref{fig:ablation-network-QMIX}, the effects of increasing parameters are rather weak for QMIX. These experiments demonstrate that increasing the number of parameters cannot address the limitations of VDN and QMIX on representational capacity.

\section{Additional Experiments on Two-State Example}

\paragraph{Remark.} 
%Assumption \ref{assumption:dataset} does not require the factorizability of the underlying transition and reward functions or the decomposability of the joint action-value function. On the contrary, all our theorems and examples focus on the situations where the joint Q-function cannot be perfectly factorized. In addition, Assumption \ref{assumption:dataset} can be naturally satisfied when the dataset is collected by decentralized execution of agents' policies, e.g., an on-policy dataset collected using $\epsilon$-greedy exploration policies or an offline dataset collected by given decentralized policies of agents. The theoretical implications derived in this paper are applicable whenever such factorizable data collection procedures are carried out.
Assumption \ref{assumption:dataset} assumes that the dataset $D$ is collected by a decentrailized and exploratory policy ${\bm\pi}^D$. All algorithms discussed in the paper, including VDN, QMIX, QTRAN, and QPLEX, learn decentralized policies, which are executed in a decentralized manner. To investigate the dependency of our theoretical implications on Assumption \ref{assumption:dataset}, we provide an experiment to evaluate the performance of deep multi-agent Q-learning algorithms on the general datasets. Figure \ref{fig:mmdp-unfactorizable-dataset} present the learning curves of VDN, QMIX, QPLEX, and QTRAN in the example shown in Figure \ref{fig:uniform_divergence_mdp} with a specific dataset $D$ constructed by a parameter $\eta$ as follows:
\begin{align*}
\forall s\in\mathcal{S},\quad p_D(\mathcal{A}^{(1)},\mathcal{A}^{(2)}\mid s)=\left(
\begin{array}{cc}
0.5\eta+0.25(1-\eta) & 0.25(1-\eta) \\
0.25(1-\eta) & 0.5\eta+0.25(1-\eta)
\end{array}
\right).
\end{align*}

As shown in Figure \ref{fig:mmdp-unfactorizable-dataset}, the choice of parameter $\eta$ has no impacts on the performance of QPLEX and QTRAN, which matches the fact that Theorem \ref{thm:igm_complete} does not rely on the assumption of the decentralized data collection. As the extension of Proposition \ref{prop:uniform_divergence}, VDN and QMIX empirically suffer from unbounded divergence when the dataset is not collected by a decentrailized policy. The only exception is the case of $\eta=1$, in which the dataset only contains two kinds of joint actions. In this case, the given example degenerates to a single-agent MDP because agents only perform the same actions in the dataset. As a result, VDN and QMIX would not diverge in this special situation.

\clearpage

\begin{figure}[h]
	\centering
	\begin{subfigure}[c]{0.4\linewidth}
		\centering
		\includegraphics[width=0.9\linewidth]{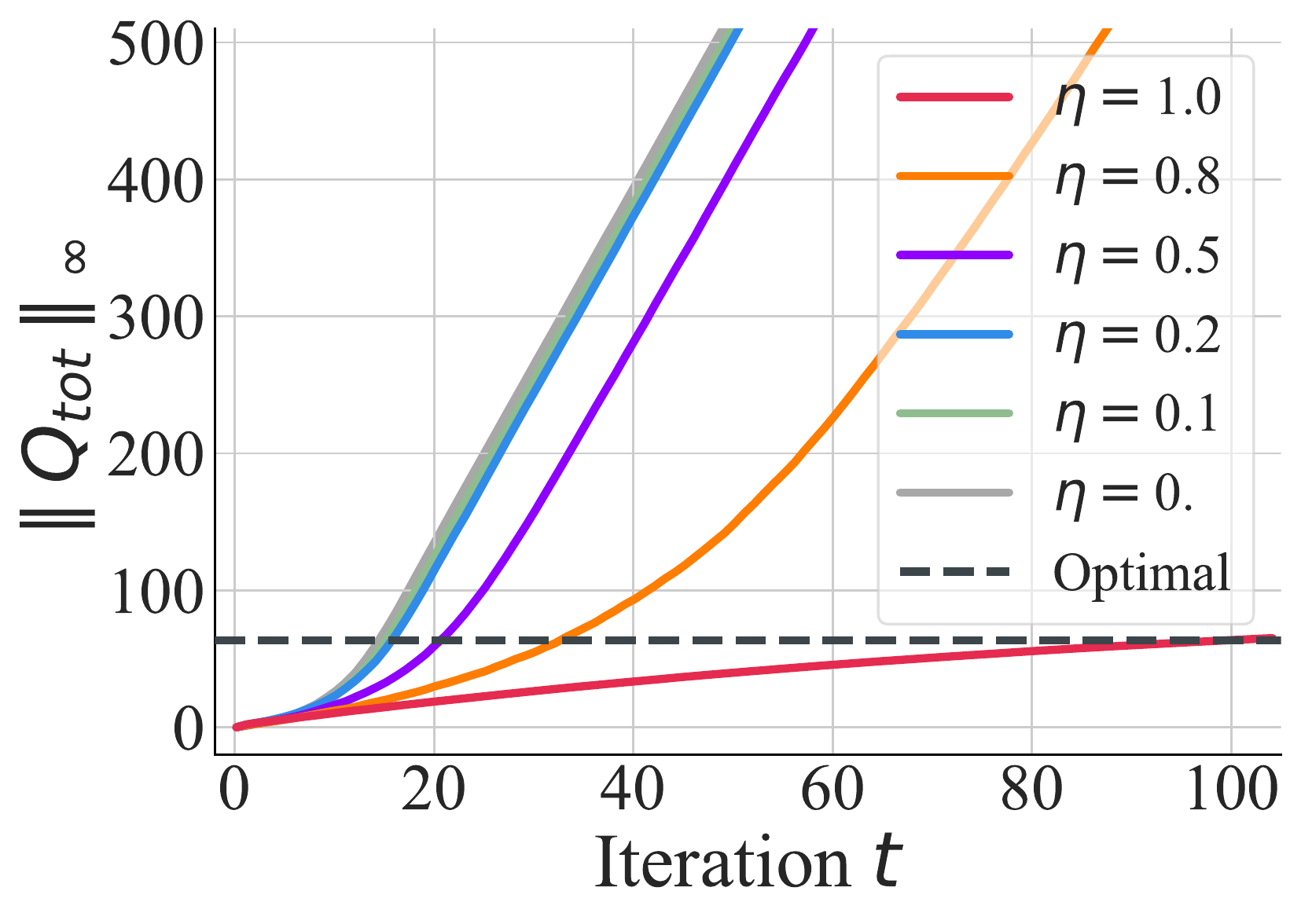}
		\caption{VDN}
	\end{subfigure}
	\begin{subfigure}[c]{0.4\linewidth}
		\centering
		\includegraphics[width=0.9\linewidth]{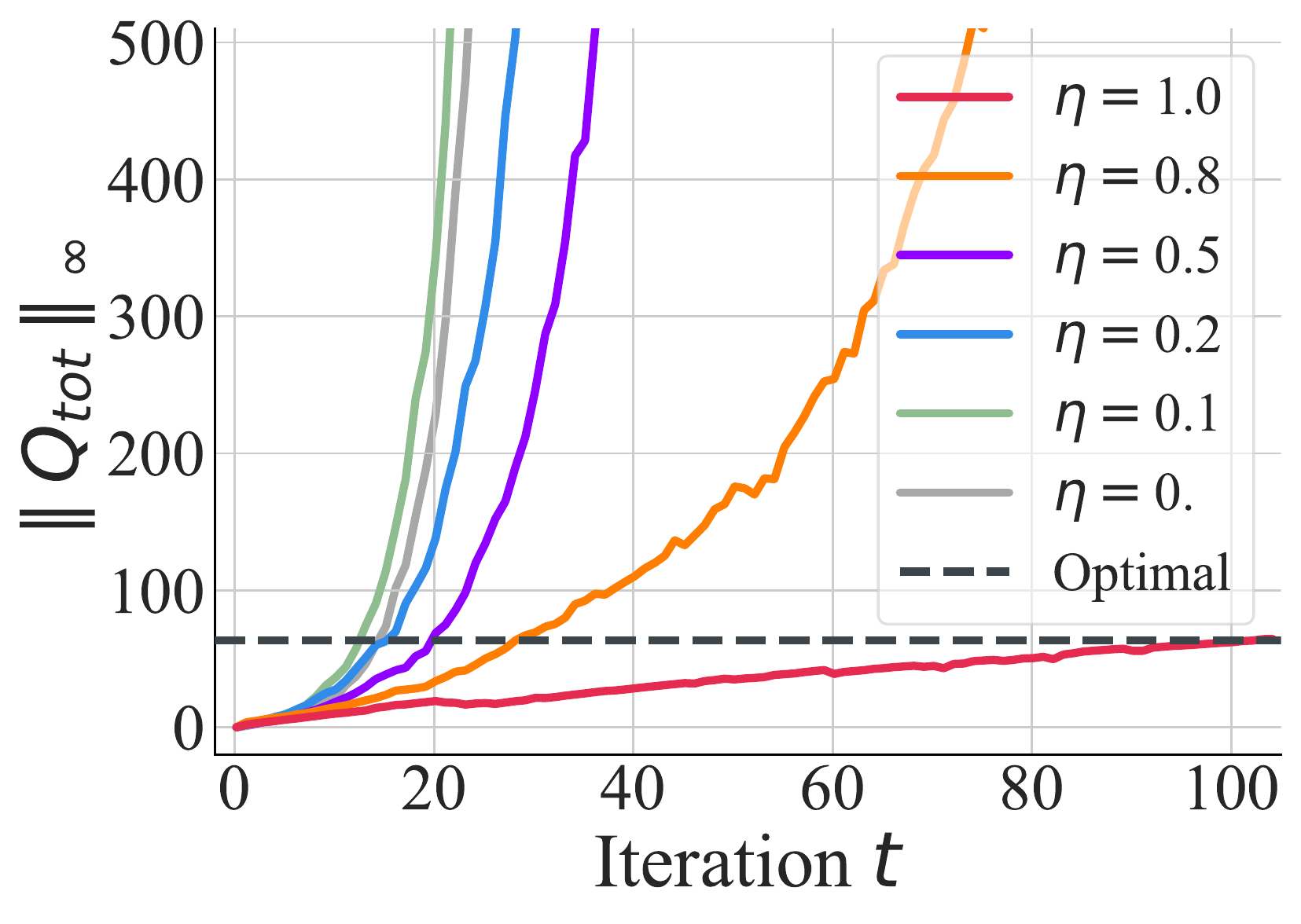}
		\caption{QMIX}
	\end{subfigure}
	\begin{subfigure}[c]{0.4\linewidth}
		\centering
		\includegraphics[width=0.9\linewidth]{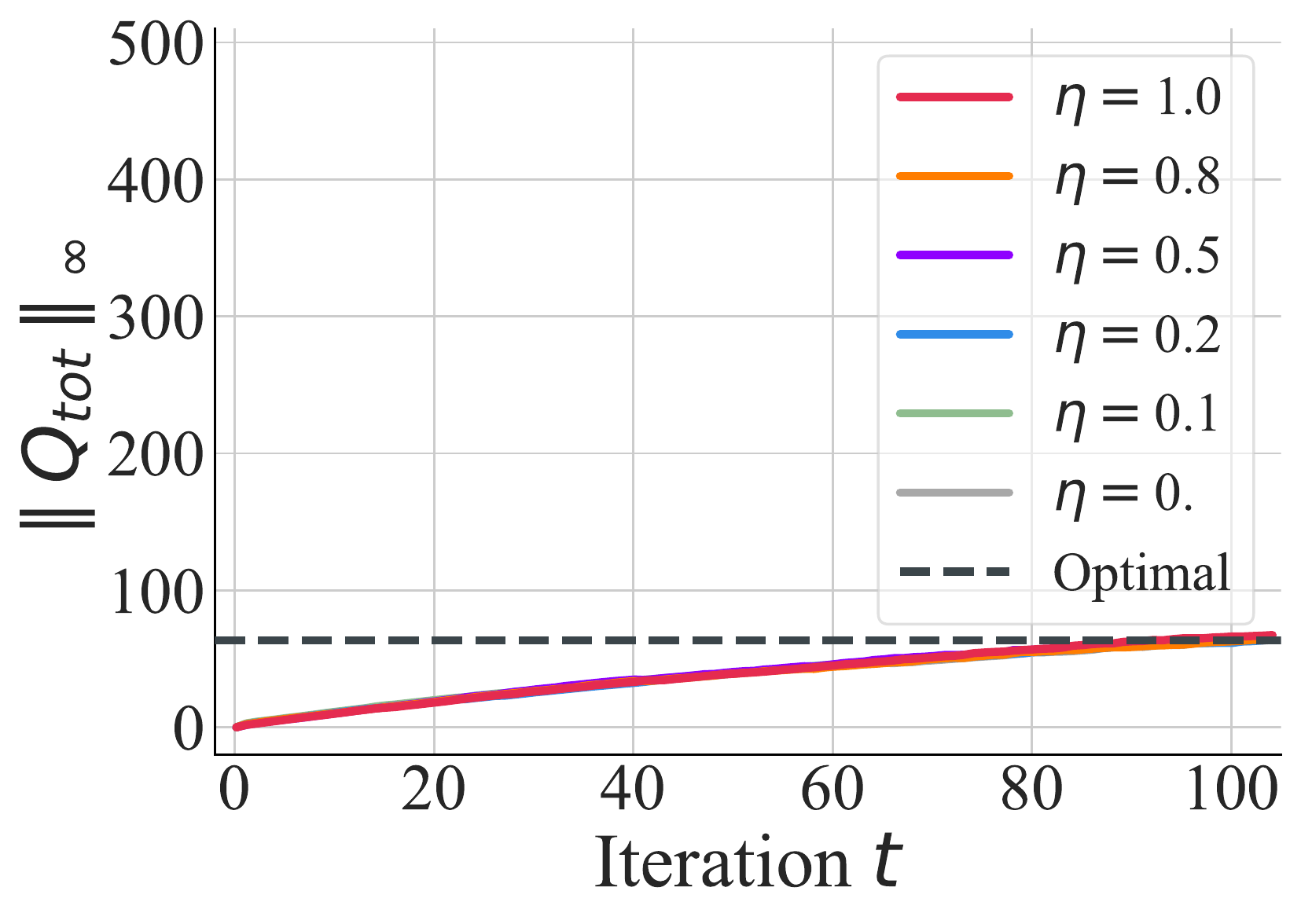}
		\caption{QPLEX}
	\end{subfigure}
	\begin{subfigure}[c]{0.4\linewidth}
		\centering
		\includegraphics[width=0.9\linewidth]{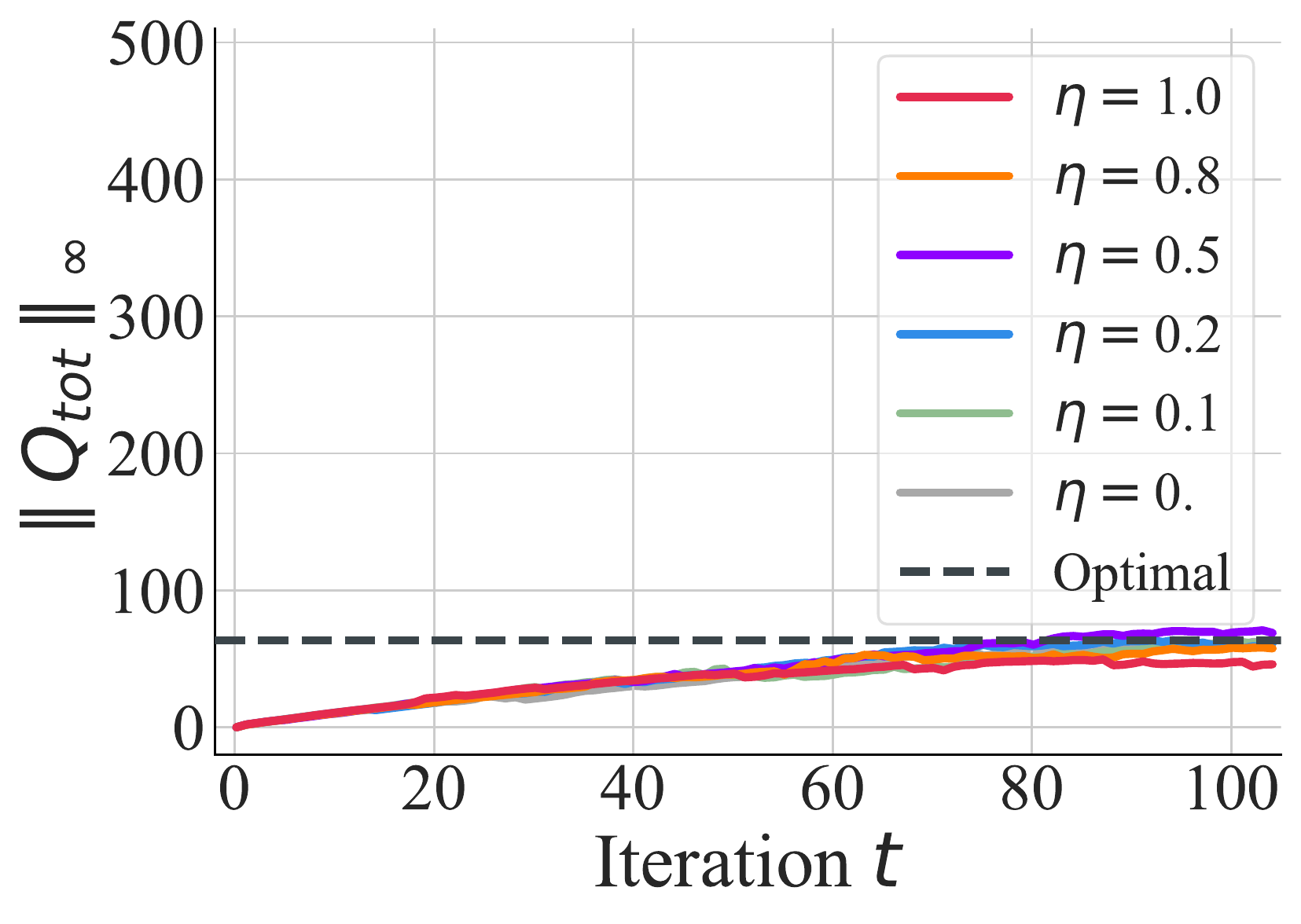}
		\caption{QTRAN}
	\end{subfigure}
	\caption{The learning curves of $\|Q_{\text{tot}}\|_\infty$ while running several deep multi-agent Q-learning algorithms with a new dataset.}
	\label{fig:mmdp-unfactorizable-dataset}
\end{figure}

\section{Diagnosing the Divergence of QMIX}

As shown in Figure \ref{fig:mmdp-unfactorizable-dataset}, QMIX also suffers from unbounded divergence in this MMDP. To investigate this phenomenon, we conduct an ablation study and find that the divergence is caused by choice of the activation function. The default implementation of QMIX uses \texttt{Elu} as activation, which contains a linear component on the positive side. We hypothesize that the behavior of this linear piece may resemble that of linear value factorization which leads to unbounded divergence. As shown in Figure \ref{fig:qmix-activation}, using \texttt{Tanh} instead of \texttt{Elu} can prevent the divergence of QMIX in this example task.

\begin{figure}[h]
	\centering
	\includegraphics[width=0.4\linewidth]{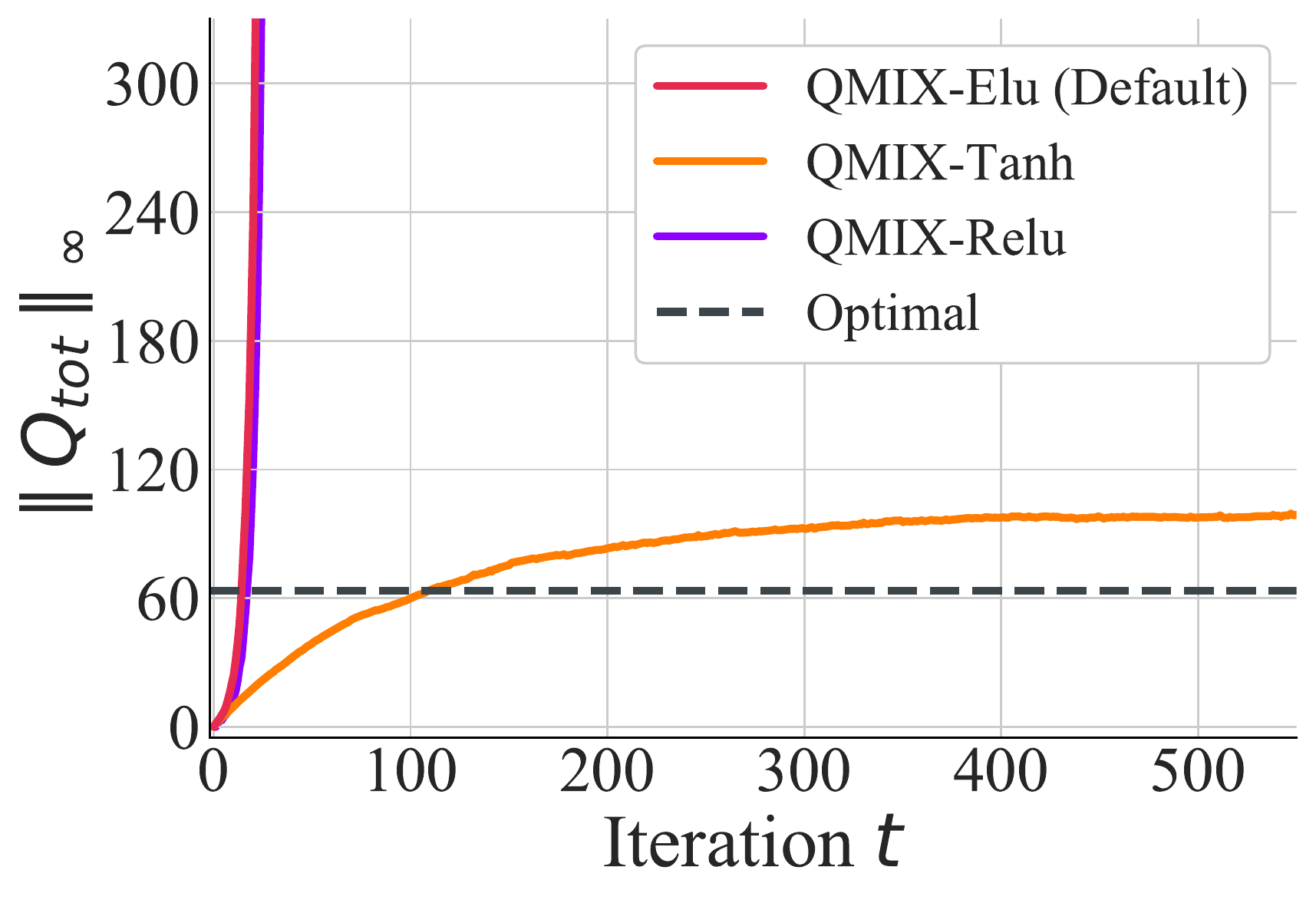}
	\caption{The learning curves of $\|Q_{\text{tot}}\|_\infty$ while running QMIX with different activation functions.}
	\label{fig:qmix-activation}
\end{figure}

\end{document}